\algrenewcommand\algorithmicrequire{\textbf{Input:}}
\algrenewcommand\algorithmicensure{\textbf{Output:}}
\newcounter{examplecounter}
\newenvironment{example}{
	\refstepcounter{examplecounter}%
	
	\vspace{7pt}
	\noindent\textbf{Example \arabic{examplecounter}}%
	\quad 
}{
	
	\vspace{7pt}
	%
}
\newcounter{remarkcounter}
\newenvironment{remark}{
	\refstepcounter{remarkcounter}%
	
	\vspace{7pt}
	\noindent\textbf{Remark \arabic{remarkcounter}}%
	\quad
}{
	
	\vspace{7pt}
	%
}
\definecolor{light-gray1}{gray}{0.95}
\newcommand{\tax}{\mathit{\alpha}}
\newcommand{\mo}{\mathcal{K}}
\newcommand{\mb}{\mathcal{B}}
\newcommand{\md}{\mathcal{D}}
\newcommand{\mc}{\mathcal{C}}
\newcommand{\Tp}{\mathit{P}}
\newcommand{\Tn}{\mathit{N}}
\newcommand{\tp}{\mathit{p}}
\newcommand{\tn}{\mathit{n}}
\newcommand{\ot}{\mathcal{K}^{*}}
\newcommand{\mD}{{\bf{D}}}
\newcommand{\minD}{{\bf{minD}}}
\newcommand{\allD}{{\bf{allD}}}
\newcommand{\dx}[1]{{\bf D}_{#1}^+}
\newcommand{\dnx}[1]{{\bf D}_{#1}^{-}}
\newcommand{\dz}[1]{{\bf D}_{#1}^0}
\newcommand{\RQ}{{\mathit{R}}}
\newcommand{\mQ}{{\bf{Q}}}
\newcommand{\Pt}{\mathfrak{P}}
\newcommand{\LD}{{\bf{D}}}
\newcommand{\Disc}{\mathsf{Disc}}
\newcommand{\sd}{\textsc{sd}\xspace}
\newcommand{\comps}{\textsc{comps}\xspace}
\newcommand{\meas}{\textsc{meas}\xspace}
\newcommand{\obs}{\textsc{obs}\xspace}
\newcommand{\dpi}{\mathsf{DPI}\xspace}
\newcommand{\exdpi}{\mathsf{ExK}\xspace}
\newcommand{\exdpiM}{\mathsf{ExM}\xspace}
\newcommand{\exdpiMK}{\mathsf{ExM2K}\xspace}
\newcommand{\mdpi}{\mathsf{mDPI}\xspace}
\newcommand{\kdpi}{\mathsf{kDPI}\xspace}
\newcommand{\ab}{\textsc{ab}}
\newcommand{\qqm}{\mathsf{qqm}}
\newcommand{\oracle}{\mathsf{ans}}
\newcommand{\sdaa}[1]{\textsc{sd}^*[#1]}
\newcommand{\dg}{\ensuremath{\Delta}\xspace}
\newtheorem{definition}{Definition}{}
\newtheorem{theorem}{Theorem}{}
\newtheorem{prob_def}{Problem}{}
\newtheorem{proposition}{Proposition}{}
\newtheorem{lemma}{Lemma}{}
\newtheorem{corollary}{Corollary}{}
\newtheorem{conjecture}{Conjecture}{}
\begin{document}
	
\setlength{\abovedisplayskip}{7pt}
\setlength{\belowdisplayskip}{7pt}
\setlength{\abovedisplayshortskip}{0pt}
\setlength{\belowdisplayshortskip}{0pt}

\title{
A Generally Applicable, Highly Scalable Measurement Computation and Optimization Approach to Sequential Model-Based Diagnosis\thanks{Parts of this work have been accepted for publication at DX'17 -- Workshop on Principles of Diagnosis. First, the reduction of model-based diagnosis problems to knowledge base debugging problems (Sec.~\ref{sec:reduction_of_mbd_to_kbd}) is treated in \citep{rodler17dx_reduction}. Second, the query computation approach dealt with in Sec.~\ref{sec:contribution} is discussed in \citep{rodler17dx_query}. This work extends the previously published ones significantly in several respects. For instance, it comprises a much more detailed treatment of the underlying theory, all proofs, numerous illustrating examples, various additional remarks and a much more comprehensive experimental evaluation.}}

\author{\name Patrick Rodler\thanks{Corresponding author} \email patrick.rodler@aau.at \\
       \name Wolfgang Schmid \email wolfgang.schmid@aau.at \\
       \name Konstantin Schekotihin \email konstantin.schekotihin@aau.at \\
       \addr Alpen-Adria Universit\"at Klagenfurt, Universit\"atsstra{\ss}e 65-67,\\
       9020 Klagenfurt, Austria
   }


\maketitle

\begin{abstract}
\emph{Model-based diagnosis} deals with the identification of the real cause of a system's malfunc-tion based on a formal system model and observations of the system behavior. When a malfunc-tion is detected, there is usually not enough information available to pinpoint the real cause and one needs to discriminate between multiple fault hypotheses (\emph{diagnoses}). 
To this end, \emph{sequential diagnosis} approaches ask an oracle for additional system measurements.  

This work presents strategies for (optimal) measurement selection in model-based sequential diagnosis. 
In particular, assuming a set of leading diagnoses being given, we show how \emph{queries} (sets of measurements) can be computed and optimized along two dimensions: expected number of queries and cost per query. 
By means of a suitable decoupling of two optimizations and a clever search space reduction the computations are done without any inference engine calls. 
For the full search space, we give a method requiring only a polynomial number of inferences and show how query properties can be guaranteed which existing methods do not provide. 
Evaluation results using real-world problems indicate that the new method computes (virtually) optimal queries instantly independently of the size and complexity of the considered diagnosis problems and outperforms equally general methods not exploiting the proposed theory by orders of magnitude.
\end{abstract}

\section{Introduction}
\label{sec:intro}
%
\textbf{Model-based diagnosis (MBD)} is a widely applied approach to finding explanations
for unexpected behavior of observed systems such as hardware \citep{Reiter87,dressler1996consistency}, software \citep{DBLP:conf/ijcai/StumptnerW99,DBLP:conf/aadebug/MateisSWW00,steinbauer2005detecting}, knowledge bases \citep{Parsia2005,Kalyanpur2006a,Shchekotykhin2012,Rodler2015phd}, discrete event systems \citep{darwiche1996exploiting,pencole2005formal}, feature models \citep{DBLP:journals/jss/WhiteBSTDC10} and user interfaces \citep{DBLP:journals/apin/FelfernigFISTJ09}.
%
MBD assumes a formal \emph{system model} and a set of relevant possibly faulty \emph{system components} (e.g.\ lines of code, gates in a circuit). The model includes descriptions of the interrelation between the components (e.g.\ wires between gates), descriptions of the components' nominal behavior (e.g.\ relation between inputs and outputs of a gate) and other relevant knowledge (e.g.\ axioms of Boolean logic). An MBD problem arises if \emph{observations} (e.g.\ sensor readings, system outputs) of the system's behavior differ from predictions based on the system model. In this case, the set of observations is inconsistent with the system model under the assumption that all system components are exhibiting a nominal behavior. The sought solution to an MBD problem is a \emph{diagnosis} pinpointing the faulty components causing the observed system failure. 
Normally,
however, due to initially insufficient observations, this fault localization is ambiguous and multiple possible diagnoses exist.

\noindent\textbf{Sequential Diagnosis} methods \citep{dekleer1987,pietersma2005model,DBLP:journals/jair/FeldmanPG10a,Siddiqi2011,Shchekotykhin2012} address this issue. These collect additional information by generating a sequence of \emph{queries} and assume available some \emph{oracle} providing answers to these queries. Depending on the MBD application domain, queries can be, for instance, measurements (e.g.\ probes in a circuit), system tests (observations about the system's behavior upon new system inputs), questions to a domain expert (e.g.\ to a doctor when debugging a medical knowledge base) or component inspections (e.g.\ checking the battery of a car). Likewise, the instantiation of the oracle might be, for instance, an electrical engineer performing probes using a voltmeter, an IDE running software tests or a car mechanic inspecting components of a vehicle. 
If queries are chosen properly, each query's answer eliminates some diagnoses and thus reduces the diagnostic uncertainty (pruning of the space of possible diagnoses). 	
As query answering is normally costly, the goal of sequential diagnosis is to \emph{minimize the diagnostic cost} in terms of, e.g., time, manpower or equipment required to \emph{achieve a diagnostic goal}, e.g., the extraction of a diagnosis with a probability above some threshold or the isolation of a single remaining diagnosis (which then corresponds to the \emph{actual diagnosis}, i.e.\ the actual cause of the system failure). 

\begin{figure}[t]
	\centering
	\includegraphics[width=0.99\textwidth]{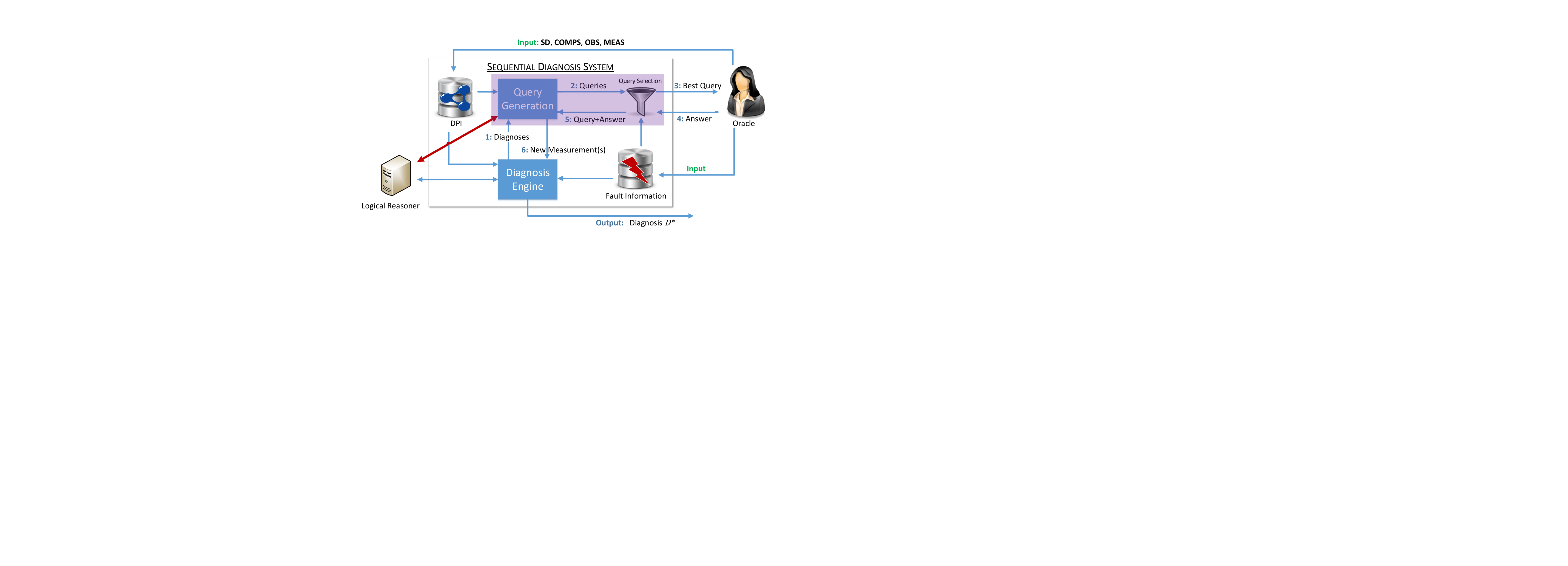}
	\caption{\small Schematic view on a generic sequential diagnosis system. The area shaded in violet shows the part of the system optimized by the approach in this work. The red arrow emphasizes that (expensive) reasoner calls have to be minimized.}
	\label{fig:seq_diag_system}
\end{figure}

\noindent\textbf{A generic sequential diagnosis system} is illustrated by Fig.~\ref{fig:seq_diag_system}.
It gets the inputs $\sd$ (system description), $\comps$ (system components), $\obs$ (initial observations), $\meas$ (additional observations / performed measurements), which altogether make up a \emph{diagnosis problem instance} (DPI), and possibly some fault information (e.g.\ in terms of failure probabilities of system components). The usual workflow (see numbers in Fig.~\ref{fig:seq_diag_system}) followed by such a system involves the
(1) computation of a (feasible) set of diagnoses by a diagnosis engine using the DPI and fault information, (2) computation of a set of query candidates by a query generation module based on the given diagnoses, (3) selection of the best query from the given candidates, (4) answering of this query by the interacting oracle, (5+6) addition of the returned query along with its answer to the DPI in terms of new measurements ($\meas$). The diagnosis engine uses these new measurements to perform various updates (e.g.\ pruning of the diagnoses space, adapting the fault information). If the diagnostic goal is not accomplished, the entire process starts anew from (1). Otherwise, the best diagnosis is output. 
The focus of this work lies on the optimization of steps (2) and (3) in terms of both efficiency and output quality (see violet shaded area in Fig.~\ref{fig:seq_diag_system}).

Note, the steps (1) and (2) draw on a logical reasoner. Since logical reasoning is one of the main sources of complexity in sequential diagnosis, the amount of reasoning should be ideally as minimal as possible, indicated by the red arrow in Fig.~\ref{fig:seq_diag_system}. Basically, there are two \textbf{different reasoning paradigms} 
sequential diagnosis systems might use, glass-box and black-box.
\emph{Glass-box approaches} directly integrate reasoning with diagnoses finding with the goal of achieving better performance. To this end the internals of the reasoner are suitably modified or, respectively, reasoners are complemented by additional services, e.g., bookkeeping in an ATMS \citep{dekleer1986atms}. One example \citep{dekleer1987} is the storing of (minimal) environments (sets of logical sentences sufficient) for entailments predicted by the system model. These are leveraged to compute so-called nogood sets \citep{dekleer1986atms}, i.e.\ environments for entailments inconsistent with observations. The latter can be directly used for diagnoses construction. 
Glass-box approaches are therefore dependent on the particular (modified) reasoner and thus on the particular logic for which the reasoner is sound and complete. 
\emph{Black-box approaches} use the reasoner as an oracle for answering consistency or entailment queries. The reasoner is used as-is without requiring any alterations to its implementation or any supplements. Consequently, these approaches are independent of the logic used for describing the system model and of the particular reasoner employed, and can benefit from latest improvements of reasoning algorithms. For instance, black-box approaches can switch to reasoners specialized in a certain sublanguage (e.g.\ polynomial-time reasoner ELK \citep{kazakov2014} for  
OWL EL \citep{kroetzsch2010efficient}) of a logic (e.g.\ OWL 2 \citep{Grau2008a} where reasoning is N2EXPTIME-complete) ``for free'' in a simple plug-in fashion if the system description is formalized in this sublanguage. 

First, while glass-box approaches in many cases offer some performance gain over black-box approaches, this gain was shown to be not that significant -- in most cases the time cost of both paradigms lay within the same order of magnitude -- in extensive evaluations carried out by \citep{Horridge2011a} using Description Logics \citep{DLHandbook} of reasoning complexity ranging from polynomial to N2EXPTIME-complete. Black-box approaches even outperformed glass-box approaches in a significant number of cases, witnessed in similar experiments conducted by \citep{Kalyanpur2006a}. When using bookkeeping methods, the information stored by these might grow exponentially with the problem size \citep{schiex1994nogood}. Moreover, switching to more efficient reasoners (e.g., for fragments of a logic, see above) is not (easily) possible for glass-box approaches.
Second, system descriptions ($\sd$) in MBD might use a wide range of different knowledge representation formalisms such as First-Order Logic fragments, Propositional Logic, Horn clauses, equations, constraints, Description Logics or OWL.
For these reasons we present a \emph{logics- and reasoner-independent black-box approach} to sequential diagnosis which is \emph{appropriate for all monotonic and decidable knowledge representation languages}. This preserves a maximal generality of our approach and makes it broadly applicable across different MBD application domains. 
%
%
%

Because the problem of \textbf{optimal query selection}\footnote{Also known as \emph{Optimal Test Sequencing Problem} \citep{pattipati1990} or \emph{Optimal Decision Tree Problem} \citep{hyafil1976}.} is NP-complete \citep{hyafil1976}, sequential diagnosis approaches have to bear on a trade-off between query optimality and computational complexity. Therefore, it is current practice to rely on myopic (usually one-step lookahead)
methods to guide diagnoses discrimination \citep{dekleer1987,DBLP:journals/jair/FeldmanPG10a,gonzalez2011spectrum,Shchekotykhin2012,Rodler2013}. Empirical \citep{dekleer1992,Shchekotykhin2012,Rodler2013} and theoretical \citep{pattipati1990} evaluations have evidenced that such heuristic methods in many cases deliver reasonable and in some scenarios even (nearly) optimal results. Moreover, query selection based on a multi-step lookahead is computationally prohibitive due to the involved expensive model-based reasoning (cf.\ Sec.~\ref{sec:related}). 
In common with the above-mentioned approaches 
we model the query selection heuristic as a \emph{query selection measure} $m$ assigning a real-value to each query based on its quality (regarding diagnoses discrimination). One popular such measure is entropy \citep{dekleer1987}, which favors queries with a maximal expected information gain or, equivalently, a maximal expected reduction of the diagnostic uncertainty. The goal of any such measure $m$ is the \emph{minimization of the number of queries} required until achieving the appointed diagnostic goal.

Whereas sequential diagnosis approaches usually incorporate the optimization of a query selection measure $m$, they often do not optimize the query (answering) cost such as the time required to perform measurements~\citep{heckerman1995decision}. We model this cost by a \emph{query cost measure} $c$, a function allocating a real-valued cost to each query. 
The approach suggested in this work is devised to compute optimized queries along the $m$ and $c$ axes at each (query selection) step in the sequential diagnosis process while minimizing the required computational resources. 
More concretely, the contributions of this work are the following: 

\noindent\textbf{Contributions.} 
We present a novel query optimization method that is generally applicable to any MBD problem in the sense of \citep{dekleer1987,Reiter87} and
\begin{enumerate}[noitemsep,topsep=5pt]
	\item defines a query as a set of First-Order Logic sentences and thus generalizes the \emph{measurement} notion of \citep{dekleer1987,Reiter87},
	\item given a set of \emph{leading} 
	\emph{diagnoses} \citep{DBLP:conf/ijcai/KleerW89}, allows the two-dimensional optimization of the next query in terms of the \emph{expected number of subsequent queries} (measure $m$) \emph{and query cost} (measure $c$), 
	\item for an aptly refined (yet exponential) query search space, finds -- \emph{without any reasoner calls} -- the \emph{globally} optimal query w.r.t.\ measure $c$ that \emph{globally} optimizes measure $m$,\footnote{The term \emph{globally optimal} has its standard meaning (cf.\ \citep[p. 184]{Luenberger:2015:LNP:2843008}) and emphasizes that the optimum over \emph{all} queries in the respective query search space is meant.} 
	\item for the full query search space, finds -- with a polynomial number of reasoner calls -- the (under reasonable assumptions) globally optimal query w.r.t.\ $m$ that includes, if possible, only ``cost-preferred'' sentences (e.g.\ those answerable using built-in sensors),
	\item guarantees the proposal of queries that discriminate between \emph{all} leading diagnoses and that \emph{unambiguously identify the actual diagnosis}.
\end{enumerate}
Furthermore, 
\begin{enumerate}[resume]
	\item we show that any MBD problem can be reduced to a Knowledge Base Debugging (KBD) problem \citep{Shchekotykhin2012,Rodler2015phd}. This result establishes a formal relationship between these two paradigms, shows the greater generality of the latter and enables the transferral of findings in the KBD domain to the MBD domain.   
\end{enumerate}
In a nutshell, \textbf{the presented query optimization method} can be subdivided into three phases, P1, P2 and P3. In the first place, P1 optimizes the next query's discrimination properties (e.g.\ the expected information gain) based on the criteria imposed by the given QSM $m$, realized by a heuristic backtracking search. Then, as a first option, P2 computes an optimal query $Q^*$ regarding the given QCM $c$ by running a uniform-cost hitting set tree search over a suitable (and explicitly given) set of partial leading diagnoses. This is done in a way $Q^*$ meets exactly the optimal discrimination properties determined in P1. P2 explores the largest possible query search space that can be handled without any reasoner calls in a complete way. The output $Q^*$ suggests the inspection of the system component(s) that is least expensive for the oracle (QCM $c$) among all those that yield the highest information (QSM $m$). As a second option and alternative to P2, P3 performs a two-step optimization consisting of a first generalization of the addressed search space and a subsequent divide-and-conquer exploration of this search space focused on cost-preferred measurements. P3 returns a cost-optimal query $Q^*$ (w.r.t.\ some QCM $c$) complying with the optimal discrimination properties fixed in P1. $Q^*$ may include measurements of arbitrary type, depending on priorly definable requirements. 

Roughly, the \textbf{efficiency of the novel approach}
is possible by the recognition that the optimizations of $m$ and $c$ can be decoupled and by using logical monotonicity as well as the inherent (already inferred) information in the ($\subseteq$-minimal) leading diagnoses. The latter is leveraged to achieve a retention of costly reasoner calls until the final query computation stage (P3), and hence to reduce them to a minimum. In particular, the method is inexpensive as it 
\begin{enumerate}[label={(\alph*)},noitemsep,topsep=5pt]
	\item avoids the generation and examination of unnecessary (non-discriminating) or duplicate query candidates, 
	\item \emph{actually} computes only the \emph{single} best query by its ability to estimate a query's quality without computing it, and 
	\item guarantees soundness and completeness w.r.t.\ an exponential query search space independently of the properties and output of a reasoner.
\end{enumerate}
Modern sequential diagnosis methods like~\citep{dekleer1987} and its derivatives \citep{DBLP:journals/jair/FeldmanPG10a,Shchekotykhin2012,Rodler2013} do not meet all properties (a) -- (c). The black-box approaches among them
extensively call a reasoner  
in order to compute a query. As we show in our evaluations, 
the presented method can \emph{save an exponential overhead compared to these approaches}.

Moreover, we emphasize that our approach can also \emph{deal with problems where the query space is implicit}, i.e.\ all possible system measurements cannot be enumerated in polynomial time in the size of the system model. E.g., in a digital circuit all measurement points (and hence the possible queries) are given explicitly by the circuit's wires which can be directly extracted from the system description (\sd). In, e.g., knowledge-based problems, by contrast, the possible measurements, i.e.\ questions to an expert, must be (expensively) inferred and are not efficiently enumerable. In fact, we show that for problems involving implicit queries, \emph{approaches not using the proposed theory might be drastically incomplete} and hence might miss optimal queries.
%
%

Finally, by the generality of our query notion, our method \emph{explores a more complex search space} than~\citep{dekleer1987,dekleer1993}, thereby guaranteeing property (5) above.

\noindent\textbf{Organization.} The rest of this work is organized as follows. Sec.~\ref{sec:basics} provides theoretical foundations needed in later sections. In particular, it gives a short introduction on Model-Based Diagnosis (MBD) in Sec.~\ref{sec:mbd}, on Knowledge Base Debugging (KBD) in Sec.~\ref{sec:kbd} and formally proves that each MBD problem can be reduced to a KBD problem 
in Sec.~\ref{sec:reduction_of_mbd_to_kbd}. Henceforth, the work focuses w.l.o.g.\ just on KBD. Basics on Sequential Diagnosis including important definitions, the formal characterization of the addressed problem, and a generic algorithm to solve this problem are treated in Sec.~\ref{sec:sequential_diagnosis}. 
The main part of the paper starts with Sec.~\ref{sec:contribution}, where we first formalize the measurement selection problem (Sec.~\ref{sec:measurement_selection}) and then discuss the proposed novel algorithm to solve this problem (Sec.~\ref{sec:presented_algo}). The presentation of our method is subdivided into a first part attempting to give the reader a prior intuition, motivation and overview of the later introduced theoretical concepts (Sec.~\ref{sec:intuition+overview}), and three further parts, one dedicated to each phase (P1, P2 and P3) of the new algorithm (Sec.~\ref{sec:P1}, \ref{sec:P2} and \ref{sec:P3}). Besides an extensively exemplified expansion of the relevant theory, each phase description includes a complexity analysis. A formal specification of the computed solution's properties for P1+P2 is given in Sec.~\ref{sec:solution_produced_by_P1+P2} and for P3 in Sec.~\ref{sec:solution_produced_by_P3}. Finally, Sec.~\ref{sec:recap_of_presented_algo} recapitulates the entire approach by means of a detailed example.
Sec.~\ref{sec:eval} includes the description of our experimental evaluations in order to complement the theoretical findings of Sec.~\ref{sec:presented_algo}. The experimental settings are explicated in Sec.~\ref{sec:experiments}, whereas the experimental results are discussed in Sec.~\ref{sec:experimental_results}. 
Subsequently, there is a section on related work (Sec.~\ref{sec:related}) before we conclude with Sec.~\ref{sec:conclusion}. Appendix~A comprises all proofs that are not given in the text. Appendix~B provides a table including all important symbols used in the text along with their meaning.

\section{Preliminaries}
\label{sec:basics}
In this section, we revise the general theory of Model-Based Diagnosis (MBD) proposed by \citep{Reiter87}, define the knowledge base debugging framework (KBD) we will use to formalize MBD problems in this work, and demonstrate that KBD is a generalization of MBD.

\subsection{Model-Based Diagnosis}
\label{sec:mbd}
We briefly review the classical model-based diagnosis (MBD) problem described by \citep{Reiter87}. 
At first, we characterize a \emph{system}, e.g.\ a digital circuit, a car or some software, which is the subject of a diagnosis task:
\begin{definition}[System]\label{def:diagnosable_system}
	A \emph{system} 
	is a tuple $(\sd, \comps)$ where $\sd$, the system description, is a set of First-Order Logic sentences, and  $\comps$, the system components, is a finite set of constants $c_1,\dots,c_n$. 	
\end{definition}
The distinguished unary ``abnormal'' predicate $\ab$ is used in $\sd$ to model the expected behavior of components $c \in \comps$.
Let us denote the First-Order Logic sentence describing this expected behavior of $c$ by $beh(c)$ and let $\sd_{beh} := \setof{\lnot\ab(c) \to beh(c)\mid c \in \comps}$. The latter subsumes a statement of the form ``if $c$ is nominal (not abnormal), then its behavior is $beh(c)$'' for each system component $c \in \comps$. Any behavior different from $beh(c)$ implies that $c$ is at fault, i.e.\ $\ab(c)$ holds. But, an abnormal component does not necessarily manifest a faulty behavior in each situation (\emph{weak fault model} \citep{Kleer1992,feldman2009solving}), e.g.\ for an or-gate $c$ stuck at 1 faulty behavior $\lnot beh(c)$ can only be observed if both inputs are 0.
Further, $\sd$ might include general axioms describing the system domain or descriptions of the interplay between the system components. Let us call the set of these general axioms $\sd_{gen}$. So, $\sd = \sd_{beh} \cup \sd_{gen}$.

The behavior of a system $(\sd, \comps)$ assuming all components working correctly is captured by the description $\sd \cup \setof{\lnot\textsc{ab}(c) \mid c \in \comps}$. Note, this description is equal to $\sd_{gen} \cup \setof{beh(c)\mid c\in\comps}$.

A \emph{diagnosis problem} arises when the observed system behavior -- represented by a finite set of First-Order Logic sentences $\obs$ -- differs from the expected system behavior. Formally, this means that $\sd \cup \setof{\lnot\textsc{ab}(c) \mid c \in \comps} \cup \obs \models \bot$. For instance, in circuit diagnosis \obs might be the observation of the system inputs and outputs.    

There are usually multiple different hypotheses (\emph{diagnoses}) that explain 
the discrepancy between observed and predicted system behavior. 
Discrimination between these hypotheses can then be accomplished by means of additional observations $\meas$ called measurements \citep{Reiter87,dekleer1987}. Each measurement $m$ in the set of measurements $\meas$ is a set of First-Order Logic sentences \citep{Reiter87} describing additional knowledge about the actual system behavior, e.g.\ whether a particular wire in a faulty circuit is high or low. Usually new measurements are conducted and added to $\meas$ until some diagnostic goal $G$ is achieved, e.g.\ the presence of just a single or one highly probable remaining hypothesis. Each added measurement $m$, if chosen properly, will invalidate some hypotheses. Throughout this paper we assume \emph{stationary health} \citep{DBLP:journals/jair/FeldmanPG10a}, 
i.e.\ that one and the same (faulty) behavior can be constantly reproduced for each $c \in \comps$ during system diagnosis.

Formalized, these notions lead to the definitions of an \emph{MBD diagnosis problem instance (MBD-DPI)} and of an \emph{MBD-diagnosis}.
\begin{definition}[MBD-DPI]\label{def:MBD-DPI}
	Let $\obs$ (system observations) be a finite set of First-Order Logic sentences, $\meas$ (measurements) be a finite set including finite sets $m_i$ of First-Order Logic sentences, and let $(\sd,\comps)$ be a system.
	Then the tuple $(\sd,\comps,\obs,\meas)$ is an \emph{MBD diagnosis problem instance (MBD-DPI)}.
\end{definition}
\begin{definition}\label{def:SD*}
Let $\dpi :=(\sd,\comps,\obs,\meas)$ be an MBD-DPI and $U_{\meas}$ denote the union of all $m \in \meas$. Then $\sdaa{\dg} := \sd \cup \setof{\textsc{ab}(c) \mid c \in \dg} \cup \setof{\lnot\ab(c)\mid c \in \comps\setminus \dg} \cup \obs \cup U_{\meas}$ for $\dg \subseteq \comps$ denotes the behavior description of the system $(\sd,\comps)$ 
\begin{itemize}[noitemsep,topsep=5pt]
	\item under the current state of knowledge given by the $\dpi$ in terms of $\obs$ and $\meas$, and
	\item under the assumption that all components in $\dg \subseteq \comps$ are faulty and all components in $\comps \setminus \dg$ are healthy.
\end{itemize}
\end{definition}
\begin{definition}[MBD-Diagnosis]\label{def:MBD-diagnosis}
	Let $\dpi :=(\sd,\comps,\obs,\meas)$ be an MBD-DPI.
	Then $\dg \subseteq \comps$ is an \emph{MBD-diagnosis for $\dpi$} iff 
	%
	$\sdaa{\dg}$ is consistent ($\dg$ explains $\obs$ and $\meas$).
	An MBD-diagnosis $\dg$ for $\dpi$ is called \emph{minimal} iff there is no MBD-diagnosis $\dg'$ for $\dpi$ such that $\dg' \subset \dg$.
\end{definition}
In many practical applications there are multiple (minimal) MBD-diagnoses for a given MBD-DPI. Without additional information about the system, one cannot conjecture a unique diagnosis. The idea is then to perform measurements in order to discriminate between competing (minimal) MBD-diagnoses until a sufficient degree of diagnostic certainty (the specified diagnostic goal $G$) is reached. This is the problem addressed by \emph{Sequential MBD} and can be stated as follows:
\begin{prob_def}[Sequential MBD]\label{prob_def:sequential_MBD} \textcolor{white}{.}

\noindent\textbf{Given:} An MBD-DPI $\dpi := (\sd,\comps$, $\obs,\meas)$ and a diagnostic goal $G$. 

\noindent\textbf{Find:} 
%
%
%
$\meas_{\mathit{new}} \supseteq \emptyset$ and $\dg$, where $\meas_{\mathit{new}}$ is a set of new measurements such that $\dg$ is a minimal MBD-diagnosis for the MBD-DPI $\dpi_{\mathit{new}} := (\sd,\comps,\obs$, $\meas\cup\meas_{\mathit{new}})$ and $\dg$ satisfies $G$. 
\end{prob_def}

\begin{remark}\label{rem:leading_diags__diagnostic_goal}
Due to the intractability of the computation of the entire set of minimal diagnoses \citep{Bylander1991}, both the measurement selection and the decision whether a diagnostic goal $G$ is satisfied for some diagnosis $\md$ is usually made by using a (computationally feasible) set of \emph{leading minimal diagnoses} $\mD$ \citep{DBLP:conf/ijcai/KleerW89}. $\mD$ acts as an approximation of all minimal diagnoses for the given DPI and usually comprises the \emph{most probable minimal} \citep{DBLP:conf/ijcai/KleerW89} or \emph{minimum-cardinality} \citep{DBLP:journals/jair/FeldmanPG10a} diagnoses for a DPI. Given a set of leading minimal diagnoses $\mD$ for $\dpi_{\mathit{new}}$, examples for the specification of $G$ are $G_1:=$ ``$\md$ is the only minimal diagnosis for $\dpi_{\mathit{new}}$'' \citep{dekleer1993}, $G_2:=$ ``$\md$ exceeds some predefined probability threshold $t$'', e.g.\ $t:=0.95$ \citep{dekleer1987,Shchekotykhin2012} or $G_3:=$ ``$\md$ has $\geq k$ times the probability of all other elements in $\mD$''. Note that the goal $G_1$ represents a maximally strict requirement on the final diagnostic result as it requires the verification of the invalidity of all but the correct minimal diagnosis (we call a diagnostic goal $G_i$ \emph{more strict} than a diagnostic goal $G_j$ if $G_j$ is satisfied earlier in any diagnostic session than $G_i$). The specification of (constants in) $G$ depends on the seriousness of misdiagnosis, e.g.\ higher probability thresholds signify higher criticality.\qed

\end{remark}
In general, the size of the search space for minimal MBD-diagnoses for $(\sd,\comps,\obs$, $\meas)$ is in $O(2^{|\comps|})$. 
A useful concept to restrict this search space 
is the one of an \emph{MBD-conflict} \citep{Reiter87,dekleer1987}, a set of components whose elements cannot all be healthy given $\obs$ and $\meas$:
\begin{definition}[MBD-Conflict]\label{def:MBD-conflict}
Let $\dpi :=(\sd,\comps,\obs,\meas)$ be an MBD-DPI. Then $C \subseteq \comps$ is an MBD-conflict for $\dpi$ iff $\sd \cup \setof{\lnot\ab(c) \mid c \in C} \cup \obs \cup U_{\meas}$ is inconsistent.
An MBD-conflict $C$ for $\dpi$ is called \emph{minimal} iff there is no MBD-conflict $C'$ for $\dpi$ such that $C' \subset C$.	
\end{definition} 
\begin{definition}[Hitting Set]\label{def:hs}
	Let $S=\setof{S_1,\dots,S_n}$ be a collection of sets. Then $H$ is called a \emph{hitting set of $S$} iff $H \subseteq U_S$ and $H \cap S_i \neq \emptyset$ for all $i=1,\dots,n$. 
	A hitting set $H$ of $S$ is \emph{minimal} iff there is no hitting set $H'$ of $S$ such that $H' \subset H$.
\end{definition}
The following result \citep{Reiter87} can be used to determine MBD-diagnoses through the computation of MBD-conflicts:
\begin{theorem}\label{theorem:reiter_diag_is_hitting_set}
	A (minimal) MBD-diagnosis for a DPI is a (minimal) hitting set of all minimal MBD-conflicts for this DPI.
\end{theorem}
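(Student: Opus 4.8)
The plan is to derive the theorem from one pivotal equivalence: for every $\dg \subseteq \comps$, $\dg$ is an MBD-diagnosis for the given DPI if and only if $\dg$ meets (intersects) every MBD-conflict for it. Granting this, I would first observe that ``meets every conflict'' is the same as ``meets every \emph{minimal} conflict'': minimal conflicts are conflicts, and conversely, since $\comps$ is finite, every conflict contains a $\subseteq$-minimal conflict, and a set meeting that minimal subconflict also meets the conflict. The ``(minimal)'' qualifiers then drop out by a short minimality/hitting-set bookkeeping step at the end; the heart of the proof is the displayed equivalence, which I would prove by logical monotonicity.

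For the ($\Rightarrow$) direction I would argue by contradiction: if $\dg$ is a diagnosis, so $\sdaa{\dg}$ is consistent (Definitions~\ref{def:SD*} and~\ref{def:MBD-diagnosis}), but some conflict $C$ has $C \cap \dg = \emptyset$, i.e.\ $C \subseteq \comps\setminus\dg$, then $\setof{\lnot\ab(c) \mid c \in C} \subseteq \setof{\lnot\ab(c) \mid c \in \comps\setminus\dg}$, so the set $\sd \cup \setof{\lnot\ab(c) \mid c \in C} \cup \obs \cup U_{\meas}$ — inconsistent by Definition~\ref{def:MBD-conflict} — is a subset of $\sdaa{\dg}$, and monotonicity forces $\sdaa{\dg}$ to be inconsistent, a contradiction. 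For ($\Leftarrow$) I would argue directly: if $\dg$ meets every conflict, then $\comps\setminus\dg$, being disjoint from $\dg$, is not a conflict, so $\sd \cup \setof{\lnot\ab(c) \mid c \in \comps\setminus\dg} \cup \obs \cup U_{\meas}$ is consistent, and it remains to upgrade this to consistency of $\sdaa{\dg}$, i.e.\ to additionally satisfy the positive literals $\setof{\ab(c) \mid c \in \dg}$.

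I expect this last step to be the one genuinely requiring care — the main (and essentially only non-routine) obstacle — since Definition~\ref{def:MBD-diagnosis} tests $\sdaa{\dg}$, which carries the positive atoms $\ab(c)$ for $c \in \dg$, whereas Definition~\ref{def:MBD-conflict} never mentions $\ab$ positively, so I must verify that those positives cannot spoil consistency. I would therefore make explicit the assumption already built into the weak-fault-model setup of Sec.~\ref{sec:mbd} (around Definition~\ref{def:diagnosable_system}): in $\sd$ the predicate $\ab$ occurs only in the implications $\lnot\ab(c)\to beh(c)$ of $\sd_{beh}$, with each $beh(c)$ being $\ab$-free, and not at all in $\sd_{gen}$, $\obs$ or $U_{\meas}$. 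Then, given a model of $\sd \cup \setof{\lnot\ab(c) \mid c \in \comps\setminus\dg} \cup \obs \cup U_{\meas}$, I would reinterpret $\ab$ to hold of the components in $\dg$ and fail on those in $\comps\setminus\dg$: this leaves $\sd_{gen}$, $\obs$, $U_{\meas}$ untouched, keeps every $\lnot\ab(c)$ with $c \in \comps\setminus\dg$ true, turns each $\sd_{beh}$ implication indexed by $c \in \dg$ vacuously true while keeping those indexed by $c \in \comps\setminus\dg$ satisfied (their $\ab$-free consequent $beh(c)$ was already forced true by the included $\lnot\ab(c)$), and validates each $\ab(c)$ with $c \in \dg$; hence $\sdaa{\dg}$ has a model and $\dg$ is a diagnosis. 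This is precisely the classical reduction underlying Reiter's theorem \citep{Reiter87}.

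Finally, to line the result up with Definition~\ref{def:hs}: by the equivalence above, the MBD-diagnoses are exactly the subsets of $\comps$ meeting every minimal conflict, so the $\subseteq$-minimal diagnoses are exactly the $\subseteq$-minimal such sets. Each such minimal set $\dg$ lies inside $U_S$, the union of the minimal conflicts — otherwise removing a component of $\dg$ occurring in no minimal conflict would leave a strictly smaller set still meeting all minimal conflicts, contradicting minimality — so it satisfies the containment clause of Definition~\ref{def:hs} and is exactly a minimal hitting set of the collection of minimal conflicts; conversely a minimal hitting set is a diagnosis with no proper diagnosis subset (such a subset would be a strictly smaller hitting set). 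For the non-parenthetical claim, an arbitrary diagnosis still meets every minimal conflict by ($\Rightarrow$), hence is a hitting set in the sense of Definition~\ref{def:hs} up to components occurring in no conflict (or, restricting attention — as Reiter does — to diagnoses drawn from $U_S$, exactly a hitting set). Apart from the $\ab$-inertness lemma, every step is just monotonicity of entailment together with finiteness of $\comps$.
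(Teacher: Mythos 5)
The paper never proves this theorem --- it is imported directly from \citep{Reiter87} and no argument for it appears in the text or in Appendix~A --- so your proof can only be judged on its own terms, and on those terms it is correct. One point of the comparison with the classical source is worth spelling out: the step you correctly single out as the only non-routine one (upgrading consistency of $\sd \cup \setof{\lnot\ab(c) \mid c \in \comps\setminus\dg} \cup \obs \cup U_{\meas}$ to consistency of $\sdaa{\dg}$) is handled by Reiter via \emph{minimality} of $\dg$ --- for a minimal $\dg$ the entailments $\models \ab(c)$, $c \in \dg$, come for free for arbitrary $\sd$ --- whereas you handle it via the syntactic inertness of $\ab$ in the weak-fault-model form of $\sd$ from Sec.~\ref{sec:mbd}. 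Your choice is the right one for \emph{this} paper's statement, which (unlike Reiter's) also covers non-minimal diagnoses: there the minimality argument is unavailable, the claim is false for general $\sd$ with fault models, and the $\ab$-inertness you make explicit is exactly what the paper's setup ($\sd = \sd_{beh} \cup \sd_{gen}$ with $\ab$ confined to the antecedents of $\sd_{beh}$, and absent from $\obs$ and $U_{\meas}$) supplies; it also parallels how the paper's KBD analogue (Theorem~\ref{theorem:mindiag_mincs}) gets the corresponding fact from monotonicity. Your remaining observations are sound as well: the converse direction you prove is genuinely needed to get \emph{minimal} hitting sets from minimal diagnoses, and your caveat that a non-minimal diagnosis may violate the containment clause $H \subseteq U_S$ of Def.~\ref{def:hs} points at a real (if harmless) imprecision in the theorem as stated rather than a gap in your argument. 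The only residual pedantry one could add is that reinterpreting $\ab$ componentwise tacitly assumes the constants in $\comps$ denote distinct individuals, an assumption standard in, and inherited from, \citep{Reiter87}.
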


\begin{figure}
	\begin{minipage}{0.65\textwidth}
		\centering
		\includegraphics[width=0.99\linewidth]{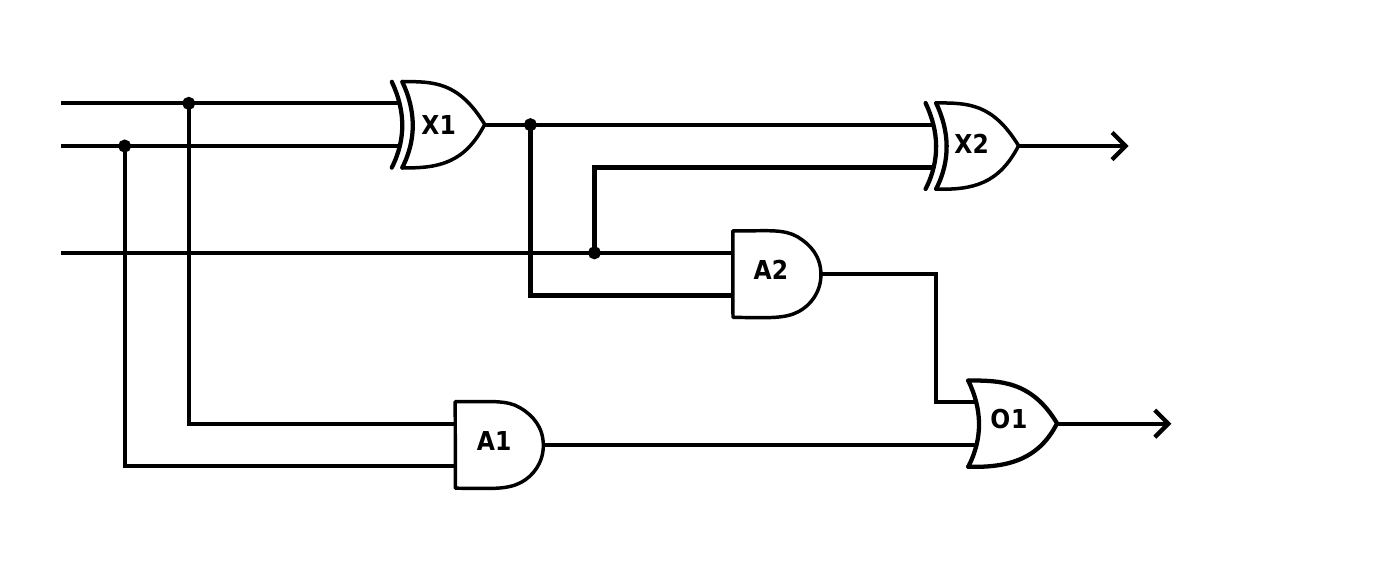}
	\end{minipage}	
	\begin{minipage}{0.3\textwidth}
		\scriptsize
		\begin{tabular}{c}
			\addlinespace[0pt]\toprule\addlinespace[0pt]
			circuit inputs (from top to bottom) \\
			1 \\
			0 \\
			1 \\
			\addlinespace[0pt]\midrule\addlinespace[0pt]
			circuit outputs (from top to bottom) \\
			1 \\
			0 \\
			\addlinespace[0pt]\bottomrule	
		\end{tabular}
	\end{minipage}
	\caption{\small MBD Example due to \citep{Reiter87} from the domain of circuit diagnosis.}
	\label{fig:circuitreiter87}
\end{figure}

\begin{example}\label{ex:circuit_MBD-DPI_diags_conflicts_measurements}
Let us revisit the circuit diagnosis example given in \citep{Reiter87} shown in Fig.~\ref{fig:circuitreiter87}. 
The first step towards diagnosing the circuit using MBD is to formulate the problem as an MBD-DPI. The result $\exdpiM := (\sd,\comps,\obs,\meas)$ is given by Tab.~\ref{tab:circuitreiter87_MBD-DPI} and explained next.

The circuit, i.e.\ the system to be diagnosed, includes five gates $X_1, X_2$ (xor-gates), $A_1, A_2$ (and-gates) and $O_1$ (or-gate), which are at the same time the system components $\comps$ of interest. 
The system description $\sd = \sd_{\mathit{beh}} \cup \sd_{\mathit{gen}}$ consists of a knowledge base $\sd_{\mathit{beh}} = \setof{\tax_{1},\dots,\tax_{5}}$ describing the behavior of each gate given it is working properly, e.g.\ for gate $X_1$, $\sd_{\mathit{beh}}$ includes the sentence $\tax_1 := (\lnot \ab(X_1) \to out(X_1) = xor(in1(X_1),in2(X_1)))$. Besides, $\sd$ includes a knowledge base $\sd_{\mathit{gen}} = \setof{\tax_{6},\dots,\tax_{12}}$ describing which gate-terminals are connected by wires, e.g.\ the wire connecting $X_1$ to $X_2$ is defined by the sentence $\tax_7 := (out(X_1) = in1(X_2))$. For simplicity we omit the explicit statement of additional general domain knowledge in $\sd_{\mathit{gen}}$ such as axioms for Boolean algebra or axioms restricting wires to only either $0$ or $1$ values. The observations $\obs = \setof{\tax_{13},\dots,\tax_{17}}$ are given by the system inputs and outputs (see the table in Fig.~\ref{fig:circuitreiter87}). Finally, since there are no already performed measurements, the set $\meas$ is empty.

Assuming all components are healthy, i.e.\ all gates function properly, 
we find out that $\sdaa{\emptyset}$ is inconsistent (cf.\ Def.~\ref{def:SD*}). 
That is, the assumption of no faulty components conflicts with the observations $\obs$ made. E.g., if $X_1$ and $X_2$ manifest nominal behavior, we can deduce that the output $out(X_2) = 0$ which contradicts the observation sentence $\tax_{16} := (out(X_2) = 1)$. Supposing either of the components $X_1$, $X_2$ to be nominal, we can no longer deduce $out(X_2) = 0$ (or any other sentence contradicting $\obs$). Therefore, $C_1 := \setof{X_1,X_2}$ is a minimal MBD-conflict (cf.\ Def.~\ref{def:MBD-conflict}). Similarly, we find that $C_2 := \setof{X_1,A_2,O_1}$ is the only other minimal MBD-conflict for $\exdpiM$. Computing minimal hitting sets of all minimal MBD-conflicts $C_1, C_2$, we obtain three minimal MBD-diagnoses $\dg_1 := \setof{X_1}$, $\dg_2 := \setof{X_2,A_2}$ and $\dg_3 := \setof{X_2,O_1}$.

Let the diagnostic goal $G$ be the achievement of complete diagnostic certainty, i.e.\ to single out the correct minimal MBD-diagnosis. The goal of the MBD-problem is then to find new measurements $m_1,\ldots,m_k$ such that there is a single minimal diagnosis $\dg$ for $(\sd,\comps,\obs$, $\meas\cup\setof{m_1,\ldots,m_k})$. Let the first measurement $m_1$ be the observation of the terminal $out(X_1)$, and let the value of it be $0$. Then, $\dg_1$ is still a minimal MBD-diagnosis for 
$\exdpiM_{\mathit{new}}:=(\sd$, $\comps,\obs,\meas\cup\setof{\setof{out(X_1)=0}})$ since the abnormality of $X_1$ explains both $\obs$ and $\meas$. Moreover, all other MBD-diagnoses for $\exdpiM_{\mathit{new}}$ must contain $X_1$ (since its faultiness is the only explanation for $\meas$) and thus be supersets of $\dg_1$. Hence, $\dg_1$ is the only minimal MBD-diagnosis for $\exdpiM_{\mathit{new}}$ and thus the actually faulty component in this scenario is $X_1$ (under the assumption that a $\subseteq$-minimal set of components is broken). This fact could be derived by conducting only one measurement.\qed
\end{example}

\renewcommand{\arraystretch}{1.2} 

\begin{table}[t]
	\begin{minipage}{0.5\textwidth} 
\scriptsize
\centering
\rowcolors[]{2}{gray!8}{gray!16} 
\begin{tabular}{ c c c c c} 
	\rowcolor{gray!40}
	\toprule\addlinespace[0pt]
	$i$ & $\tax_i$ & $\sd_{\mathit{beh}}$ & $\sd_{\mathit{gen}}$ & $\obs$ \\ \addlinespace[0pt]\midrule\addlinespace[0pt]
	1 & $\lnot\ab(X_1) \to beh(X_1)$ & $\bullet$ & & 	\\
	2 & $\lnot\ab(X_2) \to beh(X_2)$ & $\bullet$ & &  	\\
	3 & $\lnot\ab(A_1) \to beh(A_1)$ & $\bullet$ & &  	\\
	4 & $\lnot\ab(A_2) \to beh(A_2)$ & $\bullet$ & &  	\\
	5 & $\lnot\ab(O_1) \to beh(O_1)$ & $\bullet$ &  & 	\\
	6 & $out(X_1) = in2(A_2)$ & & $\bullet$  &  	\\
	7 & $out(X_1) = in1(X_2)$ & & $\bullet$  &  	\\
	8 & $out(A_2) = in1(O_1)$ & &  $\bullet$ & 	\\
	9 & $in1(A_2) = in2(X_2)$ & &  $\bullet$ & 	\\
	10 & $in1(X_1) = in1(A_1)$ & &  $\bullet$ & 	\\
	11 & $in2(X_1) = in2(A_1)$ & &  $\bullet$ & 	\\
	12 & $out(A_1) = in2(O_1)$ & &  $\bullet$ & 	\\
	13 & $in1(X_1) = 1$ & & &  $\bullet$ 	\\
	14 & $in2(X_1) = 0$ & & &  $\bullet$  	\\
	15 & $in1(A_2) = 1$ & & &  $\bullet$  	\\
	16 & $out(X_2) = 1$ & & &  $\bullet$  	\\
	17 & $out(O_1) = 0$ & & &  $\bullet$  	\\
	\addlinespace[0pt]\bottomrule 
	\rowcolor{gray!40}
	\multicolumn{5}{c}{$\comps$} \\ \addlinespace[0pt]\midrule\addlinespace[0pt]
	\multicolumn{5}{c}{$\setof{X_1,X_2,A_1,A_2,O_1}$} 	\\ \addlinespace[0pt]\toprule\addlinespace[0pt]
	\rowcolor{gray!40}
	$c$ & \multicolumn{4}{c}{$beh(c)$ for $c \in \comps$} \\ \addlinespace[0pt]\midrule\addlinespace[0pt]
	$X_1$ & \multicolumn{4}{c}{$out(X_1) = xor(in1(X_1),in2(X_1))$} 	\\
	$X_2$ & \multicolumn{4}{c}{$out(X_2) = xor(in1(X_2),in2(X_2))$} 	\\
	$A_1$ & \multicolumn{4}{c}{$out(A_1) = xor(in1(A_1),in2(A_1))$} 	\\
	$A_2$ & \multicolumn{4}{c}{$out(A_2) = xor(in1(A_2),in2(A_2))$} 	\\
	$O_1$ & \multicolumn{4}{c}{$out(O_1) = xor(in1(O_1),in2(O_1))$} 	\\ \addlinespace[0pt]\toprule\addlinespace[0pt]
	\rowcolor{gray!40}
	$i$ & \multicolumn{4}{c}{$\meas$} \\ \addlinespace[0pt]\midrule\addlinespace[0pt]
	$\times$ & \multicolumn{4}{c}{$\times$} 	\\ \addlinespace[0pt]\bottomrule
\end{tabular}
		\caption{\small MBD-DPI $\exdpiM$ obtained from circuit diagnosis problem in Fig.~\ref{fig:circuitreiter87}.}
		\label{tab:circuitreiter87_MBD-DPI}
	\end{minipage}
	\hfill
	\begin{minipage}{0.45\textwidth}
		\scriptsize
		\centering
		\rowcolors[]{2}{gray!8}{gray!16} 
		\begin{tabular}{ c c c c } 
			\rowcolor{gray!40}
			\toprule\addlinespace[0pt]
			$i$ & $\tax_i$ & $\mo$ & $\mb$  \\ \addlinespace[0pt]\midrule\addlinespace[0pt]
			1 & $out(X_1) = xor(in1(X_1),in2(X_1))$ & $\bullet$ & 	\\
			2 & $out(X_2) = xor(in1(X_2),in2(X_2))$ & $\bullet$ &  	\\
			3 & $out(A_1) = and(in1(A_1),in2(A_1))$ & $\bullet$ &  	\\
			4 & $out(A_2) = and(in1(A_2),in2(A_2))$ & $\bullet$ &  	\\
			5 & $out(O_1) = or(in1(O_1),in2(O_1))$ & $\bullet$ &  	\\
			6 & $out(X_1) = in2(A_2)$ & & $\bullet$   	\\
			7 & $out(X_1) = in1(X_2)$ & & $\bullet$   	\\
			8 & $out(A_2) = in1(O_1)$ & &  $\bullet$	\\
			9 & $in1(A_2) = in2(X_2)$ & &  $\bullet$	\\
			10 & $in1(X_1) = in1(A_1)$ & &  $\bullet$	\\
			11 & $in2(X_1) = in2(A_1)$ & &  $\bullet$	\\
			12 & $out(A_1) = in2(O_1)$ & &  $\bullet$	\\
			13 & $in1(X_1) = 1$ & &  $\bullet$	\\
			14 & $in2(X_1) = 0$ & &  $\bullet$	\\
			15 & $in1(A_2) = 1$ & &  $\bullet$	\\
			16 & $out(X_2) = 1$ & &  $\bullet$	\\
			17 & $out(O_1) = 0$ & &  $\bullet$	\\
			\addlinespace[0pt]\bottomrule 
			\rowcolor{gray!40}
			$i$ & \multicolumn{3}{c}{$\tp_i\in\Tp$} \\ \addlinespace[0pt]\midrule\addlinespace[0pt]
			$\times$ & \multicolumn{3}{c}{$\times$} 	\\ \addlinespace[0pt]\toprule\addlinespace[0pt]
			\rowcolor{gray!40}
			$i$ & \multicolumn{3}{c}{$\tn_i\in\Tn$} \\ \addlinespace[0pt]\midrule\addlinespace[0pt]
			$\times$ & \multicolumn{3}{c}{$\times$} 	\\ \addlinespace[0pt]
			\toprule\addlinespace[0pt]
			\rowcolor{gray!40}
			$i$ & \multicolumn{3}{c}{$r_i\in\RQ$} \\ \addlinespace[0pt]\midrule\addlinespace[0pt]
			$1$ & \multicolumn{3}{c}{consistency} \\ 
			\addlinespace[0pt]\bottomrule
			\toprule\addlinespace[0pt]
			\rowcolor{gray!40}
			\multicolumn{4}{c}{min KBD-conflicts} \\ \addlinespace[0pt]\midrule\addlinespace[0pt]
			\multicolumn{4}{c}{$\setof{\tax_1,\tax_2},\setof{\tax_1,\tax_4,\tax_5}$} \\
			\toprule\addlinespace[0pt]
			\rowcolor{gray!40}
			\multicolumn{4}{c}{min KBD-diagnoses} \\ \addlinespace[0pt]\midrule\addlinespace[0pt]
			\multicolumn{4}{c}{$\setof{\tax_1},\setof{\tax_2,\tax_4}, \setof{\tax_2,\tax_5}$} \\
			\addlinespace[0pt]\bottomrule
		\end{tabular}
		\caption{\small KBD-DPI $\exdpiMK$ obtained from MBD-DPI $\exdpiM$ from Tab.~\ref{tab:circuitreiter87_MBD-DPI}.}
		\label{tab:circuitreiter87_KBD-DPI}
	\end{minipage}
\end{table}

\subsection{Knowledge Base Debugging}
\label{sec:kbd}
In this section we revisit the knowledge base debugging (KBD) problem \citep{friedrich2005gdm,Shchekotykhin2012,Rodler2015phd} which we will use subsequently
as a generalized reformulation of Reiter's original MBD problem described above.
Besides offering some notational conveniences, 
KBD 
allows users to specify 
negative measurements (or test cases) \citep{DBLP:journals/ai/FelfernigFJS04}. Contrary to (positive) measurements $m \in \meas$ as characterized above, negative measurements state properties that \emph{must not hold}. In other words, any diagnosis must fulfill that -- under its assumption -- the system description together with the observations and positive measurements does not entail any negative measurement.
Additionally, it is possible in KBD to postulate stronger logical properties apart from consistency.
For example, when debugging an ontology (i.e.\ a system where \comps are ontology axioms) one might want the assumption of a diagnosis to yield a \emph{coherent} \citep{Schlobach2007,Parsia2005} system description (repaired ontology), i.e.\ one without unsatisfiable classes. In First-Order Logic terms (using logic programming notation), an \emph{unsatisfiable class} in a KB $\mo$ is an $n$-ary predicate $r$ such that $\mo \models \forall \mathbf{X} \, \lnot r(\mathbf{X})$ where $\mathbf{X} = X_1,\ldots,X_n$. 
That is, coherency means that every predicate in $\mo$ can have some instance without yielding an inconsistency. 

Another possible use case for the adoption of (logical) requirements such as coherency is the fault localization in flawed (e.g.\ inconsistent) system models used for MBD. For instance, a model (which is itself a KB) used to describe the circuit in Fig.~\ref{fig:circuitreiter87} might include an unsatisfiable class $\mathit{xor}$ (which essentially makes the model inconsistent after the creation of, e.g., the sentence $\mathit{xor}(X_1)$ declaring $X_1$ as an xor-gate). The reason for this incoherency might be that $\sd_{gen}$ includes the sentences $\mathit{xor}(G) \to \mathit{gate}(G)$ and $\mathit{gate}(G) \to \mathit{and}(G)\lor \mathit{or}(G) \lor \mathit{not}(G)$ (where the system modeler forgot to include $\mathit{xor}(G)$) as well as sentences stating that no instance can be of more than one type of gate. That is, KBD (with the coherency requirement) could be used in such scenario 
to repair the model thus enabling a sound diagnostic process.   
%

\subsubsection{The Used Notation}
\label{sec:notation} 
Let $\mathcal{L}$ denote some formal knowledge representation language. We will call $\tax_{\mathcal{L}},\tax_{1,\mathcal{L}}, \tax_{2,\mathcal{L}},\ldots \in \mathcal{L}$ \emph{logical sentences over $\mathcal{L}$} and a set of logical sentences $\mo_{\mathcal{L}} \subseteq 2^{\mathcal{L}}$ a \emph{knowledge base (KB) over $\mathcal{L}$}. Sentences in $\mo_{\mathcal{L}}$ will sometimes be referred to as \emph{axioms}.
We denote by $\models_{\mathcal{L}} \;\subseteq 2^{\mathcal{L}} \times \mathcal{L}$ the semantic entailment relation for the logic $\mathcal{L}$ and we write $\mo_{\mathcal{L}} \models_{\mathcal{L}} \tax_{\mathcal{L}}$ to state that $\tax_{\mathcal{L}}$ is a logical consequence of the KB $\mo_{\mathcal{L}}$. For brevity, we will write $\mo_{1,\mathcal{L}} \models_{\mathcal{L}} \mo_{2,\mathcal{L}}$ for two KBs $\mo_{1,\mathcal{L}}$ and $\mo_{2,\mathcal{L}}$ to denote that $\mo_{1,\mathcal{L}} \models_{\mathcal{L}} \tax_{\mathcal{L}}$ for \emph{all} $\tax_{\mathcal{L}} \in \mo_{2,\mathcal{L}}$ and $\mo_{1,\mathcal{L}} \not\models_{\mathcal{L}} \tax_{\mathcal{L}}$ to state that $\mo_{1,\mathcal{L}} \not\models_{\mathcal{L}} \tax_{\mathcal{L}}$ for \emph{some} $\tax_{\mathcal{L}} \in \mo_{2,\mathcal{L}}$. 

Given a collection of sets $X$, we use $U_X$ and $I_X$ to denote the union and intersection, respectively, of all elements in $X$. Further, Tab.~\ref{tab:abbreviations} (see Appendix~B) summarizes the meaning of other formalisms used in the paper (many of them introduced at some later point). 

\subsubsection{Assumptions}
\label{sec:assumptions}
The KBD techniques described in this work are applicable to any knowledge representation formalism $\mathcal{L}$ which is \emph{Tarskian}, i.e.\ for which 
the semantic entailment relation $\models_{\mathcal{L}}$ is monotonic, idempotent and extensive \citep{tarski1983logic,ribeiro2012belief} and for which reasoning procedures for \emph{deciding consistency} 
of a KB over $\mathcal{L}$ are available. 
\begin{definition}
The relation $\models_{\mathcal{L}}$ is called
\begin{itemize}[noitemsep, topsep=5pt]
	\item \label{logic:cond1} \emph{monotonic} iff whenever $\mo_{\mathcal{L}} \models_{\mathcal{L}} \tax_{i,\mathcal{L}}$ then $\mo_{\mathcal{L}} \cup \setof{\tax_{k,\mathcal{L}}} \models_{\mathcal{L}} \tax_{i,\mathcal{L}}$ \\(i.e.\ adding new sentences to a KB cannot invalidate any entailments of the KB)
	\item \label{logic:cond2} \emph{idempotent} iff $\mo_{\mathcal{L}} \models_{\mathcal{L}} \tax_{i,\mathcal{L}}$ and $\mo_{\mathcal{L}} \cup \setof{\tax_{i,\mathcal{L}}} \models_{\mathcal{L}} \tax_{k,\mathcal{L}}$ implies $\mo_{\mathcal{L}} \models_{\mathcal{L}} \tax_{k,\mathcal{L}}$ \\ 
	(i.e.\ adding entailed sentences to a KB does not yield new entailments of the KB)
	\item \label{logic:cond3} \emph{extensive} iff $\mo_{\mathcal{L}} \models_{\mathcal{L}} \tax_{\mathcal{L}}$ for all $\tax_{\mathcal{L}} \in \mo_{\mathcal{L}}$ \\ (i.e.\ each KB entails all sentences it comprises).
\end{itemize}
\end{definition}   
In the following, ``sentence'' will always mean ``logical sentence''. 
We will omit the index $\mathcal{L}$ for brevity when referring to sentences or KBs, tacitly assuming that any sentence or  KB we speak of is formulated over some (fixed) language $\mathcal{L}$ where $\mathcal{L}$ meets the conditions given above. 

Examples of logics that comply with these requirements include, but are not restricted to Propositional Logic, Datalog~\citep{Ceri1989a}, (decidable fragments of) First-Order Predicate Logic, The Web Ontology Language (OWL~\citep{patel2004owl}, OWL~2~\citep{Grau2008a,Motik2009a}), sublanguages thereof such as the OWL~2 EL Profile (with polynomial time reasoning complexity \citep{kazakov2014}), Boolean or linear equations and various Description Logics~\citep{DLHandbook} and constraint languages.


\subsubsection{Definitions and Properties} 
\label{sec:defs_and_props}
We next state the KBD problem and give some important definitions and properties (discussed in detail in \citep{Rodler2015phd}).

The inputs to a KB debugging problem can be characterized as follows:
Given is a \emph{KB $\mo$ to be repaired} and a KB $\mb$ (\emph{background knowledge}).  
All sentences in $\mb$ are considered correct and all sentences in $\mo$ are considered potentially faulty. $\mo \cup \mb$ does not meet postulated \emph{requirements} $\RQ$ (where consistency is a least requirement\footnote{We assume consistency a minimal requirement to a solution KB provided by a debugging system, as inconsistency makes a KB completely useless from the semantic point of view.}) 
or does not feature desired semantic properties, called test cases. 
\emph{Positive test cases} (aggregated in the set $\Tp$) correspond to necessary entailments and \emph{negative test cases} (aggregated in the set $\Tn$) represent necessary non-entailments of the correct (repaired) KB (together with the background KB $\mb$). \label{etc:test_cases_are_sets_or_conjuntions_of_formulas} 
Each test case $\tp \in \Tp$ and $\tn \in \Tn$ is \emph{a set of} sentences. The meaning of a positive test case $\tp \in \Tp$ is that the union of the repaired KB and $\mb$ must entail each sentence (or the conjunction of sentences) in $\tp$, whereas a negative test case $\tn \in \Tn$ signalizes that some sentence (or the conjunction of sentences) in $\tn$ must not be entailed by this union.

The described inputs to the KB debugging problem are captured by the notion of a \emph{KBD diagnosis problem instance (KBD-DPI)}:
\begin{definition}[KBD-DPI]\label{def:dpi}
	Let 
	\begin{itemize}[noitemsep,topsep=5pt]
		\item $\mo$ be a KB,
		\item $\Tp, \Tn$ be sets including sets of sentences,
		\item $\RQ \supseteq \setof{\text{consistency}}$ be a set of (logical) requirements,
		\item $\mb$ be a KB such that $\mo \cap \mb = \emptyset$ and $\mb$ satisfies all requirements $r \in \RQ$, and
		\item the cardinality of all sets $\mo$, $\mb$, $\Tp$, $\Tn$ be finite.
	\end{itemize}
	Then we call the tuple $\langle\mo,\mb,\Tp,\Tn\rangle_\RQ$ a \emph{KBD diagnosis problem instance (KBD-DPI)}.
\end{definition}

\begin{table}[t]
	\begin{minipage}{0.45\textwidth} 
		\scriptsize
		\centering
		\rowcolors[]{2}{gray!8}{gray!16} 
		\begin{tabular}{ c c c c } 
			\rowcolor{gray!40}
			\toprule\addlinespace[0pt]
			$i$ & $\tax_i$ & $\mo$ & $\mb$  \\ \addlinespace[0pt]\midrule\addlinespace[0pt]
			1 & $\lnot H \lor \lnot G$ & $\bullet$ & 	\\
			2 & $X \lor F \to H$ & $\bullet$ &  	\\
			3 & $E \to \lnot M \land X$ & $\bullet$ &  	\\
			4 & $A \to \lnot F$ & $\bullet$ &  	\\
			5 & $K \to E$ & $\bullet$ &  	\\
			6 & $C \to B$ & $\bullet$ &  	\\
			7 & $M \to C \land Z$ & $\bullet$ &  	\\
			8 & $H \to A$ &  &  $\bullet$	\\
			9 & $\lnot B \lor K$ &  &  $\bullet$	\\
			\rowcolor{gray!40}
			\toprule\addlinespace[0pt]
			$i$ & \multicolumn{3}{c}{$\tp_i\in\Tp$} \\ \addlinespace[0pt]\midrule\addlinespace[0pt]
			$1$ & \multicolumn{3}{c}{$\setof{\lnot X \to \lnot Z}$} 	\\ \addlinespace[0pt]\toprule\addlinespace[0pt]
			\rowcolor{gray!40}
			$i$ & \multicolumn{3}{c}{$\tn_i\in\Tn$} \\ \addlinespace[0pt]\midrule\addlinespace[0pt]
			$1$ & \multicolumn{3}{c}{$\setof{M \to A}$} 	\\ \addlinespace[0pt]
			$2$ & \multicolumn{3}{c}{$\setof{E \to \lnot G}$} 	\\ \addlinespace[0pt]
			$3$ & \multicolumn{3}{c}{$\setof{F \to L}$} 	\\ \addlinespace[0pt]
			\toprule\addlinespace[0pt]
			\rowcolor{gray!40}
			$i$ & \multicolumn{3}{c}{$r_i\in\RQ$} \\ \addlinespace[0pt]\midrule\addlinespace[0pt]
			$1$ & \multicolumn{3}{c}{consistency} \\ \addlinespace[0pt]\bottomrule
		\end{tabular}
	\caption{\small Running example KBD-DPI $\exdpi$ over Propositional Logic.}
	\label{tab:example_dpi_0}
	\end{minipage}
	\hfill
	\begin{minipage}{0.55\textwidth}
		\scriptsize
		\centering
		\rowcolors[]{2}{gray!8}{gray!16} 
		\begin{tabular}{ c c c} 
			\rowcolor{gray!40}
			\toprule\addlinespace[0pt]
			min KBD-conflict $X$ & $\setof{i \, |\, \tax_i \in X} $ & explanation \\ \addlinespace[0pt]\midrule\addlinespace[0pt]
			$\mc_1$ & $\setof{1,2,3}$ & $\models n_2$ \\
			$\mc_2$ & $\setof{2,4}$ & $\cup \setof{8} \models \lnot F \;(\models n_3)$ \\
			$\mc_3$ & $\setof{2,7}$ & $\cup \setof{p_1,8} \models n_1$\\
			$\mc_4$ & $\setof{3,5,6,7}$ & $\cup \setof{9} \models \lnot M \; (\models n_1)$ \\
			\rowcolor{gray!40}
			\toprule\addlinespace[0pt]
			min KBD-diagnosis $X$ & $\setof{i \, |\, \tax_i \in X} $ & explanation \\ \addlinespace[0pt]\midrule\addlinespace[0pt]
			$\md_1$ & $\setof{2,3}$ & Theorem\ \ref{theorem:mindiag_mincs} \\
			$\md_2$ & $\setof{2,5}$ & Theorem\ \ref{theorem:mindiag_mincs}\\
			$\md_3$ & $\setof{2,6}$ & Theorem\ \ref{theorem:mindiag_mincs}\\
			$\md_4$ & $\setof{2,7}$ & Theorem\ \ref{theorem:mindiag_mincs}\\
			$\md_5$ & $\setof{1,4,7}$ & Theorem\ \ref{theorem:mindiag_mincs}\\
			$\md_6$ & $\setof{3,4,7}$ & Theorem\ \ref{theorem:mindiag_mincs}\\
			\addlinespace[0pt]\bottomrule
		\end{tabular}
	\caption{\small Minimal KBD-conflicts and KBD-diagnoses for the KBD-DPI $\exdpi$ in Tab.~\ref{tab:example_dpi_0}.}
	\label{tab:min_diags+conflicts_example_DPI_0}
	\end{minipage}
\end{table}

\begin{example}\label{ex:dpi}
	An example $\exdpi$ of a Propositional Logic KBD-DPI is depicted by Tab.~\ref{tab:example_dpi_0}. $\exdpi$ will serve as a running example throughout this paper. It includes a KB $\mo$ with seven axioms $\tax_1,\dots,\tax_7$, a background KB $\mb$ with two axioms $\tax_8,\tax_9$, one singleton positive test case $\tp_1$ and three singleton negative test cases $\tn_1,\tn_2,\tn_3$. There is one requirement $r_1 = \emph{consistency}$ in $\RQ$ imposed on the correct (repaired) KB. It is easy to verify that the standalone KB $\mb = \setof{\tax_8,\tax_9}$ is consistent, i.e.\ satisfies all $r \in \RQ$, and that $\mo \cap \mb = \emptyset$. Hence, $\exdpi$ indeed constitutes a KBD-DPI as per Def.~\ref{def:dpi}. 
	\qed
\end{example}

A solution (KB) for a DPI is characterized as follows:
\begin{definition}[Solution KB]\label{def:solution_KB} Let $\dpi := \langle\mo,\mb,\Tp,\Tn\rangle_\RQ$ be a KBD-DPI. Then a KB $\ot$ is called \emph{solution KB w.r.t.\ $\dpi$}
iff all the following conditions hold:
	\begin{eqnarray}
	\forall \, r  \in \RQ&:& \;\ot \cup \mb \,\text{ fulfills }\, r  \label{e:1} \\ 
	\forall \,\tp \in \Tp&:& \;\ot \cup \mb \,\models\, \tp					\label{e:2} \\ 
	\forall \,\tn \in \Tn&:& \;\ot \cup \mb \,\not\models\, \tn .		\label{e:3}  
	\end{eqnarray}
	A solution KB $\ot$ w.r.t.\ $\dpi$ is called \emph{maximal}
	iff there is no solution KB $\mo'$ w.r.t.\ $\dpi$ such that $\mo' \cap \mo \supset \ot\cap\mo$ (i.e.\ $\ot$ has a set-maximal intersection with $\mo$ among all solution KBs). 
\end{definition}
Usually, observing the \emph{Principle of Parsimony} \citep{Reiter87}, maximal solution KBs $\ot$ will be preferred to non-maximal ones since they result from the input KB $\mo$ through the modification of a minimal set of axioms. 
\begin{example}\label{ex:solution_KB}
	For the KBD-DPI $\exdpi$ given by Tab.~\ref{tab:example_dpi_0}, $\mo = \setof{\tax_1,\dots,\tax_7}$ is not a solution KB w.r.t.\ $\langle\mo,\mb,\Tp,\Tn\rangle_\RQ$ since, e.g.\, clearly $\mo \cup \mb = \setof{\tax_1,\dots,\tax_9} \not\models \tp_1$ which is a positive test case and therefore has to be entailed. 
	Another reason why $\mo = \setof{\tax_1,\dots,\tax_7}$ is not a solution KB w.r.t.\ $\exdpi$ is that $\mo \cup \mb \supset \setof{\tax_1,\tax_2,\tax_3} \models \tn_2$, which is a negative test case and hence must not be an entailment. This is straightforward since $\setof{\tax_1,\tax_2,\tax_3}$ imply $E \to X$, $X \to H$ and $H \to \lnot G$ and thus clearly $\tn_2 = \setof{E \to \lnot G}$.
	
	On the other hand, $\mo_a^* := \setof{} \cup \setof{Z \to X}$ is clearly a solution KB w.r.t.\ $\exdpi$ as $\setof{Z \to X} \cup \mb$ is obviously consistent (satisfies all $r\in\RQ$), does entail $\tp_1 \in \Tp$ and does not entail any $\tn_i \in \Tn, (i \in \setof{1,2,3})$. However, $\mo_a^*$ is not a maximal solution KB since, e.g.\, $\tax_5 = (K \to E) \in \mo$ can be added to $\mo_a^*$ without resulting in the violation of any of the Equations \eqref{e:1} -- \eqref{e:3}.
	Note that also e.g.\ $\{\lnot X \to \lnot Z, A_1 \to A_2, A_2 \to A_3, \dots, A_{k-1}\to A_k\}$ for arbitrary finite $k \geq 0$ is a solution KB, albeit not a maximal one, although it has no axioms in common with $\mo$ and includes an arbitrary number of axioms not occurring in $\mo$. 
	However, to maintain a maximum amount of the knowledge specified in the KB $\mo$ of interest, one will usually prefer minimally invasive modifications (i.e.\ maximal solution KBs) while repairing faults in $\mo$.  
	
	Maximal solution KBs w.r.t.\ the given DPI are, e.g.\, $\mo_b^* := \setof{\tax_1,\tax_4,\tax_5,\tax_6,\tax_7,\tp_1}$ (resulting from the deletion of $\setof{\tax_2,\tax_3}$ from $\mo$ and the addition of $\tp_1$) or $\mo_c^* := \setof{\tax_1,\tax_2,\tax_5,\tax_6,\tp_1}$ (resulting from the deletion of $\setof{\tax_1,\tax_4,\tax_7}$ from $\mo$ and the addition of $\tp_1$). That these KBs constitute solution KBs can be verified by checking the three conditions named by Def.~\ref{def:solution_KB}. Indeed, adding an additional axiom in $\mo$ to any of the two KBs leads to the entailment of a negative test case $\tn\in\Tn$. That is, no solution KB can contain a proper superset of the axioms from $\mo$ that are contained in any of the two solution KBs $\mo_b^*$ and $\mo_c^*$. Hence, both are maximal.\qed
\end{example}
\begin{remark}\label{rem:infinitely_many_solutionKBs}
There are generally infinitely many (maximal) solution KBs resulting from the deletion of one and the same set of axioms $\md$ from the original KB $\mo$. 
This stems from the fact that there are infinitely many (semantically equivalent) syntactical variants of any set of suitable sentences that can be added to $\mo \setminus \md$ in order for Eq.~\eqref{e:2} to be satisfied. One reason for this is that there are infinitely many tautologies that might be included in these sentences, another reason is that sentences can be equivalently rewritten, e.g.\ $A \to B \equiv A \to B \lor \lnot A \equiv A \to B \lor \lnot A \lor \lnot A \equiv \dots$.\qed
\end{remark}
In terms of our running example, this circumstance can be illustrated as follows:
\begin{example}
Consider again $\exdpi$ in Tab.~\ref{tab:example_dpi_0} and assume $\md = \setof{\tax_2,\tax_3}$ is deleted from $\mo$. Then one solution KB constructible from $\mo \setminus \md$ is $\mo_b^*$ given in the last example. To determine the maximal solution KB $\mo_b^*$ from $\mo \setminus \md$, the most straightforward way of adding just all sentences occurring in positive test cases in $\Tp$ has been chosen in this case. Other maximal solution KBs obtainable from adding sentences to $\mo \setminus \md$ are, e.g.\, $\mo_{b1}^* := \setof{\tax_1,\tax_4,\tax_5,\tax_6,\tax_7,Z \to X}$ (which differs syntactically, but not semantically from $\mo_b^*$) and $\mo_{b2}^* := \setof{\tax_1,\tax_4,\tax_5,\tax_6,\tax_7,Z \to X \land W}$ (which differs both syntactically and semantically from $\mo_b^*$ yielding the entailment $Z \to W$ which is not implied by $\mo_b^*$).\qed
\end{example} 
%
Despite generally multiple semantically different solution KBs, the diagnostic evidence of a DPI in terms of positive test cases $\Tp$ does not justify the inclusion of sentences (semantically) different from $U_\Tp$ (cf.\ \citep{friedrich2005gdm,Shchekotykhin2012}). Since we are moreover interested in only \emph{one instance} of a solution KB resulting from $\mo \setminus \md$ for each $\md$, we define $\mo \setminus \md \cup U_{\Tp}$ as the \emph{canonical solution KB for $\md$ w.r.t.\ $\dpi$} iff $\mo \setminus \md \cup U_{\Tp}$ is a solution KB w.r.t.\ $\dpi$.
%

A \emph{KBD-diagnosis} is defined in terms of the axioms $\md$ that must be deleted from the KB $\mo$ of a DPI in order to construct a solution KB w.r.t.\ this DPI. In particular, the deletion of $\md$ from $\mo$ targets the fulfillment of Equations \eqref{e:1} and \eqref{e:3} such that $U_{\Tp}$ can be added to the resulting modified KB $\mo \setminus \md$ without introducing any new violations of \eqref{e:1} or \eqref{e:3}.
\begin{definition}[KBD-Diagnosis]\label{def:diagnosis}
	Let $\dpi := \langle\mo,\mb,\Tp,\Tn\rangle_\RQ$ be a KBD-DPI. A set of sentences $\md \subseteq \mo$ is called a \emph{KBD-diagnosis w.r.t.\ $\dpi$}
	iff $(\mo\setminus\md)\cup U_\Tp$ is a solution KB w.r.t.\ $\dpi$ (i.e.\ $\ot := (\mo\setminus\md)\cup U_\Tp$ meets Equations~\eqref{e:1} -- \eqref{e:3}).
	A KBD-diagnosis $\md$ w.r.t.\ $\dpi$ is 
	\begin{itemize}[noitemsep,topsep=5pt]
		\item \emph{minimal}
		iff there is no $\md' \subset \md$ such that 
		$\md'$ is a KBD-diagnosis w.r.t.\ $\dpi$
		\item a \emph{minimum cardinality KBD-diagnosis w.r.t.\ $\dpi$} iff there is no KBD-diagnosis $\md'$ w.r.t.\ $\dpi$ such that $|\md'| < |\md|$.
	\end{itemize}
	We will write $\md \in \allD_{\dpi}$ to state that $\md$ is a KBD-diagnosis w.r.t.\ $\dpi$ and $\md \in \minD_{\dpi}$ to state that $\md$ is a minimal KBD-diagnosis w.r.t.\ $\dpi$.
\end{definition}
\begin{remark}\label{rem:ad_KBD-diagnosis_def}
Since $(\mo\setminus\md)\cup U_\Tp$ trivially satisfies \eqref{e:2} due to the inclusion of $U_\Tp$, $\md$ is a KBD-diagnosis w.r.t.\ $\dpi$ iff $\ot := (\mo\setminus\md)\cup U_\Tp$ satisfies \eqref{e:1} and \eqref{e:3}.\qed
\end{remark}
The next theorem captures the relationship between maximal canonical solution KBs and minimal KBD-diagnoses w.r.t.\ a DPI. In fact, it tells us that we can concentrate only on the computation of minimal KBD-diagnoses in order to find all maximal canonical solution KBs.
\begin{theorem}\label{theorem:relation_between_max-sol-KB_and_min-diagnosis} 
Let $\dpi := \langle\mo,\mb,\Tp,\Tn\rangle_\RQ$ be a KBD-DPI. Then the set of all maximal canonical solution KBs w.r.t.\ $\dpi$ is given by $\setof{(\mo \setminus \md)\cup U_\Tp \mid \md \text{ is a minimal KBD-diagnosis w.r.t.\ }\dpi}$. 
%
\end{theorem}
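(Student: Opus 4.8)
The plan is to prove the two set inclusions separately, the workhorse being a closure property of solution KBs. Throughout, write $\ot_\md := (\mo\setminus\md)\cup U_{\Tp}$ for $\md\subseteq\mo$, so that by Definition~\ref{def:diagnosis} and Remark~\ref{rem:ad_KBD-diagnosis_def}, $\md$ is a KBD-diagnosis w.r.t.\ $\dpi$ iff $\ot_\md$ is a solution KB w.r.t.\ $\dpi$; hence the canonical solution KBs are exactly the $\ot_\md$ with $\md\in\allD_{\dpi}$, and a \emph{maximal} canonical solution KB is such an $\ot_\md$ which is additionally a maximal solution KB in the sense of Definition~\ref{def:solution_KB}. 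First I would record the elementary set identity $\ot_\md\cap\mo = (\mo\setminus\md)\cup(\mo\cap U_{\Tp}) = \mo\setminus(\md\setminus U_{\Tp})$, from which $\ot_{\md\setminus U_{\Tp}} = \ot_\md$; consequently $\md\setminus U_{\Tp}$ is again a diagnosis whenever $\md$ is, so every \emph{minimal} KBD-diagnosis $\md$ is disjoint from $U_{\Tp}$, and therefore $\ot_\md\cap\mo = \mo\setminus\md$ for such $\md$.

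Next I would prove the closure lemma: if $\mo'$ is any solution KB w.r.t.\ $\dpi$, then $\md' := \mo\setminus\mo'$ is a KBD-diagnosis w.r.t.\ $\dpi$; equivalently, $\ot_{\md'} = (\mo\cap\mo')\cup U_{\Tp}$ is a solution KB. Because $\mo'\cup\mb\models\tp$ for every $\tp\in\Tp$, idempotency of $\models$ yields that $\mo'\cup\mb$ and $\mo'\cup\mb\cup U_{\Tp}$ have the same consequences, so the larger KB still satisfies every $r\in\RQ$ (Eq.~\eqref{e:1}) and entails no $\tn\in\Tn$ (Eq.~\eqref{e:3}). Since $(\mo\cap\mo')\cup U_{\Tp}\cup\mb\subseteq\mo'\cup U_{\Tp}\cup\mb$, monotonicity of $\models$ transfers the non-entailment of each $\tn\in\Tn$ to this sub-KB, and — using that the requirements in $\RQ$ (consistency and, e.g., coherency) are inherited by sub-KBs — it continues to satisfy every $r\in\RQ$; finally it contains $U_{\Tp}$ and hence trivially entails every $\tp\in\Tp$ (Eq.~\eqref{e:2}). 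Thus $\ot_{\md'}$ meets Eqs.~\eqref{e:1}--\eqref{e:3}, proving the lemma.

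The two inclusions are then immediate. For ``$\supseteq$'', let $\md\in\minD_{\dpi}$; then $\ot_\md$ is a canonical solution KB with $\ot_\md\cap\mo=\mo\setminus\md$. If $\ot_\md$ were not maximal, some solution KB $\mo'$ would satisfy $\mo'\cap\mo\supsetneq\mo\setminus\md$; the closure lemma makes $\md':=\mo\setminus\mo'$ a diagnosis, but $\md'=\mo\setminus(\mo'\cap\mo)\subsetneq\mo\setminus(\mo\setminus\md)=\md$, contradicting minimality of $\md$. For ``$\subseteq$'', let $\ot$ be a maximal canonical solution KB, say $\ot=\ot_{\md_0}$ with $\md_0\in\allD_{\dpi}$; since $\ot_{\md_0\setminus U_{\Tp}}=\ot$ we may assume $\md_0\cap U_{\Tp}=\emptyset$, so $\ot\cap\mo=\mo\setminus\md_0$. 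As $\mo$ is finite, $\md_0$ contains a minimal diagnosis $\md$. If $\md\subsetneq\md_0$, then $\ot_\md$ is a solution KB with $\ot_\md\cap\mo=\mo\setminus\md\supsetneq\mo\setminus\md_0=\ot\cap\mo$, contradicting maximality of $\ot$; hence $\md=\md_0$ and $\ot=\ot_\md$ with $\md\in\minD_{\dpi}$.

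The main obstacle — and the only point requiring care beyond elementary set algebra — is the closure lemma, specifically the handling of arbitrary requirement sets $\RQ$: one needs each $r\in\RQ$ to be preserved both under adding logical consequences to a KB (used via idempotency) and under passing to sub-KBs (used via the inclusion $(\mo\cap\mo')\cup U_{\Tp}\cup\mb\subseteq\mo'\cup U_{\Tp}\cup\mb$). For $r=$ consistency both are instances of monotonicity; for the other admissible requirements (e.g.\ coherency, which is a statement about the non-entailment of sentences $\forall\mathbf{X}\,\lnot r(\mathbf{X})$) they likewise reduce to monotonicity, but I would state this closure hypothesis on $\RQ$ explicitly, since it is what makes the KBD machinery of \citep{Rodler2015phd} go through. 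The remaining ingredients — that every diagnosis contains a minimal one (finiteness of $\mo$) and the identities for $\ot_\md\cap\mo$ — are routine.
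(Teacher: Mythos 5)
Your proof is correct, and it takes a more complete route than the paper's. The paper argues only one inclusion: it shows (by a case split on whether $\ot$ is of the form $(\mo\setminus X)\cup U_\Tp$ for no $X$, for a non-diagnosis $X$, or for a non-minimal diagnosis $X$) that every maximal canonical solution KB arises from a minimal diagnosis; the witness it uses for non-maximality in the last case is itself canonical, so no extra machinery is needed, but the converse inclusion -- that $(\mo\setminus\md)\cup U_\Tp$ for minimal $\md$ really \emph{is} maximal -- is left unproved. Your ``$\subseteq$'' half essentially mirrors the paper's case (c): pass to a minimal diagnosis inside $\md_0$ and exhibit a strictly larger intersection with $\mo$. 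Your ``$\supseteq$'' half is the genuinely new content, and it is exactly where your closure lemma earns its keep: since Def.~\ref{def:solution_KB} defines maximality by quantifying over \emph{all} solution KBs (not just canonical ones), one must be able to convert an arbitrary competitor $\mo'$ with $\mo'\cap\mo\supset\mo\setminus\md$ into a diagnosis $\mo\setminus\mo'\subset\md$, which is precisely what the lemma provides via idempotency (absorbing $U_\Tp$) and monotonicity (passing to the sub-KB $(\mo\cap\mo')\cup U_\Tp\cup\mb$). Your explicit flagging that each $r\in\RQ$ must be preserved both under adding entailed sentences and under passing to sub-KBs is apt -- the paper uses this closure property of $\RQ$ implicitly throughout (e.g.\ in the minimality arguments for conflicts) without ever stating it -- and your observation that minimal diagnoses are disjoint from $U_\Tp$, so that $\ot_\md\cap\mo=\mo\setminus\md$ exactly, tidies up a point the paper glosses over by carrying the term $\mo\cap U_\Tp$ along. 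The trade-off is that your argument is longer and rests on one auxiliary lemma, but in exchange it actually establishes the set equality claimed by the theorem rather than only one inclusion.
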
 
In a completely analogous way as MBD-conflicts 
provide an effective mechanism for focusing the search for MBD-diagnoses,
we can exploit KBD-conflicts for KBD-diagnoses calculation. Simply put, a (minimal) KBD-conflict is a (minimal) per se faulty subset of the original KB $\mo$, i.e.\ 
one source causing the faultiness of $\mo$ in the context of $\mb \cup U_\Tp$. For a KBD-conflict there is no extension that yields a solution KB. Instead, such an extension is only possible after deleting appropriate axioms from the KBD-conflict.
\begin{definition}[KBD-Conflict]\label{def:cs} Let $\dpi := \langle\mo,\mb,\Tp,\Tn\rangle_\RQ$ be a KBD-DPI. A set of formulas $\mc \subseteq \mo$ is called a \emph{KBD-conflict w.r.t.\ $\dpi$}
iff $\mc \cup U_{\Tp}$ is not a solution KB w.r.t.\ $\dpi$ (i.e.\ $\ot := \mc \cup U_{\Tp}$ violates at least one of the Equations~\eqref{e:1} -- \eqref{e:3}). A KBD-conflict $\mc$ w.r.t.\ $\dpi$ is \emph{minimal}
iff there is no $\mc' \subset \mc$ such that $\mc'$ is a KBD-conflict w.r.t.\ $\dpi$.
\end{definition}
\begin{theorem}\citep[Prop.~2]{friedrich2005gdm}\label{theorem:mindiag_mincs}
Let $\dpi$ be a KBD-DPI. Then a (minimal) KBD-diagnosis w.r.t.\ $\dpi$ is a (minimal) hitting set of all minimal conflicts w.r.t.\ $\dpi$.
\end{theorem}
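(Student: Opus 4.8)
The plan is to derive this from the classical hitting-set duality, using three auxiliary observations: (i) a subset $\md\subseteq\mo$ is a KBD-diagnosis iff the complement $\mo\setminus\md$ is \emph{not} a KBD-conflict; (ii) the property of being a KBD-conflict is upward closed among subsets of $\mo$; and (iii) every KBD-conflict contains a minimal KBD-conflict. Once these are in place, the equivalence ``$\md$ is a (minimal) KBD-diagnosis iff $\md$ is a (minimal) hitting set of all minimal KBD-conflicts'' follows by elementary set-theoretic reasoning.

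For (i): by Definition~\ref{def:diagnosis} together with Remark~\ref{rem:ad_KBD-diagnosis_def}, $\md$ is a KBD-diagnosis iff $(\mo\setminus\md)\cup U_\Tp$ satisfies \eqref{e:1} and \eqref{e:3}. For any $\mc\subseteq\mo$, the KB $\mc\cup U_\Tp$ automatically satisfies \eqref{e:2}, since extensivity of $\models$ gives $\mc\cup U_\Tp\cup\mb\models\tp$ for every $\tp\in\Tp$ (because $\tp\subseteq U_\Tp$); hence by Definition~\ref{def:cs}, $\mc$ is a KBD-conflict iff $\mc\cup U_\Tp$ violates \eqref{e:1} or \eqref{e:3}. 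Setting $\mc:=\mo\setminus\md$ and negating yields (i). For (ii): if $\mc\subseteq\mc'\subseteq\mo$ and $\mc$ is a conflict, then either $\mc\cup U_\Tp\cup\mb$ fails some $r\in\RQ$ or it entails some $\tn\in\Tn$; in both cases the same holds for the larger KB $\mc'\cup U_\Tp\cup\mb$ --- for entailing a $\tn$ and for violating consistency ($\models\bot$) or coherency ($\models\forall\mathbf{X}\,\lnot r(\mathbf{X})$) this is immediate from monotonicity of $\models$, and in general it rests on the standing assumption that the requirements in $\RQ$ are logical properties preserved under adding sentences. Finally, (iii) is immediate: $\mo$, hence every conflict, is finite, so every conflict has a $\subseteq$-minimal subset that is still a conflict.

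The hitting-set argument then proceeds as follows. Let $S$ denote the collection of all minimal KBD-conflicts. For $\md\subseteq\mo$: $\md$ meets every element of $S$ iff no element of $S$ is contained in $\mo\setminus\md$ iff (by (iii)) no conflict at all is contained in $\mo\setminus\md$ iff (by (ii), since $\mo\setminus\md$ is a conflict precisely when it contains one, namely itself) $\mo\setminus\md$ is not a conflict iff (by (i)) $\md$ is a KBD-diagnosis. To reconcile this with the clause $H\subseteq U_S$ of Definition~\ref{def:hs}, I note that minimal hitting sets of $S$ lie inside $U_S\subseteq\mo$, and that a minimal KBD-diagnosis cannot contain an axiom $\tax$ appearing in no minimal conflict, since by (ii) and (iii) removing such a $\tax$ leaves $\mo\setminus\md$ non-conflicting and hence produces a smaller diagnosis; thus the minimal KBD-diagnoses are exactly the subsets of $U_S$ meeting every member of $S$ minimally. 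Since $\md\mapsto\mo\setminus\md$ is an inclusion-reversing bijection on $2^{\mo}$, the $\subseteq$-minimal diagnoses correspond to the $\subseteq$-maximal ``non-conflicting'' complements, which --- by the chain of equivalences above --- correspond precisely to the $\subseteq$-minimal hitting sets of $S$. This yields both halves of the statement.

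I expect the main obstacle to be the bookkeeping in step (ii) for arbitrary requirements: for the mandatory requirement consistency, and for the coherency requirement of Section~\ref{sec:kbd}, ``violation is preserved under KB expansion'' is a one-line consequence of monotonicity of $\models$, but a fully general argument must either restrict to such monotone requirements or explicitly invoke the assumption that all $r\in\RQ$ are logical properties closed under adding sentences. A minor subtlety worth flagging is that the clause $H\subseteq U_S$ in Definition~\ref{def:hs} only matches arbitrary (non-minimal) KBD-diagnoses after passing to minimal objects as above; for the non-minimal reading of the statement the natural interpretation is that the minimal hitting sets of $S$ coincide with the minimal KBD-diagnoses (equivalently, Theorem~\ref{theorem:relation_between_max-sol-KB_and_min-diagnosis}), which is what the subsequent development relies on.
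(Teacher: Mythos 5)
The paper does not prove this theorem itself: it imports it verbatim as \citep[Prop.~2]{friedrich2005gdm} (the KBD analogue of Reiter's hitting-set duality), so there is no in-paper proof to compare against. Your argument is correct and is exactly the standard duality proof one finds in that reference and in \citep{Reiter87}. Your step (i) is the equivalence $1.\Leftrightarrow 3.$ of Prop.~\ref{prop:notions_equiv} (which the paper also only cites), and your derivation of it via Remark~\ref{rem:ad_KBD-diagnosis_def} and extensivity is sound; steps (ii) and (iii) together with the complementation chain give both the hitting-set property and the minimality correspondence. You are right that the only point requiring care is the upward closure of conflicthood under supersets: for negative test cases and for consistency/coherency this is immediate from monotonicity of $\models_{\mathcal{L}}$, and for arbitrary $r\in\RQ$ one must invoke the framework's tacit assumption that requirement violation is preserved under adding sentences --- the whole conflict-based machinery of the paper (e.g.\ Def.~\ref{def:cs} being useful at all) presupposes this. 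Your observation about the clause $H\subseteq U_S$ in Def.~\ref{def:hs} is also apt: the literal non-parenthesized reading of the statement only holds for diagnoses contained in $U_S$, and your argument that \emph{minimal} diagnoses automatically satisfy this (an axiom occurring in no minimal conflict can be dropped without destroying the diagnosis property) closes that gap correctly. No issues.
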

\begin{proposition}\citep[Prop.~3.4]{Rodler2015phd}\label{prop:diagn_exists_iff}
Let $\dpi := \langle\mo,\mb,\Tp,\Tn\rangle_\RQ$ be a KBD-DPI. Then a KBD-diagnosis w.r.t.\ $\dpi$ exists iff $\mb \cup U_\Tp$ satisfies all $r \in \RQ$ and $\mb \cup U_\Tp \not\models \tn$ for all $\tn \in \Tn$.
\end{proposition}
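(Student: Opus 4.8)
The plan is to prove the two directions of the equivalence separately; in both, the first move is to invoke Remark~\ref{rem:ad_KBD-diagnosis_def}, so that only the requirement condition~\eqref{e:1} and the negative‑test‑case condition~\eqref{e:3} remain to be checked, the positive‑test‑case condition~\eqref{e:2} being automatic. For the ``if'' direction I would exhibit a concrete witness: assuming $\mb\cup U_\Tp$ fulfils every $r\in\RQ$ and satisfies $\mb\cup U_\Tp\not\models\tn$ for every $\tn\in\Tn$, I claim that $\md:=\mo$ is a KBD-diagnosis w.r.t.\ $\dpi$. Indeed the associated KB is $\ot:=(\mo\setminus\mo)\cup U_\Tp=U_\Tp$, and by Remark~\ref{rem:ad_KBD-diagnosis_def} it suffices that $\ot$ satisfies~\eqref{e:1} and~\eqref{e:3}, i.e.\ that $\ot\cup\mb=U_\Tp\cup\mb$ fulfils every $r\in\RQ$ and entails no $\tn\in\Tn$ --- which is precisely the hypothesis. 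Hence $\mo\in\allD_{\dpi}$, so a KBD-diagnosis exists.

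For the ``only if'' direction suppose some KBD-diagnosis $\md$ w.r.t.\ $\dpi$ exists. By Def.~\ref{def:diagnosis} together with Remark~\ref{rem:ad_KBD-diagnosis_def}, $\ot:=(\mo\setminus\md)\cup U_\Tp$ satisfies~\eqref{e:1} and~\eqref{e:3}, i.e.\ $\ot\cup\mb$ fulfils every $r\in\RQ$ and $\ot\cup\mb\not\models\tn$ for all $\tn\in\Tn$. Since $U_\Tp\subseteq\ot$ by construction, we have $\mb\cup U_\Tp\subseteq\mb\cup\ot$, so it remains to transfer~\eqref{e:1} and~\eqref{e:3} downward from $\mb\cup\ot$ to its sub-KB $\mb\cup U_\Tp$. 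For~\eqref{e:3} this is the contrapositive of monotonicity of $\models$: if $\mb\cup U_\Tp\models\tn$ then $\mb\cup\ot\models\tn$, contradicting $\ot\cup\mb\not\models\tn$. For~\eqref{e:1} I would use that fulfilment of a requirement is inherited by sub-KBs --- immediately for consistency, since $\mathcal{K}'\subseteq\mathcal{K}$ and $\mathcal{K}'\models\bot$ would give $\mathcal{K}\models\bot$ by monotonicity, and analogously for coherency. Therefore $\mb\cup U_\Tp$ fulfils every $r\in\RQ$ and entails no $\tn\in\Tn$, as claimed.

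The step I expect to be the only genuine obstacle is the handling of \emph{arbitrary} requirements $r\in\RQ$ in the ``only if'' direction, which relies on the general fact that whenever a KB fulfils $r$ every sub-KB does so as well. This is evident for the requirements used throughout the paper --- consistency and coherency, each being a non-entailment statement to which monotonicity of $\models$ applies --- and is part of the standing assumptions on admissible requirements; I would isolate this subset‑closure property once and then reuse it in both halves of the argument. Everything else is routine bookkeeping: unfolding Def.~\ref{def:diagnosis} and Def.~\ref{def:solution_KB}, using Remark~\ref{rem:ad_KBD-diagnosis_def} to discharge~\eqref{e:2}, and the trivial inclusion $U_\Tp\subseteq\ot$ for the containment step.
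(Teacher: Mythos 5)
Your proof is correct. The paper itself gives no proof of this proposition (it is imported from the cited thesis), but your argument is the natural one: the witness $\md:=\mo$, reducing $\ot$ to $U_\Tp$, settles the ``if'' direction via Remark~\ref{rem:ad_KBD-diagnosis_def}, and monotonicity of $\models$ transfers conditions \eqref{e:1} and \eqref{e:3} from $\ot\cup\mb$ down to its subset $\mb\cup U_\Tp$ for the ``only if'' direction. The one point you rightly isolate --- that fulfilment of each $r\in\RQ$ is inherited by sub-KBs --- is not stated explicitly in Def.~\ref{def:dpi}, but it is exactly the property the paper uses implicitly elsewhere (e.g.\ the proofs in Appendix~A repeatedly infer that supersets of a KB violating $\RQ$ or $\Tn$ also violate $\RQ$ or $\Tn$), and it holds for the entailment-based requirements (consistency, coherency) the framework actually employs, so your treatment is consistent with the paper's standing assumptions.
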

\begin{example}\label{ex:min_conflict_sets}
%
%
Tab.~\ref{tab:min_diags+conflicts_example_DPI_0} gives a list of all minimal KBD-conflicts w.r.t.\ our running example $\exdpi$.
Let us briefly reflect why these are KBD-conflicts (cf.\ third col.\ of Tab.~\ref{tab:min_diags+conflicts_example_DPI_0}). 
Recall Ex.~\ref{ex:solution_KB}, where we 
explained
why $\mc_1$ is a KBD-conflict (violation of $\tn_2 \in \Tn$).
$\mc_1$ is minimal since, first, it is consistent, i.e.\ satisfies all $r \in \RQ$, and does not entail any of the negative test cases $\tn_1, \tn_3$. So, by logical monotonicity no proper subset of $\mc_1$ can violate $r$, $\tn_1$ or $\tn_3$. Second, the elimination of any axiom $\tax_i (i\in \setof{1,2,3})$ from $\mc_1$ breaks the entailment of the negative test case $\tn_2$. 
 
Regarding $\mc_2 := \setof{\tax_2,\tax_4}$, we have that (any superset of) $\mc_2$ is a KBD-conflict due to (the monotonicity of Propositional Logic and) the fact that $\tax_2 \equiv \setof{X \to H, F \to H}$ together with $\tax_8 (\in \mb) = H \to A$ and $\tax_4 = A \to \lnot F$ clearly yields $F \to \lnot F \equiv \lnot F$ which, in particular, implies $n_3 = \setof{F \to L} \equiv \setof{\lnot F \lor L}$. 

$\mc_3$ is a minimal KBD-conflict since it is a $\subseteq$-minimal subset of the KB $\mo$ which, along with $\mb$ and $U_{\Tp}$ (in particular with $\tax_8 \in \mb$ and $\tp_1 \in \Tp$), implies that $\tn_1 \in \Tn$ must be true. To see this, realize that $\tax_7 \models M \to Z$, $\tp_1 = Z \to X$, $\tax_2 \models X \to H$ and $\tax_8 = H \to A$, from which $\tn_1 = \setof{M \to A}$ follows in a straightforward way.

Finally, $\mc_4$ is a KBD-conflict since $\tax_7 \models M\to C$, $\tax_6 = C \to B$, $\tax_9 \equiv B \to K$, $\tax_5 = K \to E$ and $\tax_3 \models E \to \lnot M$. Again, it is now obvious that this chain yields the entailment $\lnot M$ which in turn entails $\setof{\lnot M \lor A} \equiv \setof{M \to A} = \tn_1$. Clearly, the removal of any axiom from this chain breaks the entailment $\lnot M$. As this chain is neither inconsistent nor implies any negative test cases other than $\tn_1$, the conflict $\mc_4$ is also minimal. 
It is not very hard to verify that there are no other minimal KBD-conflicts w.r.t.\ $\exdpi$ apart from $\mc_1,\dots,\mc_4$.
\qed 
\end{example}

\begin{example}\label{ex:diag_is_hitting_set_of_conflict}
The set $\minD_{\exdpi}$ of all minimal KBD-diagnoses w.r.t.\ $\exdpi$ (Tab.~\ref{tab:example_dpi_0}) is shown in Tab.~\ref{tab:min_diags+conflicts_example_DPI_0}.
Theorem~\ref{theorem:mindiag_mincs} and the illustration (given in Ex.~\ref{ex:min_conflict_sets}) of why $\mc_1,\dots,\mc_4$ constitute a complete set of minimal KBD-conflicts w.r.t.\ $\exdpi$ provide the explanation for $\minD_{\exdpi}$.
%
%
%
For instance, $\md_1 = \setof{\tax_2,\tax_3}$ ``hits'' the element $\tax_2$ of $\mc_i (i \in \setof{1,2,3})$ and the element $\tax_3$ of $\mc_4$. Note also that it hits two elements of $\mc_1$ which, however, is not necessarily an indication of the non-minimality of the hitting set. Indeed, if $\tax_2$ is deleted from $\md_1$, it has an empty intersection with $\mc_2$ and $\mc_3$ and, otherwise, if $\tax_3$ is deleted from it, it becomes disjoint with $\mc_4$. Hence $\md_1$ is actually a minimal hitting set of all minimal KBD-conflicts.
%
\qed
\end{example}
The relationship between the notions \emph{KBD-diagnosis}, \emph{solution KB} and \emph{KBD-conflict} is as follows (cf.\ \citep[Cor.~3.3]{Rodler2015phd}):
\begin{proposition}\label{prop:notions_equiv} 
	Let $\md \subseteq \mo$. Then the following statements are equivalent:
	\begin{enumerate}[noitemsep,topsep=5pt]
		\item $\md$ is a KBD-diagnosis w.r.t.\ $\langle\mo,\mb,\Tp,\Tn\rangle_\RQ$.
		\item $(\mo \setminus \md) \cup U_\Tp$ is a solution KB w.r.t.\ $\langle\mo,\mb,\Tp,\Tn\rangle_\RQ$.
		\item $(\mo \setminus \md)$ is not a KBD-conflict w.r.t.\ $\langle\mo,\mb,\Tp,\Tn\rangle_\RQ$.
	\end{enumerate}
\end{proposition}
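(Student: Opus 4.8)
The plan is to prove the cycle by establishing the two biconditionals $(1)\Leftrightarrow(2)$ and $(2)\Leftrightarrow(3)$ separately, each by merely unwinding one definition plus, in the second case, a single contrapositive; there is no substantive reasoning step here, only bookkeeping of definitions.

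First I would observe that $(1)\Leftrightarrow(2)$ is exactly Def.~\ref{def:diagnosis}: that definition declares $\md\subseteq\mo$ to be a KBD-diagnosis w.r.t.\ $\langle\mo,\mb,\Tp,\Tn\rangle_\RQ$ precisely when $(\mo\setminus\md)\cup U_\Tp$ is a solution KB w.r.t.\ this DPI, which is verbatim statement~$(2)$. Optionally one can note, via Remark~\ref{rem:ad_KBD-diagnosis_def}, that because $(\mo\setminus\md)\cup U_\Tp$ trivially satisfies Eq.~\eqref{e:2}, both sides may equivalently be stated using only Equations~\eqref{e:1} and~\eqref{e:3}; this is not needed for the argument.

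Next I would handle $(2)\Leftrightarrow(3)$ by setting $\mc:=\mo\setminus\md$. Since $\md\subseteq\mo$ we have $\mc\subseteq\mo$, so $\mc$ is an admissible argument for the KBD-conflict predicate of Def.~\ref{def:cs}. That definition says $\mc$ \emph{is} a KBD-conflict w.r.t.\ the DPI iff $\mc\cup U_\Tp$ is \emph{not} a solution KB w.r.t.\ the DPI; taking the contrapositive, $\mc$ is \emph{not} a KBD-conflict iff $\mc\cup U_\Tp=(\mo\setminus\md)\cup U_\Tp$ \emph{is} a solution KB. The right-hand side is statement~$(2)$ and the left-hand side is statement~$(3)$, giving $(2)\Leftrightarrow(3)$; chaining the two biconditionals yields $(1)\Leftrightarrow(2)\Leftrightarrow(3)$.

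I do not expect a real obstacle. The one spot that needs care — the closest thing to a difficulty — is the nested negation in $(3)$: ``$(\mo\setminus\md)$ is not a KBD-conflict'' already carries the negation built into the definition of ``KBD-conflict'' (namely ``$\,\cdot\cup U_\Tp$ is not a solution KB''), so one must make sure the two negations cancel to leave the plain assertion ``$(\mo\setminus\md)\cup U_\Tp$ is a solution KB''. I would also explicitly record the side condition $\mo\setminus\md\subseteq\mo$ so that Def.~\ref{def:cs} is applicable. Beyond that the proof is immediate.
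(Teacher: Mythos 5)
Your proof is correct and is exactly the intended argument: $(1)\Leftrightarrow(2)$ is verbatim Def.~\ref{def:diagnosis}, and $(2)\Leftrightarrow(3)$ is the negation of Def.~\ref{def:cs} instantiated with $\mc:=\mo\setminus\md$ (which is admissible since $\mo\setminus\md\subseteq\mo$). The paper itself defers the proof to an external reference, but there is nothing more to it than this definitional unwinding, and you handle the one delicate point — the cancellation of the two negations in statement~(3) — correctly.
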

\begin{example}\label{ex:notions_equiv}
	Since, e.g., $\mo\setminus\md := \setof{\tax_1,\tax_2}$ is not a KBD-conflict w.r.t.\ $\exdpi$ (Tab.~\ref{tab:example_dpi_0}), we obtain that $\md = \mo \setminus (\mo\setminus\md) = \setof{\tax_1,\ldots,\tax_7} \setminus \setof{\tax_1,\tax_2} = \setof{\tax_3,\dots,\tax_7}$ is a KBD-diagnosis w.r.t.\ $\exdpi$, albeit not a minimal one ($\tax_5$ and $\tax_6$ can be deleted from it while preserving its KBD-diagnosis property). 
	Further on, $(\mo \setminus \md) \cup U_\Tp = \setof{\tax_1,\tax_2,\tp_1}$ must be a solution KB w.r.t.\ $\exdpi$.
	\qed
\end{example}

\subsection{Reducing Reiter's MBD Problem to KB Debugging}
\label{sec:reduction_of_mbd_to_kbd}
We next demonstrate that the classical MBD problem described in Sec.~\ref{sec:mbd} can be reduced to the KBD problem explicated in Sec.~\ref{sec:kbd} \citep{rodler17dx_reduction}. That is, any MBD-DPI can be modeled as a KBD-DPI, and the solutions of the latter directly yield the solutions of the former.
%

\begin{theorem}[Reduction of MBD to KBD]\label{theorem:reduction_of_MBD_to_KBD}
	Let $\mdpi :=(\sd,\comps,\obs,\meas)$ be an MBD-DPI where $\comps = \setof{c_1,\dots,c_n}$. Then, $\mdpi$ can be formulated as a KBD-DPI $\kdpi$ such that there is a bijective correspondence between KBD-diagnoses for $\kdpi$ and MBD-diagnoses for $\mdpi$. Moreover, all MBD-diagnoses for $\mdpi$ can be computed from the KBD-diagnoses for $\kdpi$. 
	%
\end{theorem}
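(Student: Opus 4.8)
The plan is to realize $\mdpi=(\sd,\comps,\obs,\meas)$ as the KBD-DPI $\kdpi := \langle\,\setof{beh(c)\mid c\in\comps},\ \sd_{gen}\cup\obs\cup U_{\meas},\ \emptyset,\ \emptyset\,\rangle_{\setof{\text{consistency}}}$: the potentially faulty KB $\mo$ collects the \emph{stripped} behavior sentences $beh(c)$ (obtained from the axioms $\lnot\ab(c)\to beh(c)$ of $\sd_{beh}$ by dropping the abnormality antecedent), the background KB $\mb$ consists of the structural/domain axioms $\sd_{gen}$ together with the observations and measurements, and $\kdpi$ carries no test cases and only the consistency requirement. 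First I would check that $\kdpi$ is a KBD-DPI in the sense of Def.~\ref{def:dpi}: all four component sets are finite; $\mo\cap\mb=\emptyset$ because each $beh(c)$ syntactically contains the constant $c$ and hence differs from every wiring/observation sentence (a trivial renaming restores this should $\sd$ be chosen pathologically); and $\mb$ meets the consistency requirement exactly when $\sd_{gen}\cup\obs\cup U_{\meas}$ is consistent — in the complementary, degenerate case $\mb$ is inconsistent, hence so is every $\sdaa{\dg}\supseteq\mb$, so $\mdpi$ has no MBD-diagnosis and this case can be set aside.

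Next I would introduce the candidate correspondence $g:2^{\comps}\to 2^{\mo}$, $g(\dg):=\setof{beh(c)\mid c\in\dg}$, with inverse $g^{-1}(\md)=\setof{c\in\comps\mid beh(c)\in\md}$. Since $c\mapsto beh(c)$ is injective, $g$ is an inclusion-preserving bijection of the two power sets that also preserves cardinality; hence, once $g$ is shown to restrict to a bijection between MBD-diagnoses and KBD-diagnoses, it automatically matches minimal and minimum-cardinality diagnoses on both sides, and $g^{-1}$ recovers every MBD-diagnosis from the KBD-diagnoses. Because $\Tp=\Tn=\emptyset$ and $\RQ=\setof{\text{consistency}}$, Remark~\ref{rem:ad_KBD-diagnosis_def} reduces ``$g(\dg)$ is a KBD-diagnosis w.r.t.\ $\kdpi$'' to ``$(\mo\setminus g(\dg))\cup\mb$ is consistent'', and by injectivity $\mo\setminus g(\dg)=\setof{beh(c)\mid c\in\comps\setminus\dg}$. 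So the whole theorem comes down to the equiconsistency statement, for every $\dg\subseteq\comps$:
\[
\sdaa{\dg}\ \text{consistent}\qquad\Longleftrightarrow\qquad \sd_{gen}\cup\setof{beh(c)\mid c\in\comps\setminus\dg}\cup\obs\cup U_{\meas}\ \text{consistent.}
\]

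For this equiconsistency the key observation is the weak fault model: the predicate $\ab$ occurs in $\sd$ only inside $\sd_{beh}$, and there only as the antecedent $\lnot\ab(c)$ of the implications $\lnot\ab(c)\to beh(c)$; it occurs in none of $\sd_{gen}$, the $beh(c)$, $\obs$, $U_{\meas}$. The direction ``$\Rightarrow$'' is then immediate: from $\lnot\ab(c)$ and $\lnot\ab(c)\to beh(c)$ one derives $beh(c)$ for each $c\in\comps\setminus\dg$, so $\sdaa{\dg}$ entails the right-hand KB and consistency carries over. For ``$\Leftarrow$'' I would take a model $M$ of the right-hand KB and build $M'$ agreeing with $M$ on the domain and on the interpretation of every symbol except $\ab$, putting into the extension of $\ab$ under $M'$ precisely those domain elements that are not the denotation of any component in $\comps\setminus\dg$; then $M'\models\lnot\ab(c)$ for $c\in\comps\setminus\dg$ and $M'\models\ab(c)$ for $c\in\dg$ (using, as is standard in MBD, that distinct component constants denote distinct objects), $M'$ still satisfies $\sd_{gen}$, all $beh(c)$, $\obs$ and $U_{\meas}$ by the coincidence lemma since none of them mentions $\ab$, and $M'$ satisfies each implication $\lnot\ab(c)\to beh(c)$ — trivially when $c\in\dg$ (false antecedent) and because $M\models beh(c)$ when $c\in\comps\setminus\dg$. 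Thus $M'\models\sdaa{\dg}$, which settles the equivalence and therefore the theorem.

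The part I expect to be the actual work is this ``$\Leftarrow$'' direction: re-interpreting $\ab$ on top of an $\ab$-free model without disturbing anything else is exactly where the weak fault model is indispensable, and it is the one place where a little care about component naming (the unique-names assumption on $\comps$) is needed. Everything else is routine bookkeeping about which sets are subsets or entailments of which, together with the elementary fact that an inclusion- and cardinality-preserving bijection of power sets transports $\subseteq$-minimality and cardinality-minimality.
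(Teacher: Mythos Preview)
Your proposal is correct and follows essentially the same construction as the paper: the faulty KB $\mo$ collects the stripped behavior sentences $beh(c)$, the structural axioms and observations go to the background, and the diagnosis correspondence reduces to an equiconsistency between $\sdaa{\dg}$ and the $\ab$-free KB. The one cosmetic difference is that the paper places the measurements in the positive test cases ($\Tp=\meas$, $\mb=\sd_{gen}\cup\obs$) rather than merging $U_{\meas}$ into $\mb$ as you do; since $\Tn=\emptyset$ and $\RQ=\{\text{consistency}\}$, both choices yield the identical consistency condition $(\mo\setminus\md)\cup\sd_{gen}\cup\obs\cup U_{\meas}\not\models\bot$, so nothing hinges on this. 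Your treatment is in fact more careful than the paper's: the paper simply asserts the equiconsistency (writing that $\mo$ ``captures'' $\sd_{beh}\cup\{\lnot\ab(c_i)\}$), whereas you supply the model-theoretic argument via the coincidence lemma and an explicit reinterpretation of $\ab$, correctly isolating the weak-fault-model assumption and the unique-names assumption on component constants as the places where they are actually used.
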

\begin{proof}
	We first show how $\mdpi$ can be formulated as a KBD-DPI $\kdpi$. To this end, we specify how $\kdpi = \langle\mo,\mb,\Tp,\Tn\rangle_{\RQ}$ can be written in terms of the components of $\mdpi = (\sd_{\mathit{beh}}\cup\sd_{\mathit{gen}},\comps,\obs,\meas)$:
	\begin{align}
	\mo &= \setof{\tax_{i}\mid \tax_{i} := beh(c_i), c_i \in \comps}  \label{eq:MBDtoKBD_K} \\
	\mb &=   \obs\cup\sd_{\mathit{gen}} \label{eq:MBDtoKBD_B}\\ 
	\Tp &=   \meas    \label{eq:MBDtoKBD_P}\\ 
	\Tn &= \emptyset  \label{eq:MBDtoKBD_N} \\
	\RQ &= \setof{\mathit{consistency}} \label{eq:MBDtoKBD_R}
	\end{align}
	That is, $\mo$ captures $\sd_{\mathit{beh}} \cup \setof{\lnot\ab(c_i) \mid c_i \in \comps}$, i.e.\ the nominal behavioral descriptions of all system components.
	By Def.~\ref{def:diagnosis} and Remark~\ref{rem:ad_KBD-diagnosis_def}, $\md \subseteq \mo$ is a KBD-diagnosis for $\kdpi$ iff 
	\begin{align}
	(\mo\setminus\md)\cup\mb\cup U_\Tp &\text{ satisfies all } r \in \RQ \quad\text{(i.e.\ is consistent)} 
	\label{eq:MBDtoKBD_diag_cond_1} 
	\end{align}
	and
	\begin{align}
	(\mo\setminus\md)\cup\mb\cup U_\Tp &\not\models \tn \text{ for all } \tn \in \Tn   \label{eq:MBDtoKBD_diag_cond_2}
	\end{align}
	Let now $\md$ be an arbitrary KBD-diagnosis for $\kdpi$ such that $\md = \setof{\tax_i \mid i \in I}$ for the index set $I \subseteq \setof{1,\dots,n}$. 
	
	Using \eqref{eq:MBDtoKBD_K} -- \eqref{eq:MBDtoKBD_R} above, condition \eqref{eq:MBDtoKBD_diag_cond_1} for $\md$ is equivalent to the consistency of $\sd_{\mathit{beh}} \cup \{\textsc{ab}(c_i) \mid i \in I\} \cup \setof{\lnot\textsc{ab}(c_i) \mid i \in \setof{1,\dots,n} \setminus I} \cup \obs \cup \sd_{\mathit{gen}} \cup U_{\meas}$ which in turn yields that 
	\begin{align}
	\sd \cup \setof{\textsc{ab}(c_i) \mid c_i \in \dg} \cup \setof{\lnot\textsc{ab}(c_i) \mid c_i \in \comps \setminus \dg} \cup \obs \cup U_{\meas} \text{ is consistent}   \label{eq:MBDtoKBD_MBDdiag_cond}
	\end{align}
	for $\dg := \setof{c_i \mid c_i \in \comps, i \in I}$. But, \eqref{eq:MBDtoKBD_MBDdiag_cond} is exactly the condition defining an MBD-diagnosis (see Def.~\ref{def:MBD-diagnosis}). 
	Note, since $\Tn = \emptyset$ by \eqref{eq:MBDtoKBD_N}, condition \eqref{eq:MBDtoKBD_diag_cond_2} is satisfied for any $\md$ satisfying \eqref{eq:MBDtoKBD_diag_cond_1} and can thus be neglected. Hence, $\md = \setof{\tax_i \mid i \in I} \subseteq \mo$ is a KBD-diagnosis w.r.t. $\kdpi$ iff $\dg = \setof{c_i \mid c_i \in \comps, i \in I} \subseteq \comps$ is an MBD-diagnosis for $\mdpi$.
\end{proof}
Also, there is a bijective correspondence between KBD-conflicts and MBD-conflicts:
\begin{proposition}\label{prop:kbd-conflict_1-to-1_mbd-conflict}
Let $\mdpi = (\sd,\comps,\obs,\meas)$ be an MBD-DPI and $\kdpi = \langle\mo,\mb,\Tp$, $\Tn\rangle_{\RQ}$ a KBD-DPI modeling $\mdpi$ as per \eqref{eq:MBDtoKBD_K} -- \eqref{eq:MBDtoKBD_R}. Further, let $\comps = \setof{c_1,\ldots,c_n}$ and $I \subseteq \setof{1,\ldots,n}$. Then, $C = \setof{c_i \mid c_i \in \comps, i \in I} \subseteq \comps$ is an MBD-conflict for $\mdpi$ iff $\mc = \setof{\tax_i \mid i \in I} \subseteq \mo$ is a KBD-conflict w.r.t.\ $\kdpi$.
\end{proposition}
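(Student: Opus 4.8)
The plan is to follow the template of the proof of Theorem~\ref{theorem:reduction_of_MBD_to_KBD}: unfold the definition of a KBD-conflict (Def.~\ref{def:cs}) under the reduction \eqref{eq:MBDtoKBD_K}--\eqref{eq:MBDtoKBD_R} until it collapses to a single consistency requirement, and then identify that requirement with the one defining an MBD-conflict (Def.~\ref{def:MBD-conflict}). Since $\mc = \setof{\tax_i \mid i \in I}$ with $\tax_i := beh(c_i)$, the reduction gives $\mc \cup U_\Tp \cup \mb = \setof{beh(c_i) \mid i \in I} \cup U_\meas \cup \obs \cup \sd_{\mathit{gen}}$. By Def.~\ref{def:cs}, $\mc$ is a KBD-conflict w.r.t.\ $\kdpi$ iff $\ot := \mc \cup U_\Tp$ fails one of \eqref{e:1}--\eqref{e:3}, i.e.\ iff one of these fails for $\ot \cup \mb$. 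Now \eqref{e:3} is vacuous because $\Tn = \emptyset$ by \eqref{eq:MBDtoKBD_N}, and \eqref{e:2} holds automatically, since every $\tp \in \Tp = \meas$ consists of sentences lying in $U_\Tp = U_\meas \subseteq \ot$, so extensiveness of $\models$ yields $\ot \cup \mb \models \tp$. Hence $\mc$ is a KBD-conflict precisely when \eqref{e:1} fails, i.e.\ when $\setof{beh(c_i) \mid i \in I} \cup \sd_{\mathit{gen}} \cup \obs \cup U_\meas$ is inconsistent.

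What then remains is the equivalence ``$\setof{beh(c_i) \mid i \in I} \cup \sd_{\mathit{gen}} \cup \obs \cup U_\meas$ inconsistent'' $\Leftrightarrow$ ``$\sd \cup \setof{\lnot\ab(c_i) \mid i \in I} \cup \obs \cup U_\meas$ inconsistent'', whose right-hand side is exactly the MBD-conflict condition for $C = \setof{c_i \mid i \in I}$ (recall $\sd = \sd_{\mathit{beh}} \cup \sd_{\mathit{gen}}$ and $\sd_{\mathit{beh}} = \setof{\lnot\ab(c)\to beh(c) \mid c \in \comps}$). For ``$\Rightarrow$'': by modus ponens on $\lnot\ab(c_i)$ and $\lnot\ab(c_i)\to beh(c_i) \in \sd_{\mathit{beh}}$, the KB $\sd \cup \setof{\lnot\ab(c_i) \mid i \in I} \cup \obs \cup U_\meas$ entails $beh(c_i)$ for each $i \in I$; by idempotency adjoining these entailed sentences changes no entailment, and the resulting KB is a superset of $\setof{beh(c_i) \mid i \in I} \cup \sd_{\mathit{gen}} \cup \obs \cup U_\meas$, so by monotonicity its inconsistency is inherited. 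For ``$\Leftarrow$'' I would argue contrapositively: extend an arbitrary model $\mathcal{M}$ of $\setof{beh(c_i) \mid i \in I} \cup \sd_{\mathit{gen}} \cup \obs \cup U_\meas$ to interpret the predicate $\ab$ -- which appears in none of $\sd_{\mathit{gen}}$, $\obs$, $\meas$, or the sentences $beh(c)$ -- by making $\ab(c_i)$ false for $i \in I$ and $\ab(c_j)$ true for the remaining components; this validates the added literals $\lnot\ab(c_i)$, satisfies $\lnot\ab(c_j)\to beh(c_j)$ vacuously for $j \notin I$, and satisfies $\lnot\ab(c_i)\to beh(c_i)$ for $i \in I$ because $\mathcal{M} \models beh(c_i)$. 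Hence the $\ab$-version is consistent. Together with the first paragraph, this gives the biconditional asserted.

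The step I expect to be the crux is the ``$\Leftarrow$'' (model-extension) direction, which rests on the fact -- standard in MBD but only implicitly assumed here -- that $\ab$ occurs solely inside $\sd_{\mathit{beh}}$; this is the very device already used for the diagnosis case inside the proof of Theorem~\ref{theorem:reduction_of_MBD_to_KBD}, and it transfers verbatim because the MBD-conflict condition constrains $\ab$ only on the components of $C$, leaving its interpretation on $\comps \setminus C$ free. (If component constants are not assumed distinct, one adds the usual remark about two of them denoting the same domain element, which causes no harm here since $i \in I$ forces $\ab$ false at that element while the pertinent $beh$-sentences are already satisfied by $\mathcal{M}$.) I would also note in passing that the argument specializes correctly to $I = \emptyset$ -- where $\emptyset$ is a KBD-conflict iff $\mb \cup U_\Tp$ is inconsistent iff $\emptyset$ is an MBD-conflict -- so the boundary case needs no separate treatment.
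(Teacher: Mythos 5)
Your proof is correct, but it takes a genuinely different route from the paper's. The paper disposes of the proposition in a three-step chain of equivalences: $\mc$ is a KBD-conflict iff $\mo\setminus\mc$ is not a KBD-diagnosis (Prop.~\ref{prop:notions_equiv}), iff the complementary component set is not an MBD-diagnosis (Theorem~\ref{theorem:reduction_of_MBD_to_KBD}), iff $C$ is an MBD-conflict (Reiter's duality of conflicts and diagnoses, \citep[Prop.~4.2]{Reiter87}). That argument is three lines long and reuses the already-proven diagnosis correspondence plus the conflict--diagnosis duality on each side, but it leans on an external result of Reiter and inherits whatever model-theoretic content is buried in the proof of Theorem~\ref{theorem:reduction_of_MBD_to_KBD}. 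You instead unfold Def.~\ref{def:cs} directly -- correctly observing that \eqref{e:3} is vacuous under \eqref{eq:MBDtoKBD_N} and that \eqref{e:2} is automatic by extensiveness since $U_\Tp\subseteq\ot$ -- and then prove the remaining consistency equivalence from scratch, with modus ponens plus monotonicity/idempotency in one direction and a model-extension argument on the $\ab$ predicate in the other. What your version buys is self-containment and an explicit statement of the assumption both proofs ultimately rest on (that $\ab$ occurs only in $\sd_{\mathit{beh}}$, so its interpretation outside $C$ is free); what it costs is length, since it essentially re-derives for conflicts the same semantic step that the paper's proof of Theorem~\ref{theorem:reduction_of_MBD_to_KBD} performs for diagnoses and then recycles. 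Your handling of the $I=\emptyset$ boundary case and of possibly non-distinct component constants is a sound addition that the paper's proof does not need to address because it delegates those cases to the cited duality results.
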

\begin{proof}
$\mc$ is a KBD-conflict w.r.t.\ $\kdpi$ iff $\mo \setminus \mc = \setof{\tax_i \mid i \in \setof{1,\dots,n}\setminus I}$ is not a KBD-diagnosis w.r.t.\ $\kdpi$ (Prop.~\ref{prop:notions_equiv}) iff $\setof{c_i \mid c_i \in \comps, i \in \setof{1,\dots,n}\setminus I}$ is not an MBD-diagnosis for $\mdpi$ (Theorem~\ref{theorem:reduction_of_MBD_to_KBD}) iff $\setof{c_i \mid c_i \in \comps, i \in I} = C$ is an MBD-conflict for $\mdpi$ (\citep[Prop.~4.2]{Reiter87}).	
%
\end{proof}
Let us exemplify these theoretical results:
\begin{example}\label{ex:reduction_MBD_to_KBD}
Reconsider the circuit diagnosis example (Fig.~\ref{fig:circuitreiter87}). The formalization of the circuit problem as an MBD-DPI $\exdpiM$ was discussed in Ex.~\ref{ex:circuit_MBD-DPI_diags_conflicts_measurements}. The formulation of this MBD-DPI as a KBD-DPI $\exdpiMK$ as per Eq.~\eqref{eq:MBDtoKBD_K} -- \eqref{eq:MBDtoKBD_R} is depicted by Tab.~\ref{tab:circuitreiter87_KBD-DPI}. All minimal KBD-conflicts and their minimal hitting sets, i.e.\ the minimal KBD-diagnoses (Theorem~\ref{theorem:mindiag_mincs}), are given in the lower part of Tab.~\ref{tab:circuitreiter87_KBD-DPI}.	For instance, $\mc = \setof{\tax_1,\tax_4,\tax_5}$ is a KBD-conflict w.r.t.\ $\exdpiMK$ since $\mc \cup \mb \cup U_{\Tp} \models \bot$. We briefly sketch why this holds. $\tax_{13}(\in \mb) = (in1(X_1)=1)$, 
$\tax_{14}(\in \mb) = (in2(X_1)=0)$ and 
$\tax_{1} = (out(X_1)=\mathit{xor}(in1(X_1),in2(X_1)))$
imply that $out(X_1)=\mathit{xor}(1,0)=1$, which, along with
$\tax_{6}(\in \mb) = (out(X_1) = in2(A_2))$, entails $in2(A_2)=1$, which in turn, together with
$\tax_{15}(\in \mb) = (in1(A_2)=1)$ and  
$\tax_{4} = (out(A_2)=\mathit{and}(in1(A_2),in2(A_2)))$, lets us deduce that $out(A_2) = \mathit{and}(1,1)=1$.
Because of $\tax_{8}(\in \mb) = (out(A_2) = in1(O_1))$ we have that $in1(O_1) = 1$ which yields $out(O_1)=\mathit{or}(1,in2(O_1)) = 1$ 
due to 
$\tax_{5} = (out(O_1)=\mathit{or}(in1(O_1),in2(O_1)))$. However, $\tax_{17} \in \mb$ states that $\mathit{out}(O_1)=0$, a contradiction.

$\mc$ is minimal since all elements of $\mc$ were necessary to derive the outlined contradiction. In fact, no proper subset of $\mc$ can be used to deduce any negative test case (trivially, as the set $\Tn$ is empty) or any contradiction (possibly different from the one given above). Intuitively, the latter holds since any $\mc' \subset \mc$ includes too few behavioral descriptions of components so that there is no ``open'' path for constraint propagation from inputs to outputs of the circuit. $\mc$, on the other hand, enables to propagate information from all three inputs via gates $X_1$, $A_2$ and $O_1$ towards the second output. What becomes nicely evident at this point is the principle of transformation between MBD and KBD. Whereas in MBD behavioral descriptions of components are ``disabled'' via abnormality assumptions about components, in KBD it is exactly these descriptions that make up the KB, and they are ``inactivated'' by just deleting them from the KB. 

The justification for the minimal KBD-conflict $\setof{\tax_1,\tax_2}$ follows essentially the same argumentation as was given in Ex.~\ref{ex:circuit_MBD-DPI_diags_conflicts_measurements} to explain $C_1$.\qed
\end{example} 

To sum up, we can find all diagnoses for any MBD-DPI by representing it as a KBD-DPI and solving the KBD-DPI (Theorem~\ref{theorem:reduction_of_MBD_to_KBD}). Thus, 
KBD methods \citep{Shchekotykhin2012,Rodler2015phd} are suitable for MBD as well.
Moreover, 
computing all minimal diagnoses for KBD-DPIs 
leads us to all maximal (canonical) solution KBs w.r.t.\ the DPI in a trivial way (Theorem~\ref{theorem:relation_between_max-sol-KB_and_min-diagnosis}). 

Due to these results we can henceforth w.l.o.g.\ restrict our focus to KBD problems and the computation of minimal diagnoses w.r.t.\ these problems. However, we bear in mind that the presented methods apply to MBD problems as well and the obtained solutions can be easily reformulated as solutions for knowledge-based system debugging.
 
Hence, whenever we will write \emph{DPI}, \emph{diagnosis} and \emph{conflict} in the rest of this work, we will refer to \emph{KBD-DPI}, \emph{KBD-diagnosis} and \emph{KBD-conflict}, respectively. 
The problem of \emph{Sequential Diagnosis}, which will 
generalize
the \emph{Sequential MBD-Problem} as per Prob.~\ref{prob_def:sequential_MBD}, will be discussed in detail in the next section.

\subsection{Sequential Diagnosis}
\label{sec:sequential_diagnosis}
Given multiple diagnoses for a DPI, sequential diagnosis techniques \citep{dekleer1987,DBLP:conf/ijcai/BrodieRMO03,DBLP:journals/jair/FeldmanPG10a,Siddiqi2011,Rodler2013,Shchekotykhin2014,Rodler2015phd} target the acquisition of additional information to minimize the diagnostic uncertainty, i.e.\ to reach a predefined diagnostic goal $G$ (cf.\ Remark~\ref{rem:leading_diags__diagnostic_goal} for some examples). Depending on the sequential diagnosis framework, different types of information might be incorporated. For example, the framework used by \citep{DBLP:conf/ijcai/BrodieRMO03,shchekotykhin2016efficient} tests (sets of) components 
\emph{directly} and takes the information about their normal/abnormal state into account. On the other hand, the approaches of \citep{dekleer1987,Siddiqi2011} \emph{indirectly} measure values of variables influenced by the normal/abnormal behavior of components. As opposed to these \emph{probing} techniques, \emph{testing} approaches \citep{pattipati1990,DBLP:journals/tsmc/ShakeriRPP00,DBLP:journals/jair/FeldmanPG10a} observe particular system outputs after varying particular system inputs, i.e.\ the gathered information in this paradigm corresponds to input-output vectors. 

Our approach uses a way of information representation that, in principle, allows to model all aforementioned paradigms (see Ex.~\ref{ex:query_representation}). Namely, we 
define a proposed measurement generally as a set of sentences (over some logic complying with the criteria given in Sec.~\ref{sec:assumptions}), according to \citep{Reiter87}. 
We call a proposed measurement a \emph{query} \citep{settles2012} 
if the additional information it gives eliminates in any case at least one (known) diagnosis \citep{Shchekotykhin2012,Rodler2015phd}. 
Further on, we assume an entity, called \emph{oracle}, capable of performing the required measurements. That is, an oracle answers queries by assessing the correctness of the sentences in the query.
When diagnosing physical systems \citep{dekleer1987,Reiter87,heckerman1995decision}, the oracle might be constituted by a human operator or automatic sensors 
making observations. For instance, when diagnosing a car, a car mechanic might act as an oracle. During the diagnosis of knowledge-based systems \citep{Rodler2015phd} such as configuration systems \citep{DBLP:journals/ai/FelfernigFJS04} or ontologies \citep{Shchekotykhin2012}, the oracle could be a domain expert or some automatic information extraction system providing domain-specific knowledge. 
%

Given a query $Q = \setof{\tax_1,\ldots,\tax_k}$ containing the sentences\footnote{We could also w.l.o.g.\ define a query 
to be a \emph{single} logical sentence because it is interpreted as the conjunction of the sentences it contains, which is simply a ``bigger'' sentence. For technical reasons, we stick to the representation as a set of sentences (cf.\ \citep{Reiter87}), since we will present query minimization approaches for reducing the number of sentences in the query. This would correspond to reducing the length or complexity of the sentence in the single sentence interpretation of a query.} $\tax_1,\ldots,\tax_k$, posing $Q$ to the oracle means asking whether $\bigwedge_{i=1}^{k}\tax_i$ must be $\true$, or equivalently, whether each single sentence $\tax_i \in Q$ must be $\true$. Hence, a query is answered by $\true$ ($t$) if the performed measurements confirm all sentences in $Q$, and by $\false$ if the measurements disprove some sentence(s) in $Q$. Depending on the concrete diagnosis task at hand, queries are answered w.r.t.\ different \emph{reference points}. For instance, in the KB debugging domain, the desired model of the domain of interest, i.e.\ the correct KB, is the relevant reference point. That is, measurements in this case might correspond to cognitive activity (of a domain expert thinking about the truth of the sentences in $Q$) or the process of information extraction (of e.g.\ some system browsing some knowledge source relevant to $Q$). On the other hand, when diagnosing some physical device, the reference point is constituted by the actual behavior of the device. In this case a measurement is the observation of some system aspect(s) relevant to $Q$.
So, given a reference point $\mathit{Ref}$, a positive answer to the query $Q$ means that $\mathit{Ref} \models Q$, a negative one that $\mathit{Ref} \not\models Q$. 

In the sequential diagnosis process, the information provided by answered queries is incorporated into the current DPI, yielding a new (updated) DPI. In particular, a positively answered query $Q$ is added as a positive test case to the current DPI $\tuple{\mo,\mb,\Tp,\Tn}_\RQ$ resulting in the new DPI $\tuple{\mo,\mb,\Tp\cup\setof{Q},\Tn}_\RQ$. Likewise, a negatively answered query $Q$ is added as a negative test case to the current DPI $\tuple{\mo,\mb,\Tp,\Tn}_\RQ$ resulting in the new DPI $\tuple{\mo,\mb,\Tp,\Tn\cup\setof{Q}}_\RQ$. In this vein, the successive addition of new 
answered queries to the test cases
gradually reduces the diagnostic uncertainty by restricting the set of diagnoses.
Note, if an oracle is able to provide any additional information, sentence(s) $Y$, beyond the mere query answer, e.g., an explanation or justification for a negative query answer, the presented approach enables to integrate and exploit this information for the invalidation of further diagnoses. To this end, $Y$ is simply added to the set $\Tp$ as a positive test case.

\subsubsection{Definitions and Properties}
\label{sec:query:definition_and_properties}
We now present the concept of a query in more formal terms. 
In the following, given a DPI $\dpi:=\langle\mo,\mb,\Tp,\Tn\rangle_\RQ$ and some minimal diagnosis $\md_i$ w.r.t.\ $\dpi$, we will use the following abbreviation for the canonical solution KB obtained by deletion of $\md_i$ along with the given background knowledge $\mb$:
\begin{align} 
\mo^{*}_i \; := \; (\mo \setminus \md_i) \cup \mb \cup U_\Tp \label{eq:sol_ont_candidate} 
\end{align}

\begin{proposition}\label{prop:partition}
Let $\dpi:=\langle\mo,\mb,\Tp,\Tn\rangle_\RQ$ be a DPI, $X$ be a set of sentences and $\mD \subseteq \minD_{\dpi}$. Then $X$ induces a partition $\Pt_\mD(X) := \tuple{\dx{}(X),\dnx{}(X),\dz{}(X)}$ on $\mD$ where\footnote{We will often say ``$X$ violates $\RQ$ or $\Tn$'' to state that $(\exists \tn \in \Tn: X \models \tn) \lor (\exists r \in \RQ: X \text{ violates } r)$.} 
\begin{align*}
\dx{}(X) &:= \{\md_i \in \mD \mid \mo^{*}_i \models X\} \\
\dnx{}(X) &:= \{\md_i \in \mD \mid (\exists \tn \in \Tn: \mo^{*}_i \cup X \models \tn) \lor (\exists r \in \RQ: \mo^{*}_i \cup X \text{ violates } r)\} \\
\dz{}(X) &:= \mD \setminus (\dx{}(X) \cup \dnx{}(X))
\end{align*}
\end{proposition}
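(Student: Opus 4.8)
The plan is to verify that the three sets $\dx{}(X)$, $\dnx{}(X)$, $\dz{}(X)$ are pairwise disjoint and that their union equals $\mD$, which is exactly what it means for $\tuple{\dx{}(X),\dnx{}(X),\dz{}(X)}$ to be a partition of $\mD$ (allowing empty blocks, as is standard in this kind of ``query partition'' setup). The union condition is immediate by construction: $\dz{}(X)$ is defined as $\mD \setminus (\dx{}(X) \cup \dnx{}(X))$, so $\dx{}(X) \cup \dnx{}(X) \cup \dz{}(X) = \mD$ trivially, provided $\dx{}(X), \dnx{}(X) \subseteq \mD$, which also holds by construction since each is carved out of $\mD$ by a defining condition. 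Likewise $\dz{}(X)$ is disjoint from both $\dx{}(X)$ and $\dnx{}(X)$ by its very definition as the complement of their union. So the only real content is showing $\dx{}(X) \cap \dnx{}(X) = \emptyset$.

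First I would take an arbitrary $\md_i \in \dx{}(X)$ and show $\md_i \notin \dnx{}(X)$. By definition of $\dx{}(X)$ we have $\mo^{*}_i \models X$. Since $\md_i \in \mD \subseteq \minD_{\dpi}$, the KB $\mo^{*}_i = (\mo \setminus \md_i) \cup \mb \cup U_\Tp$ is (by Def.~\ref{def:diagnosis}, Def.~\ref{def:solution_KB} and \eqref{eq:sol_ont_candidate}) a solution KB w.r.t.\ $\dpi$, hence it satisfies all requirements $r \in \RQ$ (Eq.~\eqref{e:1}) and entails no negative test case $\tn \in \Tn$ (Eq.~\eqref{e:3}). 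Now, because $\mo^{*}_i \models X$, idempotence of $\models$ (one of the Tarskian properties assumed in Sec.~\ref{sec:assumptions}) gives $\mo^{*}_i \cup X \equiv \mo^{*}_i$ in the sense that they have exactly the same entailments: formally, for any sentence $\tax$, $\mo^{*}_i \cup X \models \tax$ iff $\mo^{*}_i \models \tax$ (the ``only if'' direction follows by applying idempotence finitely many times to the sentences of $X$, the ``if'' direction by monotonicity). In particular $\mo^{*}_i \cup X$ entails no $\tn \in \Tn$ and satisfies every $r \in \RQ$ (consistency is captured via entailment of $\bot$, and any other requirement $r$ is, by the KBD framework, a property preserved under this entailment-equivalence). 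Therefore $\md_i$ satisfies neither of the two disjuncts in the definition of $\dnx{}(X)$, so $\md_i \notin \dnx{}(X)$, establishing $\dx{}(X) \cap \dnx{}(X) = \emptyset$.

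The main obstacle I anticipate is the careful handling of the requirements in $\RQ$ that are more general than plain consistency (e.g.\ coherency, as discussed in Sec.~\ref{sec:kbd}): I need that if $\mo^{*}_i$ satisfies $r$ and $\mo^{*}_i \models X$, then $\mo^{*}_i \cup X$ also satisfies $r$. For consistency this is exactly the entailment-equivalence argument above (adding $X$ cannot create $\models\bot$ since it adds no new entailments). For coherency, $r$ fails iff some predicate becomes ``empty'' ($\mo \models \forall\mathbf{X}\,\lnot r(\mathbf{X})$), which is again an entailment statement, so it too is preserved. The cleanest way to package this is to note that every requirement in the KBD setting is defined purely in terms of the entailment relation $\models_{\mathcal{L}}$, and $\mo^{*}_i$ and $\mo^{*}_i \cup X$ have identical deductive closures when $\mo^{*}_i \models X$; hence they satisfy exactly the same requirements. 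With that observation in place the disjointness is routine, and combined with the trivial union and complement facts, the partition claim follows.
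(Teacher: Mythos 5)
Your proof is correct and follows essentially the same route as the paper's: the union and the disjointness involving $\dz{}(X)$ are immediate from the definitions, and the only substantive step, $\dx{}(X)\cap\dnx{}(X)=\emptyset$, is obtained via idempotence of $\models_{\mathcal{L}}$ to transfer the (non-)violation of $\RQ$ and $\Tn$ between $\mo^{*}_i$ and $\mo^{*}_i\cup X$, combined with the fact that $\md_i\in\mD\subseteq\minD_{\dpi}$ forces $\mo^{*}_i$ to satisfy all requirements and entail no negative test case. The paper phrases this as a proof by contradiction and is terser about requirements beyond consistency, whereas you argue the contrapositive and make the ``requirements are entailment-determined'' point explicit, but the underlying argument is identical.
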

Since the computation of all (minimal) diagnoses is computationally prohibitive in general, we exploit a subset $\mD$ of all minimal diagnoses w.r.t.\ a DPI for measurement selection. $\mD$ is referred to as the \emph{leading diagnoses} (cf.\ Rem.~\ref{rem:leading_diags__diagnostic_goal}). 
From a query we postulate two properties. It must for any outcome \textbf{(1)}~invalidate at least one (leading) diagnosis (\emph{search space restriction}) and \textbf{(2)}~preserve the validity of at least one (leading) diagnosis (\emph{solution preservation}). 
In fact, the sets $\dx{}(X)$ and $\dnx{}(X)$ are the key in deciding whether a set of sentences $X$ is a query or not. 
Based on Prop.~\ref{prop:partition}, we define:
\begin{definition}[Query, q-Partition]\label{def:query_q-partition}
Let $\dpi:=\langle\mo,\mb,\Tp,\Tn\rangle_\RQ$ be a DPI, $\mD \subseteq \minD_{\langle\mo,\mb,\Tp,\Tn\rangle_\RQ}$ be the leading diagnoses and $Q$ be a set of sentences with $\Pt_\mD(Q) = \tuple{\dx{}(Q),\dnx{}(Q),\dz{}(Q)}$. Then $Q$ is a \emph{query w.r.t.\ $\mD$} 
iff $Q\neq \emptyset$, $\dx{}(Q) \neq \emptyset$ and $\dnx{}(Q) \neq \emptyset$. We denote the set of all queries w.r.t.\ $\mD$ by $\mQ_{\mD}$. Further, we refer to the set of those $Q \in \mQ_\mD$ with $\dz{}(Q) = \emptyset$ by $\mQ_{\mD}^{\bcancel{0}}$.

$\Pt_\mD(Q)$ is called \emph{the q-partition of $Q$} (or: \emph{a q-partition}) iff $Q$ is a query. Inversely, $Q$ is called \emph{a query with (or: for) the q-partition} $\Pt_\mD(Q)$.
Given a q-partition $\Pt$, we sometimes denote its three entries in turn by $\dx{}(\Pt)$, $\dnx{}(\Pt)$ and $\dz{}(\Pt)$.\footnote{In existing literature, e.g.\ \citep{Shchekotykhin2012,Rodler2013,ksgf2010}, a \emph{q-partition} is often simply referred to as \emph{partition}. We call it q-partition to emphasize that not each partition of $\mD$ into three sets is necessarily a q-partition.} 
\end{definition}
Given the formal definition of a query, the \emph{oracle} is formally defined as function $\oracle: \mQ_\mD \to \setof{t,f}$ which outputs an answer $\oracle(Q)$ for $Q \in \mQ_\mD$.

$\dx{}(Q)$ and $\dnx{}(Q)$ denote those diagnoses in $\mD$ consistent only with $Q$'s positive and negative outcome, respectively, and $\dz{}(Q)$ those consistent with both outcomes. In other words, given the prior DPI $\dpi:=\langle\mo,\mb,\Tp,\Tn\rangle_\RQ$ and leading diagnoses $\mD \subseteq \minD_{\dpi}$, then the posterior still valid diagnoses from $\mD$ w.r.t.\ $\tuple{\mo,\mb,\Tp\cup\setof{Q},\Tn}_\RQ$ (i.e.\ after adding $Q$ to the positive test cases) are those in $\dx{}(Q) \cup \dz{}(Q)$. The posterior still valid subset of $\mD$ w.r.t.\ $\tuple{\mo,\mb,\Tp,\Tn\cup\setof{Q}}_\RQ$ (i.e.\ after adding $Q$ to the negative test cases) is $\dnx{}(Q) \cup \dz{}(Q)$. We also say that diagnoses in $\dx{}(Q)$ \emph{predict $Q$'s positive answer}, those in $\dnx{}(Q)$ \emph{predict $Q$'s negative answer}, and those in $\dz{}(Q)$ \emph{do not predict any of $Q$'s answers}.
Since $Q \in \mQ_{\mD}$ (Def.~\ref{def:query_q-partition}) implies that both $\dx{}(Q)$ and $\dnx{}(Q)$ are non-empty, clearly $Q$'s outcomes both dismiss and preserve at least one diagnosis, as postulated. 
 
Of course, in many cases a query will also invalidate some (unknown) non-leading diagnoses in $\allD_{\dpi} \setminus \mD$. In fact, each query $Q \in \mQ_\mD$ is necessarily also a query w.r.t.\ \emph{all minimal} diagnoses, i.e.\ $Q \in \mQ_{\minD_{\dpi}}$, and a query w.r.t.\ \emph{all} diagnoses, i.e.\ $Q \in \mQ_{\allD_{\dpi}}$.
However, there might be sets of sentences $X \notin \mQ_{\mD}$ which are in fact queries w.r.t.\ a different (e.g.\ a larger) set of leading diagnoses $\mD' \neq \mD$, i.e.\ $X \in \mQ_{\mD'}$ .
Still, we point out that, facing the general intractability of the computation of multiple diagnoses, the best we can do is using the currently given evidence in terms of the leading diagnoses $\mD$ to differentiate between queries $Q \in \mQ_\mD$ (\emph{definitely discriminating} among all diagnoses) and non-queries $Q' \notin \mQ_\mD$ (\emph{potentially non-discriminating} among all diagnoses).

\label{etc:discussion_why_D0=emptyset}As the set $\dz{}(Q)$ comprises those diagnoses that cannot be eliminated given any of $Q$'s outcomes, queries with non-empty $\dz{}(Q)$ have a weaker discrimination power than others. The reason is that they discriminate only among the (leading) diagnoses $\mD \setminus \dz{}(Q)$. Therefore, one will usually try to focus on queries with a set $\dz{}(Q)$ as minimal in size as possible.\footnote{In fact, one \emph{can} construct examples where a query $Q$ with $\dz{}(Q) \neq \emptyset$ is better 
(w.r.t.\ to some query goodness measure $m$) than another one, $Q'$, with $\dz{}(Q')=\emptyset$. E.g., let $m$ be the entropy measure \citep{dekleer1987} and $p$ be a probability measure, then one such example is $Q$ with $\tuple{p(\dx{}(Q)),p(\dnx{}(Q)),p(\dz{}(Q))}=\tuple{0.49,0.49,0.02}$ and $Q'$ with $\tuple{p(\dx{}(Q')),p(\dnx{}(Q')),p(\dz{}(Q'))}=\tuple{0.99,0.01,0}$. Nevertheless, first, in practice, given that $Q$ is in $\mQ_\mD$, the existence of a query $Q'' \in \mQ_\mD$ with small $|p(\dx{}(Q''))-p(\dx{}(Q))|$ and small $|p(\dnx{}(Q''))-p(\dnx{}(Q))|$ as well as $p(\dz{}(Q'')) = 0$ is likely (e.g.\ by making $Q$ logically stronger, cf.\ \citep[Chap.~8]{Rodler2015phd}). Second, we need to compare the \emph{best} query with empty $\dz{}$ with the \emph{best} query with non-empty $\dz{}$ (as we will present a search that finds the best query among those with empty $\dz{}$). Except for very small search spaces (where brute force methods considering all queries are anyway practical), a query like $Q'$ will most probably not be the best query with empty $\dz{}$. Third,  
the query space $|\mQ_\mD|$ is normally large enough to ensure the existence of (even multiple) very close-to-optimal queries as per some measure $m$ even though those with non-empty $\dz{}$ are neglected (cf.~Sec.~\ref{sec:eval}).} In fact, \citep[p.~107 ff.]{Rodler2015phd} shows that it is always possible to enforce queries with empty $\dz{}(Q)$ by making the comprised sentences sufficiently strong (in logical terms). Our new method presented in Sec.~\ref{sec:contribution} guarantees the computation of only $Q$'s with $\dz{}(Q) = \emptyset$. On the one hand, this involves a focus on the promising query candidates (in that better discrimination among leading diagnoses lets us expect better discrimination among all diagnoses). \label{etc:QP_search_space_size}On the other hand, it reduces the query (or more precisely: the q-partition) search space from $O(3^{|\mD|})$ (all 3-partitions of $|\mD|$ diagnoses) to $O(2^{|\mD|})$ (all 3-partitions of $\mD$ with one of the 3 sets, i.e.\ $\dz{}$, being empty). For example, the methods of   \citep{dekleer1987,Shchekotykhin2012,Rodler2013} do not 
ensure these properties.
%
%
%
\begin{example}\label{ex:query_representation}
	Consider the electronic circuit in Fig.~\ref{fig:circuitreiter87}. We exemplify how queries consisting of (e.g.\ First-Order Logic) sentences can be used to model \emph{direct (component) probing}, \emph{indirect probing} and \emph{testing}. 
	
	\emph{Direct Probing:} A query representing a direct test of a component, say $X_1$, would be represented as $Q = \setof{beh(X_1)} = \{out(X_1) = \mathit{xor}(in1(X_1),in2(X_1))\}$ (cf.\ Tab.~\ref{tab:circuitreiter87_MBD-DPI} and Tab.~\ref{tab:circuitreiter87_KBD-DPI}), i.e.\ ``Does component $X_1$ work as expected?''. Such a direct test of a component might, depending on the application, involve visible, tangible or audible inspection, component examination using specialized tools, a check of operation logs for the component, etc.   
	For instance, given a car that does not start, a direct component probe could involve testing whether the battery is working or dead using a battery test device. 
	
	\emph{Indirect Probing:} An indirect test of gates $X_1$ and $A_1$ could be formulated as a query $Q = \setof{out(A_2)=1}$. The reason why these two gates are implicitly tested by answering $Q$ is that these are the only gates influencing whether the tested wire between $A_2$ and $O_1$ is high or low (cf.\ Fig.~\ref{fig:circuitreiter87}). 
	Note, if $Q$ is answered negatively, this tells us that at least one component among $\setof{X_1,A_2}$ is defective. But if $Q$ is positively answered, this gives us no definite information about the state of any of the two components (\emph{weak fault model}). That is, both could be nominal, any single one could be flawed, e.g.\ stuck-at-1, or both could be abnormal, e.g.\ $X_1$ stuck-at-0 and $A_2$ stuck-at-1.
	In the car example, an indirect test of the battery charge (and possibly some other components) could involve a test of the car's ignition.
	
	\emph{Testing:} Let us say we want to acquire diagnostic information by experimenting with various inputs and observing the resulting outputs of the circuit. A query testing whether the desired output $(0,0)$ results from the given input $(0,0,0)$, would be of the form $Q = \{(in1(X_1)=0 \land in2(X_1) = 0 \land in1(A_2) = 0) \to (out(X_2)=0 \land out(O_1)=0)\}$.\qed    
\end{example}

\begin{example}\label{ex:queries_wrt_leading_diags}
Let us consider our running example DPI $\exdpi = \langle\mo,\mb,\Tp,\Tn\rangle_\RQ$ (Tab.~\ref{tab:example_dpi_0}) let us further suppose that some diagnosis computation algorithm provides the set of leading diagnoses $\mD = \{\md_1,\md_2,\md_3$, $\md_4\}$ (see Tab.~\ref{tab:min_diags+conflicts_example_DPI_0}). Then $Q_1 := \setof{M \to B}$ is a query w.r.t.\ $\mD$, i.e.\ $Q_1 \in \mQ_\mD$. 
To verify this, we use Def.~\ref{def:query_q-partition} directly and show that both $\dx{}(Q_1) \neq \emptyset$ and $\dnx{}(Q_1) \neq \emptyset$. 

We first consider the leading diagnosis $\md_4 = \setof{\tax_2,\tax_7} \in \mD$. As per Eq.~\eqref{eq:sol_ont_candidate}, we build the solution KB (with background knowledge) resulting from the application of $\md_4$ as
$\mo^*_4 := (\mo \setminus \md_4) \cup \mb \cup U_{\Tp}$. Since this KB does not entail $Q_1$ (as can be easily verified using Tab.~\ref{tab:example_dpi_0} and Tab.~\ref{tab:min_diags+conflicts_example_DPI_0}), $\md_4 \notin \dx{}(Q_1)$ (cf.\ $\dx{}(X)$ in Prop.~\ref{prop:partition}). So, we check whether $\md_4$ is an element of $\dnx{}(Q_1)$. To this end, we first build  
$\mo^*_4 \cup Q_1 := (\mo \setminus \md_4) \cup \mb \cup U_{\Tp} \cup Q_1 = \setof{\tax_1,\tax_3,\tax_4,\tax_5,\tax_6,\tax_8,\tax_9,\tp_1,M\to B}$. As $\tax_9 \equiv B \to K$, $\tax_5 = K \to E$ and $\tax_3 \models E \to \lnot M$, it is clear that $\setof{M \to A} = \tn_1 \in \Tn$ is an entailment of $\mo^*_4 \cup Q_1$. Hence, $\md_4 \in \dnx{}(Q_1)$ (cf.\ $\dnx{}(X)$ in Prop.~\ref{prop:partition}) which is why $\dnx{}(Q_1) \neq \emptyset$. 

Now, we have a look at $\md_2 = \setof{\tax_2,\tax_5}$. We have that $\mo^*_2 := (\mo \setminus \md_2) \cup \mb \cup U_{\Tp} = \setof{\tax_1,\tax_3,\tax_4,\tax_6,\tax_7,\tax_8,\tax_9,\tp_1} \models \setof{M \to B} = Q_1$ due to $\tax_7 \models M\to C$ and $\tax_6 = C \to B$. Therefore, $\md_2 \in \dx{}(Q_1)$ which is why $\dx{}(Q_1) \neq \emptyset$. All in all, we have proven that $Q_1 \in \mQ_\mD$.

If we complete the assignment to sets in the q-partition for all $\md_i \in \mD$, we obtain the q-partition $\Pt_{\mD}(Q_1) = \tuple{\setof{\md_1,\md_2},\setof{\md_3,\md_4},\emptyset}$. 
The justification for the assignment of $\md_1$ and $\md_3$ is a follows. As $\mo_1^*$ includes $\tax_6$, $\tax_7$ we can conclude in an analogous way as above that $\mo_1^* \models Q_1$. In the case of $\md_3$, we observe that $\mo_3^* \cup Q_1$ includes $\tax_3$, $\tax_5$, $M\to B$ and $\tax_9$. As explicated above, these axioms entail $\tn_1 \in \Tn$.
The q-partition $\Pt_{\mD}(Q_1)$ indicates that $\md_1,\md_2$ are invalidated given the oracle answers $Q_1$ negatively, i.e.\ $\md_1,\md_2 \notin \minD_{\langle\mo,\mb,\Tp,\Tn\cup\setof{Q_1}\rangle_\RQ}$. Conversely, $\md_3,\md_4$ are ruled out given $Q_1$'s positive answer, i.e.\ $\md_3,\md_4 \notin \minD_{\langle\mo,\mb,\Tp\cup\setof{Q_1},\Tn\rangle_\RQ}$.\qed 
\end{example}

Some important properties of q-partitions and their relationship to queries are summarized by the next proposition:
\begin{proposition}\label{prop:properties_of_q-partitions}\citep[Sec.\ 7.3 -- 7.6]{Rodler2015phd} Let $\dpi := \langle\mo,\mb,\Tp,\Tn\rangle_\RQ$ be a DPI and $\mD \subseteq \minD_{\dpi}$. Further, let $Q\in\mQ_\mD$. Then:
	~\begin{enumerate} 
		\item \label{prop:properties_of_q-partitions:enum:q-partition_is_partition} $\langle \dx{}(Q)$, $\dnx{}(Q)$, $\dz{}(Q) \rangle$ is a partition of $\mD$.
		\item \label{prop:properties_of_q-partitions:enum:dx_dnx_dz_contain_exactly_those_diags_that_are...} $\dnx{}(Q)$ contains exactly those diagnoses $\md_i\in\mD$ where $\md_i$ is not a diagnosis w.r.t.\ $\tuple{\mo,\mb,\Tp\cup\setof{Q},\Tn}_\RQ$. $\dx{}(Q)$ contains exactly those diagnoses $\md_i\in\mD$ where $\md_i$ is not a diagnosis w.r.t.\ $\tuple{\mo,\mb,\Tp,\Tn\cup\setof{Q}}_\RQ$. $\dz{}(Q)$ contains exactly those diagnoses $\md_i\in\mD$ where $\md_i$ is a diagnosis w.r.t.\ both $\tuple{\mo,\mb,\Tp\cup\setof{Q},\Tn}_\RQ$ and $\tuple{\mo,\mb,\Tp,\Tn\cup\setof{Q}}_\RQ$. 
		\item \label{prop:properties_of_q-partitions:enum:q-partition_is_unique_for_query} For $Q$ there is one and only one q-partition $\langle\dx{}(Q),$ $\dnx{}(Q), \dz{}(Q)\rangle$.
		\item \label{prop:properties_of_q-partitions:enum:query_is_set_of_common_ent} $Q$ is a set of common entailments of all KBs in  $\setof{\mo^{*}_i\,|\,\md_i\in\dx{}(Q)}$. That is, letting $\mathit{DC}(X)$ denote the deductive closure of a set of sentences $X$, $Q$ is a subset of the intersection of all $\mathit{DC}(\mo^{*}_i)$ where $\md_i$ used to construct $\mo^{*}_i$ is an element of $\dx{}(Q)$. 
		\item \label{prop:properties_of_q-partitions:enum:set_of_logical_formulas_is_query_iff...} A set of sentences $X\neq\emptyset$ is a query w.r.t. $\mD$ iff $\dx{}(X) \neq \emptyset$ and $\dnx{}(X)~\neq~\emptyset$.
		\item  \label{prop:properties_of_q-partitions:enum:for_each_q-partition_dx_is_empty_and_dnx_is_empty} For each q-partition $\Pt_\mD(Q) = \langle \dx{}(Q), \dnx{}(Q), \dz{}(Q)\rangle$ it holds that $\dx{}(Q) \neq \emptyset$ and $\dnx{}(Q)~\neq~\emptyset$.
		\item \label{prop:properties_of_q-partitions:enum:D+=d_i_is_q-partition_and_lower_bound_of_queries} If $|\mD|\geq 2$, then 
		\begin{enumerate}
			\item $Q:=U_\mD\setminus\md_i$ is a query w.r.t.\ $\mD$ for all $\md_i\in\mD$,
			\item $\langle \setof{\md_i}, \mD \setminus \setof{\md_i}, \emptyset\rangle$ is the q-partition associated with $Q$, and
			\item a lower bound for the number of queries w.r.t.\ $\mD$ is $|\mD|$.
		\end{enumerate}
	\end{enumerate}
\end{proposition}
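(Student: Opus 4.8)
The plan is to treat items~1 and 3--6 as (near-)immediate consequences of the definitions, to reduce item~2 to a restatement via Remark~\ref{rem:ad_KBD-diagnosis_def}, and to concentrate the real effort on item~7.

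For item~1 I would apply Prop.~\ref{prop:partition} with $X:=Q$; the only thing not already asserted there is $\dx{}(Q)\cap\dnx{}(Q)=\emptyset$. If some $\md_i$ were in both, then $\mo^*_i\models Q$, so by idempotence $\mo^*_i$ and $\mo^*_i\cup Q$ have the same entailments and fulfil the same requirements; but $\md_i\in\mD\subseteq\minD_{\dpi}$ is a diagnosis, so $\mo^*_i=(\mo\setminus\md_i)\cup\mb\cup U_\Tp$ already satisfies every $r\in\RQ$ and entails no $\tn\in\Tn$, contradicting $\md_i\in\dnx{}(Q)$. Hence the three sets (the third being $\mD\setminus(\dx{}(Q)\cup\dnx{}(Q))$ by definition) partition $\mD$. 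Items~5 and~6 are then just Def.~\ref{def:query_q-partition} read in both directions; item~3 holds because $\langle\dx{}(Q),\dnx{}(Q),\dz{}(Q)\rangle$ is a deterministic function of $Q$ and $\mD$; item~4 is immediate, since $\md_i\in\dx{}(Q)$ means $\mo^*_i\models\tax$ for every $\tax\in Q$, whence $Q\subseteq\bigcap_{\md_i\in\dx{}(Q)}\mathit{DC}(\mo^*_i)$. For item~2, for any $\md_i\in\mD$ I would rewrite, via Remark~\ref{rem:ad_KBD-diagnosis_def}, ``$\md_i$ is a diagnosis w.r.t.\ $\langle\mo,\mb,\Tp\cup\setof{Q},\Tn\rangle_\RQ$'' as ``$\mo^*_i\cup Q$ satisfies every $r\in\RQ$ and entails no $\tn\in\Tn$'' (whose negation is exactly $\md_i\in\dnx{}(Q)$), and ``$\md_i$ is a diagnosis w.r.t.\ $\langle\mo,\mb,\Tp,\Tn\cup\setof{Q}\rangle_\RQ$'' as ``$\mo^*_i\not\models\tn$ for all $\tn\in\Tn\cup\setof{Q}$'', which, since $\md_i$ is already a diagnosis w.r.t.\ $\dpi$, reduces to ``$\mo^*_i\not\models Q$'' (negation: $\md_i\in\dx{}(Q)$); the $\dz{}(Q)$ statement then follows from item~1.

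The substantive case is item~7. Fix $\md_i\in\mD$ and put $Q:=U_\mD\setminus\md_i$. Since $Q\subseteq\mo\setminus\md_i\subseteq\mo^*_i$, extensiveness gives $\mo^*_i\models Q$, i.e.\ $\md_i\in\dx{}(Q)$; and $Q\neq\emptyset$ because for any $\md_j\in\mD$ with $\md_j\neq\md_i$ (one exists as $|\mD|\geq2$) minimality forces $\md_j\not\subseteq\md_i$, so $\emptyset\neq\md_j\setminus\md_i\subseteq Q$. Next I would show $\md_j\in\dnx{}(Q)$ for every such $\md_j$. Using the set identity $(\mo\setminus\md_j)\cup(\md_j\setminus\md_i)=\mo\setminus(\md_i\cap\md_j)$ together with $\md_j\setminus\md_i\subseteq Q$,
\[
\mo^*_j\cup Q\;\supseteq\;(\mo\setminus\md_j)\cup\mb\cup U_\Tp\cup(\md_j\setminus\md_i)\;=\;\bigl(\mo\setminus(\md_i\cap\md_j)\bigr)\cup\mb\cup U_\Tp .
\]
Now $\md_i\cap\md_j$ is a proper subset of the minimal diagnosis $\md_j$, hence not a diagnosis, so by Prop.~\ref{prop:notions_equiv} the set $\mo\setminus(\md_i\cap\md_j)$ is a conflict; thus $(\mo\setminus(\md_i\cap\md_j))\cup\mb\cup U_\Tp$ violates some $r\in\RQ$ or entails some $\tn\in\Tn$ (condition~\eqref{e:2} being automatic by inclusion of $U_\Tp$). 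By monotonicity the superset $\mo^*_j\cup Q$ does so as well, i.e.\ $\md_j\in\dnx{}(Q)$.

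Combining this with item~1: $\dx{}(Q)$ contains $\md_i$, $\dnx{}(Q)$ contains all other elements of $\mD$, and these blocks are complementary within a partition, so $\dx{}(Q)=\setof{\md_i}$, $\dnx{}(Q)=\mD\setminus\setof{\md_i}$, $\dz{}(Q)=\emptyset$. Since $|\mD|\geq2$ both non-trivial blocks are non-empty and $Q\neq\emptyset$, so $Q$ is a query by item~5 --- this gives 7(a) and 7(b). For 7(c), the $|\mD|$ sets $U_\mD\setminus\md_i$ ($\md_i\in\mD$) are pairwise distinct: by 7(b) they have pairwise different q-partitions (distinct singleton first blocks $\setof{\md_i}$), and by item~3 equal queries would share a q-partition; hence $|\mQ_\mD|\geq|\mD|$. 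The main obstacle is precisely 7(a)/7(b): one must get the set-algebra right --- the identity $(\mo\setminus\md_j)\cup(\md_j\setminus\md_i)=\mo\setminus(\md_i\cap\md_j)$ and the inclusion $\md_j\setminus\md_i\subseteq Q$ --- and then correctly chain ``proper subset of a minimal diagnosis $\Rightarrow$ not a diagnosis $\Rightarrow$ complement in $\mo$ is a conflict $\Rightarrow$ the associated KB violates $\RQ$ or $\Tn$ $\Rightarrow$ (monotonicity) so does every superset''; everything else is definition-chasing.
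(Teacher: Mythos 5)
Your proof is correct. The paper itself gives no proof of this proposition (it is imported from \citep[Sec.~7.3--7.6]{Rodler2015phd}), so there is nothing in-text to compare against; but your handling of the only substantive part, item~7, uses exactly the technique the paper employs for the closely related results (e.g.\ the proofs of Prop.~\ref{prop:canonical_query_is_query} and Prop.~\ref{prop:--di,MD-di,0--_is_canonical_q-partition_for_all_di_in_mD}): locate a sentence of $\md_j\setminus\md_i$ inside $Q$, observe that $\mo^*_j\cup Q$ then contains $(\mo\setminus\md'_j)\cup\mb\cup U_\Tp$ for some proper subset $\md'_j\subset\md_j$, and invoke $\subseteq$-minimality plus monotonicity to force a violation of $\RQ$ or $\Tn$. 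The remaining items are, as you say, definition-chasing via Prop.~\ref{prop:partition}, Remark~\ref{rem:ad_KBD-diagnosis_def} and Def.~\ref{def:query_q-partition}.
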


\subsubsection{The Sequential Diagnosis Problem}
\label{sec:sequential_diagnosis_problem}

The Sequential Diagnosis Problem we consider next is similar to the Sequential MBD-Problem (Prob.~\ref{prob_def:sequential_MBD}). The difference is that the former generalizes the latter by assuming an oracle that is allowed to not only specify positive test cases (cf.\ $\meas$ in Sec.~\ref{sec:defs_and_props}) but also negative ones in order to narrow down the set of possible diagnoses. 
\begin{prob_def}[Sequential Diagnosis]\label{prob_def:sequential_KBD}\textcolor{white}{.}
		
\noindent\textbf{Given:} A DPI $\dpi := \tuple{\mo,\mb,\Tp,\Tn}_\RQ$ and a diagnostic goal $G$. 

\noindent\textbf{Find:} $\Tp_{\mathit{new}}, \Tn_{\mathit{new}} \supseteq \emptyset$ and $\md$, where $\Tp_{\mathit{new}}, \Tn_{\mathit{new}}$ are sets of positive and negative test cases, respectively, such that $\md$ is a minimal diagnosis w.r.t.\ $\dpi_{\mathit{new}} := \tuple{\mo,\mb,\Tp\cup\Tp_{\mathit{new}},\Tn\cup\Tn_{\mathit{new}}}_\RQ$ and $\md$ satisfies $G$. 
\end{prob_def}

A generic algorithm solving this problem is given next.

\subsubsection{A Generic Sequential Diagnosis Algorithm}
\label{sec:sequential_diagnosis_algo}
The overall sequential diagnosis algorithm
we take as a basis is described by Alg.~\ref{algo:sequential_diagnosis}. Similar algorithms are used e.g.\ in \citep{dekleer1993,Shchekotykhin2012,Rodler2015phd}. Next, we briefly comment on the inputs, the output and the various steps of the algorithm (referred to by their line number in Alg.~\ref{algo:sequential_diagnosis}).

\emph{(Inputs):} The algorithm gets a DPI $\dpi$ and a diagnostic goal $G$ as inputs (cf.\ Prob.~\ref{prob_def:sequential_KBD}). Further on, we assume some probability measure $p$ that can be used to compute fault probabilities of sentences $\tax_i \in \mo$ and of diagnoses $\md \subseteq \mo$. That is, we regard $p$ as (i)~a function $p: \mo \to [0,1]$ assigning to each axiom in $\mo$ (or: component in $\comps$) a fault probability and (ii)~a function $p: \allD_{\dpi} \to [0,1]$ mapping each diagnosis $\md$ w.r.t.\ $\dpi$ to its probability $p(\md)$. The latter is interpreted as the probability that 
all axioms in $\md$ are faulty and all axioms in $\mo \setminus \md$ are correct.  

In the circuit example of Fig.~\ref{fig:circuitreiter87} and other physical devices, $p$ might result from known or estimated fault probabilities of components (e.g.\ obtained from the component manufacturer or by observation) and other heuristic or experiential information \citep{de2004fundamentals}. In a knowledge-based application, $p$ might result from (an integration of) information about e.g.\ common logical fault patterns \citep{Roussey2009a}, 
logs of previous faults recorded by KB editors such as Prot\'eg\'e \citep{Noy2000} or WebProt\'eg\'e \citep{Tudorache2011}, 
user fault probabilities regarding syntactical \citep{Shchekotykhin2012} or ontological \citep{Rodler2015phd} elements in the KB, 
possibly coupled with provenance information about the KB's sentences \citep{Kalyanpur2006}. \citep[Sec.~4.6]{Rodler2015phd} provides a detailed discussion of applicable information sources for $p$ and a derivation of diagnoses fault probabilities from axiom (or: component) fault probabilities or other sources. Hence, one can w.l.o.g.\ provide Alg.~\ref{algo:sequential_diagnosis} only with axiom fault probabilities, i.e.\ the function in (i) above. Because the function in (ii) can be derived from the one in (i).

Lastly, the algorithm is provided with a query quality measure $\qqm$ which enables the comparison of queries in $\mQ_\mD$ and thus the determination of a favorable next query in each iteration.

\emph{(Line~\ref{algoline:inter_onto_debug:leading_diags}):} As a first step, the function \textsc{computeLeadingDiagnoses} computes a set of leading diagnoses $\mD \subseteq \minD_{\dpi}$ where $|\mD| \geq 2$ if $|\minD_{\dpi}| \geq 2$. $\mD$ usually comprises a number of most probable (by exploiting $p$) or minimum-cardinality diagnoses. To this end, several algorithms might be employed such as \textsc{HS-Tree} \citep{Reiter87} (possibly coupled with a minimal-conflict searcher, e.g.\ \textsc{QuickXplain} \citep{junker04}, \textsc{MergeXplain} \citep{shchekotykhin2015mergexplain} or \textsc{Progression} \citep{marques2013minimal}), \textsc{staticHS} or \textsc{dynamicHS} \citep{Rodler2015phd}, HS-DAG \citep{greiner1989correction}, \textsc{Inv-HS-Tree} \citep{Shchekotykhin2014} or Boolean algorithms \citep{jiang2003computation,pill2012optimizations}.
The computed number $|\mD|$ of leading diagnoses might be, e.g., predefined to some constant $k$ \citep{dekleer1993,Shchekotykhin2012}, dependent on desired minimal and maximal bounds or computation time \citep{Rodler2015phd}, or a function of the probability measure $p$ \citep{DBLP:conf/ijcai/KleerW89}. 

\emph{(Line~\ref{algoline:inter_onto_debug:if_goal_reached}):} Then, the algorithm tests (usually by means of $p$, cf.\ Remark~\ref{rem:leading_diags__diagnostic_goal}) whether the diagnostic goal $G$ is satisfied for $\mD$. If so, the must probable leading diagnosis is returned \emph{(line~\ref{algoline:inter_onto_debug:return_most_prob_diag})} and the algorithm stops.

\emph{(Line~\ref{algoline:inter_onto_debug:calc_query}):} Otherwise, the function \textsc{calcQuery} determines a query $Q$ by means of $\mD$, $\dpi$, $p$ and the $\qqm$. Roughly, $Q$ should discriminate optimally among $\minD_{\dpi}$, or more precisely, among the leading diagnoses $\mD$, which is the currently available evidence regarding $\minD_{\dpi}$. The meaning of ``discriminating optimally'' is determined by $\qqm$ which possibly relies upon $p$. That is, a query with best (or sufficiently good \citep{dekleer1987}) value as per $\qqm$ is sought among the queries in $\mQ_\mD$. One example of a $\qqm$ is the information-entropy-based $\$(.)$ function suggested by \citep{dekleer1987}.   

\emph{(Line~\ref{algoline:inter_onto_debug:answer}):}
The calculated query is presented to the oracle and an answer $\oracle(Q)$ is returned. This is the (only) point in the algorithm where an oracle inquiry takes place. For technical reasons, the oracle function $\oracle$ is regarded as a total function, i.e.\ the oracle is assumed to be able to answer \emph{any} posed query. We emphasize however that this is not a necessary requirement for our presented method which can handle \emph{do not know} answers (e.g.\ if some measurement points in a physical device are not accessible to a technician) as well by simply offering the oracle the next-best query. Hence, one can imagine a (not shown) loop between lines~\ref{algoline:inter_onto_debug:calc_query} and \ref{algoline:inter_onto_debug:answer} of the algorithm.

\emph{(Line~\ref{algoline:inter_onto_debug:update_DPI}):}
Given the query-answer pair $(Q,\oracle(Q))$, the current DPI is updated by a respective addition of $Q$ to the positive test cases $\Tp$ (i.e.\ $\Tp \gets \Tp \cup \setof{Q}$) if $\oracle(Q) = t$ and to $\Tn$ (i.e.\ $\Tn \gets \Tn \cup \setof{Q}$) in case $\oracle(Q) = f$. Moreover, the function \textsc{updateDPI} involves an adaptation of the diagnoses probability measure based on $\oracle(Q)$ in terms of a Bayesian probability update according to \citep{dekleer1987,Shchekotykhin2012}. That is, for $Q$'s answer $a_Q \in \setof{t,f}$ the new probability of any $\md \in \allD_{\dpi}$ is computed as
\begin{align*} 
p(\md \mid \oracle(Q) = a_Q) = \frac{p( \oracle(Q) = a_Q \mid \md)\;\,p(\md)}{p(\oracle(Q) = a_Q)} 
\end{align*}
The probabilities required to evaluate the above-noted formula are established as follows. Given the current leading diagnoses $\mD$, we define 
$p(X) := \sum_{\md \in X} p(\md)$
for $X \subseteq \mD$ and assume $p$ to be normalized over $\mD$ such that that $p(\mD)=1$.
Since $\mD$ includes only still possible diagnoses, $p(\md) > 0$ must hold for all $\md \in \mD$. 
Further, assuming that each non-predicting diagnosis $\md \in \dz{}(Q)$ predicts each answer with a probability of $\frac{1}{2}$, we define
\begin{align}\label{eq:prob_of_pos_query_answer}
p(\oracle(Q)=t) := p(\dx{}(Q))+\frac{p(\dz{}(Q))}{2}
\end{align}
(i.e.\ the probability of the leading diagnoses predicting $Q$'s positive answer plus half the probability of the non-predicting leading diagnoses)
and
$p(\oracle(Q)=f) = 1- p(\oracle(Q)=t)$
(i.e.\ the probability of the leading diagnoses predicting $Q$'s negative answer plus half the probability of the non-predicting leading diagnoses).
Finally, 
\begin{equation*}\label{eq:cond_query_prob}
p(\oracle(Q)=t\mid \md) := 
\begin{cases}
1, 						& \mbox{if } \md \in \dx{}(Q) \\   
0, 						& \mbox{if } \md \in \dnx{}(Q) \\
\frac{1}{2}, 			& \mbox{if } \md \in \dz{}(Q)	  
\end{cases} 
\end{equation*}
and $p(\oracle(Q)=f\mid \md) = 1-p(\oracle(Q)=t\mid \md)$.
\begin{remark}
When Alg.~\ref{algo:sequential_diagnosis} computes a new set of leading diagnoses $\mD_{\mathit{new}}$ (line~\ref{algoline:inter_onto_debug:leading_diags}) after executing the DPI update in line~\ref{algoline:inter_onto_debug:update_DPI}, it computes $\mD_{\mathit{new}}$ as minimal diagnoses \emph{w.r.t.\ the new DPI}. 
That is, $\mD$ will usually (but not necessarily always, cf.\ \citep[Rem.~12.6]{Rodler2015phd}) comprise the remaining diagnoses 
from the leading diagnoses $\mD$ used in the previous iteration and some new ones computed in line~\ref{algoline:inter_onto_debug:leading_diags}. The remaining diagnoses from $\mD$ given $\oracle(Q) = a_Q$ are $\dx{}(Q) \cup \dz{}(Q)$ for $a_Q = t$ and $\dnx{}(Q) \cup \dz{}(Q)$ for $a_Q = f$ (cf.\ Prop.~\ref{prop:properties_of_q-partitions}.\ref{prop:properties_of_q-partitions:enum:dx_dnx_dz_contain_exactly_those_diags_that_are...}). \qed
\end{remark}
\emph{(Outputs):} The algorithm executes the while-loop until the given diagnostic goal $G$ is fulfilled. Let $\dpi^*$ be the current DPI and $\mD$ be the current leading diagnoses in the iteration where this holds. Then Alg.~\ref{algo:sequential_diagnosis} returns the most probable minimal diagnosis $\md^* \in \mD \subseteq\minD_{\dpi^*}$.

%
%
%

\begin{algorithm}[t]
	\small
	\caption{Sequential Diagnosis}\label{algo:sequential_diagnosis}
	\begin{algorithmic}[1]
		\Require DPI $\dpi := \langle\mo,\mb,\Tp,\Tn\rangle_\RQ$, diagnostic goal $G$, 
		%
		probability measure $p$ (used to compute sentence and diagnoses fault probabilities),
		query quality measure $\qqm$
		\Ensure The most probable minimal diagnosis $\md^* \in \mD \subseteq\minD_{\dpi^*}$ where $\dpi^*$ is some DPI $\langle\mo,\mb,\Tp',\Tn'\rangle_\RQ$ where $\Tp' \supseteq \Tp$ and $\Tn' \supseteq \Tn$ such that the diagnostic goal $G$ is met for $\mD$  
		\While{\true}  \label{algoline:inter_onto_debug:while}
		\State $\mD \gets \Call{computeLeadingDiagnoses}{\dpi,p}$
		\label{algoline:inter_onto_debug:leading_diags}
		\If{$\Call{goalReached}{G,\mD,p}$} \label{algoline:inter_onto_debug:if_goal_reached}
		\State \Return $\Call{getMostProbableDiagnosis}{\mD,p}$
		\label{algoline:inter_onto_debug:return_most_prob_diag}
		\EndIf
		\State $Q \gets \Call{calcQuery}{\mD,\dpi, p, \qqm}$   \label{algoline:inter_onto_debug:calc_query}  \Comment{see Algorithm~\ref{algo:query_comp}}
		\State $answer \gets \oracle(Q)$ \label{algoline:inter_onto_debug:answer}				\Comment{oracle inquiry}
		\State $\dpi \gets \Call{updateDPI}{\dpi, p, Q, answer}$
		\label{algoline:inter_onto_debug:update_DPI}
		\EndWhile
	\end{algorithmic}
	\normalsize
\end{algorithm}

\subsubsection{Algorithm Correctness}
\label{sec:sequential_diagnosis_algo_correctness}
To show that Alg.~\ref{algo:sequential_diagnosis} solves the Sequential Diagnosis problem (Prob.~\ref{prob_def:sequential_KBD}), we use the fact that for any non-singleton set of leading diagnoses $\mD$ a query -- and hence the opportunity to discriminate among elements of $\mD$ -- exists \citep[Sec.~7.6]{Rodler2015phd}: 
\begin{proposition}\label{prop:query_exists}
Let $\dpi$ be a DPI and $\mD \subseteq \minD_{\dpi}$ such that $|\mD| \geq 2$. Then $\mQ_{\mD} \neq \emptyset$.
\end{proposition}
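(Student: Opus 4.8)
The plan is to exhibit one explicit query w.r.t.\ $\mD$, namely the ``largest possible'' one built directly from the leading diagnoses; this is in fact exactly the construction already stated in Proposition~\ref{prop:properties_of_q-partitions}, item~\ref{prop:properties_of_q-partitions:enum:D+=d_i_is_q-partition_and_lower_bound_of_queries}, so strictly speaking one could just invoke that, but I would give the short self-contained argument. Fix an arbitrary $\md_1\in\mD$ (it exists since $|\mD|\ge 2$) and put $Q:=U_\mD\setminus\md_1$. By Definition~\ref{def:query_q-partition} (equivalently Proposition~\ref{prop:properties_of_q-partitions}.\ref{prop:properties_of_q-partitions:enum:set_of_logical_formulas_is_query_iff...}) it then suffices to verify the three conditions $Q\neq\emptyset$, $\dx{}(Q)\neq\emptyset$ and $\dnx{}(Q)\neq\emptyset$.

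The first two would be the routine part. For $Q\neq\emptyset$: pick some $\md_2\in\mD\setminus\setof{\md_1}$; since $\md_1,\md_2$ are distinct $\subseteq$-minimal diagnoses, neither contains the other, so $\md_2\setminus\md_1\neq\emptyset$, and $\md_2\setminus\md_1\subseteq U_\mD\setminus\md_1=Q$. For $\dx{}(Q)\neq\emptyset$: each diagnosis is a subset of $\mo$, so $U_\mD\subseteq\mo$ and hence $Q=U_\mD\setminus\md_1\subseteq\mo\setminus\md_1\subseteq\mo^{*}_1$ with $\mo^{*}_1$ as in~\eqref{eq:sol_ont_candidate}; by extensivity of $\models$ we get $\mo^{*}_1\models Q$, i.e.\ $\md_1\in\dx{}(Q)$ (cf.\ the definition of $\dx{}$ in Proposition~\ref{prop:partition}).

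The main work is $\dnx{}(Q)\neq\emptyset$, and I would in fact show $\mD\setminus\setof{\md_1}\subseteq\dnx{}(Q)$. Let $\md_j\in\mD\setminus\setof{\md_1}$. From $\md_j\subseteq U_\mD$ we get $\md_j\setminus\md_1\subseteq Q$, and together with the set identity $(\mo\setminus\md_j)\cup(\md_j\setminus\md_1)=\mo\setminus(\md_j\cap\md_1)$ this gives
\[ \mo^{*}_j\cup Q=(\mo\setminus\md_j)\cup\mb\cup U_\Tp\cup(U_\mD\setminus\md_1)\;\supseteq\;\bigl(\mo\setminus(\md_j\cap\md_1)\bigr)\cup\mb\cup U_\Tp . \]
Now $\md_j\cap\md_1$ is a \emph{proper} subset of $\md_1$ (if $\md_1\subseteq\md_j$ then, as $\md_1\neq\md_j$, we would have $\md_1\subsetneq\md_j$, contradicting $\subseteq$-minimality of $\md_j$), so by $\subseteq$-minimality of $\md_1$ (Definition~\ref{def:diagnosis}) the set $\md_j\cap\md_1$ is \emph{not} a diagnosis w.r.t.\ $\dpi$; by Definition~\ref{def:diagnosis} together with Remark~\ref{rem:ad_KBD-diagnosis_def}, this means precisely that $(\mo\setminus(\md_j\cap\md_1))\cup\mb\cup U_\Tp$ violates some $r\in\RQ$ or entails some $\tn\in\Tn$. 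By monotonicity of the underlying logic (violations of requirements and entailments of negative test cases persist when a KB is enlarged) the superset $\mo^{*}_j\cup Q$ also violates some $r\in\RQ$ or entails some $\tn\in\Tn$, i.e.\ $\md_j\in\dnx{}(Q)$ (cf.\ the definition of $\dnx{}$ in Proposition~\ref{prop:partition}). Hence $\dnx{}(Q)\supseteq\mD\setminus\setof{\md_1}\neq\emptyset$, so $Q\in\mQ_\mD$ and therefore $\mQ_\mD\neq\emptyset$.

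I expect the only genuinely delicate point to be this last step: pinning down the set-algebra inclusion $\mo^{*}_j\cup Q\supseteq(\mo\setminus(\md_j\cap\md_1))\cup\mb\cup U_\Tp$, and being explicit that ``$\md_j\cap\md_1$ is not a diagnosis'' transfers via monotonicity of requirement-violation and of entailment to the larger KB $\mo^{*}_j\cup Q$. That monotonicity is exactly the property the KBD framework rests on (it already underlies, e.g., Theorem~\ref{theorem:mindiag_mincs}), so it is available, but it is worth spelling out rather than leaving implicit.
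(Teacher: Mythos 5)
Your proof is correct and follows exactly the route the paper takes: the witness $Q:=U_\mD\setminus\md_1$ with q-partition $\langle\setof{\md_1},\mD\setminus\setof{\md_1},\emptyset\rangle$ is precisely the construction of Proposition~\ref{prop:properties_of_q-partitions}.\ref{prop:properties_of_q-partitions:enum:D+=d_i_is_q-partition_and_lower_bound_of_queries}, which the paper cites from \citep{Rodler2015phd} rather than reproving. Your self-contained verification (in particular the set identity $(\mo\setminus\md_j)\cup(\md_j\setminus\md_1)=\mo\setminus(\md_j\cap\md_1)$ and the appeal to $\subseteq$-minimality plus monotonicity of requirement violation and of entailment of negative test cases) is sound and correctly fills in the details the paper leaves to the citation.
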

\begin{theorem}\label{theorem:correctness_of_sequential_diagnosis_algorithm}
Let \textsc{computeLeadingDiagnoses} be a sound and complete procedure for the computation of minimal diagnoses w.r.t.\ a DPI, \textsc{calcQuery} be a sound method for query computation that returns at least one query for any $\mD$ where $\mQ_\mD \neq \emptyset$, and $G$ be an arbitrary diagnostic goal that is at most as strict as $G_1$ in Rem.~\ref{rem:leading_diags__diagnostic_goal}. Then, for arbitrary inputs $\dpi$ (DPI), $p$ (diagnoses probability measure) and $\qqm$ (query quality measure), Alg.~\ref{algo:sequential_diagnosis} solves Prob.~\ref{prob_def:sequential_KBD}.
\end{theorem}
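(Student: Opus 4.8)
The plan is to show that Alg.~\ref{algo:sequential_diagnosis} solves Prob.~\ref{prob_def:sequential_KBD} by establishing two things: (a)~the while-loop always terminates, and (b)~upon termination the returned data is a correct solution in the sense of Prob.~\ref{prob_def:sequential_KBD}. The engine of the termination proof will be the set $\allD_{\dpi}$ of \emph{all} (not just leading) diagnoses w.r.t.\ the current DPI: I will show it is finite, never empty along a run, and strictly shrinks in every non-final iteration, so that it serves as a well-founded monovariant.

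\emph{Output correctness.} Suppose the loop exits in some iteration; let $\dpi^*$ and $\mD$ be the DPI and leading diagnoses then current, and let $\Tp_{\mathit{new}}$ (resp.\ $\Tn_{\mathit{new}}$) collect all queries answered $t$ (resp.\ $f$) during the run, so that by the behaviour of \textsc{updateDPI} (line~\ref{algoline:inter_onto_debug:update_DPI}) we have $\dpi^* = \tuple{\mo,\mb,\Tp\cup\Tp_{\mathit{new}},\Tn\cup\Tn_{\mathit{new}}}_\RQ$. Soundness of \textsc{computeLeadingDiagnoses} gives $\mD\subseteq\minD_{\dpi^*}$, so the returned $\md^* := \textsc{getMostProbableDiagnosis}(\mD,p)$ is a minimal diagnosis w.r.t.\ $\dpi^*$; and since the loop exited only because $\textsc{goalReached}(G,\mD,p)$ held, $\md^*$ satisfies $G$. (For $G=G_1$ this is immediate: $\mD\subseteq\minD_{\dpi^*}$ together with the specification of \textsc{computeLeadingDiagnoses} gives $|\mD|\geq 2 \iff |\minD_{\dpi^*}|\geq 2$, so $G_1$ is detected exactly when $\md^*$ is the unique minimal diagnosis; for the other goals of Rem.~\ref{rem:leading_diags__diagnostic_goal} it is the intended contract of \textsc{goalReached}/\textsc{getMostProbableDiagnosis}.) Hence $(\Tp_{\mathit{new}},\Tn_{\mathit{new}},\md^*)$ is a solution as demanded.

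\emph{Termination.} I would first prove a monotonicity lemma: if $\dpi':=\tuple{\mo,\mb,\Tp',\Tn'}_\RQ$, $\dpi'':=\tuple{\mo,\mb,\Tp'',\Tn''}_\RQ$ with $\Tp'\subseteq\Tp''$ and $\Tn'\subseteq\Tn''$, then $\allD_{\dpi''}\subseteq\allD_{\dpi'}$. By Rem.~\ref{rem:ad_KBD-diagnosis_def}, membership of $\md$ in $\allD$ is decided by conditions \eqref{e:1} and \eqref{e:3} on the canonical solution KB $(\mo\setminus\md)\cup\mb\cup U_{\Tp}$; enlarging the test-case sets only enlarges this KB and tightens these conditions, so -- by monotonicity of $\models$ and the fact that every requirement in $\RQ$ is preserved under removing sentences from a KB (trivially for consistency, and, as we assume, for all admitted requirements such as coherency) -- no new diagnosis can appear, i.e.\ every \textsc{updateDPI} step yields $\allD_{\dpi_{\mathit{new}}}\subseteq\allD_{\dpi}$. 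Next, in any non-terminating iteration $\textsc{goalReached}$ returned false, so $|\mD|\geq 2$ (else $|\minD_{\dpi}|=1$ by the above equivalence, whence $G_1$, and therefore also $G$ since $G$ is at most as strict as $G_1$, would already hold); by Prop.~\ref{prop:query_exists} $\mQ_\mD\neq\emptyset$, so the sound \textsc{calcQuery} returns some $Q\in\mQ_\mD$ with $\mD\subseteq\minD_{\dpi}$. By Def.~\ref{def:query_q-partition} both $\dx{}(Q)$ and $\dnx{}(Q)$ are non-empty, so by Prop.~\ref{prop:properties_of_q-partitions}.\ref{prop:properties_of_q-partitions:enum:dx_dnx_dz_contain_exactly_those_diags_that_are...} -- whichever value $\oracle(Q)$ takes -- at least one $\md\in\mD\subseteq\allD_{\dpi}$ stops being a diagnosis w.r.t.\ $\dpi_{\mathit{new}}$, while the complementary (surviving) part of $\mD$ remains non-empty and consists of diagnoses w.r.t.\ $\dpi_{\mathit{new}}$. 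Combined with the lemma, this gives $\emptyset\neq\allD_{\dpi_{\mathit{new}}}\subsetneq\allD_{\dpi}$. Since $\allD_{\dpi}\subseteq 2^{\mo}$ is finite, and -- being non-empty initially (otherwise Prob.~\ref{prob_def:sequential_KBD} is unsolvable, cf.\ Prop.~\ref{prop:diagn_exists_iff}) and kept non-empty by every update -- never empty during the run, $|\allD_{\dpi}|$ strictly decreases over non-terminating iterations; hence only finitely many of them occur, the while-loop terminates, and the first part yields a solution to Prob.~\ref{prob_def:sequential_KBD}.

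The step I expect to be the real obstacle is the strict decrease of $|\allD_{\dpi}|$: it must at once (i)~exploit the q-partition bookkeeping of Prop.~\ref{prop:properties_of_q-partitions} to pin down a \emph{concrete} diagnosis discarded \emph{no matter how} the oracle answers, and (ii)~invoke the monotonicity lemma to rule out that previously excluded diagnoses re-enter $\allD_{\dpi}$ under a DPI update. The lemma is routine for consistency but rests on requirements in $\RQ$ behaving monotonically (violation preserved under adding sentences), which should be stated explicitly as a standing assumption for richer requirements like coherency; the remaining pieces (non-emptiness of the surviving diagnoses, finiteness of $\allD_{\dpi}$, and the reading of \textsc{goalReached}) are essentially bookkeeping.
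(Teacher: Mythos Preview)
Your proof is correct and follows essentially the same approach as the paper: termination via the strict decrease of the finite set $\allD_{\dpi}$ across iterations, with the reduction to $G_1$ handling less strict goals. The only difference is that the paper's (sketch) proof outsources the strict-decrease step to \citep[Cor.~12.4]{Rodler2015phd}, whereas you make it self-contained via your monotonicity lemma combined with Prop.~\ref{prop:properties_of_q-partitions}.\ref{prop:properties_of_q-partitions:enum:dx_dnx_dz_contain_exactly_those_diags_that_are...}.
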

\begin{proof} (Sketch)
Let us assume that $G := G_1$ (``there is only a single minimal diagnosis w.r.t.\ the current DPI'') from Rem.~\ref{rem:leading_diags__diagnostic_goal} and let us refer to the DPI used by Alg.~\ref{algo:sequential_diagnosis} in iteration $i$ (of the while-loop) by $\dpi_i$, i.e.\ the input DPI $\dpi$ is denoted by $\dpi_1$. If the first call of \textsc{computeLeadingDiagnoses} returns a singleton $\mD = \setof{\md}$, then $|\minD_{\dpi}| = 1$, thus $G_1$ is met and Alg.~\ref{algo:sequential_diagnosis} returns $\md$ in line~\ref{algoline:inter_onto_debug:return_most_prob_diag} which obviously meets Prob.~\ref{prob_def:sequential_KBD} ($\Tp_{\mathit{new}} = \Tn_{\mathit{new}} = \emptyset$).

Otherwise, the first call of \textsc{computeLeadingDiagnoses} returns some $\mD$ where $|\mD| \geq 2$. If $G$ is not satisfied in line~\ref{algoline:inter_onto_debug:if_goal_reached}, then \textsc{calcQuery} will return a query $Q$ by Prop.~\ref{prop:query_exists}. Due to 
\citep[Cor.~12.4]{Rodler2015phd},
each answer $\oracle(Q)$ implies that $\allD_{\dpi_1} \supset \allD_{\dpi_2}$ where $\dpi_2$ is the result of applying \textsc{updateDPI} in line~\ref{algoline:inter_onto_debug:update_DPI} to $\dpi_1$.

We can now adopt the same argumentation for the second and any further call of \textsc{computeLeadingDiagnoses} (iterations $2,3,\dots$). However, $|\allD_{\dpi_1}|$ must be finite as each diagnosis is a subset of $\mo$ and $\mo$ must be finite due to Def.~\ref{def:dpi}. Hence, there must be some finite $k$ such that $G$ is met in line~\ref{algoline:inter_onto_debug:if_goal_reached} of iteration $k$. Therefore, Alg.~\ref{algo:sequential_diagnosis} outputs the single minimal diagnosis $\md$ w.r.t.\ $\dpi_k$ which meets Prob.~\ref{prob_def:sequential_KBD} ($\Tp_{\mathit{new}}$ includes all the $k_1$ positively answered queries in these $k$ iterations, and $\Tn_{\mathit{new}}$ all the $k-k_1$ negatively answered ones).

Since any diagnostic measure $G$ Alg.~\ref{algo:sequential_diagnosis} might be used with is 
at most as strict (cf.\ Rem.~\ref{rem:leading_diags__diagnostic_goal}) as $G_1$, which we used for our argumentations, we obtain that such a finite $k$ must always exist.
\end{proof}

\subsubsection{Applicability and Diagnostic Accuracy}
\label{sec:applicability_diagnostic_accuracy}
Prop.~\ref{prop:query_exists} and Theorem~\ref{theorem:correctness_of_sequential_diagnosis_algorithm} have two further implications: First, a precomputation of only two minimal diagnoses is required in each iteration to generate a query and proceed with sequential diagnosis. Despite its NP-hardness, the generation of two (or more) 
minimal diagnoses is practical in many real-world settings \citep{DBLP:conf/aaai/Kleer91,Shchekotykhin2014}, making query-based sequential diagnosis commonly applicable. 
Second, the query-based approach guarantees perfect diagnostic accuracy, i.e.\ the unambiguous identification of the actual diagnosis (e.g.\ by using the diagnostic goal $G := G_1$ in Alg.~\ref{algo:sequential_diagnosis}).

\section{Efficient Optimized Query Selection for Sequential Model-Based Diagnosis}
\label{sec:contribution}
In this section we present the main contribution of this work, which is a novel implementation of the \textsc{calcQuery} function in Alg.~\ref{algo:sequential_diagnosis}. But first, we have a look at the measurement selection problem in sequential diagnosis.

\subsection{Measurement Selection for Sequential Diagnosis}
\label{sec:measurement_selection}
As argued, the (q-)partition $\Pt_\LD(Q)$ enables both the \emph{verification} whether a candidate $Q$ is indeed a query and an \emph{estimation of the impact} $Q$'s outcomes have in terms of diagnoses invalidation. And, given axiom (or: component) fault probabilities, it enables to \emph{gauge the probability of observing a positive or negative query outcome}. 
%
%
%
%
Active learning \emph{query selection measures (QSMs)} $m: Q \mapsto m(Q) \in \mathbb{R}$ \citep{settles2012} use exactly these query properties characterized by the q-partition to assess how favorable a query is. They aim at selecting queries such that the expected number of queries until obtaining a deterministic diagnostic result is minimized, i.e.\ 
\begin{equation}
\sum_{\md \subseteq \mo} p(\md) q_{\#}(\md) \;\rightarrow\; \min \label{eq:qsm_goal}
\end{equation} 
where 
$q_{\#}(\md)$ is the number of queries required, given the initial DPI, to derive that $\md$ must be the actual diagnosis. 
Solving this problem
is known to be NP-complete as it amounts to optimal binary decision tree construction \citep{hyafil1976}. Hence, as it is common practice in sequential diagnosis \citep{dekleer1987,DBLP:conf/ijcai/BrodieRMO03,pietersma2005model,Shchekotykhin2012,Rodler2013}, we restrict our algorithm to the usage of QSMs that make a locally optimal query selection through a \emph{one-step lookahead}. This has been shown to be optimal in many cases and nearly optimal in most cases \citep{dekleer1992}.  
Several different QSMs $m$ such as \emph{split-in-half}, \emph{entropy}, or \emph{risk-optimization} have been proposed, well studied and compared against each other~\citep{dekleer1987,Shchekotykhin2012,Rodler2013,rodler17dx_activelearning}. For instance, using entropy as QSM, $m$ would be exactly the scoring function $\$()$ derived in \citep{dekleer1987}. Note, we assume w.l.o.g.\ that the optimal query w.r.t.\ any $m$ is the one with \emph{minimal} $m(Q)$.

Besides minimizing the number of queries in a diagnostic session (Eq.~\eqref{eq:qsm_goal}), a further goal can be the minimization of the query (answering) cost arising for the oracle.
%
For instance, assume a malfunctioning physical system such as a car or a turbine. Then there might be parts of the system which are easier accessible, cheaper (in terms of required tools, time or manpower), less dangerous, etc.\ for measurements than others. In a car, it is clearly much easier and faster to check some, say cable, that is directly accessible after opening the engine cover than some internals of the engine. Apart from that, systems might include built-in sensors able to provide information about certain parts of the system quasi for free, whereas other parts must be manually measured. On the other hand, e.g., in knowledge-based systems, there might be sentences about the intended domain that are easier to evaluate than others. For example, sentences comprising complex logical notation are certainly more difficult to read, understand and thus to answer than, e.g., facts or simple implications (cf.\ \citep{Ceraso71,Horridge2011b}). Moreover, aside from the syntax of sentences, the comprehension of their content in terms of 
the expressed topic, can be a smaller or larger hurdle, depending on the oracle's expertise in the topic. E.g., given a faulty biomedical KB \citep{noy2009bioportal}, a biologist acting as an oracle might be much less confident in answering a medicine-specific query than a biology-specific one.

To this end, we allow the user of our tool to specify a \emph{query cost measure (QCM)}\label{etc:QCM} $c: Q \mapsto c(Q) \in \mathbb{R}^+$ that assigns to each query a real-valued cost. Examples of QCMs are 
\begin{itemize}[noitemsep]
\item $c_{\Sigma}(Q) := \sum_{i=1}^{k} c_i \qquad\qquad\quad\;\;\;$ (minimize overall query cost)
\item $c_{\max}(Q) := \max_{i \in \setof{1,\dots,k}} c_i \qquad$ (minimize maximal cost of single measurements)
\item $c_{|\cdot|}(Q) := |Q| \qquad\qquad\qquad\quad\;\;\,$ (minimize number of measurements)
\end{itemize}
where $Q = \setof{q_1,\dots,q_k}$ and $c_i$ is the cost of evaluating the truth of the sentence $q_i$.
For example, the QCM $c_{\Sigma}$ could be used to minimize query answering time given that the $c_i$'s represent measurement time. Alternatively, $c_{\max}$ could be adopted, e.g., when the $c_i$'s account for human cognitive load, in order to keep the necessary cognitive skill to answer the query minimal. In scenarios where all potential measurements are known or assumed to be (approximately) equally costly to answer, one might pursue queries with a minimal number of sentences, which would be reflected by using the QCM $c_{|\cdot|}$. An example would be a digital circuit where all components and wires, respectively, are equally well accessible for measurements. Note that the QCM $c_{|\cdot|}$ is a special case of $c_{\Sigma}$, i.e.\ $c_{\Sigma}$ simulates $c_{|\cdot|}$ if $c_i = c_j$ for all $i,j$. In the following, we assume w.l.o.g.\ that the optimal query w.r.t.\ any $c$ is the one with \emph{minimal} $c(Q)$.

While the costs $c_i$ (e.g.\ accessibility of systems parts) relevant for physical systems might be more or less directly derivable from the structure or other properties of the system, the costs $c_i$ (e.g.\ comprehensibility of sentences) relevant for knowledge-based systems might be plausibly derivable from the available fault information \citep[Sec.~4.6.1]{Rodler2015phd}. For instance, the cognitive complexity of understanding a queried sentence correctly in terms of syntax and topic can be expected to be closely related to the fault probability of the sentence computed from fault information about logical elements (e.g.\ $\forall$, $\lnot$, $\land$) and non-logical elements (e.g.\ concepts such as $\mathsf{toxoplasmosis}$ or $\mathsf{odontalgia}$ in a medical KB). Intuitively, the more a sentence captures the expertise of the oracle (i.e.\ the lower its fault probability for the oracle is), the easier it is to understand and answer for the oracle.
%

\subsubsection{The Addressed Measurement Selection Problem}
\label{sec:measurement_selection_problem}
Now, the problem tackled by the new algorithm introduced in this work is:
\begin{prob_def}[Optimal Query Selection]\label{prob:query_optimization} \textcolor{white}{.}
	
\noindent\textbf{Given:} A DPI $\dpi$,
 $\mD \subseteq \minD_{\dpi}$ where $|\mD|\geq 2$, a QSM $m$, 
a QCM $c$, 
a query search space $\mathbf{S}\subseteq\mQ_{\mD}$. 

\noindent\textbf{Find:} 
A query $Q^*$ with minimal cost w.r.t.\ $c$ among all queries in $\mathbf{S}$ that are optimal w.r.t.\ $m$. Formally: $Q^* = \argmin_{Q \in \mathbf{OptQ}(m,\mathbf{S})} c(Q)$ where $\mathbf{OptQ}(m,\mathbf{S}) := \setof{Q' \mid Q' = \arg\min_{Q\in\mathbf{S}} m(Q)}$. 
%
%
%
\end{prob_def}
Note, there can be multiple equally good solutions $Q^* \in \mQ_{\LD}$ to Prob.~\ref{prob:query_optimization}.

\subsection{The Suggested Algorithm}
\label{sec:presented_algo}
In this section we propose a novel algorithm (Alg.~\ref{algo:query_comp}) for two-way optimized query computation in sequential diagnosis which solves Prob.~\ref{prob:query_optimization} (for different settings of the query search space $\mathbf{S}$).\footnote{A Prot\'{e}g\'{e} plugin for KB debugging implementing i.a.\ the presented algorithm can be found on http://isbi.aau.at/ontodebug/. Prot\'{e}g\'{e} \citep{Noy2000} is the most widely used open-source KB (ontology) editor in the world and available at https://protege.stanford.edu/.} 
The described query computation procedure can be divided into three phases: Phase P1 
(line~\ref{algoline:query_comp:optimizeQP}), (the default) Phase P2 
(line~\ref{algoline:query_comp:optimizeQforQP}) and (the optional) Phase P3 
(lines~\ref{algoline:query_comp:enrichQ}-\ref{algoline:query_comp:optiminimizeQ}).
After giving the reader an intuition and overview of its functioning, we explain all three phases of it. Further implementation details can be found in the extended version \citep{DBLP:journals/corr/Rodler16a} of the paper.


\begin{algorithm}[t]
	\small
	\caption{Optimized Query Computation}\label{algo:query_comp}
	\begin{algorithmic}[1]
\Require DPI $\dpi$,
$\mD \subseteq \minD_{\dpi}$ where $|\mD|\geq 2$, a QSM $m$, a QCM $c$ (including information about sentence costs $c_i$, cf.\ page~\pageref{etc:QCM}), probability measure $p$ (to compute axiom fault probabilities), 
threshold $t_m$ 
(i.e.\ $|m(Q) - m_{opt}| \leq t_m \Rightarrow Q$ regarded as optimal; $m_{opt} :=$ optimal value of $m$),
inference engine $\mathit{Inf}$, set $\mathit{ET}$ of entailment types, $\mathit{pref}$ (preference information used for query optimization), a Boolean $\mathsf{enhance}$ (if $\true$, optional query enhancement is run)
\Ensure an optimized query $Q^* \in \mQ_{\mD}$ w.r.t.\ $m$, $t_m$ and $c$	(cf.\ Theorems~\ref{theorem:P1+P2_solve_query_optimization_problem} and \ref{theorem:P3_solves_problem_1})		
\State $\Pt \gets \Call{optimizeQPartition}{\mD,p,m,t_m}$ \label{algoline:query_comp:optimizeQP}    \Comment{P1}
\If{$\mathsf{enhance} = \true$} \label{algoline:query_comp:if_enhance=true}
\State $Q' \gets \Call{expandQueryForQPartition}{\dpi, \Pt, ET, \mathit{Inf}}$    \Comment{(optional) P3} \label{algoline:query_comp:enrichQ}
\State $Q^* \gets \Call{optiMinimizeQueryForQPartition}{\dpi,\Pt,Q',\mathit{pref}, \mathit{Inf}}$	 \Comment{(optional) P3}	\label{algoline:query_comp:optiminimizeQ}
\Else
\State $Q^* \gets \Call{optimizeQueryForQPartition}{\Pt,c}$ \label{algoline:query_comp:optimizeQforQP}   \Comment{(default) P2}
\EndIf
\State \Return $Q^*$
	\end{algorithmic}
	\normalsize
\end{algorithm}

\subsubsection{Intuition and Overview}
\label{sec:intuition+overview}

The main idea to achieve an inexpensive high-quality query generation is the exploitation of the information inherent in the $\subseteq$-minimal leading diagnoses and a decoupling of the optimizations of a QSM (first), i.e.\ the minimization of the expected number of queries until diagnostic certainty is given, and a QCM (second), i.e.\ the minimization of the query (answering) costs.

\noindent\textbf{Phase P1}, given a QSM $m$, determines an optimal QP w.r.t.\ $m$ (which implies that all queries having this QP are optimal regarding $m$) by completely avoiding the use of reasoning services. For this purpose, we present a polynomial-space (heuristic) search technique that explores a generally exponential space of q-partitions in a sound and complete way. That is, without expensive reasoner calls, all non-QPs are automatically neglected (i.e.\ no unnecessary time is spent for exploring non-QPs) and the proven optimal QP in the explored space is found. As the key to the realization of this search, we introduce the notions of \emph{canonical queries} and \emph{canonical QPs}. Additionally, the use of these two concepts in phase P1 automatically disregards a broad class of suboptimal QPs, which leads to a generally significant refinement of the relevant search space.
After an optimal QP $\Pt$ as per $m$ has been determined and fixed, there are two options (parameter $\mathsf{enhance}$): The execution of either
\vspace{-6pt}
\begin{itemize}[noitemsep]
	\item phase P2 (restricted search space, no reasoner calls, instantaneous output) or
	\item phase P3 (full search space, polynomial number of reasoner calls, reasonably fast output).
\end{itemize}

\noindent\textbf{Phase P2} ($\mathsf{enhance} = \false$) realizes the finding of an optimal query w.r.t.\ any of the QCMs discussed on page~\pageref{etc:QCM} for the optimal QP $\Pt$ from P1. In order to make this query computation very efficient, P2 focuses on a restricted class of queries wherein the calculation of a (globally) optimal query is possible without reasoning aid. The query output by P2 represents the system component(s) whose inspection is least expensive (QCM $c$) among all those that yield the highest information (QSM $m$). In this vein, P2 computes an optimized query for scenarios such as the ones discussed in \citep{DBLP:conf/ijcai/BrodieRMO03}.

\noindent\textbf{Phase P3} ($\mathsf{enhance} = \true$) performs a query enhancement, 
subdivided into two consecutive steps. The first step ``enriches'' the canonical query of the optimal QP $\Pt$ from P1 by additional sentences of preferred types (\emph{augmentation to the full search space regarding preferred queries}). The second step finds an optimized $\subseteq$-minimal contraction of the enriched canonical query (\emph{searching for an optimal solution in full search space}).  
Overall, phase P3 computes a query that globally optimizes the QCM $c_{\max}$ (see page~\pageref{etc:QCM}) among all queries that -- under the reasonable assumption given by Conjecture~\ref{conj:CQPs=QPs} (see later) -- globally optimize the QSM $m$ over the full query search space. To this end, P3 requires only a polynomial number of reasoner calls. Moreover, given any predefined set of preferred query elements (e.g.\ measurements with small costs), P3 ensures that the returned optimized query includes only such preferred elements, if such a query exists.

\subsubsection{Phase P1: Optimizing the Q-Partition}
\label{sec:P1}
In this section we describe the functioning of \textsc{optimizeQPartition} in Alg.~\ref{algo:query_comp}.
At this first stage P1, we optimize the given QSM $m$ -- for now without regard to the QCM $c$, which is optimized later in phase P2. This \emph{decoupling of optimization steps}\label{etc:decoupling_of_optimization_steps} is possible since the QSM value $m(Q)$ of a query $Q$ is only affected by the (unique) q-partition of $Q$ and not by $Q$ itself. On the contrary, the QCM value $c(Q)$ is a function of (the sentences in) $Q$ only and not of $Q$'s q-partition.
Therefore, the search performed in P1 will consider only q-partitions and target the determination of a (close-to) optimal q-partition $\Pt$. This q-partition remains fixed throughout all further phases of the algorithm, where an optimal query with exactly this q-partition $\Pt$ is sought. 
\begin{remark}\label{rem:optimization_c_before_m_not_reasonable}
A decoupling of optimizations in the reverse order, i.e.\ QCM $c$ before QSM $m$, is not reasonably possible. First, the q-partition is a necessary prerequisite for the verification whether a set of sentences is in fact a query (see Def.~\ref{def:query_q-partition}). Because without the associated \text{(q-)partition} one has no guiding principle of which $Q$ is allowed (i.e.\ a query) and which is not (i.e.\ a set of sentences that is not a query) when trying to find some \emph{query} $Q$ with minimal $c(Q)$. Second, once the query is fixed, so is its q-partition (see Prop.~\ref{prop:properties_of_q-partitions}.\ref{prop:properties_of_q-partitions:enum:q-partition_is_unique_for_query}). So, there is no chance to optimize $m$ for some already fixed query $Q$.\qed
\end{remark}

\paragraph{Canonical Queries and Q-Partitions.}
As the determination of an optimal q-partition w.r.t.\ the given QSM $m$ should be as efficient 
as possible, we want to neglect any partition which is not a q-partition in the search. That is, we do not even want to generate any non-q-partitions. However, to verify whether a given $3$-partition $\Pt$ of a set of leading diagnoses $\mD$ is a q-partition, we need a query $Q \neq \emptyset$ with a q-partition $\Pt_\mD(Q) = \Pt$. $Q$ 
can be seen as a concrete witness proving that $\Pt$ is not solely a partition of $\mD$, but indeed a q-partition (cf.\ Def.~\ref{def:query_q-partition}). 
But:
\begin{proposition}\label{prop:query_vs_QP}
Let $\Pt$ be a (fixed) q-partition. Then $|\{Q \mid Q \text{ query},\Pt_\mD(Q) = \Pt\}| > 1$, i.e.\ there are multiple queries for $\Pt$.
\end{proposition}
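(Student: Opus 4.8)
The plan is to fix an arbitrary query witnessing that $\Pt$ is a q-partition and then perturb it in a way that is syntactically visible but semantically inert, thereby producing a second, distinct query with the same q-partition. By Def.~\ref{def:query_q-partition}, since $\Pt$ is a q-partition there exists at least one query $Q$ with $\Pt_\mD(Q) = \Pt$, and, being a query, $Q \neq \emptyset$. It thus suffices to exhibit a query $Q' \neq Q$ with $\Pt_\mD(Q') = \Pt$, which together with $Q$ shows that the number of queries for $\Pt$ is at least two.

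First I would record the key observation that the three sets $\dx{}(X),\dnx{}(X),\dz{}(X)$ of Prop.~\ref{prop:partition} are determined by a set of sentences $X$ only through its deductive closure $\mathit{DC}(X)$ (equivalently, through the conjunction of its members), where $\mo^{*}_i$ is as in \eqref{eq:sol_ont_candidate}. Indeed, $\mo^{*}_i \models X$ means that $\mo^{*}_i$ entails every sentence of $X$, which is unchanged if $X$ is replaced by any logically equivalent set of sentences; and by monotonicity and idempotence of $\models_{\mathcal{L}}$ (the assumptions of Sec.~\ref{sec:assumptions}), $\mo^{*}_i \cup X$ and $\mo^{*}_i \cup X'$ have the same deductive closure -- indeed the same models -- whenever $X \equiv X'$, so they entail exactly the same negative test cases $\tn \in \Tn$ and, each requirement $r \in \RQ$ being determined by the class of models (as with consistency or coherency), satisfy exactly the same $r \in \RQ$. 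Hence $\Pt_\mD(X) = \Pt_\mD(X')$ whenever $X$ and $X'$ are logically equivalent sets of sentences.

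Given this, I would take $Q' := Q \cup \setof{\tau}$ where $\tau$ is a tautology of $\mathcal{L}$ chosen so that $\tau \notin Q$; such a $\tau$ exists because there are infinitely many syntactically distinct tautologies over $\mathcal{L}$ (cf.\ Rem.~\ref{rem:infinitely_many_solutionKBs}) while $Q$ is finite. As $\tau$ is valid, $Q \cup \setof{\tau}$ is logically equivalent to $Q$, so by the previous paragraph $\Pt_\mD(Q') = \Pt_\mD(Q) = \Pt$; in particular $\dx{}(Q') = \dx{}(Q) \neq \emptyset$ and $\dnx{}(Q') = \dnx{}(Q) \neq \emptyset$ since $Q$ is a query, and $Q' \neq \emptyset$, whence $Q'$ is a query by Def.~\ref{def:query_q-partition}. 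Since $\tau \in Q' \setminus Q$ we have $Q' \neq Q$, so there are at least two (in fact infinitely many) queries with q-partition $\Pt$. Alternatively, one could replace some $\alpha \in Q$ by a fresh logically equivalent syntactic variant such as $\alpha \lor \alpha$, using the same rewriting freedom. The only point requiring care -- the mild obstacle -- is to argue that adjoining $\tau$ genuinely changes $Q$ as a set yet leaves every ingredient of the q-partition intact; this is precisely what monotonicity and idempotence deliver, since they guarantee that adjoining a valid sentence to any $\mo^{*}_i \cup X$ leaves its deductive closure -- and hence all of the checks $\mo^{*}_i \models X$, $\mo^{*}_i \cup X \not\models \tn$, and ``$\mo^{*}_i \cup X$ fulfils $r$'' -- unchanged.
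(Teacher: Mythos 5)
Your proof is correct and follows essentially the same route as the paper's: the paper also produces a second query by a semantically inert syntactic perturbation (it rewrites a sentence $\tax$ of $Q$ as $\tax \land \tau$ for a tautology $\tau$, whereas you adjoin $\tau$ as a fresh element). Your version is somewhat more careful in that it explicitly verifies, via monotonicity and idempotence, that the q-partition depends only on the logical-equivalence class of the query, a step the paper leaves implicit.
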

\begin{proof}
Since a query is a non-empty set of sentences, it must include at least one sentence $\tax$. This sentence can be equivalently rewritten in different ways, e.g.\ $\tax \equiv \tax \land \tau$ for an arbitrary tautology $\tau$.
\end{proof}	
Whereas the proof draws on a semantically equivalent rewriting (that is possible for any sentence in infinitely many ways) to show Prop.~\ref{prop:query_vs_QP}, we point out that there are, in most cases, queries with equal q-partitions that are semantically non-equivalent -- and not even rewritings of one another. This holds true also for $\subseteq$-minimal queries. i.e.\ queries where the removal of any sentence in them leads to a change of their associated q-partition. We will provide some examples later. 

The idea is now to appoint one \emph{well-defined} representative query for each q-partition, i.e.\ we seek the definition of a unique query for each q-partition such that the former allows us to verify the latter. And, we want such a query to be \emph{easily computable}. 
Furthermore, in order to devise a time and space saving q-partition search method, the potential size of the explored search space should to be minimized. To achieve this, a key idea is to omit those q-partitions in the search that are proven suboptimal. One such class of suboptimal q-partitions are those $\Pt$ with non-empty $\dz{}(\Pt)$ because they do not discriminate among all (leading) diagnoses (cf.\ the discussion in Sec.~\ref{sec:query:definition_and_properties}). Hence, a second postulation to the representative queries for q-partitions is that the focus on such queries implies the \emph{exclusion of the mentioned suboptimal q-partitions}. In other words, each suboptimal q-partition should have no such representative query.

%
To realize these postulations, we introduce the notion of a \emph{canonical query (CQ)}.
The requirement of easy computability means that we would like to be able to determine a CQ without performing any expensive or (generally) intractable operations. Since by Prop.~\ref{prop:partition}.\ref{prop:properties_of_q-partitions:enum:query_is_set_of_common_ent} any query for a q-partition $\tuple{\dx{},\dnx{},\dz{}}$ is a subset of the common entailments of all KBs in the set $\setof{\mo_i^* \mid \md_i \in \dx{}}$, 
the operations of interest in query computation are entailment calculations. Once calls to a reasoning engine are involved, the complexity of one such call is already NP-complete for Propositional Logic. However, a straightforward way of entailment calculation without involving reasoning aid or other expensive operations is the restriction to the computation of explicit entailments. 
An entailment $\alpha$ of a KB $X$ is called \emph{explicit} iff $\alpha \in X$, \emph{implicit} otherwise \citep[Def.~8.1]{Rodler2015phd}. But, as indicated by \citep[Prop.~8.3]{Rodler2015phd}, just these explicit entailments are also the key to achieve a disregard of suboptimal q-partitions. Therefore, CQs should be characterized as queries including only explicit entailments.
Henceforth, we call a query $Q \in \mQ_\mD$ \label{etc:EE_query_def}\emph{explicit-entailments query} iff  $Q \subseteq \mo$.

Indeed, a restriction to the consideration of only explicit-entailments queries leads to a focus on non-suboptimal q-partitions: 
\begin{proposition}\label{prop:explicit-entailments_queries_have_empty_dz}
	Let $\dpi := \langle\mo,\mb,\Tp,\Tn\rangle_\RQ$ be a DPI, $\mD \subseteq \minD_{\dpi}$ and $Q$ a query in $\mQ_\mD$ such that $Q \subseteq \mo$. Then $\dz{}(Q) = \emptyset$.
\end{proposition}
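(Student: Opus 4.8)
The plan is to show that under the hypothesis $Q\subseteq\mo$ every leading diagnosis $\md_i\in\mD$ is forced into either $\dx{}(Q)$ or $\dnx{}(Q)$, whence $\dz{}(Q)=\mD\setminus(\dx{}(Q)\cup\dnx{}(Q))=\emptyset$ by Prop.~\ref{prop:partition}. So I would fix an arbitrary $\md_i\in\mD$ and split on whether the explicit-entailments query $Q$ meets $\md_i$ or not.

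The easy case is $Q\cap\md_i=\emptyset$: then $Q\subseteq\mo\setminus\md_i\subseteq\mo^*_i$, where $\mo^*_i:=(\mo\setminus\md_i)\cup\mb\cup U_\Tp$ as in Eq.~\eqref{eq:sol_ont_candidate}, so extensivity of $\models$ gives $\mo^*_i\models Q$ and hence $\md_i\in\dx{}(Q)$.

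For the remaining case $Q\cap\md_i\neq\emptyset$, the idea is to recognise $\mo^*_i\cup Q$ as the ``conflict witness'' of a \emph{proper} subset of $\md_i$. Concretely, since $Q\subseteq\mo$, the part of $Q$ lying outside $\md_i$ is already contained in $\mo\setminus\md_i$, while the part of $Q$ inside $\md_i$ re-adds exactly $Q\cap\md_i$, so
\[
\mo^*_i\cup Q \;=\; (\mo\setminus\md_i)\cup(Q\cap\md_i)\cup\mb\cup U_\Tp \;=\; \bigl(\mo\setminus(\md_i\setminus Q)\bigr)\cup\mb\cup U_\Tp .
\]
Set $\md_i':=\md_i\setminus Q$; then $\md_i'\subsetneq\md_i$ because $Q\cap\md_i\neq\emptyset$. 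As $\md_i$ is a \emph{minimal} diagnosis w.r.t.\ $\dpi$, its proper subset $\md_i'$ is not a diagnosis w.r.t.\ $\dpi$, so by Prop.~\ref{prop:notions_equiv} (equivalence of items 1 and 3) the set $\mo\setminus\md_i'$ is a KBD-conflict w.r.t.\ $\dpi$. By Def.~\ref{def:cs} this says that $(\mo\setminus\md_i')\cup U_\Tp$ is not a solution KB; since it contains $U_\Tp$ it automatically entails every positive test case (extensivity plus monotonicity), so Eq.~\eqref{e:2} cannot be the cause of the violation, and therefore $(\mo\setminus\md_i')\cup\mb\cup U_\Tp$ either violates some $r\in\RQ$ or entails some $\tn\in\Tn$. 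By the displayed identity this is precisely the assertion that $\mo^*_i\cup Q$ violates $\RQ$ or $\Tn$, i.e.\ $\md_i\in\dnx{}(Q)$.

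Combining the two cases yields $\md_i\in\dx{}(Q)\cup\dnx{}(Q)$ for every $\md_i\in\mD$, hence $\dz{}(Q)=\emptyset$. The only point requiring a little care — and the closest thing to an obstacle — is the bookkeeping around the set-algebra identity $\mo^*_i\cup Q=\bigl(\mo\setminus(\md_i\setminus Q)\bigr)\cup\mb\cup U_\Tp$ together with the observation that, for a conflict's canonical witness KB, the positive-test-case condition \eqref{e:2} is trivially met, so the conflict must stem from \eqref{e:1} or \eqref{e:3}; everything else is a direct application of the minimality of $\md_i$ and Prop.~\ref{prop:notions_equiv}.
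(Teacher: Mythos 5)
Your proof is correct and follows essentially the same route as the paper's: the case split on $Q\cap\md_i=\emptyset$ versus $Q\cap\md_i\neq\emptyset$ is equivalent (given $Q\subseteq\mo$) to the paper's split on $\mo\setminus\md_i\supseteq Q$ versus not, the first case is handled identically via extensivity and monotonicity, and in the second case you simply make explicit — via the identity $\mo^*_i\cup Q=(\mo\setminus(\md_i\setminus Q))\cup\mb\cup U_\Tp$ and Prop.~\ref{prop:notions_equiv} — the step the paper compresses into ``by the $\subseteq$-minimality property of $\md_i$''. The added bookkeeping is sound and fills in a detail the paper leaves implicit, but it is not a different argument.
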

\begin{proof}
	We have to show that for an arbitrary diagnosis $\md_i \in \mD$ either $\md_i\in\dx{}(Q)$ or $\md_i\in\dnx{}(Q)$. Therefore two cases must be considered: (a)~$\mo\setminus\md_i \supseteq Q$ and (b)~$\mo\setminus\md_i \not\supseteq Q$. In case (a), by the fact that the entailment relation is extensive for $\mathcal{L}$, $\mo\setminus\md_i \models Q$ and thus, by monotonicity of $\mathcal{L}$, $\mo^{*}_i = (\mo\setminus\md_i) \cup \mb \cup U_P \models Q$. So, $\md_i \in \dx{}(Q)$. In case (b) there exists some axiom $\tax \in Q \subseteq \mo$ such that $\tax\notin \mo\setminus \md_i$, which means that $(\mo \setminus \md_i) \cup Q \supset (\mo\setminus\md_i)$. From this we can derive that $\mo^{*}_i \cup Q$ must violate $\RQ$ or $\Tn$ by the $\subseteq$-minimality property of each diagnosis in $\mD$, in particular of $\md_i$. Hence, $\md_i \in \dnx{}(Q)$.
\end{proof}
The proof of Prop.~\ref{prop:explicit-entailments_queries_have_empty_dz} exhibits a decisive advantage of using explicit-entailments queries. In fact, it reveals that the task of verifying whether a set of explicit entailments is a query in $\mQ_\mD$ is very easy in that it can be reduced to set comparisons. That is, subset checks are traded for reasoning. To build the q-partition $\Pt(Q)$ associated with some explicit-entailments query $Q$ it must solely be tested for each $\md_i\in\mD$ whether $\mo\setminus\md_i \supseteq Q$ or not. More specifically:
\begin{proposition}\label{prop:ee-query_q-partition_construction_by_means_of_set_comparison}
Let $\dpi := \langle\mo,\mb,\Tp,\Tn\rangle_\RQ$ be a DPI, $\mD \subseteq \minD_{\dpi}$ and $Q$ be a query in $\mQ_\mD$ such that $Q \subseteq \mo$. Then $\md \in \dx{}(Q)$ iff $\mo\setminus\md \supseteq Q$ and $\md \in \dnx{}(Q)$ iff $\mo\setminus\md \not\supseteq Q$.
\end{proposition}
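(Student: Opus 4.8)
The statement is essentially a refinement of Proposition~\ref{prop:explicit-entailments_queries_have_empty_dz}, so my plan is to reuse its proof almost verbatim and then read off the two equivalences. First I would invoke Proposition~\ref{prop:explicit-entailments_queries_have_empty_dz} to get $\dz{}(Q) = \emptyset$; combined with Proposition~\ref{prop:properties_of_q-partitions}.\ref{prop:properties_of_q-partitions:enum:q-partition_is_partition} (that $\langle\dx{}(Q),\dnx{}(Q),\dz{}(Q)\rangle$ is a partition of $\mD$), this tells me that every $\md \in \mD$ lies in \emph{exactly one} of $\dx{}(Q)$ and $\dnx{}(Q)$. Hence it suffices to establish one of the two equivalences, say $\md \in \dx{}(Q) \Leftrightarrow \mo\setminus\md \supseteq Q$, and the other follows by negation.

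For the equivalence itself I would argue the two implications separately, mirroring the case split in the proof of Proposition~\ref{prop:explicit-entailments_queries_have_empty_dz}. For the ``$\Leftarrow$'' direction, assume $\mo\setminus\md \supseteq Q$; since $\models$ is extensive and monotonic for $\mathcal{L}$, we get $\mo^{*}_i = (\mo\setminus\md)\cup\mb\cup U_\Tp \models Q$, i.e.\ $\md \in \dx{}(Q)$ (using the notation of Eq.~\eqref{eq:sol_ont_candidate} and the definition of $\dx{}$ in Proposition~\ref{prop:partition}). For the ``$\Rightarrow$'' direction, I would prove the contrapositive: if $\mo\setminus\md \not\supseteq Q$ then there is some axiom $\tax \in Q \subseteq \mo$ with $\tax \notin \mo\setminus\md$, so $(\mo\setminus\md)\cup Q$ is a proper superset of $\mo\setminus\md$; by $\subseteq$-minimality of $\md$ (as $\md \in \mD \subseteq \minD_\dpi$), adding any such axiom back re-introduces a violation of $\RQ$ or $\Tn$, whence $\mo^{*}_i \cup Q$ violates $\RQ$ or $\Tn$ and $\md \in \dnx{}(Q)$; since $\dz{}(Q)=\emptyset$ this forces $\md \notin \dx{}(Q)$, completing the contrapositive.

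Finally, from $\md \in \dx{}(Q) \Leftrightarrow \mo\setminus\md \supseteq Q$ together with $\dz{}(Q) = \emptyset$ and the partition property, I conclude $\md \in \dnx{}(Q) \Leftrightarrow \md \notin \dx{}(Q) \Leftrightarrow \mo\setminus\md \not\supseteq Q$, which is the second claimed equivalence.

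\textbf{Main obstacle.} There is no real technical difficulty here — the content is already carried by the proof of Proposition~\ref{prop:explicit-entailments_queries_have_empty_dz}. The only point to be careful about is the $\subseteq$-minimality step in the ``$\Rightarrow$'' direction: one must argue that \emph{every} axiom in $\md$ (in particular any $\tax$ as above that has been put back) is genuinely needed, i.e.\ that $(\mo\setminus\md)\cup\{\tax\}$, and hence its superset $\mo^{*}_i \cup Q$, fails $\RQ$ or $\Tn$ — this is exactly what minimality of the diagnosis $\md$ in the sense of Definition~\ref{def:diagnosis} (together with Remark~\ref{rem:ad_KBD-diagnosis_def} and monotonicity) gives, but it is worth spelling out rather than leaving implicit.
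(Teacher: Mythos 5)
Your proof is correct and follows exactly the route the paper intends: the paper gives no separate proof of Proposition~\ref{prop:ee-query_q-partition_construction_by_means_of_set_comparison}, presenting it as a direct read-off of the two-case argument in the proof of Proposition~\ref{prop:explicit-entailments_queries_have_empty_dz}, which is precisely what you reconstruct (case $\mo\setminus\md \supseteq Q$ gives the ``$\Leftarrow$'' direction via extensiveness and monotonicity, case $\mo\setminus\md \not\supseteq Q$ forces $\md \in \dnx{}(Q)$ via $\subseteq$-minimality of $\md$, and disjointness of the parts of the q-partition upgrades the two implications to biconditionals). Your closing remark about making the minimality step explicit is sensible but adds nothing beyond what the paper's own argument already uses implicitly.
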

The next proposition describes the shape any explicit-entailments query must have:
\begin{proposition}\label{prop:expl_ent_query_must_neednot_mustnot_include_ax}
Let $\dpi$ be any DPI, $\mD \subseteq \minD_{\dpi}$ and $Q$ be a query in $\mQ_\mD$ such that $Q \subseteq \mo$. Then $Q$ must include some axiom(s) in $U_\mD$ (i.e.\ $Q\cap U_\mD \neq \emptyset$), can but need not include any axioms in $\mo\setminus U_\mD$, 
and must not include any axioms in $I_\mD$ (i.e.\ $Q \cap I_\mD = \emptyset$).
Further on, elimination of axioms in $\mo\setminus U_\mD$ from $Q$ does not affect the q-partition $\Pt(Q)$ of $Q$.
\end{proposition}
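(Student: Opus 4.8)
The plan is to derive all four assertions from three facts established earlier: the set-comparison criterion for the q-partition of an explicit-entailments query (Prop.~\ref{prop:ee-query_q-partition_construction_by_means_of_set_comparison}), the emptiness of $\dz{}$ for such queries (Prop.~\ref{prop:explicit-entailments_queries_have_empty_dz}), and the defining requirement that a query have both $\dx{}$ and $\dnx{}$ non-empty (Def.~\ref{def:query_q-partition}). The single elementary observation driving the ``must'' and ``must not'' parts is: for $\tax\in\mo$, if $\tax\notin U_\mD$ then $\tax\in\mo\setminus\md_j$ for \emph{every} $\md_j\in\mD$ (since each $\md_j\subseteq U_\mD$), whereas if $\tax\in I_\mD$ then $\tax\notin\mo\setminus\md_j$ for \emph{every} $\md_j\in\mD$.

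First I would prove $Q\cap U_\mD\neq\emptyset$ by contradiction: if $Q\cap U_\mD=\emptyset$ then $Q\cap\md_j=\emptyset$, i.e.\ $Q\subseteq\mo\setminus\md_j$, for every $\md_j\in\mD$, so Prop.~\ref{prop:ee-query_q-partition_construction_by_means_of_set_comparison} puts all of $\mD$ into $\dx{}(Q)$ and leaves $\dnx{}(Q)=\emptyset$, contradicting Def.~\ref{def:query_q-partition}. The argument for $Q\cap I_\mD=\emptyset$ is the mirror image: an axiom $\tax\in Q\cap I_\mD$ would satisfy $\tax\notin\mo\setminus\md_j$ for all $\md_j$, hence $Q\not\supseteq$-fails: $\mo\setminus\md_j\not\supseteq Q$ for all $\md_j$, forcing $\dx{}(Q)=\emptyset$, again a contradiction. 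For the ``need not'' half of the middle claim I would exhibit the witness $Q:=U_\mD\setminus\md_i$, which by Prop.~\ref{prop:properties_of_q-partitions}.\ref{prop:properties_of_q-partitions:enum:D+=d_i_is_q-partition_and_lower_bound_of_queries} is a query, satisfies $Q\subseteq U_\mD\subseteq\mo$, and therefore contains no axiom from $\mo\setminus U_\mD$; the ``can'' half then follows from the last sentence of the proposition (adjoin an arbitrary $\tax\in\mo\setminus U_\mD$ to such a $Q$ --- vacuous if $\mo\setminus U_\mD=\emptyset$ --- and observe that the q-partition, hence the query property, is unchanged).

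It thus remains to prove the final sentence, which is the real content. Given an explicit-entailments query $Q$, set $Q':=Q\cap U_\mD$; by the first claim $Q'\neq\emptyset$. The first step is the purely set-theoretic equivalence $\mo\setminus\md_j\supseteq Q\iff\mo\setminus\md_j\supseteq Q'$ for every $\md_j\in\mD$: ``$\Rightarrow$'' holds because $Q'\subseteq Q$, and ``$\Leftarrow$'' because any $\tax\in Q\setminus Q'$ lies outside $U_\mD$ and hence (being in $\mo$) lies in $\mo\setminus\md_j$ automatically. Now, if $\md_j\in\dx{}(Q)$ then by Prop.~\ref{prop:ee-query_q-partition_construction_by_means_of_set_comparison} $\mo\setminus\md_j\supseteq Q\supseteq Q'$, so by extensivity and monotonicity $\mo^*_j\models Q'$, i.e.\ $\md_j\in\dx{}(Q')$; if $\md_j\in\dnx{}(Q)$ then $\mo\setminus\md_j\not\supseteq Q$, hence $\mo\setminus\md_j\not\supseteq Q'$, and the $\subseteq$-minimality argument used in the proof of Prop.~\ref{prop:explicit-entailments_queries_have_empty_dz} (case (b)) places $\md_j$ in $\dnx{}(Q')$. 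Since $\dz{}(Q)=\emptyset$, these two cases exhaust $\mD$; and because $\langle\dx{}(Q'),\dnx{}(Q'),\dz{}(Q')\rangle$ is a partition of $\mD$ (Prop.~\ref{prop:partition}), the inclusions $\dx{}(Q)\subseteq\dx{}(Q')$ and $\dnx{}(Q)\subseteq\dnx{}(Q')$ force equalities together with $\dz{}(Q')=\emptyset$, i.e.\ $\Pt(Q')=\Pt(Q)$; in particular $Q'$ inherits the non-empty $\dx{}$ and $\dnx{}$ and is itself a query. I do not anticipate a genuine obstacle here; the only points requiring care are the ``$\Leftarrow$'' direction of the set-theoretic equivalence and the re-use of the $\subseteq$-minimality argument for $Q'$ (rather than $Q$), which is legitimate since that argument only uses $Q'\subseteq\mo$ together with the minimality of the diagnoses in $\mD$.
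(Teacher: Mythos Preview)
Your proposal is correct and follows essentially the same three-part decomposition as the paper (which proves the proposition via Lemmata~\ref{lem:EE-query_no_intersection_with_I_D}, \ref{lem:EE-query_must_have_intersection_with_U_D}, \ref{lem:reduct_to_discax}). Two minor differences are worth noting. First, for $Q\cap I_\mD=\emptyset$ the paper proves a slightly stronger statement (Lemma~\ref{lem:EE-query_no_intersection_with_I_D} holds for \emph{any} query, not just $Q\subseteq\mo$) by arguing directly from diagnosis minimality that every $\md_k$ lands in $\dnx{}(Q)$; you instead route through Prop.~\ref{prop:ee-query_q-partition_construction_by_means_of_set_comparison}, which is cleaner but needs the hypothesis $Q\subseteq\mo$. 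Second, for the q-partition preservation (Lemma~\ref{lem:reduct_to_discax}) the paper handles the $\dnx{}$ direction via idempotence, observing $\mo_k^*\cup Q\equiv\mo_k^*\cup Q'$ since $\mo_k^*$ already entails every axiom in $Q\setminus Q'$; you instead reuse the minimality argument from Prop.~\ref{prop:explicit-entailments_queries_have_empty_dz}. Both routes are valid and of comparable length.
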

As suggested by Prop.~\ref{prop:expl_ent_query_must_neednot_mustnot_include_ax}, the sentences in the KB $\mo$ of a DPI that are crucial for the definition of an explicit-entailments query -- and hence for the characterization of a CQ -- are given by $U_\mD$ as well as $I_\mD$. In fact, as the proofs of Lemmata~\ref{lem:EE-query_no_intersection_with_I_D} and \ref{lem:EE-query_must_have_intersection_with_U_D} in Appendix~A show, the non-inclusion of sentences from $I_\mD$ is necessary to allow for non-empty $\dx{}(Q)$ and the inclusion of some sentence(s) from $U_\mD$ is required to allow for non-empty $\dnx{}(Q)$. Without these properties a set of sentences $Q$ does not discriminate among the leading diagnoses $\mD$. 
For this reason, we give these essential sentences $U_\mD \setminus I_\mD$ a distinct name:
\begin{definition}\label{def:discax}
We call $\Disc_\mD := U_\mD \setminus I_\mD$ the \emph{discrimination sentences wrt.\ $\mD$}. 
\end{definition}

\begin{example}\label{ex:disc_ax}
Let us consider the set of leading diagnoses 
\[\mD = \setof{\md_1,\md_2,\md_3} = \setof{\{\tax_2,\tax_3\},\{\tax_2,\tax_5\},\{\tax_2,\tax_6\}} \quad \mbox{(cf.\ Tab.~\ref{tab:min_diags+conflicts_example_DPI_0})} \] 
w.r.t.\ our running example DPI $\exdpi$ (Tab.~\ref{tab:example_dpi_0}). Then, $U_\mD = \setof{\tax_2,\tax_3,\tax_5,\tax_6}$ and $I_\mD = \setof{\tax_2}$. Now, all $\subseteq$-minimal explicit-entailments query candidates we might build according to Prop.~\ref{prop:expl_ent_query_must_neednot_mustnot_include_ax} (which provides necessary criteria to explicit-entailments queries) are 
\begin{align}\label{eq:candidate_ee-queries}
\setof{\setof{\tax_3,\tax_5,\tax_6},\setof{\tax_3,\tax_5},\setof{\tax_3,\tax_6},\setof{\tax_5,\tax_6},\setof{\tax_3},\setof{\tax_5},\setof{\tax_6}}
\end{align}
That is, all these seven candidates include at least one element out of $U_\mD$ and no elements out of $\mo \setminus U_\mD = \setof{\tax_1,\tax_4,\tax_7}$ or $I_\mD$. 
Clearly, there are exactly six different q-partition candidates with empty $\dz{}$ w.r.t.\ the three diagnoses in $\mD$, i.e.\ there are three possibilities to select one, and three possibilities to select two diagnoses to constitute the set $\dx{}$ with $\emptyset\subset\dx{}\subset\mD$ (the set $\dnx{}$ is already set after $\dx{}$ is chosen since $\tuple{\dx{},\dnx{},\dz{}}$ is a partition of $\mD$). Hence, $\mQ_\mD$ might comprise at most six explicit-entailments queries with different q-partitions because each one of them must feature an empty $\dz{}$-set in its q-partition, as stated by Prop.~\ref{prop:explicit-entailments_queries_have_empty_dz}.
By the \emph{Pigeonhole Principle}, either at least two candidates in Eq.~\eqref{eq:candidate_ee-queries} have the same q-partition or at least one candidate is not a query at all. As we require that there must be exactly one CQ per q-partition and seek a method that computes only queries (and no candidates that turn out to be no queries), we see that Prop.~\ref{prop:expl_ent_query_must_neednot_mustnot_include_ax} is not yet restrictive enough to constitute also a sufficient criterion for CQs.
	
So let us now find out where the black sheep among the candidates above is. The key to finding it is the fact that each query $Q$ is a common entailment of all $\mo_i^* := (\mo \setminus \md_i) \cup \mb \cup U_\Tp$ 
where $\md_i$ is in the $\dx{}(Q)$-set of $Q$'s q-partition (cf.\ Prop.~\ref{prop:properties_of_q-partitions}.\ref{prop:properties_of_q-partitions:enum:query_is_set_of_common_ent}). 
Since the candidates for CQs in Eq.~\eqref{eq:candidate_ee-queries} are all constituted of just explicit entailments $\tax_i \in \mo$, 
we immediately see that we must postulate that each CQ $Q$ is a set of common elements of all $\mo \setminus \md_i$ where $\md_i$ is in the $\dx{}(Q)$-set of $Q$'s q-partition. The $\mo \setminus \md_i$ sets for diagnoses $\md_i \in \mD$ are given below. Starting to verify this for the first candidate $\setof{\tax_3,\tax_5,\tax_6}$ above, we quickly find out that there is no possible $\dx{}(Q)$-set of a q-partition such that $Q \subseteq \bigcap_{\md_i \in \dx{}(Q)} \mo \setminus \md_i$ because none of these intersected sets includes all elements out of $\setof{\tax_3,\tax_5,\tax_6}$. Thus, the first candidate is no query at all. 
\begin{align*}
\mo \setminus \md_1 &= \setof{\tax_1,\tax_4,\tax_5,\tax_6,\tax_7} \\
\mo \setminus \md_2 &= \setof{\tax_1,\tax_3,\tax_4,\tax_6,\tax_7} \\
\mo \setminus \md_3 &= \setof{\tax_1,\tax_3,\tax_4,\tax_5,\tax_7} 
\end{align*}
	
Performing an analogue verification for the other candidates, we recognize that all of them are indeed queries and no two of them exhibit the same q-partition. Concretely, the q-partitions associated with the queries in the set above (minus the first set $\setof{\tax_3,\tax_5,\tax_6}$) are as follows: 
\begin{align}
\begin{split} \label{eq:canQ+canQP_for_diags1,2,3}
\Pt(\setof{\tax_3,\tax_5}) &= \tuple{\setof{\md_3},\setof{\md_1,\md_2},\emptyset} \\
\Pt(\setof{\tax_3,\tax_6}) &= \tuple{\setof{\md_2},\setof{\md_1,\md_3},\emptyset} \\
\Pt(\setof{\tax_5,\tax_6}) &= \tuple{\setof{\md_1},\setof{\md_2,\md_3},\emptyset} \\
\Pt(\setof{\tax_3}) &= \tuple{\setof{\md_2,\md_3},\setof{\md_1},\emptyset} \\
\Pt(\setof{\tax_5}) &= \tuple{\setof{\md_1,\md_3},\setof{\md_2},\emptyset} \\
\Pt(\setof{\tax_6}) &= \tuple{\setof{\md_1,\md_2},\setof{\md_3},\emptyset} \qed 
\end{split}
\end{align}
\end{example}

Based on these thoughts, we now define a CQ as follows:
\begin{definition}[Canonical Query]\label{def:canonical_query}
Let $\dpi$ be a DPI, $\mD \subseteq \minD_{\dpi}$ and $\emptyset\subset\dx{}\subset\mD$. Then $Q_{\mathsf{can}}(\dx{}) := (\mo \setminus U_{\dx{}}) \cap \Disc_\mD$ is \emph{the canonical query (CQ) w.r.t.\ the seed $\dx{}$} if $Q_{\mathsf{can}}(\dx{}) \neq \emptyset$. Else, $Q_{\mathsf{can}}(\dx{})$ is undefined.\footnote{We will often not mention the seed of a CQ if it is not relevant in a particular discussion.}
\end{definition}
To interpret this definition, note that $\mo \setminus U_{\dx{}}$ are exactly the common explicit entailments of $\setof{\mo_i^* \mid \md \in \dx{}}$ (cf.\ Prop.~\ref{prop:properties_of_q-partitions}.\ref{prop:properties_of_q-partitions:enum:query_is_set_of_common_ent}). 
Intuitively, the CQ extracts all discrimination sentences 
$\Disc_\mD$ from these entailments, thereby removing all elements that do not affect the q-partition (cf.\ Prop.~\ref{prop:expl_ent_query_must_neednot_mustnot_include_ax}). Recall that we requested a well-defined representative for q-partitions; hence, we specify this representative in a way it includes no obviously immaterial elements. 
\begin{remark}\label{rem:multiple_seed_lead_to_same_CQ}
There might be multiple seeds that lead to the same canonical query. That is, $Q_{\mathsf{can}}(\dx{i}) = Q_{\mathsf{can}}(\dx{j})$ might hold for seeds $\dx{i} \neq \dx{j}$ (because in spite of this difference $U_{\dx{i}} = U_{\dx{j}}$ might be true). 
For instance, let $\mD := \setof{\md_1,\md_2,\md_4,\md_5} = \{\{\tax_2,\tax_3\}, \{\tax_2,\tax_5\}$, $\{\tax_2,\tax_7\},  \{\tax_1,\tax_4.\tax_7\}\}$ (cf.\ Tab.\ref{tab:min_diags+conflicts_example_DPI_0}) be a set of leading diagnoses w.r.t.\ our running example DPI $\exdpi$ (Tab.~\ref{tab:example_dpi_0}). Then the seeds $\dx{1}:=\{\md_1,\md_5\}$ as well as $\dx{2}:=\{\md_1,\md_4,\md_5\}$ give rise to the same CQ $Q = \{\tax_5\}$.\qed  
\end{remark} 

It is trivial to see from Def.~\ref{def:canonical_query} that:
\begin{proposition}\label{prop:canonical_query_is_ee-query}
Any canonical query is an explicit-entailments query. 
\end{proposition}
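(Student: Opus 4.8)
The plan is to peel off the two ingredients hidden in the phrase ``explicit-entailments query'' (page~\pageref{etc:EE_query_def}): such an object is (i)~a query w.r.t.\ $\mD$ that (ii)~is a subset of $\mo$. Part~(ii) is immediate from Def.~\ref{def:canonical_query}: a canonical query has the form $Q_{\mathsf{can}}(\dx{}) = (\mo \setminus U_{\dx{}}) \cap \Disc_\mD$ for some seed $\dx{}$ with $\emptyset \subset \dx{} \subset \mD$, and as an intersection it satisfies $Q_{\mathsf{can}}(\dx{}) \subseteq \mo \setminus U_{\dx{}} \subseteq \mo$. Hence the only real content is to check that $Q_{\mathsf{can}}(\dx{})$ — which, being a canonical query, is non-empty by Def.~\ref{def:canonical_query} — is indeed a query, i.e.\ (by Def.~\ref{def:query_q-partition}, equivalently Prop.~\ref{prop:properties_of_q-partitions}.\ref{prop:properties_of_q-partitions:enum:set_of_logical_formulas_is_query_iff...}) that $\dx{}(Q_{\mathsf{can}}(\dx{})) \neq \emptyset$ and $\dnx{}(Q_{\mathsf{can}}(\dx{})) \neq \emptyset$.

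For this I would reuse the set-vs.-set characterization established (for arbitrary $Q \subseteq \mo$, not only for those already known to be queries) inside the proof of Prop.~\ref{prop:explicit-entailments_queries_have_empty_dz} and recorded in Prop.~\ref{prop:ee-query_q-partition_construction_by_means_of_set_comparison}: for $\md \in \mD$, if $\mo \setminus \md \supseteq Q$ then $\md \in \dx{}(Q)$, and if some axiom of $Q$ lies in $\md$ then $\md \in \dnx{}(Q)$. Non-emptiness of $\dx{}(Q_{\mathsf{can}}(\dx{}))$ is then automatic: for every $\md_i \in \dx{}$ we have $\md_i \subseteq U_{\dx{}}$, hence $\mo \setminus \md_i \supseteq \mo \setminus U_{\dx{}} \supseteq Q_{\mathsf{can}}(\dx{})$, so $\md_i \in \dx{}(Q_{\mathsf{can}}(\dx{}))$; since the seed $\dx{}$ is non-empty, so is $\dx{}(Q_{\mathsf{can}}(\dx{}))$ (in fact $\dx{} \subseteq \dx{}(Q_{\mathsf{can}}(\dx{}))$).

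The step I expect to carry the actual weight is $\dnx{}(Q_{\mathsf{can}}(\dx{})) \neq \emptyset$, and this is exactly where the definition $\Disc_\mD = U_\mD \setminus I_\mD$ — rather than, say, plain $U_\mD$ — is used. Pick any $\tax \in Q_{\mathsf{can}}(\dx{})$ (possible, as a canonical query is non-empty). Then $\tax \in \Disc_\mD$, so $\tax \in U_\mD$ and $\tax \notin I_\mD$; and $\tax \notin U_{\dx{}}$ because $\tax \in \mo \setminus U_{\dx{}}$. From $\tax \in U_\mD$ there is some $\md_j \in \mD$ with $\tax \in \md_j$, and from $\tax \notin U_{\dx{}}$ this $\md_j$ cannot belong to $\dx{}$, so $\md_j \in \mD \setminus \dx{}$. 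Since $\tax \in \md_j$ witnesses $\mo \setminus \md_j \not\supseteq Q_{\mathsf{can}}(\dx{})$, the characterization above gives $\md_j \in \dnx{}(Q_{\mathsf{can}}(\dx{}))$, so $\dnx{}(Q_{\mathsf{can}}(\dx{})) \neq \emptyset$. Putting the pieces together, $Q_{\mathsf{can}}(\dx{})$ is a non-empty subset of $\mo$ with non-empty $\dx{}$- and $\dnx{}$-parts, hence a query w.r.t.\ $\mD$ contained in $\mo$, i.e.\ an explicit-entailments query. Beyond these set manipulations the only non-elementary fact invoked is the implication ``$\exists \tax \in Q \cap \md \Rightarrow \md \in \dnx{}(Q)$'' from the proof of Prop.~\ref{prop:explicit-entailments_queries_have_empty_dz}, which itself rests on monotonicity/extensiveness of $\models$ together with the $\subseteq$-minimality of the diagnoses in $\mD$; everything else is bookkeeping with $U_{\dx{}}$, $U_\mD$, and $I_\mD$.
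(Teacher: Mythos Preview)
Your proof is correct, but you have done considerably more work than the paper intends for this particular proposition. The paper regards Prop.~\ref{prop:canonical_query_is_ee-query} as trivial from Def.~\ref{def:canonical_query}: the only content meant here is the inclusion $Q_{\mathsf{can}}(\dx{}) = (\mo \setminus U_{\dx{}}) \cap \Disc_\mD \subseteq \mo$. The fact that a canonical query is a query at all is stated and proved separately as Prop.~\ref{prop:canonical_query_is_query} (proof in Appendix~A). You have effectively inlined that later proposition into this one, which is logically fine—indeed, the definition of ``explicit-entailments query'' formally requires $Q \in \mQ_\mD$, so your reading is arguably the more careful one—but it means your write-up for Prop.~\ref{prop:canonical_query_is_ee-query} duplicates what the paper establishes under Prop.~\ref{prop:canonical_query_is_query}.

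On the substance of the ``is a query'' argument, your proof and the paper's proof of Prop.~\ref{prop:canonical_query_is_query} follow the same structure: show $\dx{} \subseteq \dx{}(Q_{\mathsf{can}}(\dx{}))$ via $\mo \setminus \md_i \supseteq \mo \setminus U_{\dx{}} \supseteq Q_{\mathsf{can}}(\dx{})$, then exhibit a diagnosis in $\dnx{}(Q_{\mathsf{can}}(\dx{}))$ by finding an axiom of the canonical query lying in some $\md_j \in \mD \setminus \dx{}$ and invoking $\subseteq$-minimality. Your version of the $\dnx{}$-part is marginally more direct—you start from an arbitrary $\tax \in Q_{\mathsf{can}}(\dx{})$ (non-empty by definition) rather than first arguing $U_{\dx{}} \subset U_\mD$ and then producing such an $\tax$—but the underlying idea is identical. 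Your remark that the set-comparison characterization from the proof of Prop.~\ref{prop:explicit-entailments_queries_have_empty_dz} applies to arbitrary $Q \subseteq \mo$ (not just those already known to be queries) is well observed and necessary to avoid circularity.
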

\begin{proposition}\label{prop:canonical_query_unique_for_seed}
Let $\emptyset\subset\dx{}\subset\mD$. Then, if existent, the canonical query w.r.t.\ $\dx{}$ is unique. 
\end{proposition}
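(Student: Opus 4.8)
The plan is to unwind Definition~\ref{def:canonical_query} and observe that it fixes $Q_{\mathsf{can}}(\dx{})$ by an explicit set-theoretic expression all of whose ingredients are already determined once the seed $\dx{}$, the leading diagnoses $\mD$, and the underlying DPI $\dpi$ are given. First I would note that the KB $\mo$ is a component of $\dpi$ and hence fixed; next, that $U_{\dx{}}$ is, by the notational convention of Sec.~\ref{sec:notation}, the union of the finitely many fixed sets in the collection $\dx{}$, and is therefore uniquely determined by $\dx{}$ alone; and finally, that $\Disc_\mD = U_\mD \setminus I_\mD$ depends only on $\mD$ (Def.~\ref{def:discax}) and is likewise fixed. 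Since set difference and set intersection are single-valued operations, the expression $(\mo \setminus U_{\dx{}}) \cap \Disc_\mD$ evaluates to exactly one set; call it $Q$.

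The remaining step is to handle the ``if existent'' qualifier. By Def.~\ref{def:canonical_query}, the canonical query w.r.t.\ $\dx{}$ is said to exist precisely when $Q \neq \emptyset$, in which case it is declared to be $Q$; when $Q = \emptyset$ it is undefined (not the empty query). Either way there is no ambiguity: whenever $Q_{\mathsf{can}}(\dx{})$ is defined it equals the one and only set $Q = (\mo \setminus U_{\dx{}}) \cap \Disc_\mD$, which is exactly the uniqueness claim. A short \emph{proof} along these lines — ``immediate from Def.~\ref{def:canonical_query}, as the defining expression is a function of the (fixed) seed and background data'' — is all that is needed.

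I do not expect any genuine obstacle, since the proposition is essentially a well-definedness check for the map $\dx{} \mapsto Q_{\mathsf{can}}(\dx{})$. The only point that needs care is not to confuse it with the converse phenomenon recorded in Remark~\ref{rem:multiple_seed_lead_to_same_CQ}: two \emph{different} seeds may yield the \emph{same} canonical query (because $U_{\dx{i}} = U_{\dx{j}}$ is possible even when $\dx{i} \neq \dx{j}$). That remark concerns the non-injectivity of $\dx{} \mapsto Q_{\mathsf{can}}(\dx{})$ and is entirely consistent with — in fact presupposes — the present proposition, which asserts only that this map is single-valued.
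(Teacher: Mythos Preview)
Your proposal is correct and matches the paper's approach: the paper simply states that Propositions~\ref{prop:canonical_query_is_ee-query} and~\ref{prop:canonical_query_unique_for_seed} are ``trivial to see from Def.~\ref{def:canonical_query}'' and gives no further proof. Your expanded argument---that the defining expression $(\mo \setminus U_{\dx{}}) \cap \Disc_\mD$ is a single-valued function of data already fixed by the seed, $\mD$, and the DPI---is exactly the intended justification, just spelled out in more detail.
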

We now show that a CQ is indeed a query in the sense of Def.~\ref{def:query_q-partition}.
\begin{proposition}\label{prop:canonical_query_is_query}
If $Q$ is a canonical query, then $Q$ is a query.
\end{proposition}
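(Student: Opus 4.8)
The plan is to show that a canonical query $Q = Q_{\mathsf{can}}(\dx{})$ satisfies the three conditions of Def.~\ref{def:query_q-partition}: $Q \neq \emptyset$, $\dx{}(Q) \neq \emptyset$, and $\dnx{}(Q) \neq \emptyset$. The first is immediate: by Def.~\ref{def:canonical_query}, $Q_{\mathsf{can}}(\dx{})$ is only called a canonical query when it is non-empty (otherwise it is undefined), so $Q \neq \emptyset$ holds by hypothesis. The real content is establishing non-emptiness of the two partition cells, and here I would lean heavily on Prop.~\ref{prop:canonical_query_is_ee-query} (a CQ is an explicit-entailments query, i.e.\ $Q \subseteq \mo$) together with Prop.~\ref{prop:ee-query_q-partition_construction_by_means_of_set_comparison}, which reduces membership in $\dx{}(Q)$ and $\dnx{}(Q)$ to the set comparisons $\mo \setminus \md \supseteq Q$ versus $\mo \setminus \md \not\supseteq Q$.

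First I would handle $\dx{}(Q) \neq \emptyset$. Let $\md \in \dx{}$ (the seed), which is non-empty since $\emptyset \subset \dx{}$. By definition $Q = (\mo \setminus U_{\dx{}}) \cap \Disc_\mD \subseteq \mo \setminus U_{\dx{}}$, and since $\md \subseteq U_{\dx{}}$, we have $\mo \setminus \md \supseteq \mo \setminus U_{\dx{}} \supseteq Q$. Hence $\mo \setminus \md \supseteq Q$, so by Prop.~\ref{prop:ee-query_q-partition_construction_by_means_of_set_comparison} (applied once we know $Q$ is an explicit-entailments query — but note there is a slight circularity to be careful about: that proposition presupposes $Q$ is already a query; so I would instead argue directly, as in the proof of Prop.~\ref{prop:explicit-entailments_queries_have_empty_dz}, that $\mo \setminus \md \supseteq Q$ together with extensivity and monotonicity of $\models_\mathcal{L}$ gives $\mo^*_\md = (\mo \setminus \md) \cup \mb \cup U_\Tp \models Q$, whence $\md \in \dx{}(Q)$). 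So $\dx{}(Q) \supseteq \dx{} \neq \emptyset$.

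Next, $\dnx{}(Q) \neq \emptyset$. Since $Q \cap \Disc_\mD = Q \neq \emptyset$ (as $Q \subseteq \Disc_\mD$ by construction and $Q$ is non-empty by hypothesis), pick some $\tax \in Q$. Then $\tax \in U_\mD$, so there is a diagnosis $\md \in \mD$ with $\tax \in \md$; moreover, since $\tax \in Q \subseteq \mo \setminus U_{\dx{}}$, we have $\tax \notin U_{\dx{}}$, so $\md \notin \dx{}$. For this $\md$, $\mo \setminus \md$ omits $\tax$ while $Q$ contains it, so $\mo \setminus \md \not\supseteq Q$, i.e.\ $(\mo \setminus \md) \cup Q \supsetneq \mo \setminus \md$. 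Now I invoke the $\subseteq$-minimality of $\md$ as a diagnosis (exactly as in the proof of Prop.~\ref{prop:explicit-entailments_queries_have_empty_dz}, case (b)): adding any axiom of $\mo$ back to $\mo \setminus \md$ must reintroduce a violation of $\RQ$ or of $\Tn$, hence $\mo^*_\md \cup Q$ violates $\RQ$ or $\Tn$, so $\md \in \dnx{}(Q)$ and $\dnx{}(Q) \neq \emptyset$. Combining the three facts, $Q$ is a query.

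I expect the only subtle point — and the one I would be most careful about — is avoiding the apparent circularity of citing Prop.~\ref{prop:ee-query_q-partition_construction_by_means_of_set_comparison} or Prop.~\ref{prop:explicit-entailments_queries_have_empty_dz}, both of which are stated for objects already known to be queries. The clean fix is simply to re-derive the two one-directional implications I actually need ($\mo \setminus \md \supseteq Q \Rightarrow \md \in \dx{}(Q)$, and $\mo \setminus \md \not\supseteq Q \Rightarrow \md \in \dnx{}(Q)$) straight from the definitions of $\dx{}$, $\dnx{}$ in Prop.~\ref{prop:partition}, using only extensivity/monotonicity and the $\subseteq$-minimality of diagnoses in $\mD$ — none of which presuppose that $Q$ is a query. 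Everything else is routine set manipulation.
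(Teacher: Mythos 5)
Your proposal is correct and follows essentially the same route as the paper's proof: non-emptiness of $\dx{}(Q)$ via extensivity and monotonicity of $\models$ applied to $Q \subseteq \mo \setminus U_{\dx{}} \subseteq \mo\setminus\md_i$ for seed diagnoses, and non-emptiness of $\dnx{}(Q)$ via the $\subseteq$-minimality of leading diagnoses; the only cosmetic difference is that you argue the second part directly from $Q \neq \emptyset$ (picking $\tax \in Q \subseteq \Disc_\mD$ and a diagnosis containing it), whereas the paper phrases it as the contrapositive of ``$U_{\mathbf{S}} \subset U_\mD$ implies $\dnx{}(Q)\neq\emptyset$''. Your caution about avoiding circular reliance on Prop.~\ref{prop:ee-query_q-partition_construction_by_means_of_set_comparison} and Prop.~\ref{prop:explicit-entailments_queries_have_empty_dz} is well placed and matches the paper, which likewise works straight from Prop.~\ref{prop:partition}.
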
 
We define a canonical q-partition as a q-partition for which there is a canonical query with exactly this q-partition:
\begin{definition}[Canonical Q-Partition]\label{def:canonical_q-partition}
Let $\dpi$ be a DPI and $\mD \subseteq \minD_{\dpi}$ where $|\mD| \geq 2$. Let further $\Pt' = \langle\dx{},\dnx{}, \emptyset\rangle$ be a partition of $\mD$. Then we call $\Pt'$ a \emph{canonical q-partition (CQP)} iff $\Pt' = \Pt_\mD(Q_{\mathsf{can}}(\dx{}))$, i.e.\ $\langle\dx{},\dnx{}, \emptyset\rangle = \langle\dx{}(Q_{\mathsf{can}}(\dx{})),\dnx{}(Q_{\mathsf{can}}(\dx{})), \emptyset\rangle$. In other words, given the partition $\langle\dx{},\dnx{}, \emptyset\rangle$, the canonical query w.r.t.\ the seed $\dx{}$ must have exactly the q-partition $\langle\dx{},\dnx{}, \emptyset\rangle$.
\end{definition}
\begin{example}
Eq.~\eqref{eq:canQ+canQP_for_diags1,2,3} shows exactly all CQs and CQPs w.r.t.\ the set $\mD$ given in Ex.~\ref{ex:disc_ax}.\qed
\end{example}
\begin{remark}\label{rem:ad_CQP}
In general, the expression $\dx{}(Q_{\mathsf{can}}(\dx{}))$ is not necessarily equal to $\dx{}$. The $\dx{}$ within parentheses is the seed used to construct the CQ $Q_{\mathsf{can}}(\dx{})$ (cf.~Def.~\ref{def:canonical_query}), whereas the function $\dx{}(X)$, given a set of sentences $X$, maps to the set of all leading diagnoses $\md_i \in \mD$ for which $\mo_i^*$ entails $X$. As explained in Rem.~\ref{rem:multiple_seed_lead_to_same_CQ}, there might be different seeds that imply the same CQ. For instance, recalling the example given in Rem.~\ref{rem:multiple_seed_lead_to_same_CQ}, we have that $\dx{}(Q_{\mathsf{can}}(\dx{1})) = \dx{}(Q_{\mathsf{can}}(\dx{2})) = \{\md_1,\md_4,\md_5\} = \dx{2}$. We could thence say that a CQP is exactly a q-partition $\Pt$ whose $\dx{}(\Pt)$ set is \emph{stable} under the application of the functions $Q_{\mathsf{can}}()$ and $\dx{}()$, i.e.\ $\dx{}(Q_{\mathsf{can}}(\dx{}(\Pt))) = \dx{}(\Pt)$.\qed
\end{remark}
As a direct consequence of Def.~\ref{def:canonical_q-partition} and Prop.~\ref{prop:canonical_query_is_query} we obtain that each CQP is a q-partition:
\begin{corollary}\label{cor:canonical_q-partition_is_q-partition}
Each canonical q-partition is a q-partition.
\end{corollary}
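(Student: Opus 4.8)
The plan is to obtain the statement by simply chaining the definition of a canonical q-partition with Proposition~\ref{prop:canonical_query_is_query}, using transitivity. Let $\Pt'=\langle\dx{},\dnx{},\emptyset\rangle$ be an arbitrary canonical q-partition of $\mD$. First I would unfold Definition~\ref{def:canonical_q-partition}: this says precisely that $\Pt'=\Pt_\mD(Q_{\mathsf{can}}(\dx{}))$, where $Q_{\mathsf{can}}(\dx{})$ denotes the canonical query w.r.t.\ the seed $\dx{}$. Implicit in the applicability of Definition~\ref{def:canonical_q-partition} is that $Q_{\mathsf{can}}(\dx{})$ is \emph{defined}, i.e.\ $Q_{\mathsf{can}}(\dx{})=(\mo\setminus U_{\dx{}})\cap\Disc_\mD\neq\emptyset$ (cf.\ Definition~\ref{def:canonical_query}); otherwise no canonical query, and hence no canonical q-partition, would be associated with the partition under consideration.

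Next I would invoke Proposition~\ref{prop:canonical_query_is_query}, which states that every canonical query is a query in the sense of Definition~\ref{def:query_q-partition}. Applied to $Q_{\mathsf{can}}(\dx{})$, this gives $Q_{\mathsf{can}}(\dx{})\in\mQ_\mD$. Finally, by Definition~\ref{def:query_q-partition}, whenever $Q$ is a query the partition $\Pt_\mD(Q)$ it induces (which is moreover the unique such partition, cf.\ Proposition~\ref{prop:properties_of_q-partitions}.\ref{prop:properties_of_q-partitions:enum:q-partition_is_unique_for_query}) is by definition a q-partition; hence $\Pt_\mD(Q_{\mathsf{can}}(\dx{}))$ is a q-partition. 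Since $\Pt'=\Pt_\mD(Q_{\mathsf{can}}(\dx{}))$, we conclude that $\Pt'$ is a q-partition, as claimed.

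There is essentially no obstacle here: once Proposition~\ref{prop:canonical_query_is_query} is available, the corollary is a one-line consequence by transitivity. The only point worth a moment's care is the bookkeeping around the ``witness'' query — namely the observation that the very notion of a CQP already presupposes the existence of $Q_{\mathsf{can}}(\dx{})$, so that Proposition~\ref{prop:canonical_query_is_query} is indeed applicable and delivers the concrete query whose (unique) q-partition is exactly $\Pt'$.
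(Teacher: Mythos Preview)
Your proposal is correct and follows exactly the paper's approach: the paper states the corollary as a direct consequence of Definition~\ref{def:canonical_q-partition} and Proposition~\ref{prop:canonical_query_is_query}, which is precisely the chain you spell out.
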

Moreover, there is a one-to-one relationship between CQPs and CQs:
\begin{proposition}\label{prop:1-to-1_relation_between_CQs_and_CQPs}
Given a canonical q-partition, there is exactly one canonical query associated with it and vice versa. In particular, the unique canonical query associated with the canonical q-partition $\Pt$ is the set of sentences $Q_{\mathsf{can}}(\dx{}(\Pt))$.
\end{proposition}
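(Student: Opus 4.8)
The plan is to reduce the whole proposition to a single identity: for every canonical query $Q$ with q-partition $\Pt := \Pt_\mD(Q)$ one has $Q = Q_{\mathsf{can}}(\dx{}(\Pt))$, where $\dx{}(\Pt)$ denotes the first component of $\Pt$ (so $\dx{}(\Pt)=\dx{}(Q)$). Granting this identity, the ``in particular'' clause is immediate and both directions of the claimed bijection follow at once. For a fixed canonical q-partition $\Pt=\langle\dx{},\dnx{},\emptyset\rangle$, Def.~\ref{def:canonical_q-partition} already supplies \emph{one} associated canonical query, namely $Q_{\mathsf{can}}(\dx{})=Q_{\mathsf{can}}(\dx{}(\Pt))$; and if $Q$ is \emph{any} canonical query with q-partition $\Pt$, the identity forces $Q=Q_{\mathsf{can}}(\dx{}(\Pt))$, so there is exactly one. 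Conversely, a canonical query $Q$ is a query (Prop.~\ref{prop:canonical_query_is_query}), hence by Prop.~\ref{prop:properties_of_q-partitions}.\ref{prop:properties_of_q-partitions:enum:q-partition_is_unique_for_query} it has a single q-partition $\Pt$; the identity rewrites this as $\Pt=\Pt_\mD(Q)=\Pt_\mD(Q_{\mathsf{can}}(\dx{}(\Pt)))$, which is precisely the defining condition of a CQP (Def.~\ref{def:canonical_q-partition}), so $\Pt$ is a CQP and, being the unique q-partition of $Q$, the only CQP associated with $Q$.

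To prove the identity I would fix a canonical query $Q=Q_{\mathsf{can}}(\me)=(\mo\setminus U_{\me})\cap\Disc_\mD$ for some seed $\emptyset\subset\me\subset\mD$ and write $\dx{}:=\dx{}(Q)$. The first step is the auxiliary inclusion $\me\subseteq\dx{}$: for any $\md_j\in\me$ we have $\md_j\subseteq U_{\me}$, hence $Q\subseteq\mo\setminus U_{\me}\subseteq\mo\setminus\md_j$; since $Q$ is an explicit-entailments query (Prop.~\ref{prop:canonical_query_is_ee-query}) and a query (Prop.~\ref{prop:canonical_query_is_query}), Prop.~\ref{prop:ee-query_q-partition_construction_by_means_of_set_comparison} yields $\md_j\in\dx{}(Q)=\dx{}$. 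The second step is the double inclusion $Q=(\mo\setminus U_{\dx{}})\cap\Disc_\mD=Q_{\mathsf{can}}(\dx{})$. For ``$\subseteq$'': if $\tax\in Q$ then $\tax\in\Disc_\mD$, and by Prop.~\ref{prop:ee-query_q-partition_construction_by_means_of_set_comparison} every $\md_i\in\dx{}$ satisfies $\mo\setminus\md_i\supseteq Q\ni\tax$, so $\tax\notin\md_i$ for all such $i$, i.e.\ $\tax\notin U_{\dx{}}$. For ``$\supseteq$'': if $\tax\in(\mo\setminus U_{\dx{}})\cap\Disc_\mD$ then $\tax\notin U_{\dx{}}$ and, since $U_{\me}\subseteq U_{\dx{}}$ by the first step, $\tax\notin U_{\me}$, whence $\tax\in(\mo\setminus U_{\me})\cap\Disc_\mD=Q$. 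This gives $Q=Q_{\mathsf{can}}(\dx{}(Q))=Q_{\mathsf{can}}(\dx{}(\Pt))$, as required.

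The remaining book-keeping (membership in $\mo$, the fact that $\langle\dx{}(Q),\dnx{}(Q),\emptyset\rangle$ is genuinely a partition, etc.) is routine. The one genuinely load-bearing point is the ``$\supseteq$'' direction together with the auxiliary inclusion $\me\subseteq\dx{}(Q_{\mathsf{can}}(\me))$: these are exactly what prevent two syntactically different seeds from producing two different canonical queries that share the same $\dx{}$-component, and hence what pins down uniqueness. This is fully compatible with Rem.~\ref{rem:multiple_seed_lead_to_same_CQ} and Rem.~\ref{rem:ad_CQP} -- distinct seeds may well induce the \emph{same} canonical query, but then they necessarily agree on the $\dx{}$ of that query's q-partition -- so no contradiction arises, and nothing beyond Def.~\ref{def:canonical_query}, Def.~\ref{def:canonical_q-partition}, Prop.~\ref{prop:canonical_query_is_ee-query}, Prop.~\ref{prop:canonical_query_is_query}, Prop.~\ref{prop:ee-query_q-partition_construction_by_means_of_set_comparison} and Prop.~\ref{prop:properties_of_q-partitions}.\ref{prop:properties_of_q-partitions:enum:q-partition_is_unique_for_query} is needed.
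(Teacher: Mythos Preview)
Your proof is correct and takes a genuinely different route from the paper. The paper argues uniqueness of the CQ for a given CQP by contradiction: assuming two distinct CQs $Q_1\neq Q_2$ share the CQP $\Pt$, it infers that their seeds have distinct unions $U_{\dx{1}}\neq U_{\dx{2}}$, picks a diagnosis $\md_k$ witnessing the difference, and shows $\md_k\in\dx{}(Q_1)$ but $\md_k\in\dnx{}(Q_2)$, contradicting $\Pt_\mD(Q_1)=\Pt_\mD(Q_2)=\Pt$. The converse direction is dispatched in one line via Prop.~\ref{prop:properties_of_q-partitions}.\ref{prop:properties_of_q-partitions:enum:q-partition_is_unique_for_query}.

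You instead prove the structural identity $Q=Q_{\mathsf{can}}(\dx{}(Q))$ for every canonical query $Q$, via the auxiliary inclusion $\me\subseteq\dx{}(Q_{\mathsf{can}}(\me))$ and a clean double inclusion argument leaning on Prop.~\ref{prop:ee-query_q-partition_construction_by_means_of_set_comparison}. This is more direct and buys you more: the identity immediately yields both uniqueness directions \emph{and} the ``in particular'' clause, and it also makes explicit that the q-partition of \emph{any} CQ (regardless of which seed generated it) is a CQP---a point the paper's proof treats somewhat implicitly. Your argument is essentially the ``stability'' observation of Rem.~\ref{rem:ad_CQP} turned into a proof, and it foreshadows Lem.~\ref{lem:CQ_equal_to_U_D_setminus_U_D+} (the paper's later simplification $Q_{\mathsf{can}}(\dx{})=U_\mD\setminus U_{\dx{}}$). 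The paper's contradiction argument, by contrast, is slightly more self-contained in that it does not need to isolate the identity as a separate step, but it is less informative about \emph{why} uniqueness holds.
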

No CQP is a suboptimal q-partition in the sense of our discussion in Sec.~\ref{sec:defs_and_props} in that each CQ discriminates among \emph{all} leading diagnoses: 
\begin{proposition}\label{prop:canonical_q-partition_has_empty_dz}
Any canonical q-partition $\langle\dx{},\dnx{}, \dz{}\rangle$ satisfies $\dz{} = \emptyset$.
\end{proposition}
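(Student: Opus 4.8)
The plan is to reduce the claim to the already-established fact that explicit-entailments queries have an empty third partition cell (Prop.~\ref{prop:explicit-entailments_queries_have_empty_dz}). Let $\Pt = \langle\dx{},\dnx{},\dz{}\rangle$ be an arbitrary canonical q-partition. By Def.~\ref{def:canonical_q-partition}, $\Pt = \Pt_\mD(Q_{\mathsf{can}}(\dx{}))$ for the canonical query $Q := Q_{\mathsf{can}}(\dx{})$ w.r.t.\ the seed $\dx{}$; in particular $Q$ must be defined, i.e.\ $Q \neq \emptyset$ (Def.~\ref{def:canonical_query}). Hence the third component of $\Pt$ is, by construction of the partition function in Prop.~\ref{prop:partition}, exactly $\dz{}(Q)$.

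Next I would invoke the structural facts about $Q$. By Prop.~\ref{prop:canonical_query_is_query}, $Q$ is a genuine query w.r.t.\ $\mD$, so $Q\in\mQ_\mD$ and $\Pt_\mD(Q)$ is well defined. By Prop.~\ref{prop:canonical_query_is_ee-query}, $Q$ is moreover an explicit-entailments query, i.e.\ $Q \subseteq \mo$. These two properties put $Q$ precisely under the hypotheses of Prop.~\ref{prop:explicit-entailments_queries_have_empty_dz} (a query in $\mQ_\mD$ with $Q\subseteq\mo$), whose conclusion is $\dz{}(Q) = \emptyset$. Combining this with the identification of $\dz{}$ with $\dz{}(Q)$ from the previous paragraph yields $\dz{} = \emptyset$, as claimed.

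There is no substantive obstacle here: the statement is a short corollary obtained by chaining Def.~\ref{def:canonical_q-partition} $\to$ Prop.~\ref{prop:canonical_query_is_ee-query} $\to$ Prop.~\ref{prop:explicit-entailments_queries_have_empty_dz}. The only point worth spelling out is the bookkeeping that the ``$\dz{}$'' appearing in the statement of the proposition really is the $\dz{}(Q_{\mathsf{can}}(\dx{}))$ component produced by $\Pt_\mD(\cdot)$, which is immediate once one unfolds Def.~\ref{def:canonical_q-partition}; after that the result is purely a matter of quoting Prop.~\ref{prop:explicit-entailments_queries_have_empty_dz}.
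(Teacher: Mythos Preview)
Your proof is correct and follows essentially the same route as the paper: unfold Def.~\ref{def:canonical_q-partition} to get the canonical query $Q$, observe that $Q\neq\emptyset$ and $Q\subseteq\mo$, and then apply Prop.~\ref{prop:explicit-entailments_queries_have_empty_dz}. The only cosmetic difference is that the paper reads $\emptyset\subset Q\subseteq\mo$ directly from Def.~\ref{def:canonical_query} rather than citing Prop.~\ref{prop:canonical_query_is_ee-query} and Prop.~\ref{prop:canonical_query_is_query} separately.
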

\begin{proof}
Let $\Pt' = \langle\dx{},\dnx{}, \dz{}\rangle$ be a canonical q-partition. Then, by Def.~\ref{def:canonical_q-partition}, 
there is a canonical query $Q$ for which $\Pt'$ is equal to the q-partition $\Pt_\mD(Q) = \langle\dx{}(Q),\dnx{}(Q), \dz{}(Q)\rangle$ of $Q$. By Def.~\ref{def:canonical_query}, $\emptyset \subset Q \subseteq \mo$. Hence, by Prop.~\ref{prop:explicit-entailments_queries_have_empty_dz}, $\dz{}(Q) = \emptyset$ and thus $\dz{} = \emptyset$ must hold.
\end{proof}
Let us at this point illustrate the introduced notions by the following example:
\begin{table}[t]
	\scriptsize
	\centering
	\rowcolors[]{2}{gray!8}{gray!16} 
	\begin{tabular}{ c c c } 
		\rowcolor{gray!40}
		\toprule\addlinespace[0pt]
		Seed $\mathbf{S}$ & $\setof{i\,|\, \tax_i \in Q_{\mathsf{can}}(\mathbf{S})}$ & canonical q-partition \\ 
		\addlinespace[0pt]\midrule\addlinespace[0pt]
		$\setof{\md_5,\md_6}$ & $\setof{2,5,6} \cap \setof{1,2,3,4,7} = \setof{2}$ & $\tuple{\setof{\md_5,\md_6},\setof{\md_1},\emptyset}$ \\
		$\setof{\md_1,\md_6}$ & $\setof{1,5,6} \cap \setof{1,2,3,4,7} = \setof{1}$ & $\tuple{\setof{\md_1,\md_6},\setof{\md_5},\emptyset}$ \\
		$\setof{\md_1,\md_5}$ & $\setof{5,6} \cap \setof{1,2,3,4,7} = \emptyset$ & $\times$ \\
		$\setof{\md_1}$ & $\setof{1,4,7} \cap \setof{1,2,3,4,7} = \setof{1,4,7}$ & $\tuple{\setof{\md_1},\setof{\md_5,\md_6},\emptyset}$ \\
		$\setof{\md_5}$ & $\setof{2,3} \cap \setof{1,2,3,4,7} = \setof{2,3}$ & $\tuple{\setof{\md_5},\setof{\md_1,\md_6},\emptyset}$ \\
		$\setof{\md_6}$ & $\setof{1,2} \cap \setof{1,2,3,4,7} = \setof{1,2}$ & $\tuple{\setof{\md_6},\setof{\md_1,\md_5},\emptyset}$ \\
		\addlinespace[0pt]\bottomrule 
	\end{tabular}
	\caption{\small All CQs and associated CQPs w.r.t.\ $\mD = \setof{\md_1,\md_5,\md_6}$ (cf.\ Tab.~\ref{tab:min_diags+conflicts_example_DPI_0}) and the example DPI $\exdpi$ given by Tab.~\ref{tab:example_dpi_0}.}
	\label{tab:Qcan+canQPart_for_example_dpi_0}
\end{table}
\begin{example}\label{ex:canonical_queries_q-partitions}
Consider the leading diagnoses 
\begin{align} \label{eq:ex_canonical_queries_q-partitions:leading_diags}
\mD = \setof{\md_1,\md_5,\md_6} = \setof{\setof{\tax_2,\tax_3},\setof{\tax_1,\tax_4,\tax_7},\setof{\tax_3,\tax_4,\tax_7}} \quad \mbox{(cf.\ Tab.~\ref{tab:min_diags+conflicts_example_DPI_0})}
\end{align}
w.r.t.\ our example DPI $\exdpi$ (Tab.~\ref{tab:example_dpi_0}).
The potential solution KBs given this set of leading diagnoses $\mD$ are $\setof{\mo_1^{*},\mo_2^{*},\mo_3^{*}}$ (cf.\ Eq.~\eqref{eq:sol_ont_candidate}) where 
\begin{align*}
\mo_1^{*} &= \setof{\tax_1,\tax_4,\tax_5,\tax_6,\tax_7,\tax_8,\tax_9,\tp_1} \\
\mo_2^{*} &= \setof{\tax_2,\tax_3,\tax_5,\tax_6,\tax_8,\tax_9,\tp_1} \\
\mo_3^{*} &= \setof{\tax_1,\tax_2,\tax_5,\tax_6,\tax_8,\tax_9,\tp_1} 
\end{align*}
The discrimination sentences $\Disc_\mD$ are $U_\mD \setminus I_\mD = \setof{\tax_1,\tax_2,\tax_3,\tax_4,\tax_7}$. Tab.~\ref{tab:Qcan+canQPart_for_example_dpi_0} lists all possible seeds $\mathbf{S}$ (i.e.\ proper non-empty subsets of $\mD$) and, if existent, the respective (unique) CQ $Q_{\mathsf{can}}(\mathbf{S})$ as well as the associated (unique) CQP. Note that the CQ for the seed $\mathbf{S} = \setof{\md_1,\md_5}$ is undefined which is why there is no CQP with a \mbox{$\dx{}$-set} $\setof{\md_1,\md_5}$. 
This holds since $\mo \setminus U_{\mathbf{S}} = \setof{\tax_1,\dots,\tax_7}\setminus (\setof{\tax_2,\tax_3} \cup \setof{\tax_1,\tax_4,\tax_7}) = \setof{\tax_5,\tax_6}$ has an empty intersection with $\Disc_\mD$. So, by Def.~\ref{def:canonical_query}, $Q_{\mathsf{can}}(\mathbf{S}) = \emptyset$.

Additionally, we point out that there is no query $Q$ (and hence no q-partition) -- and therefore not just no \emph{canonical} query -- for which $\dx{}(Q)$ corresponds to $\setof{\md_1,\md_5}$. Because for such $Q$ to exist, $\md_6\in\dnx{}(Q)$ must hold. Under this assumption, there must be a set of common entailments $Ents$ of $\mo_1^{*}$ and $\mo_5^{*}$ (cf.\ Prop.~\ref{prop:properties_of_q-partitions}.\ref{prop:properties_of_q-partitions:enum:query_is_set_of_common_ent}) which, along with $\mo_6^{*}$, violates $\RQ$ or $\Tn$ (cf.\ Prop.~\ref{prop:partition}). 
As all sentences in $Ents$ are entailed by $\mo \setminus (\md_1 \cup \md_5) \cup \mb \cup U_\Tp$ as well, 
and due to the observation that $\mo \setminus (\md_1 \cup \md_5) \subset \mo \setminus \md_6$, by monotonicity of Propositional Logic, every common entailment of $\mo_1^{*}$ and $\mo_5^{*}$ is also an entailment of $\mo_6^{*}$. Due to the definition of a solution KB (cf.\ Def.~\ref{def:solution_KB}), which implies that $\mo_6^*$ neither violates $\RQ$ nor $\Tn$, this means that there cannot be a query $Q$ satisfying $\dx{}(Q)=\setof{\md_1,\md_5}$. So, obviously, in this example every q-partition (with empty $\dz{}$) is also a CQP.\qed
\end{example}
\paragraph{Advantages of Using Canonical Queries and Q-Partitions.}
\label{etc:advantages_of_CQs+CQPs}The restriction to the consideration of only CQs during phase P1 has some nice implications: 
\begin{enumerate}
\item \label{enum:advantages_of_CQs:no_reasoner} CQs and CQPs can be generated by cheap set operations. \emph{No inference engine calls} are required.   (Prop.~\ref{prop:ee-query_q-partition_construction_by_means_of_set_comparison}) \\
\textbf{Remark:} The main causes why this is possible are the $\subseteq$-minimality of leading diagnoses and the monotonicity of the logic underlying the DPI. Intuitively, the concepts of CQs and CQPs just leverage the already available information (obtained by logical reasoning during diagnosis computation) inherent in the leading diagnoses in a clever way to avoid any further reasoning during the q-partition search.
\item \label{enum:advantages_of_CQs:no_unnecessary_candidates} Each CQ is a query in $\mQ_\mD$ for sure, no verification of its q-partition (as per Def.~\ref{def:query_q-partition}) is required, thence \emph{no unnecessary candidates} (which turn out to be no queries) \emph{are generated}. (Prop.~\ref{prop:canonical_query_is_query})
\item \label{enum:advantages_of_CQs:full_discrimination_power} Automatic computation of \emph{only queries $Q$ with full discrimination power regarding $\mD$} 
(i.e.\ of those $Q$ with empty $\dz{}(Q)$). (Prop.~\ref{prop:canonical_q-partition_has_empty_dz})
\item \label{enum:advantages_of_CQs:no_duplicates} \emph{No duplicate queries or q-partitions are generated} as there is a one-to-one relationship between CQs and CQPs. (Prop.~\ref{prop:1-to-1_relation_between_CQs_and_CQPs})
%
\item \label{enum:advantages_of_CQs:independence_of_entailments_computed_by_reasoner} The explored search space for q-partitions is \emph{not dependent on} the particular (entailments output by an) \emph{inference engine}, as CQs are explicit-entailments queries. (Prop.~\ref{prop:canonical_query_is_ee-query})
\end{enumerate}

We emphasize that all these properties do \textbf{not} hold for normal (i.e.\ non-canonical) queries and q-partitions. The overwhelming impact on the query computation time of this strategy of first narrowing down the search space to only such computationally ``benign'' queries to find an optimal q-partition, and the later reintroduction of the full query search space when searching for an optimal query for this fixed optimal q-partition will be demonstrated in Sec.~\ref{sec:eval}.

\paragraph{The Q-Partition Search Procedure.}
Now, having at hand the notion of a CQP, we describe the sound and complete (heuristic) CQP search procedure performed in P1.

A (heuristic) search problem \citep{russellnorvig2010} is defined by the \emph{initial state}, a \emph{successor function} enumerating all direct neighbor states of a state, the \emph{step costs} from a state to a successor state, the \emph{goal test} to determine if a given state is a goal state or not, and (possibly) some \emph{heuristic function} to estimate the remaining effort from each state towards a goal state. 

We define the initial state, i.e.\ the partition $\langle\dx{},\dnx{},\dz{}\rangle$ to start with, as $\langle\emptyset,\mD,\emptyset\rangle$ where $\mD$ is the given set of leading diagnoses.
The idea is to transfer diagnoses step-by-step from $\dnx{}$ to $\dx{}$ to construct all CQPs \emph{systematically}. The step costs are irrelevant in our application, as only the found q-partition \emph{as such} counts. In other words, the required solution is a q-partition and not the path to reach it from the initial state.
Heuristics derived from the given QSM $m$ can be (optionally) integrated into the search to enable faster convergence to a goal state. A q-partition $\Pt$ is 
a goal state if it optimizes $m$ up to a given threshold $t_m$ (cf.\ \citep{dekleer1987}, see Alg.~\ref{algo:query_comp}). To make this precise, let us call a query $Q$ \emph{optimal w.r.t.\ $m$ and $t_m$} iff $|m(Q) - m_{opt}| \leq t_m$ where $m_{opt}$ is the optimal theoretically achievable value of $m$. Then $\Pt$ is a \emph{goal}\label{etc:goal_state} iff an arbitrary query $Q$ (e.g.\ the CQ) for $\Pt$ is optimal w.r.t.\ $m$ and $t_m$. Recall that each query $Q$ for $\Pt$ yields the same $m(Q)$ as $m$ is only dependent on the q-partition of a query (cf.\ the discussion on page~\pageref{etc:decoupling_of_optimization_steps}).\footnote{Therefore, we will sometimes write $m(\Pt)$ to denote $m(Q)$ for arbitrary $Q$ for which $\Pt_\mD(Q) = \Pt$.}

The search strategy adopted by the CQP search in phase P1 can be characterized as depth-first, local best-first 
backtracking strategy. We now explicate informally how the search tree is evolved by means of this strategy. 
Starting from the initial partition $\tuple{\emptyset,\mD,\emptyset}$ 
(or:\ root node), the search will proceed \emph{downwards} until (a)~a goal q-partition has been found, (b)~all successors of the currently analyzed q-partition (or:\ node) have been pruned (based on the QSM $m$) or (c)~there are no successors of the currently analyzed q-partition (or:\ node). This behavior is implied by the \emph{depth-first} strategy. 

At each current q-partition (or:\ node), the focus moves on to the best \emph{successor} q-partition (or: \emph{child} node), possibly according to some given heuristic function (based on the QSM $m$). This behavior is implied by the \emph{local best-first} strategy. 

The search procedure is ready to backtrack in case all successors (or:\ child nodes) of a q-partition (or:\ node) have been explored or pruned and no goal q-partition has been found yet. In this case, the next-best unexplored sibling of the node will be analyzed next according to the used local best-first depth-first strategy. This behavior is implied by the \emph{backtracking} strategy. 

We emphasize that this local best-first depth-first backtracking strategy involves a \emph{linear space complexity}, as opposed to a (global) best-first strategy.

What we have not formally specified yet is the used successor function. We dedicate the rest of the q-partition search procedure description to the derivation and definition of the successor function. The soundness and completeness of this function with regard to the computation of all and only CQPs for $\mD$ will guarantee the soundness and completeness w.r.t.\ CQPs of the overall search procedure.  

In order to characterize a suitable successor function, we define a direct neighbor of a q-partition as follows:
\begin{definition}\label{def:minimal_transformation}
Let $\dpi$ be a DPI, $\mD \subseteq \minD_{\dpi}$ and
$\Pt_i := \langle \dx{i},\dnx{i},\emptyset\rangle$,
$\Pt_j := \langle \dx{j},\dnx{j},\emptyset\rangle$ be partitions of $\mD$. 
Then, $\Pt_i \mapsto \Pt_j$ is a \emph{minimal $\dx{}$-transformation} from $\Pt_i$ to $\Pt_j$ iff $\Pt_j$ is a CQP, $\dx{i} \subset \dx{j}$ and there is no CQP $\langle \dx{k},\dnx{k},\emptyset\rangle$ with $\dx{i} \subset \dx{k} \subset \dx{j}$. 
A CQP $\Pt'$ is called a \emph{successor of a partition $\Pt$} iff $\Pt'$ results from $\Pt$ by a minimal $\dx{}$-transformation.
\end{definition}
Intuitively, a successor $\Pt'$ of a partition $\Pt$ results from the transfer of a $\subseteq$-minimal set of diagnoses from $\dnx{}(\Pt)$ (comprising all leading diagnoses $\mD$ if $\Pt$ is the initial state) to $\dx{}(\Pt)$ (empty for initial state $\Pt$) such that the resulting partition $\Pt'$ is a CQP.

The successor function $S_{\mathsf{all}}$ then maps a given partition $\Pt$ of $\mD$ to the set of all its possible successors, i.e.\ to the set including all CQPs that result from $\Pt$ by a minimal $\dx{}$-transformation.
The reliance upon a minimal $\dx{}$-transformation guarantees that the search is complete w.r.t.\ CQPs because one cannot skip over any CQPs when transforming a state into a direct successor state. As the initial state is not a q-partition (cf.\ Def.~\ref{def:query_q-partition}), the definition of the successor function $S_{\mathsf{all}}$ involves specifying 
\begin{itemize}[noitemsep,topsep=5pt]
	\item a function $S_{\mathsf{init}}$ that maps \emph{the initial state} to the set of all CQPs that can be reached by it by a single minimal $\dx{}$-transformation, and
	\item a function $S_{\mathsf{next}}$ that maps \emph{any CQP} to the set of all CQPs that can be reached by it by a single minimal $\dx{}$-transformation.
\end{itemize}
$S_{\mathsf{init}}$ can be easily specified by means of the following proposition which is a consequence of Prop.~\ref{prop:properties_of_q-partitions}.\ref{prop:properties_of_q-partitions:enum:D+=d_i_is_q-partition_and_lower_bound_of_queries}.
\begin{proposition}\label{prop:--di,MD-di,0--_is_canonical_q-partition_for_all_di_in_mD}
Let $\dpi$ be a DPI and $\mD \subseteq \minD_{\dpi}$ where $|\mD|\geq 2$. Then, $\langle\{\md_i\},\mD \setminus \{\md_i\},\emptyset\rangle$ is a canonical q-partition for all $\md_i \in \mD$.
\end{proposition}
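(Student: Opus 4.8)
The plan is to reduce the claim directly to Proposition~\ref{prop:properties_of_q-partitions}.\ref{prop:properties_of_q-partitions:enum:D+=d_i_is_q-partition_and_lower_bound_of_queries} by showing that the canonical query generated by the seed $\dx{} := \{\md_i\}$ is exactly the set $U_\mD \setminus \md_i$ treated there. First I would unfold Definition~\ref{def:canonical_query} for $\dx{} := \{\md_i\}$: since $U_{\{\md_i\}} = \md_i$, we have $Q_{\mathsf{can}}(\{\md_i\}) = (\mo \setminus \md_i) \cap \Disc_\mD = (\mo \setminus \md_i) \cap (U_\mD \setminus I_\mD)$ by Definition~\ref{def:discax}.

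Next I would simplify this set by elementary set algebra. Because every diagnosis is a subset of $\mo$, we have $U_\mD \subseteq \mo$, hence $(\mo \setminus \md_i) \cap U_\mD = U_\mD \setminus \md_i$; and because $\md_i \in \mD$ implies $I_\mD \subseteq \md_i$, subtracting $I_\mD$ from $U_\mD \setminus \md_i$ removes nothing. Putting both observations together, $Q_{\mathsf{can}}(\{\md_i\}) = (U_\mD \setminus \md_i) \setminus I_\mD = U_\mD \setminus (\md_i \cup I_\mD) = U_\mD \setminus \md_i$.

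It remains to check non-emptiness of this set (so that $Q_{\mathsf{can}}(\{\md_i\})$ is indeed a canonical query in the sense of Definition~\ref{def:canonical_query}) and to read off its q-partition. Non-emptiness follows from $|\mD| \geq 2$: pick $\md_j \in \mD$ with $\md_j \neq \md_i$; since $\mD \subseteq \minD_{\dpi}$ consists of $\subseteq$-minimal diagnoses, $\md_j \not\subseteq \md_i$, so $\emptyset \neq \md_j \setminus \md_i \subseteq U_\mD \setminus \md_i = Q_{\mathsf{can}}(\{\md_i\})$. Now Proposition~\ref{prop:properties_of_q-partitions}.\ref{prop:properties_of_q-partitions:enum:D+=d_i_is_q-partition_and_lower_bound_of_queries} tells us that $U_\mD \setminus \md_i$ is a query w.r.t.\ $\mD$ whose associated q-partition is $\langle\{\md_i\},\mD \setminus \{\md_i\},\emptyset\rangle$; that is, $\Pt_\mD(Q_{\mathsf{can}}(\{\md_i\})) = \langle\{\md_i\},\mD \setminus \{\md_i\},\emptyset\rangle$. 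This is precisely the condition demanded by Definition~\ref{def:canonical_q-partition} for $\langle\{\md_i\},\mD \setminus \{\md_i\},\emptyset\rangle$ to be a canonical q-partition, which establishes the claim.

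The argument is essentially routine; the only place requiring a little care is the set-algebraic simplification, which hinges on the two containments $U_\mD \subseteq \mo$ and $I_\mD \subseteq \md_i$. I expect the (minor) obstacle to be making sure these containments—together with the $\subseteq$-minimality of the leading diagnoses, which is also what rules out $Q_{\mathsf{can}}(\{\md_i\}) = \emptyset$—are invoked explicitly, so that the reduction to Proposition~\ref{prop:properties_of_q-partitions}.\ref{prop:properties_of_q-partitions:enum:D+=d_i_is_q-partition_and_lower_bound_of_queries} is airtight.
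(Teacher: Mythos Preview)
Your proposal is correct and follows essentially the same route as the paper's proof: both compute $Q_{\mathsf{can}}(\{\md_i\}) = (\mo \setminus \md_i) \cap (U_\mD \setminus I_\mD)$, simplify it to $U_\mD \setminus \md_i$ via the two containments $U_\mD \subseteq \mo$ and $I_\mD \subseteq \md_i$, and then invoke Prop.~\ref{prop:properties_of_q-partitions}.\ref{prop:properties_of_q-partitions:enum:D+=d_i_is_q-partition_and_lower_bound_of_queries} to read off the q-partition $\langle\{\md_i\},\mD\setminus\{\md_i\},\emptyset\rangle$. The only minor difference is that you explicitly argue non-emptiness of $U_\mD \setminus \md_i$ from $\subseteq$-minimality, whereas the paper gets this implicitly from Prop.~\ref{prop:properties_of_q-partitions}.\ref{prop:properties_of_q-partitions:enum:D+=d_i_is_q-partition_and_lower_bound_of_queries} (which already asserts $U_\mD \setminus \md_i$ is a query, hence non-empty).
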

Since only one diagnosis is transferred from the initial $\dnx{}$ set $\mD$ to the $\dx{}$-set of the partition to obtain any CQP as per Prop.~\ref{prop:--di,MD-di,0--_is_canonical_q-partition_for_all_di_in_mD} from the initial state, it is clear that all these CQPs indeed result from the application of a minimal $\dx{}$-transformation from the initial state (\emph{soundness}). As for all other CQPs the $\dx{}$ set differs by more than one diagnosis from the initial $\dx{}$ set $\emptyset$ (and thus includes a proper superset of $\setof{\md}$ for some $\md \in \mD$), 
it is obvious that all other CQPs do not result from the initial state by some minimal $\dx{}$-transformation (\emph{completeness}). Hence:
\begin{proposition}\label{prop:S_init_sound+complete}
	Given the initial state $\Pt_0 := \tuple{\emptyset,\mD,\emptyset}$, the function 
	\[S_{\mathsf{init}}: \tuple{\emptyset,\mD,\emptyset} \mapsto \setof{\tuple{\setof{\md},\mD\setminus\setof{\md},\emptyset}\,|\,\md\in\mD}\] 
	is sound and complete, i.e.\ it produces all and only (canonical) q-partitions resulting from $\Pt_0$ by minimal $\dx{}$-transformations.
\end{proposition}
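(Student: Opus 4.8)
The plan is to establish the two halves of the claim separately---\emph{soundness} (every partition output by $S_{\mathsf{init}}$ is reachable from $\Pt_0$ by a minimal $\dx{}$-transformation) and \emph{completeness} (conversely)---both by direct appeal to Definition~\ref{def:minimal_transformation} together with Proposition~\ref{prop:--di,MD-di,0--_is_canonical_q-partition_for_all_di_in_mD}. No reasoning about the underlying logic or about entailments enters the argument; the whole thing is set-theoretic bookkeeping, made possible by the fact that Proposition~\ref{prop:--di,MD-di,0--_is_canonical_q-partition_for_all_di_in_mD} already supplies, for each $\md\in\mD$, a CQP of the shape $\langle\{\md\},\mD\setminus\{\md\},\emptyset\rangle$.

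For soundness, fix an arbitrary $\md\in\mD$ and verify the three conditions of Definition~\ref{def:minimal_transformation} for the transformation $\Pt_0\mapsto\langle\{\md\},\mD\setminus\{\md\},\emptyset\rangle$: (i)~$\langle\{\md\},\mD\setminus\{\md\},\emptyset\rangle$ is a CQP, which is precisely Proposition~\ref{prop:--di,MD-di,0--_is_canonical_q-partition_for_all_di_in_mD}; (ii)~$\dx{}(\Pt_0)=\emptyset\subset\{\md\}$, trivially; and (iii)~there is no CQP $\langle\dx{k},\dnx{k},\emptyset\rangle$ with $\emptyset\subset\dx{k}\subset\{\md\}$---indeed no set $\dx{k}$ at all lies strictly between $\emptyset$ and the singleton $\{\md\}$, so this clause is vacuous. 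Hence every partition in $S_{\mathsf{init}}(\Pt_0)$ is a successor of $\Pt_0$, and (by Corollary~\ref{cor:canonical_q-partition_is_q-partition}) in particular a q-partition.

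For completeness, let $\Pt_j=\langle\dx{j},\dnx{j},\emptyset\rangle$ be an arbitrary CQP obtained from $\Pt_0$ by a minimal $\dx{}$-transformation; I would show $|\dx{j}|=1$. By Definition~\ref{def:minimal_transformation} we have $\emptyset=\dx{}(\Pt_0)\subset\dx{j}$, so $\dx{j}\neq\emptyset$. If $|\dx{j}|\ge 2$, pick any $\md\in\dx{j}$; then Proposition~\ref{prop:--di,MD-di,0--_is_canonical_q-partition_for_all_di_in_mD} makes $\langle\{\md\},\mD\setminus\{\md\},\emptyset\rangle$ a CQP with $\emptyset\subset\{\md\}\subsetneq\dx{j}$, contradicting the ``no intermediate CQP'' clause of the minimal $\dx{}$-transformation. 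Thus $\dx{j}=\{\md\}$ for some $\md\in\mD$, and since $\Pt_j$ is a partition of $\mD$ with empty third component, $\dnx{j}=\mD\setminus\{\md\}$; hence $\Pt_j=\langle\{\md\},\mD\setminus\{\md\},\emptyset\rangle\in S_{\mathsf{init}}(\Pt_0)$.

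The single point that requires a moment's care---and is in fact the crux of both directions---is the observation that the ``no CQP strictly in between'' requirement of Definition~\ref{def:minimal_transformation} holds automatically whenever the target's $\dx{}$-set is a singleton, while, conversely, this same requirement together with the existence of a singleton-$\dx{}$ CQP for \emph{every} diagnosis (Proposition~\ref{prop:--di,MD-di,0--_is_canonical_q-partition_for_all_di_in_mD}) forces the target of any minimal $\dx{}$-transformation out of $\Pt_0$ to have a singleton $\dx{}$-set. I do not anticipate any other obstacle.
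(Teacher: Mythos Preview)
Your proof is correct and follows essentially the same approach as the paper. The paper's argument (given informally in the paragraph immediately preceding the proposition) appeals to the same two observations: singleton $\dx{}$-sets admit no strictly intermediate set (soundness), and any CQP with $|\dx{}|\ge 2$ has a singleton-$\dx{}$ CQP strictly below it by Proposition~\ref{prop:--di,MD-di,0--_is_canonical_q-partition_for_all_di_in_mD} (completeness); your write-up simply makes these steps explicit against Definition~\ref{def:minimal_transformation}.
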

Note that in fact all q-partitions with empty $\dz{}$, not only all CQPs, that a reachable from $\Pt_0$ by a minimal $\dx{}$-transformation, are generated by $S_{\mathsf{init}}$. The reason for this is that there are no other possibilities to build (q-)partitions with a singleton $\dx{}$ set and an empty $\dz{}$ set.

In order to define $S_{\mathsf{next}}$, we utilize Prop.~\ref{prop:suff+nec_criteria_when_partition_is_q-partition} which 
provides sufficient and necessary criteria when a partition of $\mD$ is a CQP.
\begin{proposition}\label{prop:suff+nec_criteria_when_partition_is_q-partition}
Let $\dpi := \tuple{\mo,\mb,\Tp,\Tn}_\RQ$ be a DPI, $\mD \subseteq \minD_{\dpi}$ and $\Pt = \langle \dx{},\dnx{},\emptyset\rangle$ be a partition of $\mD$ with $\dx{}\neq \emptyset$ and $\dnx{}\neq \emptyset$. Then, $\Pt$ is a canonical q-partition iff 
\begin{enumerate}[noitemsep, topsep=5pt]
	\item $U_{\dx{}} \subset U_{\mD}$ and
	\item there is no $\md_j \in \dnx{}$ such that $\md_j \subseteq U_{\dx{}}$.
\end{enumerate}
\end{proposition}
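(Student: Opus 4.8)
The plan is to unfold Def.~\ref{def:canonical_q-partition} and reduce the claimed biconditional to a short set-theoretic computation. Put $Q := Q_{\mathsf{can}}(\dx{})$. By Def.~\ref{def:canonical_q-partition}, $\Pt = \langle\dx{},\dnx{},\emptyset\rangle$ is a canonical q-partition iff $Q$ is defined (i.e.\ $Q\neq\emptyset$, cf.\ Def.~\ref{def:canonical_query}) and $\Pt_\mD(Q) = \Pt$. The first step I would carry out is to simplify $Q$: since $\dx{}\neq\emptyset$, picking any $\md\in\dx{}$ gives $I_\mD \subseteq \md \subseteq U_{\dx{}}$, hence $I_\mD\subseteq U_{\dx{}}$; combining this with $U_\mD\subseteq\mo$ collapses the definition of the canonical query to
\[
Q \;=\; (\mo\setminus U_{\dx{}})\cap(U_\mD\setminus I_\mD) \;=\; U_\mD\setminus U_{\dx{}}.
\]
This identity is the workhorse of the whole argument.

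Next I would dispose of condition~1. Since $\dx{}\subseteq\mD$ we always have $U_{\dx{}}\subseteq U_\mD$, so the displayed identity shows $Q\neq\emptyset$ iff $U_{\dx{}}\subsetneq U_\mD$, i.e.\ iff condition~1 holds; thus condition~1 is exactly the assertion ``$Q$ is a well-defined canonical query''. Assuming condition~1, $Q$ is a canonical query, hence a query in $\mQ_\mD$ (Prop.~\ref{prop:canonical_query_is_query}) with $Q\subseteq\mo$ (Prop.~\ref{prop:canonical_query_is_ee-query}); therefore $\dz{}(Q)=\emptyset$ by Prop.~\ref{prop:explicit-entailments_queries_have_empty_dz}. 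As $\langle\dx{}(Q),\dnx{}(Q),\dz{}(Q)\rangle$ and $\langle\dx{},\dnx{},\emptyset\rangle$ are both partitions of $\mD$ (Prop.~\ref{prop:partition}), their $\dnx{}$-components are determined by the respective $\dx{}$-components, so $\Pt_\mD(Q)=\Pt$ reduces to the single equation $\dx{}(Q)=\dx{}$.

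Finally I would translate $\dx{}(Q)=\dx{}$ into condition~2. By Prop.~\ref{prop:ee-query_q-partition_construction_by_means_of_set_comparison} together with $Q\subseteq\mo$, for $\md\in\mD$ we have $\md\in\dx{}(Q)$ iff $\mo\setminus\md\supseteq Q$ iff $Q\cap\md=\emptyset$; and since $\md\subseteq U_\mD$, the identity above yields $Q\cap\md=\md\setminus U_{\dx{}}$, so $\md\in\dx{}(Q)$ iff $\md\subseteq U_{\dx{}}$. Hence $\dx{}(Q)=\setof{\md\in\mD\mid\md\subseteq U_{\dx{}}}$, which always contains $\dx{}$; therefore $\dx{}(Q)=\dx{}$ holds iff no $\md_j\in\dnx{}=\mD\setminus\dx{}$ satisfies $\md_j\subseteq U_{\dx{}}$, i.e.\ iff condition~2 holds. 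Chaining the equivalences gives: $\Pt$ is a canonical q-partition iff $Q$ is defined and $\dx{}(Q)=\dx{}$, iff condition~1 and condition~2 both hold, as required.

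I do not anticipate a genuine obstacle; after the identity $Q_{\mathsf{can}}(\dx{})=U_\mD\setminus U_{\dx{}}$ is isolated, everything is routine set bookkeeping. The two points needing care are (i)~that \emph{definedness} of $Q_{\mathsf{can}}(\dx{})$ is part of the notion of a canonical q-partition and is captured precisely by condition~1, and (ii)~that collapsing ``$\Pt_\mD(Q)=\Pt$'' to ``$\dx{}(Q)=\dx{}$'' really requires the earlier results guaranteeing that $Q$ is a bona fide query with empty $\dz{}$-component, so that the $\dnx{}$- and $\dz{}$-components of $\Pt_\mD(Q)$ are forced.
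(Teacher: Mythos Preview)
Your proof is correct and somewhat more streamlined than the paper's. The paper argues the two directions separately: for ``$\Rightarrow$'' it assumes $\Pt$ is a canonical q-partition and derives a contradiction from $\lnot(1)$ or $\lnot(2)$ by showing directly (via extensiveness of entailment) that the canonical query would then place some forbidden diagnosis in $\dx{}(Q_{\mathsf{can}}(\dx{}))$; for ``$\Leftarrow$'' it verifies by hand that $Q_{\mathsf{can}}(\dx{})$ is nonempty, is a query, and has exactly $\Pt$ as its q-partition, invoking $\subseteq$-minimality of diagnoses to force $\dnx{}\subseteq\dnx{}(Q_{\mathsf{can}}(\dx{}))$. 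Your route instead isolates the identity $Q_{\mathsf{can}}(\dx{}) = U_\mD\setminus U_{\dx{}}$ up front (this is Lem.~\ref{lem:CQ_equal_to_U_D_setminus_U_D+} in the paper, proved later for other purposes) and then leans on Prop.~\ref{prop:ee-query_q-partition_construction_by_means_of_set_comparison} to turn the whole question into the set-theoretic characterization $\dx{}(Q)=\{\md\in\mD\mid\md\subseteq U_{\dx{}}\}$, from which both conditions drop out as a single chain of equivalences. What your approach buys is modularity and brevity: the semantic work (minimality of diagnoses, extensiveness) has already been packaged into Prop.~\ref{prop:canonical_query_is_query}, Prop.~\ref{prop:explicit-entailments_queries_have_empty_dz}, and Prop.~\ref{prop:ee-query_q-partition_construction_by_means_of_set_comparison}, so you never touch $\mo_i^*$ or entailment directly. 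The paper's version is more self-contained at this point in the exposition but re-derives inline what these earlier propositions already capture.
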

The following example uses Prop.~\ref{prop:suff+nec_criteria_when_partition_is_q-partition} to check the CQP property for two candidate partitions:
\begin{example}\label{ex:Pt_is_q-partition_iff}
Let 
(by referring to $\tax_i$ by $i$ for clarity)
\begin{align}
\begin{split} \label{eq:ex:Pt_is_q-partition_iff:Pt_1}
\Pt_1 :=& \tuple{\setof{\md_1,\md_2,\md_3},\setof{\md_4,\md_5,\md_6},\emptyset} \\
=& \tuple{\setof{\setof{2,3},\setof{2,5},\setof{2,6}},\setof{\setof{2,7},\setof{1,4,7},\setof{3,4,7}},\emptyset} 
\end{split} \\
\begin{split} \label{eq:ex:Pt_is_q-partition_iff:Pt_2}
\Pt_2 :=& \tuple{\setof{\md_1,\md_2,\md_5},\setof{\md_3,\md_4,\md_6},\emptyset} \\
=& \tuple{\setof{\setof{2,3},\setof{2,5},\setof{1,4,7}},\setof{\setof{2,6},\setof{2,7},\setof{3,4,7}},\emptyset}  
\end{split}
\end{align}
be two partitions of the leading diagnoses $\mD = \minD_{\exdpi}$ for our example DPI $\exdpi$ in Tab.~\ref{tab:example_dpi_0}.
Then $U_{\dx{}(\Pt_1)} = \setof{2,3,5,6}$, $U_{\dx{}(\Pt_2)} = \setof{1,2,3,4,5,7}$ and $U_\mD = \setof{1,\dots,7}$. Since $U_{\dx{}(\Pt_1)} \subset U_\mD$ as well as $U_{\dx{}(\Pt_2)} \subset U_\mD$, the first condition of Prop.~\ref{prop:suff+nec_criteria_when_partition_is_q-partition} is satisfied for both partitions of $\mD$. As to the second condition, given that $7 \in \md_4,\md_5,\md_6$, but $7 \not\in U_{\dx{}(\Pt_1)}$, it holds that $\md_j \not\subseteq U_{\dx{}(\Pt_1)}$ for all $j \in \setof{4,5,6}$. Therefore, $\Pt_1$ is a CQP. $\Pt_2$, on the other hand, is not a CQP because, e.g., $\md_4 = \setof{2,7} \subset \setof{1,2,3,4,5,7} = U_{\dx{}(\Pt_2)}$ (second condition of Prop.~\ref{prop:suff+nec_criteria_when_partition_is_q-partition} violated).
We point out that one can verify that there is in fact no query with q-partition $\Pt_2$. That is, the partition $\Pt_2$ of $\mD$ is no CQP \emph{and} no q-partition.
\qed 
\end{example}
Let in the following for a DPI $\langle\mo,\mb,\Tp,\Tn\rangle_\RQ$ and a partition $\Pt_k = \langle \dx{k}, \dnx{k}, \emptyset\rangle$ of $\mD$ and all $\md_i \in \mD \subseteq \minD_{\langle\mo,\mb,\Tp,\Tn\rangle_\RQ}$ 
\begin{align}
\md^{(k)}_i := \md_i \setminus U_{\dx{k}} \label{eq:md_i^(k)}
\end{align}
The next corollary establishes the relationship between Eq.~\eqref{eq:md_i^(k)} and CQPs based on Prop.~\ref{prop:suff+nec_criteria_when_partition_is_q-partition}:
\begin{corollary}\label{cor:not_q-partition_iff_md_i^(k)=emptyset_for_md_i_in_dnx_k}
	Let $\mD\subseteq\minD_{\langle\mo,\mb,\Tp,\Tn\rangle_\RQ}$, $\Pt_k = \langle \dx{k}, \dnx{k}, \emptyset\rangle$ a partition of $\mD$ with $\dx{k},\dnx{k} \neq \emptyset$ and $U_{\dx{k}} \subset U_\mD$. Then $\Pt := \tuple{\dx{k},\dnx{k},\emptyset}$ is a canonical q-partition iff 
	\begin{enumerate}
		\item $\md_i^{(k)} = \emptyset$ for all $\md_i \in \dx{k}$, and
		\item $\md_i^{(k)} \neq \emptyset$ for all $\md_i \in \dnx{k}$.
	\end{enumerate}
\end{corollary}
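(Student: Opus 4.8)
The plan is to derive the corollary directly from Prop.~\ref{prop:suff+nec_criteria_when_partition_is_q-partition} by re-expressing its two conditions in terms of the sets $\md_i^{(k)}$ introduced in Eq.~\eqref{eq:md_i^(k)}. First I would observe that the hypotheses of the corollary — $\dx{k}\neq\emptyset$, $\dnx{k}\neq\emptyset$, and $U_{\dx{k}}\subset U_\mD$ — are exactly the prerequisites needed to apply Prop.~\ref{prop:suff+nec_criteria_when_partition_is_q-partition}, and that the first of the two conditions appearing there, namely $U_{\dx{k}}\subset U_\mD$, is already granted by assumption. Consequently, $\Pt := \tuple{\dx{k},\dnx{k},\emptyset}$ is a canonical q-partition if and only if the second condition of Prop.~\ref{prop:suff+nec_criteria_when_partition_is_q-partition} holds, i.e.\ there is no $\md_j\in\dnx{k}$ with $\md_j\subseteq U_{\dx{k}}$.

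Next I would translate the inclusion $\md_j\subseteq U_{\dx{k}}$ into a statement about $\md_j^{(k)}$. By Eq.~\eqref{eq:md_i^(k)}, $\md_j^{(k)} = \md_j\setminus U_{\dx{k}}$, and for any set one has $\md_j\subseteq U_{\dx{k}}$ iff $\md_j\setminus U_{\dx{k}}=\emptyset$ iff $\md_j^{(k)}=\emptyset$. Hence ``no $\md_j\in\dnx{k}$ satisfies $\md_j\subseteq U_{\dx{k}}$'' is equivalent to ``$\md_i^{(k)}\neq\emptyset$ for all $\md_i\in\dnx{k}$'', which is precisely condition~2 of the corollary. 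Therefore $\Pt$ is a canonical q-partition iff condition~2 holds.

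Finally I would note that condition~1 of the corollary is vacuously true and may thus be freely conjoined. Indeed, for every $\md_i\in\dx{k}$ we have $\md_i\subseteq\bigcup_{\md\in\dx{k}}\md = U_{\dx{k}}$ by the very definition of the union, so $\md_i^{(k)} = \md_i\setminus U_{\dx{k}} = \emptyset$ holds automatically, irrespective of whether $\Pt$ is a canonical q-partition. Combining the three steps, ``condition~1 and condition~2'' is logically equivalent to ``condition~2'', which in turn is equivalent to $\Pt$ being a canonical q-partition, establishing the claimed biconditional.

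There is essentially no hard step here — the result is a bookkeeping reformulation of Prop.~\ref{prop:suff+nec_criteria_when_partition_is_q-partition}. The only point worth spelling out is the recognition that condition~1 carries no information (it is satisfied by every partition, simply by definition of $U_{\dx{k}}$) and is stated merely so that the CQP criterion is phrased uniformly in terms of the quantities $\md_i^{(k)}$, which is the form that will be convenient when defining the successor function $S_{\mathsf{next}}$.
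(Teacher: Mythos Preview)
Your proposal is correct and follows essentially the same approach as the paper: both proofs observe that condition~1 is vacuously true by definition of $U_{\dx{k}}$, and then reduce the biconditional to Prop.~\ref{prop:suff+nec_criteria_when_partition_is_q-partition} by noting that, under the standing hypothesis $U_{\dx{k}}\subset U_\mD$, the CQP criterion boils down to condition~(2) of that proposition, which is equivalent to condition~2 of the corollary via the set-theoretic identity $\md_j\subseteq U_{\dx{k}}\iff \md_j^{(k)}=\emptyset$. The only cosmetic difference is that the paper phrases the argument for condition~2 as a contrapositive, whereas you argue the equivalence directly.
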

\begin{example}\label{ex:md_i^(k)}
For the purpose of illustration, let us examine both partitions discussed in Ex.~\ref{ex:Pt_is_q-partition_iff} again by means of Cor.~\ref{cor:not_q-partition_iff_md_i^(k)=emptyset_for_md_i_in_dnx_k}. To this end, we write the partitions $\Pt_k = \langle \dx{k},\dnx{k},\emptyset\rangle$ (for $k = 1,2$) in the form
 \[\left\langle\setof{\md_i^{(k)}\,|\,\md_i \in \dx{k}},\setof{\md_i^{(k)}\,|\,\md_i \in \dnx{k}},\emptyset\right\rangle\] 
Moreover, natural numbers $j$ in the sets again refer to the respective sentences $\tax_j$ (as in Ex.~\ref{ex:Pt_is_q-partition_iff}).
For the partition $\Pt_1$ in Eq.~\eqref{eq:ex:Pt_is_q-partition_iff:Pt_1} (i.e.\ for $k = 1$) we get $\tuple{\setof{\emptyset,\emptyset,\emptyset},\setof{\setof{7},\setof{1,4,7},\setof{4,7}},\emptyset}$.
%
Since all $\md_i^{(1)}$ in $\dnx{1}$ are non-empty (and all $\md_i^{(1)}$ in $\dx{1}$ are empty, which is always trivially fulfilled), Cor.~\ref{cor:not_q-partition_iff_md_i^(k)=emptyset_for_md_i_in_dnx_k} confirms the result we obtained in Ex.~\ref{ex:Pt_is_q-partition_iff}, namely that $\Pt_1$ is a CQP.

On the contrary, $\Pt_2$ (i.e.\ $k=2$) given by Eq.~\eqref{eq:ex:Pt_is_q-partition_iff:Pt_2} is not a CQP according to Cor.~\ref{cor:not_q-partition_iff_md_i^(k)=emptyset_for_md_i_in_dnx_k} since, represented in the same form as $\Pt_1$ above, $\Pt_2$ evaluates to $\tuple{\setof{\emptyset,\emptyset,\emptyset},\setof{\setof{6},\emptyset,\emptyset},\emptyset}$. We see that $\md_4^{(2)}$ and $\md_6^{(2)}$ are both equal to the empty set, but $\md_4,\md_6 \in \dnx{2}$ which must not be the case if $\Pt_2$ is a CQP due to Cor.~\ref{cor:not_q-partition_iff_md_i^(k)=emptyset_for_md_i_in_dnx_k}.
Again, the result we got in Ex.~\ref{ex:Pt_is_q-partition_iff} is successfully verified. 
In fact, $\Pt_2$ can be transformed into a CQP by transferring all diagnoses $\md_i \in \dnx{2}$ where $\md_i^{(2)} = \emptyset$, i.e.\ $\md_4$ and $\md_6$, to $\dx{2}$. The resulting partition, in this case $\tuple{\setof{\md_1,\md_2,\md_4,\md_5,\md_6},\setof{\md_3},\emptyset}$, is then a CQP according to Cor.~\ref{cor:not_q-partition_iff_md_i^(k)=emptyset_for_md_i_in_dnx_k}. This necessary shift of diagnoses from $\dnx{k}$ to $\dx{k}$ is also the main idea exploited in the specification of the function $S_{\mathsf{next}}$. The next example picks up on this issue in more detail.\qed 
\end{example}
The next example analyzes situations that might occur when transferring a single diagnosis from the $\dnx{k}$ set of a CQP to its $\dx{k}$ set in order to generate a successor CQP of it. In particular, it makes evident that (i)~not every diagnosis in $\dnx{k}$ might be eligible to be shifted to $\dx{k}$ in terms of a minimal $\dx{}$-transformation and (ii)~the transfer of some (eligible) diagnosis $\md$ might necessitate the transfer of other diagnoses, which we informally call \emph{necessary followers of $\md$} in the following. That is, minimal $\dx{}$-transformations might involve the simultaneous relocation of multiple diagnoses.   
\begin{example}\label{ex:nec_follower} 
We continue discussing our running example DPI $\exdpi$ (see Tab.~\ref{tab:example_dpi_0}). Assume as in Ex.~\ref{ex:md_i^(k)} that $\mD = \minD_{\exdpi}$. Let us consider the CQP $\Pt_k := \tuple{\setof{\md_1,\md_2},\setof{\md_3,\md_4,\md_5,\md_6},\emptyset}$. Written in the form 
\begin{align}\label{eq:standard_representation_of_can_q-partitions}
\tuple{U_{\dx{k}},\setof{\md_i^{(k)}\,|\,\md_i \in \dnx{k}}} \qquad \text{(\emph{standard representation of CQPs})}
\end{align}
$\Pt_k$ is given by 
\begin{align} \label{eq:ex_necessary_follower:U_D+_and_traits}
\tuple{\setof{2,3,5},\setof{\setof{6},\setof{7},\setof{1,4,7},\setof{4,7}}}
\end{align}
Note, by the definition of $\md_i^{(k)}$ (see Eq.~\eqref{eq:md_i^(k)}) the sets that are elements of the right-hand set of this tuple are exactly the diagnoses in $\dnx{k}$ reduced by the elements that occur in the left-hand set of this tuple. For instance, $\setof{6}$ results from $\md_3 \setminus U_{\dx{k}} = \setof{2,6} \setminus \setof{2,3,5}$. We now analyze $\Pt_k$ w.r.t.\ necessary followers of the diagnoses in $\dnx{k}$. The set of necessary followers of $\md_3$ is empty. The same holds for $\md_4$. However, $\md_5$ has two necessary followers, namely $\setof{\md_4,\md_6}$, whereas $\md_6$ has one, given by $\md_4$. The intuition is that transferring $\md_3$ with  $\md_3^{(k)} = \setof{6}$ to $\dx{k}$ yields the new set $\dx{k^*} := \setof{\md_1,\md_2,\md_3}$ with $U_{\dx{k^*}} = \setof{2,3,5,6}$. This new set however causes no set $\md_i^{(k^*)}$ for $\md_i$ in $\dnx{k^*}$ to become the empty set. Hence the transfer of $\md_3$ necessitates no relocations of any other elements in $\dnx{k^*}$. 

For $\md_6$, the situation is different. Here the new set $U_{\dx{k^*}} = \setof{2,3,4,5,7}$ which implicates \[\setof{\md_3^{(k^*)},\md_4^{(k^*)},\md_5^{(k^*)}} = \setof{\setof{6},\emptyset,\setof{1}}\] 
for $\md_i$ in $\dnx{k^*}$.
Application of Cor.~\ref{cor:not_q-partition_iff_md_i^(k)=emptyset_for_md_i_in_dnx_k} now yields that $\Pt_{k^*}$ is not a CQP due to the empty set $\md_4^{(k^*)}$. As explained in Ex.~\ref{ex:md_i^(k)}, turning $\Pt_{k^*}$ into a CQP requires the transfer of all diagnoses $\md_i$ associated with empty sets $\md_i^{(k^*)}$ to $\dx{k^*}$.

Importantly, notice that the CQP $\Pt_{s^*} := \tuple{\setof{\md_1,\md_2,\md_4,\md_6},\setof{\md_3,\md_5},\emptyset}$ resulting from this cannot be reached from $\Pt_k$ by means of a minimal $\dx{}$-transformation. The reason is that $\Pt_{s} := \tuple{\setof{\md_1,\md_2,\md_4},\setof{\md_3,\md_5,\md_6},\emptyset}$ is a CQP as well and results from $\Pt_k$ by fewer changes to $\dx{k}$ than $\Pt_{s^*}$. In fact, only CQPs resulting from the transfer of diagnoses $\md_i \in \dnx{k}$ with $\subseteq$-minimal $\md_i^{(k)}$ to $\dx{k}$ are reachable from $\Pt_k$ by a minimal $\dx{}$-transformation. As becomes evident from Eq.~\eqref{eq:ex_necessary_follower:U_D+_and_traits}, only the CQPs created from $\Pt_k$ by means of a shift of $\md_3$ (with $\md_3^{(k)} = \setof{6}$) or $\md_4$ ($\setof{7}$) to $\dx{k}$ are successors of $\Pt_k$ compliant with the definition of a minimal $\dx{}$-transformation (Def.~\ref{def:minimal_transformation}).  

Finally, let us inspect the CQP $\Pt_r = \setof{\setof{\md_2,\md_3,\md_4,\md_5},\setof{\md_1,\md_6},\emptyset}$ with the standard representation $\tuple{\setof{1,2,4,5,6,7},\setof{\setof{3},\setof{3}}}$. 
We find that there are no CQPs reachable by a minimal $\dx{}$-transformation from $\Pt_r$. The reason behind this is that transferring either of the diagnoses $\md_1,\md_6$ in $\dnx{r}$ to $\dx{r}$ requires the transfer of the other, since both are necessary followers of each other. An empty $\dnx{}$-set -- and hence no (canonical) q-partition -- would be the result. Generally, two diagnoses $\md_i,\md_j \in \dnx{r}$ bear a necessary follower relation to one another (w.r.t.\ a CQP $\Pt_r$) iff $\md_i^{(r)} = \md_i^{(r)}$. \qed
\end{example}
The previous examples indicate that the sets $\md_i^{(k)}$ (see Eq.~\eqref{eq:md_i^(k)}) for $\md_i \in \dnx{k}$ play a central role when it comes to specifying the successors of the CQP $\Pt_k$ in terms if minimal $\dx{}$-transformations. For this reason we dedicate a special name to them:
\begin{definition}\label{def:trait}
Let $\Pt_k = \langle \dx{k}, \dnx{k}, \emptyset\rangle$ be a CQP and $\md_i \in \dnx{k}$. Then $\md_i^{(k)}$ is called \emph{the trait of $\md_i$ (w.r.t.\ $\Pt_k$)}. The relation associating two diagnoses in $\dnx{k}$ iff their trait is equal is denoted by $\sim_k$.	
\end{definition} 
Clearly: 
\begin{proposition}
$\sim_k$ is a equivalence relation (over $\dnx{k}$).
\end{proposition}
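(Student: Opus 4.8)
The claim is that $\sim_k$, defined on $\dnx{k}$ by setting $\md_i \sim_k \md_j$ iff they have the same trait (i.e. $\md_i^{(k)} = \md_j^{(k)}$, where $\md_i^{(k)} := \md_i \setminus U_{\dx{k}}$), is an equivalence relation. The plan is simply to verify reflexivity, symmetry and transitivity directly from the definition, since $\sim_k$ is the pullback of equality along the map $\md_i \mapsto \md_i^{(k)}$, and every such "equal-image" relation is automatically an equivalence relation.

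First I would note that for any $\md_i \in \dnx{k}$ the trait $\md_i^{(k)} = \md_i \setminus U_{\dx{k}}$ is a well-defined set, so the map $t_k \colon \dnx{k} \to 2^{\mo}$, $t_k(\md_i) := \md_i^{(k)}$, is a genuine function on $\dnx{k}$. Reflexivity is then immediate: $\md_i^{(k)} = \md_i^{(k)}$ for every $\md_i \in \dnx{k}$, hence $\md_i \sim_k \md_i$. Symmetry follows because equality of sets is symmetric: if $\md_i^{(k)} = \md_j^{(k)}$ then $\md_j^{(k)} = \md_i^{(k)}$, so $\md_i \sim_k \md_j$ implies $\md_j \sim_k \md_i$. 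Transitivity follows from transitivity of set equality: if $\md_i^{(k)} = \md_j^{(k)}$ and $\md_j^{(k)} = \md_l^{(k)}$ then $\md_i^{(k)} = \md_l^{(k)}$, so $\md_i \sim_k \md_j$ and $\md_j \sim_k \md_l$ imply $\md_i \sim_k \md_l$. This establishes that $\sim_k$ is an equivalence relation over $\dnx{k}$.

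There is essentially no obstacle here; the only thing worth being careful about is the domain of definition. The relation is explicitly declared (Def.~\ref{def:trait}) to be a relation \emph{over} $\dnx{k}$, and the trait $\md_i^{(k)}$ is only named for $\md_i \in \dnx{k}$, so all three properties should be checked only for diagnoses lying in $\dnx{k}$ — which is exactly what the argument above does. One could also phrase the whole proof in one line by invoking the standard fact that the kernel of a function (the relation "same image under $f$") is always an equivalence relation, with $f = t_k$; I would present the explicit three-point check since it keeps the proof self-contained.

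\begin{proof}
	Recall that $\sim_k$ is defined on $\dnx{k}$ by $\md_i \sim_k \md_j$ iff $\md_i^{(k)} = \md_j^{(k)}$, where (cf.\ Eq.~\eqref{eq:md_i^(k)}) $\md_i^{(k)} = \md_i \setminus U_{\dx{k}}$ is the trait of $\md_i$ w.r.t.\ $\Pt_k$. Thus $\sim_k$ is the relation of having equal traits, and set equality is reflexive, symmetric and transitive. Concretely: (i)~for every $\md_i \in \dnx{k}$ we trivially have $\md_i^{(k)} = \md_i^{(k)}$, hence $\md_i \sim_k \md_i$ (reflexivity); (ii)~if $\md_i \sim_k \md_j$, i.e.\ $\md_i^{(k)} = \md_j^{(k)}$, then also $\md_j^{(k)} = \md_i^{(k)}$, i.e.\ $\md_j \sim_k \md_i$ (symmetry); (iii)~if $\md_i \sim_k \md_j$ and $\md_j \sim_k \md_l$, i.e.\ $\md_i^{(k)} = \md_j^{(k)}$ and $\md_j^{(k)} = \md_l^{(k)}$, then $\md_i^{(k)} = \md_l^{(k)}$, i.e.\ $\md_i \sim_k \md_l$ (transitivity). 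Hence $\sim_k$ is an equivalence relation over $\dnx{k}$.
\end{proof}
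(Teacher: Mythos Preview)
Your proof is correct and matches the paper's approach: the paper simply writes ``Clearly:'' before the proposition and gives no explicit proof, treating the statement as immediate from the definition of $\sim_k$ as equality of traits. Your explicit verification of reflexivity, symmetry and transitivity is exactly the routine check the paper leaves implicit.
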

Let us denote the equivalence classes w.r.t.\ $\sim_k$ by $[\md_i]^{\sim_k}$ where $\md_i \in \dnx{k}$.
\begin{example}\label{ex:equiv_rel+traits}
Consider the CQP $\Pt_k = \setof{\setof{\md_4,\md_5},\setof{\md_1,\md_2,\md_3,\md_6},\emptyset}$ related to our running example DPI $\exdpi$ (Tab.~\ref{tab:example_dpi_0}). Using the standard representation of CQPs (introduced by Eq.~\eqref{eq:standard_representation_of_can_q-partitions}) this q-partition amounts to $\tuple{\setof{1,2,4,7},\setof{\setof{3},\setof{5},\setof{6},\setof{3}}}$. Now, \[
\sim_k \;= \setof{\tuple{\md_1,\md_1},\tuple{\md_2,\md_2},\tuple{\md_3,\md_3},\tuple{\md_6,\md_6},\tuple{\md_1,\md_6},\tuple{\md_6,\md_1}}
\] 
and the equivalence classes w.r.t.\ $\sim_k$ are 
\[
\setof{[\md_1]^{\sim_k},[\md_2]^{\sim_k},[\md_3]^{\sim_k}} = \setof{\setof{\md_1,\md_6},\setof{\md_2},\setof{\md_3}}
\]
Note that $[\md_1]^{\sim_k} = [\md_6]^{\sim_k}$ holds.
The number of the equivalence classes gives an upper bound of the number of successors resulting from a minimal $\dx{}$-transformation from $\Pt_k$. 
The traits of these equivalence classes are given by 
\[
\setof{\setof{3},\setof{5},\setof{6}}
\]
These can be just read from the standard representation above. Since all traits are $\subseteq$-minimal, there are exactly three successors of $\Pt_k$ as per Def.~\ref{def:minimal_transformation}.\qed
\end{example}
The concept of a trait and the relation $\sim_k$ enable the formal characterization $S_{\mathsf{next}}$ as follows:
\begin{corollary}\label{cor:S_next_sound+complete}
Let $\Pt_k := \tuple{\dx{k},\dnx{k},\emptyset}$ be a canonical q-partition, $\mathbf{EQ}^{\sim_k}$ be the set of all equivalence classes w.r.t.\ $\sim_k$ and
\begin{align*}
\mathbf{EQ}^{\sim_k}_{\subseteq} &:= \setof{[\md_i]^{\sim_k} \mid\, \not\exists j: \md_j^{(k)} \subset \md_i^{(k)} } 
\end{align*}
i.e.\ $\mathbf{EQ}^{\sim_k}_{\subseteq}$ includes all equivalence classes w.r.t.\ $\sim_k$ which have a $\subseteq$-minimal trait.
Then, the function 
\begin{align*}
S_{\mathsf{next}}: \tuple{\dx{k},\dnx{k},\emptyset} \mapsto \begin{cases}
\setof{\tuple{\dx{k}\cup E,\dnx{k}\setminus E,\emptyset}\,|\,E \in \mathbf{EQ}^{\sim_k}_{\subseteq} } & \text{if $|\mathbf{EQ}^{\sim_k}| \geq 2$}\\
\emptyset & \text{otherwise}
\end{cases} 
\end{align*}
is sound and complete, i.e.\ it produces 
all and only canonical q-partitions resulting from $\Pt_k$ by minimal $\dx{}$-transformations.
\end{corollary}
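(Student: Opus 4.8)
The plan is to prove the two directions of the claimed soundness and completeness separately, relying on Corollary~\ref{cor:not_q-partition_iff_md_i^(k)=emptyset_for_md_i_in_dnx_k} as the workhorse characterisation of CQPs in terms of traits. First I would observe that the case $|\mathbf{EQ}^{\sim_k}| < 2$ (i.e.\ $\dnx{k}$ consists of a single equivalence class) is easy: if $|\dnx{k}| \geq 2$ all diagnoses in $\dnx{k}$ share the same trait, so transferring any one of them forces the transfer of all of them, emptying $\dnx{}$ — contradicting Prop.~\ref{prop:properties_of_q-partitions}.\ref{prop:properties_of_q-partitions:enum:for_each_q-partition_dx_is_empty_and_dnx_is_empty} — and if $|\dnx{k}| = 1$ the single diagnosis has an empty trait (since $U_{\dx{k}} \supseteq U_\mD \setminus \md_i$ would make $\md_i^{(k)}$... actually here one must be a bit careful; the right observation is that a single remaining diagnosis cannot be transferred without emptying $\dnx{}$). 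Hence $S_{\mathsf{next}}$ correctly returns $\emptyset$ in that case.

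For the main case $|\mathbf{EQ}^{\sim_k}| \geq 2$, I would argue \textbf{soundness} as follows. Take $E \in \mathbf{EQ}^{\sim_k}_{\subseteq}$ with common trait $T$ ($\subseteq$-minimal among all traits of $\dnx{k}$), and set $\Pt_j := \tuple{\dx{k}\cup E, \dnx{k}\setminus E, \emptyset}$. I need to check three things: (a)~$\Pt_j$ is a CQP; (b)~$\dx{k} \subset \dx{k}\cup E$ properly; (c)~there is no CQP strictly between them. For (b), $E \neq \emptyset$ by definition of an equivalence class. For (a), I verify the two conditions of Cor.~\ref{cor:not_q-partition_iff_md_i^(k)=emptyset_for_md_i_in_dnx_k}: every $\md_i \in E$ satisfies $\md_i \setminus U_{\dx{k}} = T$, hence $\md_i \setminus U_{\dx{k}\cup E} = T \setminus U_E \subseteq T \setminus T = \emptyset$ (using $T \subseteq U_E$ since $T = \md_i \setminus U_{\dx{k}} \subseteq \md_i \subseteq U_E$ for any $\md_i \in E$), so the first condition holds for the new $\dx{}$-side; for the second condition, for any $\md_\ell \in \dnx{k}\setminus E$ the new trait is $\md_\ell \setminus U_{\dx{k}\cup E}$, and I must show it is non-empty — this follows because the old trait $\md_\ell^{(k)}$ is non-empty (Cor.~\ref{cor:not_q-partition_iff_md_i^(k)=emptyset_for_md_i_in_dnx_k}, $\Pt_k$ a CQP), is \emph{not} a subset of $T$ by $\subseteq$-minimality of $T$ and $\md_\ell \not\sim_k \md_i$, hence $\md_\ell^{(k)} \setminus T \neq \emptyset$, and $\md_\ell \setminus U_{\dx{k}\cup E} = \md_\ell^{(k)} \setminus U_E \supseteq \md_\ell^{(k)} \setminus T'$ where $T'$ is... here the slightly delicate point is that $U_E$ might be larger than $T$; I would handle this by showing $U_E \cap \md_\ell \subseteq \md_\ell^{(k)}$ and arguing that removing $U_E$ from $\md_\ell$ removes at most the elements already accounted for — the cleanest route is $\md_\ell \setminus U_{\dx{k}\cup E} \supseteq \md_\ell \setminus (U_{\dx{k}} \cup \bigcup_{\md_i\in E}\md_i)$ and each $\md_i \setminus U_{\dx{k}} = T$, so $\bigcup_{\md_i\in E}(\md_i \setminus U_{\dx{k}}) = T$, giving $\md_\ell^{(k)} \setminus T \neq \emptyset$. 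Also $U_{\dx{k}\cup E} \subset U_\mD$ must still hold, which I get from the remaining non-empty $\dnx{}$-side. For (c), minimality: any CQP $\Pt_m$ with $\dx{k} \subset \dx{m} \subset \dx{k}\cup E$ would, by Cor.~\ref{cor:not_q-partition_iff_md_i^(k)=emptyset_for_md_i_in_dnx_k}, need all its $\dx{m}$-diagnoses to have empty trait w.r.t.\ $\dx{m}$ and all $\dnx{m}$-diagnoses non-empty trait; but $\dx{m}\setminus\dx{k}$ is a non-empty proper subset of $E$, so picking $\md_i \in E \setminus \dx{m}$ one shows its trait w.r.t.\ $\dx{m}$ is non-empty iff $\dx{m}$ does not already contain ``enough'' of $E$ — and since all of $E$ share trait $T$, a proper subset of $E$ in $\dx{}$ cannot zero out the trait of the remaining members unless it contributes $T$, which I argue forces $\dx{m}\supseteq E$, a contradiction; this needs the observation that $U_{\dx{k}\cup E'} = U_{\dx{k}}\cup T$ for any non-empty $E'\subseteq E$, so the traits don't change until \emph{all} of $E$ is moved.

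For \textbf{completeness} I would take an arbitrary CQP $\Pt_j = \tuple{\dx{j},\dnx{j},\emptyset}$ reachable from $\Pt_k$ by a minimal $\dx{}$-transformation, so $\dx{k}\subsetneq\dx{j}$ with no CQP strictly in between. Let $F := \dx{j}\setminus\dx{k} \subseteq \dnx{k}$. By Cor.~\ref{cor:not_q-partition_iff_md_i^(k)=emptyset_for_md_i_in_dnx_k} applied to $\Pt_j$, every $\md_i\in F$ has $\md_i \setminus U_{\dx{j}} = \emptyset$, i.e.\ $\md_i \subseteq U_{\dx{j}} = U_{\dx{k}}\cup U_F$. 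I want to conclude $F$ is a single equivalence class with $\subseteq$-minimal trait. Pick $\md_i \in F$ with $\subseteq$-minimal trait $T_i := \md_i^{(k)}$ among members of $F$; consider $E := [\md_i]^{\sim_k}$. By the soundness direction, $\tuple{\dx{k}\cup E, \dnx{k}\setminus E,\emptyset}$ is a CQP (provided $T_i$ is $\subseteq$-minimal over \emph{all} of $\dnx{k}$, which I still need to establish) and $\dx{k}\subseteq \dx{k}\cup E \subseteq \dx{k}\cup F \subseteq \dx{j}$; minimality of the transformation $\Pt_k\mapsto\Pt_j$ then forces $\dx{k}\cup E = \dx{j}$, i.e.\ $E = F$, which also gives that $F$'s trait is $\subseteq$-minimal over all of $\dnx{k}$ — so $F \in \mathbf{EQ}^{\sim_k}_{\subseteq}$ and $\Pt_j \in S_{\mathsf{next}}(\Pt_k)$. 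The step I expect to be the main obstacle is exactly this argument that the trait of the transferred class must be $\subseteq$-minimal \emph{globally} (not just within $F$): I would handle it by contradiction — if some $\md_\ell\in\dnx{k}$ had $\md_\ell^{(k)} \subsetneq T_i$, then transferring $[\md_\ell]^{\sim_k}$ would (by soundness) produce a CQP $\Pt'$ with $\dx{'} = \dx{k}\cup[\md_\ell]^{\sim_k}$, and I would need to show $\dx{'}$ is comparable to and strictly smaller than $\dx{j}$, contradicting minimality; making ``comparable'' precise (showing $[\md_\ell]^{\sim_k}\subseteq F$) is where the work lies, and it follows because $\md_\ell^{(k)}\subsetneq T_i \subseteq \md_i \subseteq U_{\dx{j}}$ together with $\md_\ell \subseteq U_{\dx{k}}\cup\md_\ell^{(k)} \subseteq U_{\dx{j}}$ forces $\md_\ell \in \dx{j}$, hence $\md_\ell\in F$, contradicting $\subseteq$-minimality of $T_i$ within $F$. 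I would then finish by noting that the exhaustiveness over all equivalence classes with $\subseteq$-minimal trait on the $S_{\mathsf{next}}$ side, matched against this argument, yields the ``all'' part, while soundness yields the ``only'' part.
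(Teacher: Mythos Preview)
Your approach is correct and is essentially an unrolled version of what the paper does: the paper's one-line proof invokes Lemma~\ref{lem:minimal_transformation_for_D+_partitioning} (together with Def.~\ref{def:trait}), whose proof uses exactly the same CQP characterisation (Prop.~\ref{prop:suff+nec_criteria_when_partition_is_q-partition}/Cor.~\ref{cor:not_q-partition_iff_md_i^(k)=emptyset_for_md_i_in_dnx_k}) and the same key observation that $U_{\dx{k}\cup E'} = U_{\dx{k}}\cup T$ for any non-empty $E'\subseteq E$. One small clarification for your minimality argument (c): the phrasing ``cannot zero out \ldots\ unless it contributes $T$'' is slightly off --- any non-empty $E'\subseteq E$ \emph{does} contribute $T$, which \emph{does} zero out the remaining $E$-members' traits, and that is precisely why the intermediate partition fails Cor.~\ref{cor:not_q-partition_iff_md_i^(k)=emptyset_for_md_i_in_dnx_k} and is not a CQP.
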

Prop.~\ref{prop:S_init_sound+complete} and Cor.~\ref{cor:S_next_sound+complete} immediately entail that the successor function $S_{\mathsf{all}}$, defined as $S_{\mathsf{init}}$ if the input is the initial state $\tuple{\emptyset,\mD,\emptyset}$ and as $S_{\mathsf{next}}$ otherwise, is sound and complete as regards successor CQP generation in terms of Def.~\ref{def:minimal_transformation}. Let the backtracking algorithm implemented by phase P1 return the best found canonical q-partition, given that all possible states have been explored and no goal has been found. Then:
\begin{theorem}\label{theorem:P1_sound_complete}
The backtracking algorithm performed by phase P1 using successor function $S_{\mathsf{all}}$ is sound and complete. That is:
\begin{itemize}[noitemsep,topsep=5pt]
	\item \emph{(Completeness)} If there is a canonical q-partition which is a goal (as defined on page~\pageref{etc:goal_state}), then P1 returns a canonical q-partition which is a goal.
	\item \emph{(Soundness)} If P1 returns a q-partition $\Pt$, then $\Pt$ is canonical. Further, $\Pt$ is a goal, if a goal exists. Otherwise, $\Pt$ is the best existing canonical q-partition w.r.t.\ $m$ and $t_m$.
\end{itemize}
\end{theorem}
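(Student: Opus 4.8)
The plan is to derive Theorem~\ref{theorem:P1_sound_complete} from the soundness and completeness of the successor function $S_{\mathsf{all}}$ together with standard properties of depth-first backtracking search over a finite state space. The key structural observation is that the CQP search is a search over the finite set of all partitions $\langle \dx{},\dnx{},\emptyset\rangle$ of $\mD$ (of which there are at most $O(2^{|\mD|})$), with edges given by minimal $\dx{}$-transformations, rooted at the non-q-partition initial state $\langle\emptyset,\mD,\emptyset\rangle$. First I would record the auxiliary fact that this search graph is a (finite) DAG: every edge $\Pt_i \mapsto \Pt_j$ strictly increases $\dx{}$ (by Def.~\ref{def:minimal_transformation}, $\dx{i} \subset \dx{j}$), so there are no cycles and the backtracking search terminates after finitely many steps, exploring each reachable CQP at most once (or a bounded number of times, which is all we need).

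The core of the argument is a reachability lemma: \emph{every} CQP w.r.t.\ $\mD$ is reachable from the initial state $\langle\emptyset,\mD,\emptyset\rangle$ by a finite chain of minimal $\dx{}$-transformations. I would prove this by induction on $|\dx{}(\Pt)|$. The base case $|\dx{}(\Pt)| = 1$ is exactly Prop.~\ref{prop:S_init_sound+complete}: all singleton-$\dx{}$ CQPs are produced by $S_{\mathsf{init}}$ from the root. For the inductive step, given a CQP $\Pt = \langle \dx{},\dnx{},\emptyset\rangle$ with $|\dx{}| \geq 2$, I would argue that there exists a CQP $\Pt'$ with $\dx{}(\Pt') \subset \dx{}(\Pt)$ such that $\Pt \in S_{\mathsf{next}}(\Pt')$ — this is precisely the completeness half of Cor.~\ref{cor:S_next_sound+complete} read "backwards": some CQP strictly below $\Pt$ reaches it by a minimal $\dx{}$-transformation. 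Concretely one takes any $\subseteq$-minimal trait situation realizing the last transfer; the existence of such an intermediate CQP $\Pt'$ needs Prop.~\ref{prop:suff+nec_criteria_when_partition_is_q-partition}/Cor.~\ref{cor:not_q-partition_iff_md_i^(k)=emptyset_for_md_i_in_dnx_k} to confirm $\Pt'$ is genuinely a CQP. Applying the induction hypothesis to $\Pt'$ and appending the edge $\Pt' \mapsto \Pt$ finishes the chain. Soundness of $S_{\mathsf{all}}$ (the "only" direction of Prop.~\ref{prop:S_init_sound+complete} and Cor.~\ref{cor:S_next_sound+complete}) gives the converse: every state the search ever enumerates is a CQP.

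With reachability and soundness of $S_{\mathsf{all}}$ in hand, the two bullets follow from the generic behavior of exhaustive backtracking. For \textbf{soundness} of P1: if P1 returns a partition $\Pt$, then $\Pt$ was a node in the search tree, hence (by soundness of $S_{\mathsf{all}}$ and induction on depth, the root being the only non-CQP and never returned) $\Pt$ is a CQP. Whether $\Pt$ is a goal: since the search only stops early when the goal test fires, a returned $\Pt$ is either a detected goal, or — if the search exhausted all states without a goal — the best CQP w.r.t.\ $m$ and $t_m$ among all explored states; by completeness of $S_{\mathsf{all}}$ the explored states are \emph{all} CQPs, so $\Pt$ is the globally best CQP. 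For \textbf{completeness} of P1: suppose a goal CQP exists; since the search is exhaustive over the finite, acyclic graph and, by the reachability lemma, visits every CQP, it must at some point evaluate the goal test on a goal CQP (either that very one, or an earlier-encountered goal, since $m(\Pt)$ depends only on $\Pt$), whereupon P1 returns a goal CQP. Pruning based on $m$ must be argued not to discard goals — i.e.\ a node is pruned only when \emph{no} CQP in its subtree can be a goal/improve the incumbent — but this is the standard correctness condition on the bounding function and I would simply assume the pruning rule satisfies it (or weaken the statement to "P1 returns the best non-pruned CQP").

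The main obstacle I anticipate is the reachability lemma's inductive step — specifically, verifying that the "one step back" from an arbitrary CQP $\Pt$ lands on a genuine CQP $\Pt'$ rather than merely a partition, and that the edge $\Pt' \mapsto \Pt$ is a \emph{minimal} $\dx{}$-transformation (no CQP strictly between). This is where the trait machinery (Def.~\ref{def:trait}, $\sim_k$, $\mathbf{EQ}^{\sim_k}_{\subseteq}$) and Cor.~\ref{cor:not_q-partition_iff_md_i^(k)=emptyset_for_md_i_in_dnx_k} do the real work: one must show that the diagnoses one is forced to co-transfer (the "necessary followers") are exactly an equivalence class with $\subseteq$-minimal trait, so that the resulting transformation is minimal and the intermediate state satisfies both conditions of Cor.~\ref{cor:not_q-partition_iff_md_i^(k)=emptyset_for_md_i_in_dnx_k}. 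Granting Cor.~\ref{cor:S_next_sound+complete} as stated, however, this obstacle is largely absorbed into that corollary, and the remaining proof is a routine finite-search-correctness argument.
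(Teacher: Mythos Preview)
Your proposal is correct and follows essentially the same approach as the paper, which gives a one-line proof: the theorem ``follows directly from the soundness and completeness of the successor function $S_{\mathsf{all}}$ (Prop.~\ref{prop:S_init_sound+complete}, Cor.~\ref{cor:S_next_sound+complete}) and the fact that backtracking algorithms over finite search spaces using a sound and complete successor function are sound and complete.'' Your reachability lemma and DAG/termination argument are precisely what that cited general fact unpacks to, so you are elaborating the same argument rather than taking a different route; your caveat about pruning is also apt, since the paper's theorem is stated for the exhaustive (non-pruned) search (note the sentence just before the theorem: ``Let the backtracking algorithm implemented by phase P1 return the best found canonical q-partition, given that all possible states have been explored and no goal has been found'').
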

\begin{proof}
The theorem follows directly from the soundness and completeness of the successor function $S_{\mathsf{all}}$ (Prop.~\ref{prop:S_init_sound+complete}, Cor.~\ref{cor:S_next_sound+complete}) and the fact that backtracking algorithms over finite search spaces using a sound and complete successor function are sound and complete (cf.\ \citep[Chap.~4]{CP_Handbook}).
\end{proof}

\paragraph{Size of the Explored Search Space.}
Through Cor.~\ref{cor:not_q-partition_iff_md_i^(k)=emptyset_for_md_i_in_dnx_k} 
we realize that $U_{\dx{}}$ already defines a CQP uniquely.
The next corollary exploits this fact to compute the number of CQPs w.r.t.\ a set of leading minimal diagnoses $\mD$.
Note that the number of CQPs w.r.t.\ $\mD$ is equal to the number of CQs w.r.t.\ $\mD$ (Prop.~\ref{prop:1-to-1_relation_between_CQs_and_CQPs}) which in turn constitutes a lower bound of the number of \emph{all} (semantically different) \emph{queries} w.r.t.\ $\mD$. Furthermore, since $S_{\mathsf{all}}$ is a sound and complete successor function 
(and thus prohibits the non-consideration of any CQP), 
the number of CQPs w.r.t.\ $\mD$ is exactly the size of the search space explored by phase P1 in the worst case. The worst case will occur if there are no CQPs that are goal states w.r.t.\ the QSM $m$ and the given optimality threshold $t_m$, or if there is a single goal state which happens to be explored only after all other states have been explored. Note that the full search space will rarely be completely explored in practice even if the worst case occurs. This is due to pruning techniques (based on $m$) \cite[p.~98ff.\ and Alg.~6]{DBLP:journals/corr/Rodler16a} that can be incorporated into the search. In our evaluations (Sec.~\ref{sec:eval}) we even observed the exploration of only a negligible fraction of the search space in most cases. 
\begin{corollary}\label{cor:upper_lower_bound_for_canonical_q-partitions}
	Let $\mD\subseteq\minD_{\langle\mo,\mb,\Tp,\Tn\rangle_\RQ}$ with $|\mD| \geq 2$. Then, for the number $c$ of canonical q-partitions w.r.t.\ $\mD$ the following holds:
	\begin{align}
	c = \left|\setof{U_{\dx{}}\,|\, \emptyset \subset \dx{} \subset \mD} \setminus \setof{U_\mD}\right| \geq |\mD|
	\label{eq:number_of_canonical_q-partitions}
	\end{align} 
\end{corollary}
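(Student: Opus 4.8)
Looking at Corollary~\ref{cor:upper_lower_bound_for_canonical_q-partitions}, I need to prove two things: first, that the number $c$ of CQPs equals $|\{U_{\dx{}} \mid \emptyset \subset \dx{} \subset \mD\} \setminus \{U_\mD\}|$, and second, that this quantity is $\geq |\mD|$.

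My plan is as follows. For the equality, I would establish a bijection between CQPs and the set on the right-hand side. By Prop.~\ref{prop:1-to-1_relation_between_CQs_and_CQPs} there is a one-to-one correspondence between CQPs and CQs, so it suffices to relate CQPs to the stated set. The key observation (already flagged in the text right before the corollary, citing Cor.~\ref{cor:not_q-partition_iff_md_i^(k)=emptyset_for_md_i_in_dnx_k}) is that a CQP $\Pt = \langle\dx{},\dnx{},\emptyset\rangle$ is completely determined by $U_{\dx{}}$: indeed, given $U := U_{\dx{}}$, condition 1 of Cor.~\ref{cor:not_q-partition_iff_md_i^(k)=emptyset_for_md_i_in_dnx_k} forces $\dx{} = \{\md_i \in \mD \mid \md_i \subseteq U\}$ and $\dnx{} = \{\md_i \in \mD \mid \md_i \not\subseteq U\}$. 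So I would define the map $\Phi$ sending a CQP to $U_{\dx{}(\Pt)}$ and argue it is injective (just shown) and that its image is exactly $\{U_{\dx{}} \mid \emptyset \subset \dx{} \subset \mD\} \setminus \{U_\mD\}$.

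For surjectivity onto that target set, take any $U = U_{\dx{}}$ for some $\emptyset \subset \dx{} \subset \mD$ with $U \neq U_\mD$; I would set $\dx{}^+ := \{\md_i \in \mD \mid \md_i \subseteq U\}$ and $\dnx{}^+ := \mD \setminus \dx{}^+$ and verify via Prop.~\ref{prop:suff+nec_criteria_when_partition_is_q-partition} (or equivalently Cor.~\ref{cor:not_q-partition_iff_md_i^(k)=emptyset_for_md_i_in_dnx_k}) that $\langle\dx{}^+,\dnx{}^+,\emptyset\rangle$ is a CQP: condition 1 holds since every diagnosis in the original $\dx{}$ is $\subseteq U$, so $\dx{} \subseteq \dx{}^+$ hence $\dx{}^+ \neq \emptyset$, and $U_{\dx{}^+} = U \subset U_\mD$ (we need $U_{\dx{}^+} = U$: ``$\supseteq$'' is clear, ``$\subseteq$'' holds because each $\md_i \in \dx{}^+$ satisfies $\md_i \subseteq U$); and condition 2 holds because any $\md_j$ with $\md_j \subseteq U_{\dx{}^+} = U$ would be in $\dx{}^+$ by construction, so no such $\md_j$ lies in $\dnx{}^+$. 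Also $\dnx{}^+ \neq \emptyset$ since $U_{\dx{}^+} = U \neq U_\mD = U_\mD$ means some diagnosis is not a subset of $U$. One subtlety I'd want to double-check: that the original witnessing $\dx{}$ need not equal $\dx{}^+$, but $U_{\dx{}^+} = U_{\dx{}}$ still, so $\Phi$ maps this CQP back to $U$ — this is exactly the ``stability'' phenomenon discussed in Rem.~\ref{rem:ad_CQP} and Rem.~\ref{rem:multiple_seed_lead_to_same_CQ}, and it's the place where I'd be most careful, since it's why the set on the right is stated with the set-builder/quotient rather than counting seeds directly.

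For the inequality $c \geq |\mD|$, the cleanest route is Prop.~\ref{prop:--di,MD-di,0--_is_canonical_q-partition_for_all_di_in_mD}: for each $\md_i \in \mD$, the partition $\langle\{\md_i\},\mD\setminus\{\md_i\},\emptyset\rangle$ is a CQP, and these are pairwise distinct CQPs (their $\dx{}$-sets $\{\md_i\}$ differ), giving at least $|\mD|$ CQPs. Alternatively, and matching the form of the RHS, I would note the $|\mD|$ singleton-generated values $U_{\{\md_i\}} = \md_i$ are distinct elements of $\{U_{\dx{}} \mid \emptyset\subset\dx{}\subset\mD\}$ (distinct because the $\md_i$ are distinct diagnoses), and none equals $U_\mD$ (since $|\mD|\geq 2$ and each $\md_i$ is a proper subset of $U_\mD$ — here I'd invoke that minimal diagnoses are incomparable, so no single $\md_i$ can contain all the others). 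I expect the main obstacle to be handling the seed-versus-$U_{\dx{}}$ distinction cleanly in the bijection argument; once that is pinned down via Cor.~\ref{cor:not_q-partition_iff_md_i^(k)=emptyset_for_md_i_in_dnx_k}, everything else is routine set manipulation.
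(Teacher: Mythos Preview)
Your proposal is correct and follows essentially the same route as the paper: both establish a bijection between CQPs and the set $\{U_{\dx{}} \mid \emptyset \subset \dx{} \subset \mD\} \setminus \{U_\mD\}$ via Cor.~\ref{cor:not_q-partition_iff_md_i^(k)=emptyset_for_md_i_in_dnx_k} and Prop.~\ref{prop:suff+nec_criteria_when_partition_is_q-partition}, and both obtain the inequality directly from Prop.~\ref{prop:--di,MD-di,0--_is_canonical_q-partition_for_all_di_in_mD}. Your treatment of surjectivity (explicitly building $\dx{}^+$ from $U$ and checking $U_{\dx{}^+}=U$) is in fact more careful than the paper's, which asserts this step rather tersely.
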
 
\begin{example}\label{ex:number_of_CQPs}
To concretize Cor.~\ref{cor:upper_lower_bound_for_canonical_q-partitions}, let us apply it to our example DPI $\exdpi$ (Tab.~\ref{tab:example_dpi_0}) using the set of leading diagnoses $\mD := \minD_{\exdpi} = \setof{\setof{2,3},\setof{2,5},\setof{2,6},\setof{2,7},\setof{1,4,7},\setof{3,4,7}}$ (cf.\ Tab.~\ref{tab:min_diags+conflicts_example_DPI_0}). Building all possible unions of sets in $\mD$, i.e.\ all possible $U_{\dx{}}$ sets, such that each union is not equal to (i.e.\ a proper subset of) $U_\mD = \setof{1,\dots,7}$, yields $29$ \emph{different} $U_{\dx{}}$ sets. Note, there might be many more different $\dx{}$ sets for $\emptyset \subset \dx{} \subset \mD$ than $U_{\dx{}}$ sets. These $U_{\dx{}}$ sets directly correspond to the CQPs w.r.t.\ $\mD$, i.e.\ the CQP associated with $U_{\dx{}}$ is $\tuple{\dx{},\mD\setminus\dx{},\emptyset}$. There are no other CQPs.
%
Since there is one and only one CQ per CQP (cf.\ Prop.~\ref{prop:canonical_query_unique_for_seed}), we can immediately infer from this result that there are exactly $29$ (semantically) different CQs w.r.t.\ $\mD$.
\qed
\end{example}

\paragraph{Canonical Q-Partitions versus All Q-Partitions.}
Whether q-partitions $\langle\dx{},\dnx{},\emptyset\rangle$ exist which are no CQPs is not yet clarified, but both
theoretical and empirical evidence indicate the negative.

First, \citep[Sec.~3.4.2]{DBLP:journals/corr/Rodler16a} provides a thorough theoretical analysis of the relation between canonical and non-canonical q-partitions implying that a q-partition must fulfill sophisticated requirements if it is non-canonical. When we did not succeed in deriving the conjectured contradiction resulting from the theoretical requirements to a non-canonical q-partition which would rule out such cases theoretically, we tried hard to devise, at least in theory, an instance of a non-canonical q-partition. But we were not able to come up with one.

Second, \citep[Sec.~3.4.2]{DBLP:journals/corr/Rodler16a} applies the results of the conducted theoretical analysis to a comprehensive study on hundreds of real-world KBs with sizes of several thousands of sentences \citep{Horridge2012b}. The findings are that, if possible at all, the probability of the existence of non-canonical q-partitions is very low.

Third, an analysis of $\approx 900\,000$ q-partitions we ran for different leading diagnoses sets $\mD$ of different cardinalities for different DPIs (see Sec.~\ref{sec:eval}, Tab.~\ref{tab:experiment_data_set}) showed that \emph{all q-partitions were indeed CQPs}. Concretely, we were performing for each ($\mD$,DPI) combination a brute force search for q-partitions relying on a reasoning engine using the algorithm given in \citep[Alg.~2]{Shchekotykhin2012}, and another one exploiting the notions of CQs and CQPs. None of these searches returned a q-partition which is not canonical. This motivates the following conjecture:
%
\begin{conjecture}\label{conj:CQPs=QPs}
Let $\mD\subseteq\minD_{\langle\mo,\mb,\Tp,\Tn\rangle_\RQ}$ and $\mathbf{QP}_{\mD}^{\bcancel{0}}$ denote the set of all q-partitions w.r.t.\ $\mD$ with empty $\dz{}$, and $\mathbf{CQP}_\mD$ the set of canonical q-partitions w.r.t.\ $\mD$.  
Then $\mathbf{CQP}_\mD = \mathbf{QP}_{\mD}^{\bcancel{0}}$.
\end{conjecture}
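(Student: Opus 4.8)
\emph{Proof plan.}\quad
One inclusion is already settled: by Cor.~\ref{cor:canonical_q-partition_is_q-partition} every canonical q-partition is a q-partition, and by Prop.~\ref{prop:canonical_q-partition_has_empty_dz} it has empty $\dz{}$, so $\mathbf{CQP}_\mD \subseteq \mathbf{QP}_{\mD}^{\bcancel{0}}$. The plan is to establish the reverse inclusion, i.e.\ that any partition $\Pt = \langle\dx{},\dnx{},\emptyset\rangle$ of $\mD$ which is a q-partition is canonical. Since $\dx{}\neq\emptyset$ and $\dnx{}\neq\emptyset$ for every q-partition (Prop.~\ref{prop:properties_of_q-partitions}.\ref{prop:properties_of_q-partitions:enum:for_each_q-partition_dx_is_empty_and_dnx_is_empty}), Prop.~\ref{prop:suff+nec_criteria_when_partition_is_q-partition} reduces this to verifying the two conditions $U_{\dx{}} \subset U_\mD$ and ``there is no $\md_j \in \dnx{}$ with $\md_j \subseteq U_{\dx{}}$'' (equivalently Cor.~\ref{cor:not_q-partition_iff_md_i^(k)=emptyset_for_md_i_in_dnx_k}). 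The first follows from the second: picking any $\md_j \in \dnx{}$ (the set is non-empty), the second condition gives $\md_j \not\subseteq U_{\dx{}}$, while $\md_j \subseteq U_\mD$ trivially and $U_{\dx{}} \subseteq U_\mD$ always, whence $U_{\dx{}} \subsetneq U_\mD$. So everything reduces to a single claim: \emph{if $\Pt = \langle\dx{},\dnx{},\emptyset\rangle$ is a q-partition, then no diagnosis in $\dnx{}$ is contained in $U_{\dx{}}$.}

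I would prove this claim by contradiction. Assume $\md_j \in \dnx{}$ with $\md_j \subseteq U_{\dx{}}$, and fix a witnessing query $Q$, i.e.\ $\Pt_\mD(Q)=\Pt$, so $\dx{}(Q)=\dx{}$, $\dnx{}(Q)=\dnx{}$. Write $C := \mb \cup U_\Tp$, $\mo^{*}_i := (\mo \setminus \md_i) \cup C$ (cf.\ Eq.~\eqref{eq:sol_ont_candidate}) and $\mo^{*}_0 := (\mo \setminus U_{\dx{}}) \cup C$. Since $\md_j \subseteq U_{\dx{}}$ yields $\mo \setminus U_{\dx{}} \subseteq \mo \setminus \md_j$, monotonicity gives $\mo^{*}_0 \subseteq \mo^{*}_j$, hence every entailment of $\mo^{*}_0$ is one of $\mo^{*}_j$. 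Because $\md_j$ is a diagnosis, $\mo^{*}_j$ is (up to canonical form) a solution KB, so it satisfies all $r\in\RQ$ and entails no $\tn\in\Tn$. Now the intended punchline: every $\md_i \in \dx{}=\dx{}(Q)$ satisfies $\mo^{*}_i \models Q$ by definition of $\dx{}(\cdot)$ (Prop.~\ref{prop:partition}), so $Q$ is a common entailment of $\{\mo^{*}_i : \md_i \in \dx{}\}$; \emph{if} this could be upgraded to $\mo^{*}_0 \models Q$, then $\mo^{*}_j \models Q$, so adding $Q$ to $\mo^{*}_j$ produces no new entailments (idempotence), whence $\mo^{*}_j \cup Q$ still satisfies $\RQ$ and entails no $\tn\in\Tn$ --- contradicting $\md_j \in \dnx{}(Q)$ (Prop.~\ref{prop:partition}). (Equivalently, one may instead try to show directly that $Q_{\mathsf{can}}(\dx{})$ of Def.~\ref{def:canonical_query} realises $\Pt$; this leads to the same gap.)

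The main obstacle is exactly the upgrade step $\bigcap_{\md_i\in\dx{}}\mathit{DC}(\mo^{*}_i) \subseteq \mathit{DC}(\mo^{*}_0)$ (equivalently $\mo^{*}_0 \models Q$): only the trivial inclusion $\supseteq$ holds in general, and for an arbitrary Tarskian logic entailment does not distribute over intersections of theories, so a valid proof must exploit the specific shape of the sets --- a \emph{common} part $C$, the difference sets $\mo^{*}_i \setminus C \subseteq \mo$ differing only in which axioms of $\mo$ have been deleted, and, crucially, the $\subseteq$-minimality of every $\md_i\in\mD$ (each axiom of $\md_j$ being indispensable for spoiling $\mo^{*}_j$, yet each lying in some $\md_i\in\dx{}$). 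I would attempt to convert this tension into a contradiction directly --- e.g.\ by taking $Q$ to be $\subseteq$-minimal and arguing axiom-by-axiom over $\md_j$, or by passing to a counterexample of minimal $|\mo|$ (or $|\mD|$) and squeezing out extra structure (e.g.\ via Prop.~\ref{prop:expl_ent_query_must_neednot_mustnot_include_ax}, which lets us assume $\mo$ contains no axioms outside $U_\mD$) --- but it is precisely here that the argument resists completion, which is why the statement is recorded as a conjecture. Short of a proof, Cor.~\ref{cor:not_q-partition_iff_md_i^(k)=emptyset_for_md_i_in_dnx_k} at least shows that both CQPs and $\dz{}$-empty q-partitions are determined solely by $U_{\dx{}}$, and the large-scale experiments of Sec.~\ref{sec:eval} turned up no q-partition outside $\mathbf{CQP}_\mD$.
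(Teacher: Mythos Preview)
Your analysis is correct and, in fact, goes somewhat further than the paper itself. The statement is recorded in the paper as a \emph{conjecture}, not a theorem; the paper offers no proof but only (i)~the easy inclusion $\mathbf{CQP}_\mD \subseteq \mathbf{QP}_{\mD}^{\bcancel{0}}$ (via Cor.~\ref{cor:canonical_q-partition_is_q-partition} and Prop.~\ref{prop:canonical_q-partition_has_empty_dz}, exactly as you do), (ii)~a reference to a theoretical analysis in the extended version that ``did not succeed in deriving the conjectured contradiction'', and (iii)~empirical evidence from roughly $900\,000$ q-partitions, all of which turned out to be canonical.

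Your contribution beyond the paper's discussion is a sharper articulation of \emph{where} the argument breaks: the step $\bigcap_{\md_i\in\dx{}}\mathit{DC}(\mo^{*}_i) \subseteq \mathit{DC}\bigl((\mo \setminus U_{\dx{}}) \cup \mb \cup U_\Tp\bigr)$ is exactly the missing ingredient, and you correctly observe that entailment in an arbitrary Tarskian logic need not distribute over intersections of theories, so the special structure (common background, $\subseteq$-minimality of diagnoses) must be exploited in some way not yet identified. This diagnosis of the obstacle is consistent with, but more explicit than, the paper's own remarks. Your reduction via Prop.~\ref{prop:suff+nec_criteria_when_partition_is_q-partition} to the single claim ``no $\md_j\in\dnx{}$ satisfies $\md_j\subseteq U_{\dx{}}$'' is also a clean and correct reformulation.
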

\label{etc:conjecture_discussion}
Note, this conjecture is by no means necessary for the proper functioning of our presented algorithms. 
In case Conjecture~\ref{conj:CQPs=QPs} turned out to be wrong, the consequence would be just the invalidity of \emph{perfect} completeness w.r.t.\ all q-partitions achieved by the restriction to only CQPs. Still, we could cope well with that since CQs and CQPs bring along nice computational properties (cf.\ \ref{enum:advantages_of_CQs:no_reasoner} -- \ref{enum:advantages_of_CQs:independence_of_entailments_computed_by_reasoner} on page~\pageref{enum:advantages_of_CQs:no_reasoner}) and prove extremely efficient by the total avoidance of reasoning
(see Sec.~\ref{sec:eval}). Moreover, 
methods not incorporating the canonical notions prove to be strongly incomplete regarding query and QP computation due to their dependence on (the entailments computed by) the used inference engine (cf.\ Advantage~\ref{enum:advantages_of_CQs:independence_of_entailments_computed_by_reasoner} of CQs on page~\pageref{enum:advantages_of_CQs:independence_of_entailments_computed_by_reasoner}). Although executing a brute force search, they sometimes compute only $1\%$ and on average less than $40\%$ 
of the QPs our proposed novel approach is able to find. 

Also, our conducted experiments (see Sec.~\ref{sec:eval}) manifested the successful finding of optimal q-partitions in all evaluated cases for all discussed QSMs $m$ that are also adopted e.g.\ in the works of \citep{dekleer1987,Shchekotykhin2012,Rodler2013}. For, given practical numbers of leading diagnoses per iteration, e.g.\ any number $\geq 5$, cf.\ \citep{Shchekotykhin2012,Rodler2013}, the CQP search space size considered by our strategy proves by far large enough to guarantee the inclusion of (often multiple) goal q-partitions (also for negligibly small thresholds).
Theoretical support for this is given by Cor.~\ref{cor:upper_lower_bound_for_canonical_q-partitions}, empirical support by Figures~\ref{fig:summary_P1}, \ref{fig:scalability} and \ref{fig:naive_vs_new} and their discussions.

\begin{figure}[tb]
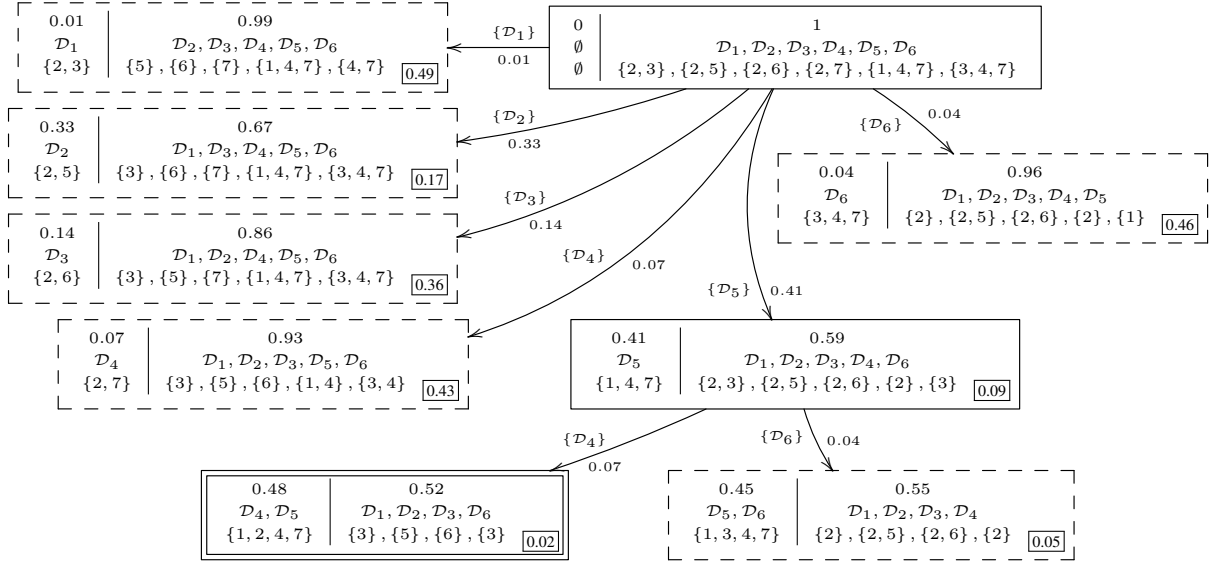

	\setlength{\fboxsep}{1.5pt}
	\tiny
	\xygraph{
		!{<0cm,0cm>;<2cm,0cm>:<0cm,2cm>::}
		!{(3.7,5)}*+[F]{\begin{tabular}{c|c} 
				$0$ & $1$ \\
				$\emptyset$&$\md_1,\md_2,\md_3,\md_4,\md_5,\md_6$ \\
				$\emptyset$&$\setof{2,3},\setof{2,5},\setof{2,6},\setof{2,7},\setof{1,4,7},\setof{3,4,7}$
		\end{tabular} }="init"
		!{(0,5)}*+[F--]{ {\begin{tabular}{c|c}
					$0.01$ & $0.99$ \\
					$\md_1$&$\md_2,\md_3,\md_4,\md_5,\md_6$ \\
					$\setof{2,3}$&$\setof{5},\setof{6},\setof{7},\setof{1,4,7},\setof{4,7}$
			\end{tabular}}_{\framebox[1.3\width]{0.49}} }="lev1_D1"
		!{(0,4.3)}*+[F--]{ {\begin{tabular}{c|c}
					$0.33$ & $0.67$ \\
					$\md_2$&$\md_1,\md_3,\md_4,\md_5,\md_6$ \\
					$\setof{2,5}$&$\setof{3},\setof{6},\setof{7},\setof{1,4,7},\setof{3,4,7}$
			\end{tabular}}_{\framebox[1.3\width]{0.17}} }="lev1_D2"
		!{(0,3.6)}*+[F--]{ {\begin{tabular}{c|c}
					$0.14$ & $0.86$ \\
					$\md_3$&$\md_1,\md_2,\md_4,\md_5,\md_6$ \\
					$\setof{2,6}$&$\setof{3},\setof{5},\setof{7},\setof{1,4,7},\setof{3,4,7}$
			\end{tabular}}_{\framebox[1.3\width]{0.36}} }="lev1_D3"
		!{(0.2,2.9)}*+[F--]{ {\begin{tabular}{c|c}
					$0.07$ & $0.93$ \\
					$\md_4$&$\md_1,\md_2,\md_3,\md_5,\md_6$ \\
					$\setof{2,7}$&$\setof{3},\setof{5},\setof{6},\setof{1,4},\setof{3,4}$
			\end{tabular}}_{\framebox[1.3\width]{0.43}} }="lev1_D4"
		!{(3.7,2.9)}*+[F]{ {\begin{tabular}{c|c}
					$0.41$ & $0.59$ \\
					$\md_5$&$\md_1,\md_2,\md_3,\md_4,\md_6$ \\
					$\setof{1,4,7}$&$\setof{2,3},\setof{2,5},\setof{2,6},\setof{2},\setof{3}$
			\end{tabular}}_{\framebox[1.3\width]{0.09}} }="lev1_D5_best"
		!{(5,4)}*+[F--]{ {\begin{tabular}{c|c}
					$0.04$ & $0.96$ \\
					$\md_6$&$\md_1,\md_2,\md_3,\md_4,\md_5$ \\
					$\setof{3,4,7}$&$\setof{2},\setof{2,5},\setof{2,6},\setof{2},\setof{1}$
			\end{tabular}}_{\framebox[1.3\width]{0.46}} }="lev1_D6"
		!{(1,1.9)}*+[F=]{ {\begin{tabular}{c|c}
					$0.48$ & $0.52$ \\
					$\md_4,\md_5$&$\md_1,\md_2,\md_3,\md_6$ \\
					$\setof{1,2,4,7}$&$\setof{3},\setof{5},\setof{6},\setof{3}$
			\end{tabular} }_{\framebox[1.3\width]{0.02}}}="lev2_D5,D4"
		!{(4.2,1.9)}*+[F--]{ {\begin{tabular}{c|c}
					$0.45$ & $0.55$ \\
					$\md_5,\md_6$&$\md_1,\md_2,\md_3,\md_4$ \\
					$\setof{1,3,4,7}$&$\setof{2},\setof{2,5},\setof{2,6},\setof{2}$
			\end{tabular} }_{\framebox[1.3\width]{0.05}}}="lev2_D5,D6"
		"init":^{0.01}_{\setof{\md_1}}"lev1_D1"
		"init":@/^2em/^{0.33}_{\setof{\md_2}}"lev1_D2"
		"init":@/^4em/^{0.14}_{\setof{\md_3}}"lev1_D3"
		"init":@/^6em/^(0.4){0.07}_(0.45){\setof{\md_4}}"lev1_D4"
		"init":@/_3em/^(0.78){0.41}_(0.75){\setof{\md_5}}"lev1_D5_best"
		"init":@/^1em/^(0.6){0.04}_{\setof{\md_6}}"lev1_D6"
		"lev1_D5_best":@/^1em/^{0.07}_{\setof{\md_4}}"lev2_D5,D4"
		"lev1_D5_best":@/_1em/^(0.5){0.04}_(0.4){\setof{\md_6}}"lev2_D5,D6"
	}
	\caption{\small Search for optimal CQP in phase P1 for example DPI $\exdpi$ (Tab.~\ref{tab:example_dpi_0}) w.r.t.\ $m := \mathsf{ENT}$ and threshold $t_{m} := 0.05$.}
	\label{fig:ex:can_q-part_search_ENT}
\end{figure}
The following example showcases one entire execution of the CQP search performed by phase P1 applied to our running example:
%
%
%
\begin{example}\label{ex:canonical_q-partition_search_ENT} Consider the example DPI $\exdpi$ (Tab.~\ref{tab:example_dpi_0}) and leading diagnoses $\mD = \minD_{\exdpi}$ (see Tab.~\ref{tab:min_diags+conflicts_example_DPI_0}). Let the diagnoses probabilities $\langle p(\md_1),p(\md_2),p(\md_3),p(\md_4),p(\md_5)$, $p(\md_6)\rangle$ for $\md_i \in \mD$ be given by $\tuple{0.01, 0.33, 0.14, 0.07, 0.41, 0.04}$. The search tree for a goal QP w.r.t.\ $m := \mathsf{ENT}$ and $t_{m} := 0.01$ produced by phase P1 is shown in Fig.~\ref{fig:ex:can_q-part_search_ENT}. At this, $\mathsf{ENT}$ denotes the entropy QSM \citep{dekleer1987,Shchekotykhin2012}. Roughly, it evaluates a query the better, the closer the query answer probabilities approach a discrete uniform distribution. Let us therefore assume a very simple heuristic function $h$ which assigns $h(\Pt) = |p(\dx{})-0.5|$ to a QP $\Pt:=\tuple{\dx{}, \dnx{},\dz{}}$ (recall that the probability of positive and negative answers amount to $p(\dx{})$ and $p(\dnx{})$, respectively, for empty $\dz{}$, cf.\ Eq.~\eqref{eq:prob_of_pos_query_answer}) where smaller $h$ values imply more promising QPs w.r.t.\ $\mathsf{ENT}$. 
Further, let us use a pruning function that stops the generation of successors at any QP $\Pt$ whose $p(\dx{})$ probability exceeds $0.5$ (as no descendant node of $\Pt$ can have a better $\mathsf{ENT}$ value than $\Pt$ itself).
For $\mathsf{ENT}$, the optimal QSM-value $m_{opt} = 0$ and thus a QP $\Pt$ is a goal state iff $|m(\Pt)|\leq t_m = 0.01$ (cf.\ Alg.~\ref{algo:query_comp}).

In Fig.~\ref{fig:ex:can_q-part_search_ENT}, a node in the search tree representing the CQP $\Pt_k = \tuple{\dx{k},\dnx{k},\dz{k}}$ is denoted by a frame including a table with three rows where (1)~the topmost row shows $p(\dx{k})\mid p(\dnx{k})$ (relevant for computing QSM $m$ and heuristic function $h$, and for making pruning decision), 
(2)~the middle row depicts $\dx{k} \mid \dnx{k}$ and (3)~the bottommost row gives the standard representation of the CQP (cf.\ Eq.~\eqref{eq:standard_representation_of_can_q-partitions}). The framed value at the bottom right corner of the large frame quotes the heuristic value $h(\Pt_k)$ computed for the CQP $\Pt_k$. No such value for the root node is given since it is not a QP and hence does not qualify as a solution. Furthermore, the (for CQPs) always empty $\dz{k}$ set is omitted. 
A frame is dashed / continuous / double if the associated node is generated (but not expanded) / expanded / a returned goal CQP. 
Arrows represent minimal $\dx{}$-transformations, i.e.\ 
an arrow's destination QP is a result of a minimal $\dx{}$-transformation applied to its source (q-)partition. Arrow labels give the set of diagnoses and the probability mass (i.e.\ the sum of the single diagnoses probabilities) moved from the $\dnx{}$-set of the source (q-)partition to the $\dx{}$-set of the destination QP. 

Starting from the root node (initial state) representing the partition $\tuple{\emptyset,\mD,\emptyset}$, the successor function $S_{\mathsf{init}}$ generates all possible CQPs resulting from the transfer of a single diagnosis from the $\dnx{}$-set of the initial state to its $\dx{}$-set. Since there are six diagnoses in $\mD$,  the initial state has exactly six successors (Prop.~\ref{prop:S_init_sound+complete}). 
From all these generated neighbor nodes of the initial state, the best one according to the heuristic function $h$ is selected for expansion. In this case, it is the CQP $\Pt_1 := \tuple{\setof{\md_5},\setof{\md_1,\md_2,\md_3,\md_4,\md_6},\emptyset}$ as it exhibits the best (lowest) heuristic value $0.09$ among all the six open nodes.
	
For $\Pt_1$, $S_{\mathsf{next}}$ generates exactly two CQPs that result from it by a minimal $\dx{}$-transformation (Cor.~\ref{cor:S_next_sound+complete}). This can be seen by considering the traits of the diagnoses in $\dnx{}(\Pt_1)$ shown in the right column of the third row in the table representing $\Pt_1$. Among the five traits there are only two $\subseteq$-minimal ones, i.e.\ $\md_4^{(1)} := \setof{2}$ and $\md_6^{(1)} := \setof{3}$. All the other traits are proper supersets of either of these. This means that all successors of $\Pt_1$ can be constructed by shifting either $\md_4$ or $\md_6$ from $\dnx{}(\Pt_1)$ to $\dx{}(\Pt_1)$ yielding $\Pt_{21} := \tuple{\setof{\md_4,\md_5},\setof{\md_1,\md_2,\md_3,\md_6},\emptyset}$ and $\Pt_{22} := \tuple{\setof{\md_5,\md_6},\setof{\md_1,\md_2,\md_3,\md_4},\emptyset}$, respectively.  
	
At this stage, the best QP among the two successors $\Pt_{21}$ and $\Pt_{22}$ of $\Pt_1$ (depth-first, \emph{local} best-first) is determined for expansion by means of $h$. As $p(\dx{}(\Pt_{21}))$ differs by less ($0.02$) from $0.5$ than $p(\dx{}(\Pt_{22}))$ ($0.05$), $\Pt_{21}$ is chosen. 
However, as $t_{m}$ has been set to $0.01$ and $m(\Pt_{21}) \approx 0.001 \leq 0.01$, $\Pt_{21}$ is a goal and returned as the solution of phase P1 of Alg.~\ref{algo:query_comp}.
%
%
Note, there were no backtrackings or tree prunings necessary as the used heuristic function guided the search directly towards a goal state. This behavior could also be frequently observed in our experiments (see Sec.~\ref{sec:eval}).
\qed
\end{example}

\paragraph{
	Complexity of P1.}
Concerning time, the worst-case scenario occurs if the search in P1 explores the entire CQP search space, e.g.\ because no goal CQP exists. As becomes directly evident through Cor.~\ref{cor:upper_lower_bound_for_canonical_q-partitions}, the worst-case time complexity of P1, assuming one time unit for the processing of a CQP, is as follows:
\begin{proposition}\label{prop:P1_time_complexity}
P1 terminates in $O(\left|\setof{U_{\dx{}}\,|\, \emptyset \subset \dx{} \subset \mD} \setminus \setof{U_\mD}\right|) \subseteq O(2^{|\mD|})$ time.
\end{proposition}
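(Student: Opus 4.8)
The plan is to derive the bound in two moves: first, identify the set of states that P1 traverses in the worst case with the set of canonical q-partitions (CQPs) w.r.t.\ $\mD$; second, count this set and bound the count by $2^{|\mD|}$. The first move rests entirely on machinery already in place: by Prop.~\ref{prop:S_init_sound+complete} and Cor.~\ref{cor:S_next_sound+complete} the successor function $S_{\mathsf{all}}$ generates \emph{all and only} CQPs reachable by minimal $\dx{}$-transformations, and by Theorem~\ref{theorem:P1_sound_complete} the backtracking search built on this successor relation is sound and complete w.r.t.\ CQPs. Hence, in the genuine worst case that the proposition addresses — no CQP is a goal, or a single goal is found only after everything else has been explored — every state P1 ever generates or expands, apart from the non-q-partition initial state $\tuple{\emptyset,\mD,\emptyset}$, is a CQP w.r.t.\ $\mD$, and conversely every CQP gets visited.

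Next I would account for the work. Charging one time unit to the processing of each CQP (as the statement explicitly stipulates), and noting that the CQP search space is finite and that along any root-to-node path the component $\dx{}$ grows strictly (every minimal $\dx{}$-transformation satisfies $\dx{i}\subset\dx{j}$), so the depth-first backtracking cannot loop and therefore terminates, the total running time is $O(c)$, where $c$ is the number of CQPs w.r.t.\ $\mD$. By Cor.~\ref{cor:upper_lower_bound_for_canonical_q-partitions} this number is exactly
\[
c \;=\; \bigl|\{U_{\dx{}}\mid \emptyset \subset \dx{} \subset \mD\}\setminus\{U_\mD\}\bigr| ,
\]
which already yields the first claimed membership. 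It then remains to bound this quantity by $2^{|\mD|}$: consider the map $\dx{}\mapsto U_{\dx{}}$ on the subsets of $\mD$; its domain is the power set of $\mD$, of cardinality $2^{|\mD|}$, so its image $\{U_{\dx{}}\mid \dx{}\subseteq\mD\}$ has at most $2^{|\mD|}$ elements, and restricting to proper non-empty $\dx{}$ and deleting $U_\mD$ can only shrink the set. Thus $c\le 2^{|\mD|}$ and $O(c)\subseteq O(2^{|\mD|})$.

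The only point that is more than bookkeeping is the charging scheme, so that is where I would spend the care. One must ensure (i) that a CQP is not re-expanded arbitrarily often along different branches of the backtracking tree, and (ii) that the per-node work — computing the traits $\md_i^{(k)}$, forming the equivalence classes w.r.t.\ $\sim_k$, and the set comparisons against $U_\mD$ — is polynomially bounded, so that it is legitimately absorbed into the ``one time unit per CQP'' abstraction rather than producing an extra multiplicative branching factor. I would resolve (i) by invoking Cor.~\ref{cor:not_q-partition_iff_md_i^(k)=emptyset_for_md_i_in_dnx_k} (and the remark preceding Cor.~\ref{cor:upper_lower_bound_for_canonical_q-partitions}), namely that $U_{\dx{}}$ determines a CQP uniquely, so a standard closed-list/duplicate check lets each CQP be expanded at most once; and (ii) by observing that all per-node operations are set operations over $\mo$ and $\mD$, hence polynomial in the input size. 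Neither of these affects the order of growth in $|\mD|$, so the stated bound $O(\,|\{U_{\dx{}}\mid\emptyset\subset\dx{}\subset\mD\}\setminus\{U_\mD\}|\,)\subseteq O(2^{|\mD|})$ follows.
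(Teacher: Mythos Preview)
Your proof is correct and follows essentially the same route as the paper: invoke Cor.~\ref{cor:upper_lower_bound_for_canonical_q-partitions} to identify the worst-case number of processed states with the number of CQPs, then bound that number by $2^{|\mD|}$ via the observation that $\dx{}\mapsto U_{\dx{}}$ has domain of size $2^{|\mD|}$. The paper's proof is terser (it phrases the second step as ``the worst case arises when all sets in $\mD$ are mutually disjoint'', making the map injective), and your additional care about termination and the duplicate-expansion issue (i) goes beyond what the paper argues explicitly, but neither changes the substance.
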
  
\begin{proof}
The left $O(.)$ expression follows from Cor.~\ref{cor:upper_lower_bound_for_canonical_q-partitions}. The inclusion between the $O(.)$ expressions holds due to fact that the worst case for the left $O(.)$ expression arises exactly when all sets in $\mD$ are mutually disjoint. Because in this case each set $U_{\dx{i}}$ differs from all other sets $U_{\dx{j}}$ for $\dx{i} \neq \dx{j}$. Hence, excluding $U_{\emptyset}$ and $U_\mD$, there are exactly $2^{|\mD|}-2$ such sets. 
\end{proof}
Note, the time complexity is equal to the size of the full CQP search tree. The worst-case space complexity of P1, however, is (much) lower in general. As explained above, this is due to the depth-first, local best-first backtracking strategy pursued by the CQP search which implies a linear space complexity $O(b*d)$ where $b$ is the branching factor, i.e.\ the maximal number of generated successors for any node, and $d$ the maximal depth of the search tree. In fact, given that $\mD$ is the considered leading diagnoses set, no node (partition) occurring in the search can have more than $|\mD|$ successors (in terms of Def.~\ref{def:minimal_transformation}). The reason is that $S_{\mathsf{init}}$ generates exactly $|\mD|$ successors (Prop.~\ref{prop:S_init_sound+complete}) and $S_{\mathsf{next}}$ generates at most $|\mD|-1$ successors (Cor.~\ref{cor:S_next_sound+complete}). The latter holds, first, as $S_{\mathsf{next}}$ is only applied to a QP with non-empty $\dx{}$ which is why $\dnx{}$ can include only at most $|\mD|-1$ diagnoses that might be transferred to $\dx{}$ in the course of a minimal \text{$\dx{}$-transformation}, and, second, since a set of  $|\mD|-1$ elements cannot be partitioned into more than the same number of equivalence classes, i.e.\ $|\mathbf{EQ}^{\sim_k}_{\subseteq}| \leq |\mD|-1$ (cf.\ Cor.~\ref{cor:S_next_sound+complete}). Hence, $b = |\mD|$. The search tree depth $d = |\mD|-1$ because, starting from the initial state $\langle \dx{}, \dnx{}, \emptyset\rangle = \langle \emptyset, \mD, \emptyset\rangle$, at least one element from $\dnx{}$ is transferred to $\dx{}$ on any edge along any downward search tree branch (such that the result is still a QP, i.e.\ $\dnx{} \neq \emptyset$, cf.\ Def.~\ref{def:minimal_transformation}). All in all we have:
\begin{proposition}
P1 consumes $O(|\mD|^2)$ space.	
\end{proposition}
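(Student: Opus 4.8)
The claim is that phase P1 consumes $O(|\mathcal{D}|^2)$ space. This is a space-complexity bound for the depth-first, local best-first backtracking search.

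The plan is to argue via the standard fact that a depth-first backtracking search over a tree consumes space proportional to the product of the branching factor $b$ and the maximal tree depth $d$, i.e.\ $O(b \cdot d)$ — because at any point along a root-to-leaf path one needs to store, for each node on that path, the list of its unexplored sibling successors. So the proof reduces to (i) bounding $b$, the maximal number of successors any partition/node can have, and (ii) bounding $d$, the maximal depth of the search tree.

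For the branching factor: the root (initial state) has exactly $|\mathcal{D}|$ successors by Prop.~\ref{prop:S_init_sound+complete} (one per diagnosis transferred into $\dx{}$). Any non-root node is a CQP $\Pt_k = \tuple{\dx{k},\dnx{k},\emptyset}$ with $\dx{k}\neq\emptyset$; by Cor.~\ref{cor:S_next_sound+complete} its successors are indexed by $\mathbf{EQ}^{\sim_k}_{\subseteq}$, a subset of the equivalence classes of $\sim_k$ over $\dnx{k}$. Since $\dx{k}\neq\emptyset$ we have $|\dnx{k}|\leq |\mathcal{D}|-1$, and a set of $|\mathcal{D}|-1$ elements has at most $|\mathcal{D}|-1$ equivalence classes, so at most $|\mathcal{D}|-1$ successors. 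Hence $b = |\mathcal{D}|$ overall. For the depth: each minimal $\dx{}$-transformation along an edge strictly enlarges $\dx{}$ by at least one diagnosis (Def.~\ref{def:minimal_transformation}), starting from $\dx{}=\emptyset$ at the root, and the result must still be a q-partition, hence have $\dnx{}\neq\emptyset$; so $\dx{}$ can grow from $0$ to at most $|\mathcal{D}|-1$, giving $d = |\mathcal{D}|-1$. Multiplying, $O(b\cdot d) = O(|\mathcal{D}|\cdot(|\mathcal{D}|-1)) = O(|\mathcal{D}|^2)$. One also needs to note that representing a single node (the partition $\dx{},\dnx{}$, which are subsets of $\mathcal{D}$) costs $O(|\mathcal{D}|)$, but since this is dominated by the $O(|\mathcal{D}|^2)$ from the path-with-siblings structure, the overall bound is unaffected — or one may fold the per-node $O(|\mathcal{D}|)$ representation cost into the analysis, noting the path stores $O(|\mathcal{D}|)$ nodes each of size $O(|\mathcal{D}|)$, again $O(|\mathcal{D}|^2)$.

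The main (minor) obstacle is being precise about what "space consumed by P1" includes: the algorithm must also hold the input $\mathcal{D}$ and the best-found CQP, but these are $O(|\mathcal{D}|)$ and subsumed. A subtle point worth a sentence is that the analysis presumes the search does \emph{not} keep a closed list of all visited states (which would blow up to $O(2^{|\mathcal{D}|})$ worst case) — this is exactly the point the surrounding text makes when contrasting the local best-first depth-first backtracking strategy with a global best-first strategy. So the proof I would write is essentially two short paragraphs: first recall the $O(b\cdot d)$ space bound for backtracking search without a closed list (citing \citep[Chap.~4]{CP_Handbook} or the paper's own earlier remark), then instantiate $b \le |\mathcal{D}|$ and $d \le |\mathcal{D}|-1$ using Prop.~\ref{prop:S_init_sound+complete} and Cor.~\ref{cor:S_next_sound+complete} and the monotone growth of $\dx{}$, and conclude. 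Since most of this reasoning is already spelled out in the paragraph immediately preceding the proposition statement, the formal proof can legitimately be quite terse:

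\begin{proof}
As argued above, the CQP search in P1 is a depth-first, local best-first backtracking search that maintains only the current root-to-node path together with, for each node on that path, its as-yet-unexpanded successors; it keeps no global closed list. Such a search uses $O(b\cdot d)$ space, where $b$ is the branching factor (maximal number of successors of any node) and $d$ the maximal search-tree depth (cf.\ \citep[Chap.~4]{CP_Handbook}). By Prop.~\ref{prop:S_init_sound+complete} the initial state has exactly $|\mD|$ successors, and by Cor.~\ref{cor:S_next_sound+complete} any CQP $\Pt_k = \tuple{\dx{k},\dnx{k},\emptyset}$ with $\dx{k}\neq\emptyset$ has at most $|\mathbf{EQ}^{\sim_k}_{\subseteq}| \leq |\dnx{k}| \leq |\mD|-1$ successors; hence $b = |\mD|$. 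Moreover, every minimal $\dx{}$-transformation strictly enlarges $\dx{}$ (Def.~\ref{def:minimal_transformation}), which starts as $\emptyset$ at the root and, since each generated partition is a q-partition with $\dnx{}\neq\emptyset$, never grows beyond $|\mD|-1$ diagnoses; thus $d = |\mD|-1$. Finally, each node stores the pair $\tuple{\dx{},\dnx{}}$ of subsets of $\mD$, i.e.\ $O(|\mD|)$ space per node, and the path (with siblings) comprises $O(|\mD|\cdot|\mD|)$ such entries. Altogether P1 consumes $O(|\mD|\cdot|\mD|) = O(|\mD|^2)$ space.
\end{proof}
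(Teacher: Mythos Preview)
Your proposal is correct and follows essentially the same argument as the paper: the reasoning immediately preceding the proposition already establishes $b=|\mD|$ (via Prop.~\ref{prop:S_init_sound+complete} and Cor.~\ref{cor:S_next_sound+complete}) and $d=|\mD|-1$ (via the strict growth of $\dx{}$ under minimal $\dx{}$-transformations), then invokes the standard $O(b\cdot d)$ space bound for depth-first backtracking. One minor bookkeeping slip: in your final sentence you count $O(|\mD|^2)$ stored nodes \emph{and} $O(|\mD|)$ space per node, which multiplied would give $O(|\mD|^3)$; the paper (and your own informal plan) treat node storage as unit cost, so either drop the per-node size remark or note that only the $O(|\mD|)$ nodes on the current path need a full $O(|\mD|)$ representation while siblings can be encoded by the single diagnosis distinguishing them from their parent.
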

As we will demonstrate in Sec.~\ref{sec:eval} (see Fig.~\ref{fig:scalability}), these benign complexity results enable the search to explore search spaces with a theoretical complexity of up to $2^{500}$ within reasonable time yielding a result whose QSM-value differs negligibly from the theoretical QSM-optimum.

\subsubsection{Phase 2: Selecting an Optimal Query for the Optimal Q-Partition}
\label{sec:P2}

In this section, we describe the functioning of \textsc{optimizeQueryForQPartition} in Alg.~\ref{algo:query_comp}.
By now, we have demonstrated in Section~\ref{sec:P1} how an optimal q-partition $\Pt$ w.r.t.\ a given QSM $m$ 
can be computed by our algorithm's phase P1. What we also have at hand so far is one particular well-defined query for the identified q-partition $\Pt$, namely the CQ of $\Pt$. If we impose no further constraints on the query than an optimal discrimination (as per the used QSM) among (leading) diagnoses, which is solely determined by the query's q-partition, 
then we are already done and can simply output the CQ and ask the oracle to answer it. However, in many practical scenarios we can expect that a least quality criterion apart from efficient diagnoses discrimination, i.e.\ the minimization of the number of queries, is the $\subseteq$-minimality of queries.
This means that we will usually want to minimize the number of sentences appearing in a query and hence the effort given for the oracle to walk through them and answer the query -- \emph{while guaranteeing that the optimal discrimination properties (described by $\Pt$) are not affected by this minimization}. In other words, we consider the consultation of the oracle a very expensive ``operation'', as it might involve human interaction and time, the use of high-cost instruments or the call of potentially expensive procedures. 
%

However, exactly these latter factors might motivate a targeted selection of a query which is not solely based on $\subseteq$-minimality, but on additional query (answering) cost considerations. To enable the fulfillment of such advanced query quality criteria, we assume some QCM that our method allows the user to specify (see page~\pageref{etc:QCM}).
Hence, the problems we tackle now are 
\begin{enumerate}[noitemsep]
	\item \label{enum:P2:problem_subset-minimal_query} how to obtain a $\subseteq$-minimal query for the given optimal q-partition $\Pt$ returned by phase P1, and
	\item \label{enum:P2:problem_optimal-QCM_query} how to calculate a query that optimizes the QCM among all $\subseteq$-minimal queries existent for $\Pt$.
\end{enumerate}
%
Let us first devote our attention to problem \ref{enum:P2:problem_subset-minimal_query}.

\paragraph{Computation of $\subseteq$-Minimal Queries for a Fixed Q-Partition.}
At first sight, we might simply choose to employ the same approach that has been exploited in \citep{Shchekotykhin2012,Rodler2013,Rodler2015phd}. This approach involves the usage of a modification of the \textsc{QuickXplain} algorithm due to \citep{junker04} which implements a divide-and-conquer strategy with regular calls to a reasoning service.\footnote{A formal proof of \textsc{QuickXplain}'s correctness, a detailed description of its use in diagnosis tasks and related examples can be found in \citep[Sec.~4.4.1]{Rodler2015phd}.}
The output is 
one $\subseteq$-minimal 
subset $Q'$ 
of a given query $Q$ (in our case the CQ of $\Pt$)
such that the minimization preserves the q-partition, i.e.\ 
such that $\Pt_\mD(Q') = \Pt_\mD(Q)$. One concrete implementation of a suitable \textsc{QuickXplain}-modification for this purpose is the \textsc{minQ} algorithm presented and profoundly analyzed in \citep[Sec.~8.3 ff.]{Rodler2015phd}. Although the number of calls to a reasoner required by \textsc{minQ} 
is polynomial in $O(|Q'| \log_2 \frac{|Q|}{|Q'|})$, we will learn in this section that we can in fact do without any calls to a reasoner. This is due to the task constituting a search for a $\subseteq$-minimal \emph{explicit-entailments} query $Q' \subseteq Q$ (because the CQ $Q$ is an explicit-entailments query).

Let $\mQ^{(\mathsf{EE},\Pt)}_\mD$ denote the subset of $\mQ_\mD$ containing all explicit-entailments queries associated with a the q-partition $\Pt$.
Subsequently, we analyze the shape of $\mQ^{(\mathsf{EE},\Pt)}_\mD$. The results will be exploited to solve problems \ref{enum:P2:problem_subset-minimal_query} and \ref{enum:P2:problem_optimal-QCM_query} stated above. 
More precisely, let us consider the lattice $(2^{\Disc_{\mD}},\subseteq)$ 
consisting of all subsets of $\Disc_\mD$ (cf.\ Def.~\ref{def:discax}) which are partially ordered by $\subseteq$. Recall that by Prop.~\ref{prop:expl_ent_query_must_neednot_mustnot_include_ax} we can w.l.o.g.\ restrict the focus on queries that are subsets of $\Disc_\mD$ (because the inclusion of sentences from $\mo \setminus \Disc_\mD$ does not affect the q-partition of an explicit-entailments query). We are now interested in the elements in the lattice $(2^{\Disc_{\mD}},\subseteq)$ that constitute upper and, more importantly, lower bounds of the partially ordered set $(\mQ^{(\mathsf{EE},\Pt)}_\mD,\subseteq_{\Pt})$ where $\subseteq_{\Pt}$ denotes the query-subset relation under preservation of the q-partition $\Pt$. That is, $Q_i \subseteq_{\Pt} Q_j$ holds for $Q_i,Q_j \in \mQ^{(\mathsf{EE},\Pt)}_\mD$ iff $Q_i \subseteq Q_j$ and $\Pt_\mD(Q_i) = \Pt_\mD(Q_j)$. Given such upper and lower bounds, we have a complete description of $\mQ^{(\mathsf{EE},\Pt)}_\mD$. 
The next proposition illuminates this aspect. Let for this purpose $\mathsf{MHS}(X)$ denote the set of all minimal hitting sets of some collection of sets $X$ (cf.\ Def.~\ref{def:hs}).
\begin{proposition}\label{prop:explicit-ents_query_lower+upper_bound}
Let $\mD \subseteq \minD_{\tuple{\mo,\mb,\Tp,\Tn}_\RQ}$ and $\Pt = \langle \dx{}, \dnx{}, \emptyset\rangle$ be a q-partition w.r.t.\ $\mD$. Then $Q \subseteq \Disc_\mD$ is a query with q-partition $\Pt$ iff there is some $H \in \mathsf{MHS}(\dnx{})$ such that $H \subseteq Q \subseteq Q_{\mathsf{can}}(\dx{})$. 
\end{proposition}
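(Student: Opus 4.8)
The plan is to prove both directions by reducing the query-verification task to set containment, exploiting the machinery already established for explicit-entailments queries (Prop.~\ref{prop:ee-query_q-partition_construction_by_means_of_set_comparison} and Prop.~\ref{prop:explicit-entailments_queries_have_empty_dz}). Throughout, recall that $\Disc_\mD = U_\mD \setminus I_\mD \subseteq \mo$, so every $Q \subseteq \Disc_\mD$ is automatically an explicit-entailments query, and hence by Prop.~\ref{prop:ee-query_q-partition_construction_by_means_of_set_comparison} its q-partition is fully determined by the subset checks $\mo \setminus \md \supseteq Q$: we have $\md \in \dx{}(Q)$ iff $\mo \setminus \md \supseteq Q$, and $\md \in \dnx{}(Q)$ iff $\mo \setminus \md \not\supseteq Q$ (and $\dz{}(Q) = \emptyset$ by Prop.~\ref{prop:explicit-entailments_queries_have_empty_dz}). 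The key reformulation I will use repeatedly is that $\mo \setminus \md \supseteq Q$ is equivalent to $Q \cap \md = \emptyset$ (since $Q \subseteq \mo$), and thus $\mo \setminus \md \not\supseteq Q$ is equivalent to $Q \cap \md \neq \emptyset$, i.e.\ to $Q$ \emph{hitting} $\md$. Also note $Q_{\mathsf{can}}(\dx{}) = (\mo \setminus U_{\dx{}}) \cap \Disc_\mD$ by Def.~\ref{def:canonical_query}.

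\medskip\noindent\textbf{($\Leftarrow$)} Assume $H \in \mathsf{MHS}(\dnx{})$ with $H \subseteq Q \subseteq Q_{\mathsf{can}}(\dx{})$. First, since $H$ hits every $\md \in \dnx{}$ and $H \subseteq Q$, also $Q$ hits every $\md \in \dnx{}$, so by the reformulation above $\dnx{} \subseteq \dnx{}(Q)$; in particular $\dnx{}(Q) \neq \emptyset$. Second, since $Q \subseteq Q_{\mathsf{can}}(\dx{}) \subseteq \mo \setminus U_{\dx{}}$, for every $\md \in \dx{}$ we have $Q \cap \md \subseteq Q \cap U_{\dx{}} = \emptyset$, hence $\mo \setminus \md \supseteq Q$ and so $\dx{} \subseteq \dx{}(Q)$; in particular $\dx{}(Q) \neq \emptyset$, so $Q$ is a query. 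It remains to upgrade the two inclusions to equalities. Since $\dz{}(Q) = \emptyset$ (Prop.~\ref{prop:explicit-entailments_queries_have_empty_dz}) and $\langle\dx{}(Q),\dnx{}(Q),\dz{}(Q)\rangle$ is a partition of $\mD$, we have $\dx{}(Q) \,\dot\cup\, \dnx{}(Q) = \mD = \dx{} \,\dot\cup\, \dnx{}$; combined with $\dx{} \subseteq \dx{}(Q)$ and $\dnx{} \subseteq \dnx{}(Q)$ this forces $\dx{}(Q) = \dx{}$ and $\dnx{}(Q) = \dnx{}$. Hence $\Pt_\mD(Q) = \Pt$.

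\medskip\noindent\textbf{($\Rightarrow$)} Assume $Q \subseteq \Disc_\mD$ is a query with $\Pt_\mD(Q) = \Pt$. For the upper bound: for every $\md \in \dx{} = \dx{}(Q)$ we have $Q \cap \md = \emptyset$ (by the reformulation), so $Q \cap U_{\dx{}} = \emptyset$, i.e.\ $Q \subseteq \mo \setminus U_{\dx{}}$; together with $Q \subseteq \Disc_\mD$ this gives $Q \subseteq (\mo \setminus U_{\dx{}}) \cap \Disc_\mD = Q_{\mathsf{can}}(\dx{})$. For the lower bound: for every $\md \in \dnx{} = \dnx{}(Q)$ we have $Q \cap \md \neq \emptyset$, i.e.\ $Q$ is a hitting set of $\dnx{}$; since $Q \subseteq \Disc_\mD \subseteq U_\mD \subseteq U_{\dnx{}}$ (using $\dx{}(Q)\ne\emptyset$ so $\dnx{}\ne\mD$... actually $Q\subseteq U_\mD$ suffices together with each $\md\subseteq U_\mD$), $Q$ is a hitting set in the sense of Def.~\ref{def:hs}. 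Every hitting set contains a minimal one, so there is $H \in \mathsf{MHS}(\dnx{})$ with $H \subseteq Q$. Chaining the two bounds yields $H \subseteq Q \subseteq Q_{\mathsf{can}}(\dx{})$, as required.

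\medskip\noindent\textbf{Main obstacle.} The routine directions are the containment bookkeeping; the one point deserving care is the promotion of the inclusions $\dx{} \subseteq \dx{}(Q)$ and $\dnx{} \subseteq \dnx{}(Q)$ to equalities in the ($\Leftarrow$) direction. This is exactly where the emptiness of $\dz{}(Q)$ for explicit-entailments queries (Prop.~\ref{prop:explicit-entailments_queries_have_empty_dz}) is essential: without it, diagnoses could ``leak'' into $\dz{}(Q)$ and the two inclusions would not be forced to be tight. A secondary subtlety is verifying the set-theoretic hitting-set side condition $Q \subseteq U_{\dnx{}}$ (and $H \subseteq U_{\dnx{}}$) demanded by Def.~\ref{def:hs}; this follows because $Q \subseteq \Disc_\mD \subseteq U_\mD$ and, since $\dx{}(Q)\neq\emptyset$ and $\dz{}=\emptyset$ imply $\dnx{}\subsetneq\mD$ is possible but each $\md_j\in\dnx{}$ still satisfies $\md_j\subseteq U_\mD$, we in fact only need $Q\cap U_{\dnx{}}$-membership for elements of $Q$ that hit some $\md_j\in\dnx{}$, which holds by construction. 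I would double-check in the write-up that $\mathsf{MHS}(\dnx{})$ is nonempty, i.e.\ that $\dnx{}$ contains no empty set — true because each $\md_j$ is a nonempty diagnosis — and that the existence of a minimal hitting set below an arbitrary hitting set is valid here, which holds by finiteness of $\mo$.
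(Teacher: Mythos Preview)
Your proof is correct and follows essentially the same route as the paper, which merely packages the two bounds into separate lemmas (Lemmata~\ref{lem:explicit-ents_query_lower_bound}, \ref{lem:explicit-ents_query_upper_bound}, \ref{lem:CQ_equal_to_U_D_setminus_U_D+}) but uses the identical set-containment reduction via Prop.~\ref{prop:ee-query_q-partition_construction_by_means_of_set_comparison}. One small cleanup: your check that $Q \subseteq U_{\dnx{}}$ (needed for Def.~\ref{def:hs}) is muddled---the chain $U_\mD \subseteq U_{\dnx{}}$ is false in general; instead use the upper bound you have already established, $Q \subseteq Q_{\mathsf{can}}(\dx{}) = U_\mD \setminus U_{\dx{}} \subseteq U_{\dnx{}}$ (the last inclusion because $U_\mD = U_{\dx{}} \cup U_{\dnx{}}$ when $\dz{} = \emptyset$).
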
 
That is, the construction of a $\subseteq$-minimal explicit-entailments query for a fixed q-partition $\tuple{\dx{},\dnx{},\emptyset}$ requires finding a minimal hitting set of all diagnoses in $\dnx{}$. As regards upper and lower bounds of $\mQ^{(\mathsf{EE},\Pt)}_\mD$, the conclusion is that there are generally multiple lower bounds (given by all these minimal hitting sets) and a unique upper bound (given exactly by the CQ for $\Pt$). Therefore, $(\mQ^{(\mathsf{EE},\Pt)}_\mD,\subseteq_{\Pt})$ is \emph{a join-semilattice} (every subset of $\mQ^{(\mathsf{EE},\Pt)}_\mD$ has a least upper bound or supremum), but\emph{ not (necessarily) a meet-semilattice} (not every subset of $\mQ^{(\mathsf{EE},\Pt)}_\mD$ needs to have a greatest lower bound or infimum). 
As a consequence of this, there is generally an exponential number (in $|\dnx{}|$) of $\subseteq$-minimal queries for a fixed $\Pt$ -- despite the restriction to just explicit-entailments queries.
And, the CQ for $\Pt$ is the explicit-entailments query of maximal size (and thus the one containing the most information) for $\Pt$. We will leverage this fact to produce the most yielding query enhancement for $\Pt$ using the CQ of it in phase P3.
%
%

Hence, by Prop.~\ref{prop:explicit-ents_query_lower+upper_bound}, the computation of all $\subseteq$-minimal reductions of the CQ for the 
q-partition $\Pt = \tuple{\dx{},\dnx{},\emptyset}$ under preservation of $\Pt$ is possible by using e.g.\ the classical $\textsc{HS-Tree}$ \citep{Reiter87} (or some other hitting set algorithm mentioned in Sec.~\ref{sec:sequential_diagnosis_algo}). Let the complete hitting set tree produced by $\textsc{HS-Tree}$ for $\dnx{}$ be denoted by $T$. Then the set of all $\subseteq$-minimal queries with associated q-partition $\Pt$ is given by 
\begin{align*}
\setof{H(\mathsf{n})\,|\,\mathsf{n} \mbox{ is a node of $T$ labeled by $valid$ ($\checkmark$)}}
\end{align*}
where $H(\mathsf{n})$ denotes the set of edge labels on the path from the root node to the node $\mathsf{n}$ in $T$. 
We want to make the reader explicitly aware of the fact that a critical source of complexity when constructing a hitting set tree is 
the computation of the node labels which might be very expensive (cf.\ \citep{Reiter87,chandrasekaran2011algorithms}), e.g.\ when calls to a reasoning service are required. For instance, if a propositional satisfiability checker is employed, each consistency check it performs is already NP-complete \citep{cook1971}. 
%
In our situation, however, all the sets used to label the nodes of the tree are already \emph{explicitly given}. Hence the construction of the hitting set tree will usually be very efficient in the light of the fact that the number of diagnoses in $\dnx{}$ is bounded above by the predefined fixed parameter $|\mD|$ (which is normally relatively small, e.g.\ $\approx 10$, cf.\ \citep{Shchekotykhin2012,Rodler2013}). Apart from that, we are usually satisfied with a single $\subseteq$-minimal query which implies that we could stop the tree construction immediately after having found the first node labeled by $valid$.

Although this search is already very efficient, it can be even further accelerated. The key observation to this end is that each explicit-entailments query w.r.t.\ $\Pt_k = \tuple{\dx{k},\dnx{k},\emptyset}$ 
must not include any axioms in $U_{\dx{k}}$, which follows from 
Prop.~\ref{prop:explicit-ents_query_lower+upper_bound} and Lem.~\ref{lem:CQ_equal_to_U_D_setminus_U_D+}. This brings us back to
the concept of the trait $\md_i^{(k)} := \md_i \setminus U_{\dx{k}}$ (cf.\ Def.~\ref{def:trait} and Eq.~\eqref{eq:md_i^(k)}) 
of a diagnosis $\md_i \in \dnx{k}$ 
given $\Pt_k$. Let in the following $\mathsf{Tr}(\Pt_k)$ denote the set of all traits of diagnoses in $\dnx{k}$ w.r.t.\ the q-partition $\Pt_k$.
As a consequence of Prop.~\ref{prop:explicit-ents_query_lower+upper_bound}, we can now state the following:
\begin{corollary}\label{cor:min_exp-ents_queries_are_minHS_of_all_traits_of_diags_in_Dnx}
	Let $\mD \subseteq \minD_{\tuple{\mo,\mb,\Tp,\Tn}_\RQ}$ and $\Pt_k = \langle \dx{k}, \dnx{k}, \emptyset\rangle$ be a q-partition w.r.t.\ $\mD$. Then $Q \subseteq \Disc_\mD$ is a $\subseteq$-minimal query with q-partition $\Pt_k$ iff $Q = H$ for some $H \in \mathsf{MHS}(\mathsf{Tr}(\Pt_k))$.
\end{corollary}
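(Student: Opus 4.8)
The plan is to derive Corollary~\ref{cor:min_exp-ents_queries_are_minHS_of_all_traits_of_diags_in_Dnx} directly from Proposition~\ref{prop:explicit-ents_query_lower+upper_bound} by translating the ``hitting set of $\dnx{k}$ between a minimal hitting set and the canonical query'' characterization into a characterization purely in terms of traits. First I would recall that, by Lemma~\ref{lem:CQ_equal_to_U_D_setminus_U_D+} (invoked in the text just before the corollary), $Q_{\mathsf{can}}(\dx{k}) = U_\mD \setminus U_{\dx{k}}$, and hence every $Q \subseteq Q_{\mathsf{can}}(\dx{k})$ satisfies $Q \cap U_{\dx{k}} = \emptyset$. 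Consequently, for any $\md_i \in \dnx{k}$, the sets $Q \cap \md_i$ and $Q \cap (\md_i \setminus U_{\dx{k}}) = Q \cap \md_i^{(k)}$ coincide. This is the single observation that does all the work: intersecting with $\md_i$ is the same as intersecting with its trait $\md_i^{(k)}$, as long as we only look at candidate queries living inside $Q_{\mathsf{can}}(\dx{k}) \subseteq \Disc_\mD$.

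Next I would establish the two directions. For the ``if'' direction, suppose $Q = H$ for some $H \in \mathsf{MHS}(\mathsf{Tr}(\Pt_k))$. Then $H$ hits every trait $\md_i^{(k)}$, and since $H$ can be taken $\subseteq \Disc_\mD$ (each trait is a subset of $U_\mD$, and since $\Pt_k$ is a q-partition each trait is nonempty and disjoint from $U_{\dx{k}}$, so minimal hitting sets of the traits lie in $U_\mD \setminus U_{\dx{k}} = Q_{\mathsf{can}}(\dx{k}) \subseteq \Disc_\mD$), we get $H \subseteq Q_{\mathsf{can}}(\dx{k})$; moreover $H$ hits every $\md_i$ in $\dnx{k}$ by the coincidence noted above, so $H$ is a hitting set of $\dnx{k}$ lying between some element of $\mathsf{MHS}(\dnx{k})$ and $Q_{\mathsf{can}}(\dx{k})$ — indeed, $H$ itself restricted appropriately is such a minimal hitting set. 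By Proposition~\ref{prop:explicit-ents_query_lower+upper_bound}, $Q = H$ is a query with q-partition $\Pt_k$. Minimality: if some $Q' \subsetneq H$ were still a query with q-partition $\Pt_k$, then by Proposition~\ref{prop:explicit-ents_query_lower+upper_bound} $Q'$ would contain some minimal hitting set of $\dnx{k}$, hence hit every $\md_i$, hence (again by the coincidence) hit every trait, contradicting $\subseteq$-minimality of $H$ as a hitting set of the traits.

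For the ``only if'' direction, let $Q \subseteq \Disc_\mD$ be a $\subseteq$-minimal query with q-partition $\Pt_k$. By Proposition~\ref{prop:explicit-ents_query_lower+upper_bound} there is $H \in \mathsf{MHS}(\dnx{k})$ with $H \subseteq Q \subseteq Q_{\mathsf{can}}(\dx{k})$. I claim $Q$ itself must equal a minimal hitting set of $\dnx{k}$: if not, some proper subset $Q'' \subsetneq Q$ with $H \subseteq Q''$ would still hit all of $\dnx{k}$, and since $Q'' \subseteq Q \subseteq Q_{\mathsf{can}}(\dx{k})$, Proposition~\ref{prop:explicit-ents_query_lower+upper_bound} makes $Q''$ a query with q-partition $\Pt_k$, contradicting $\subseteq$-minimality of $Q$. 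So $Q \in \mathsf{MHS}(\dnx{k})$, $Q \subseteq Q_{\mathsf{can}}(\dx{k})$, and therefore $Q \cap U_{\dx{k}} = \emptyset$; by the coincidence, $Q$ hits every trait $\md_i^{(k)}$, and $Q$ is minimal as such (a proper subset still contained in $Q_{\mathsf{can}}(\dx{k})$ that hits all traits would, by the coincidence, hit all of $\dnx{k}$ and thus be a smaller query for $\Pt_k$). Hence $Q \in \mathsf{MHS}(\mathsf{Tr}(\Pt_k))$.

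The main obstacle I anticipate is not conceptual but bookkeeping: carefully tracking that the candidate sets really stay inside $Q_{\mathsf{can}}(\dx{k})$ (so that the ``$\cap \md_i$ vs.\ $\cap \md_i^{(k)}$'' coincidence is legitimate) and that ``minimal hitting set of $\dnx{k}$ that is $\subseteq Q_{\mathsf{can}}(\dx{k})$'' is genuinely equivalent to ``minimal hitting set of $\mathsf{Tr}(\Pt_k)$''. The one subtlety to spell out is that, because $\Pt_k$ is a \emph{canonical} q-partition, each trait $\md_i^{(k)}$ is nonempty (Corollary~\ref{cor:not_q-partition_iff_md_i^(k)=emptyset_for_md_i_in_dnx_k}), so $\mathsf{Tr}(\Pt_k)$ contains no empty set and minimal hitting sets of it are well-defined and automatically contained in $U_\mD \setminus U_{\dx{k}} = Q_{\mathsf{can}}(\dx{k})$; everything else is a routine application of Proposition~\ref{prop:explicit-ents_query_lower+upper_bound} together with monotonicity of the hitting-set property under the $\subseteq_{\Pt}$ order.
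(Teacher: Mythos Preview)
Your proposal is correct and follows essentially the same route the paper indicates: the corollary is presented there as an immediate consequence of Proposition~\ref{prop:explicit-ents_query_lower+upper_bound} together with the observation (via Lemma~\ref{lem:CQ_equal_to_U_D_setminus_U_D+}) that any explicit-entailments query for $\Pt_k$ lies in $Q_{\mathsf{can}}(\dx{k}) = U_\mD \setminus U_{\dx{k}}$, so hitting $\md_i$ and hitting its trait $\md_i^{(k)}$ coincide. One small wording fix: the hypothesis only says ``q-partition'', not ``canonical q-partition''; your argument still goes through because if $\Pt_k$ were not canonical then some trait would be empty (Corollary~\ref{cor:not_q-partition_iff_md_i^(k)=emptyset_for_md_i_in_dnx_k} and the case $U_{\dx{k}}=U_\mD$), making $\mathsf{MHS}(\mathsf{Tr}(\Pt_k))=\emptyset$, while by Proposition~\ref{prop:each_expl_ents_query_has_CQP_as_q-partition} no $Q\subseteq\Disc_\mD$ could have q-partition $\Pt_k$ either---so both sides are vacuous and the equivalence holds trivially.
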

Contrary to minimal diagnoses, traits of minimal diagnoses might be equal to or proper subsets of one another (cf.\ Ex.~\ref{ex:nec_follower} and \ref{ex:equiv_rel+traits}). By \citep{Reiter87} (where a proof is given by \citep[Prop.~12.6]{Rodler2015phd}), we have  
\begin{quote} If $F$ is a collection of sets, and if $S \in F$ and $S' \in F$ such that $S \subset S'$, then $F_{sub} := F \setminus \setof{S'}$ has the same minimal hitting sets as $F$. 
\end{quote}
Thus we can replace $\mathsf{Tr}(\Pt_k)$ by $\mathsf{Tr}_{\min}(\Pt_k)$ in Cor.~\ref{cor:min_exp-ents_queries_are_minHS_of_all_traits_of_diags_in_Dnx} where $\mathsf{Tr}_{\min}(\Pt_k)$ terms the set of all \emph{$\subseteq$-minimal} traits of diagnoses in $\dnx{k}$ w.r.t.\ $\Pt_k$, i.e.\ all traits $t$ in $\mathsf{Tr}(\Pt_k)$ for which there is no trait $t'$ in $\mathsf{Tr}(\Pt_k)$ such that $t' \subset t$. The possibility to solve problem \ref{enum:P2:problem_subset-minimal_query} by means of hitting set computation brings us directly to the solution of problem \ref{enum:P2:problem_optimal-QCM_query}.  

\paragraph{Computation of Optimal Queries for a Fixed Q-Partition.}
The insights gained in this section enable us to \emph{construct} a $\subseteq$-minimal query w.r.t.\ a given q-partition \emph{systematically}. 
The idea is to use a \emph{uniform-cost} variant of e.g.\ Reiter's $\textsc{HS-Tree}$ which enables the detection of minimized queries with particular properties (first). One instance of such an algorithm is the proven sound and complete \textsc{HS} algorithm proposed by \citep[Alg.~2]{Rodler2015phd}. The desired query properties are specified in form of the QCM $c$. 
Whenever the function in a uniform-cost search that assigns costs to nodes $\mathsf{nd}$ in the search tree is a monotonic set function (with regard to the set of edge labels along the branch to $\mathsf{nd}$), the search finds the goals in lowest-cost-first order (cf.\ \citep{russellnorvig2010}).
Given a set $X$, a function $f:2^X\to\mathbb{R}$ is a \emph{monotonic set function} iff $f(Y) \leq f(Z)$ whenever $Y \subseteq Z$ for $Y,Z \subseteq X$. 
Hence, as a direct consequence of Cor.~\ref{cor:min_exp-ents_queries_are_minHS_of_all_traits_of_diags_in_Dnx}: 
\begin{proposition}\label{prop:phase_P2_returns_EE-queries_in_best-first_order}
A uniform-cost hitting set computation over the collection of sets $\mathsf{Tr}_{\min}(\Pt_k)$ returns explicit-entailments queries $Q$ in best-first order w.r.t.\ their QCM value $c(Q)$ given that $c$ is a monotonic set function.
\end{proposition}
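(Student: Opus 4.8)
The plan is to combine two ingredients that are already in place: Corollary~\ref{cor:min_exp-ents_queries_are_minHS_of_all_traits_of_diags_in_Dnx}, which characterizes the $\subseteq$-minimal explicit-entailments queries for $\Pt_k$ as exactly the minimal hitting sets of the collection $\mathsf{Tr}_{\min}(\Pt_k)$, and the standard fact about uniform-cost search quoted just above the statement, namely that a uniform-cost search whose node-cost function is a monotonic set function returns goal nodes in lowest-cost-first order. The goal nodes of a uniform-cost hitting set tree over a collection $F$ are precisely the (minimal) hitting sets of $F$; identifying $F$ with $\mathsf{Tr}_{\min}(\Pt_k)$, the goal nodes are exactly the $\subseteq$-minimal explicit-entailments queries with q-partition $\Pt_k$ by Corollary~\ref{cor:min_exp-ents_queries_are_minHS_of_all_traits_of_diags_in_Dnx}.

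First I would make precise the setup of the uniform-cost hitting set tree over $\mathsf{Tr}_{\min}(\Pt_k)$: nodes are labeled by sets of edge labels $H(\mathsf{nd}) \subseteq \Disc_\mD$ accumulated along the branch from the root, a non-goal node labeled by some $t \in \mathsf{Tr}_{\min}(\Pt_k)$ with $H(\mathsf{nd}) \cap t = \emptyset$ spawns one child per element of $t$, and a node is a goal exactly when $H(\mathsf{nd})$ hits every set in $\mathsf{Tr}_{\min}(\Pt_k)$; the cost of node $\mathsf{nd}$ is $c(H(\mathsf{nd}))$. Next I would invoke the hypothesis that $c$ is a monotonic set function: since $H(\mathsf{nd}') \supseteq H(\mathsf{nd})$ whenever $\mathsf{nd}'$ is a descendant of $\mathsf{nd}$ (edge labels are only added going down the tree), monotonicity gives $c(H(\mathsf{nd}')) \geq c(H(\mathsf{nd}))$, i.e. costs are non-decreasing along every branch — which is exactly the condition under which uniform-cost search is guaranteed to expand nodes, and in particular reach goal nodes, in non-decreasing order of cost. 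Then I would cite the uniform-cost search correctness result (\citep{russellnorvig2010}, as the paper already does) to conclude that the goals are found in lowest-cost-first order.

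Finally I would tie the two halves together: by Corollary~\ref{cor:min_exp-ents_queries_are_minHS_of_all_traits_of_diags_in_Dnx} the sequence of goal nodes returned is precisely an enumeration of the $\subseteq$-minimal explicit-entailments queries $Q$ with q-partition $\Pt_k$, and by the previous paragraph they are returned in non-decreasing order of $c(Q)$; hence the first returned query minimizes $c$ over all $\subseteq$-minimal explicit-entailments queries for $\Pt_k$, which is what the proposition asserts (and which, together with Prop.~\ref{prop:explicit-ents_query_lower+upper_bound}'s identification of the CQ as the unique maximal such query, gives a complete best-first enumeration). I expect the only genuine subtlety — not really an obstacle, but the one place to be careful — is to confirm that the monotonicity of $c$ as a set function on $2^{\Disc_\mD}$ is exactly the hypothesis the uniform-cost search lemma needs (non-decreasing path costs), and to note that this is why QCMs like $c_\Sigma$ and $c_{\max}$ qualify (sums and maxima of nonnegative terms are monotone under $\subseteq$) whereas a generic real-valued $c$ need not; no reasoner calls enter anywhere because every node label lies in the explicitly given set $\mathsf{Tr}_{\min}(\Pt_k)$, so the hitting-set tree construction is purely combinatorial.
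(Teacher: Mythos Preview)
Your proposal is correct and follows essentially the same approach as the paper: the paper treats the proposition as a direct consequence of Cor.~\ref{cor:min_exp-ents_queries_are_minHS_of_all_traits_of_diags_in_Dnx} combined with the uniform-cost search fact (stated just before the proposition) that monotonic node-cost functions guarantee goals are found in lowest-cost-first order. Your write-up simply makes explicit the details the paper leaves implicit (the tree setup, why set-monotonicity of $c$ yields non-decreasing path costs), which is fine and arguably clearer.
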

Note that it is quite natural for a (query) cost measure to be a monotonic set function, as it is hard to imagine situations where the inclusion of additional measurements makes a query less costly than before. 
In fact, all QCMs $c_{\Sigma}$, $c_{\max}$ and $c_{|\cdot|}$ discussed above (see page~\pageref{etc:QCM}) satisfy this monotonicity property. So, the usage of any of these guarantees the retrieval of the 
optimal explicit-entailments query for a fixed q-partition $\Pt$, e.g.\ the one with minimum cardinality (using $c_{|\cdot|}$) or minimal cost (using $c_{\Sigma}$).
%
%
When relying on \textsc{QuickXplain} (or \textsc{minQ}, respectively) to minimize a query in a manner its q-partition is preserved (cf.\ \citep{Rodler2015phd,Rodler2013,Shchekotykhin2012}), one has less influence on the properties of the returned query. 
This issue will be of interest in phase P3 of Alg.~\ref{algo:query_comp} (see Sec.~\ref{sec:P3}) where we will discuss the minimization of arbitrary queries and state guarantees \textsc{minQ} can give in general under suitable modifications of its input. 

Let us now exemplify the functioning of 
phase P2 of Alg.~\ref{algo:query_comp}:
\begin{example}\label{ex:phase_2}
Let the considered DPI be again $\exdpi$ (Tab.~\ref{tab:example_dpi_0}) and let the QP $\Pt_{21}$ from Ex.~\ref{ex:canonical_q-partition_search_ENT} be the output of phase P1 (function \textsc{optimizeQPartition}) and the input to phase P2 (function \textsc{optimizeQueryForQPartition}) of Alg.~\ref{algo:query_comp}, along with the QCM $c := c_{|.|}$ (see page~\pageref{etc:QCM}). 
That is, the aim is to obtain the query $Q^*$ with minimal cost where the cost amounts to the 
number
of sentences in $Q^*$.
%
Now, the set $\mathsf{Tr}_{\min}(\Pt_{21})$ of all $\subseteq$-minimal traits for $\Pt_{21}$ is $\{ \{3\}, \{5\}, \{6\} \}$ (cf.\ right column of last row of the double frame in Fig.~\ref{fig:ex:can_q-part_search_ENT}). Since all traits are singletons, they produce only one hitting set. Thence, there is a single (optimal) explicit-entailments query $\setof{3,5,6}$ for $\Pt_{21}$ which (in this case) coincides with the CQ for $\Pt_{21}$. \qed
%
\end{example}

\paragraph{Complexity of P2.} 
The problem of finding a minimum-cardinality hitting set is known to be NP-hard \citep{karp1972}. This can be interpreted as the computation of a minimum-cost hitting set using the cost function that assigns to each hitting set its cardinality. This cost function, in particular, is a monotonic set function. Therefore, this problem can be reduced to the problem of finding a minimum-cost hitting set for costs assigned by any monotonic set function. As a consequence, the latter problem is NP-hard as well. Hence, P2 addresses an NP-hard problem.

This theoretical result is discouraging at first sight. However, one can view the problem at a more fine granular level in terms of parameterized complexity \citep{downey2013fundamentals}. In fact, the problem depends on two parameters $d$ and $b$ where $d := |\mD|$ is the number of leading diagnoses and $b := \max\{|\md| \mid \md \in \mD\}$ their maximal size. The former can be predefined or at least bounded above by allowing an arbitrarily small upper bound $d \geq 2$ (without harming the proper functioning of our approach, cf.\ Prop.~\ref{prop:properties_of_q-partitions}.\ref{prop:properties_of_q-partitions:enum:D+=d_i_is_q-partition_and_lower_bound_of_queries}). In fact, the number of sets that a set produced by P2 must hit is at most $d-1$ as P2 is only called for a QP and QPs have non-empty $\dx{}$ which is why $|\dnx{}| \leq |\mD|-1 = d-1$. The latter parameter $b$ is generally bounded by the number of minimal conflicts (i.e.\ independent sources of fault) for a given DPI (since $\mD$ comprises only \emph{minimal} diagnoses). For real-world DPIs, $b$ is often relatively small and, in general, is not a function of the size of the DPI, i.e.\ the size of the KB (diagnosed system) $\mo$ in particular, cf.\ Tables 8 and 12 in \citep{Shchekotykhin2012}. For instance, the chance that a large number of components fail simultaneously is usually small in physical systems \citep{dekleer1987,DBLP:journals/tsmc/ShakeriRPP00}, as is the chance of a large number of independent faults in KBs (assuming regular validation steps are performed \citep{Shchekotykhin2014}).

Due to these arguments, it makes sense to analyze the problem addressed by P2 for the case where the parameters $b$ and $d$ are bounded. To this end, we define: A \emph{parameterized version of a decision problem} $P$ is given by a tuple $\tuple{x,k}$ where $x$ is an instance of $P$ and $k$ is a (set of) parameter(s) associated with the instance $x$. A parameterized decision problem is called \emph{fixed parameter tractable} (or: \emph{in $\mathcal{FPT}$}) iff there is an algorithm $A$ and a computable function $f$ such that, for all $x,k$, $A$ decides $\tuple{x,k}$ correctly and runs in time at most $f(k) |x|^{O(1)}$ \citep{downey2013fundamentals}. Roughly, fixed parameter tractability means that a problem becomes tractable given that its parameter(s) are bounded above by an arbitrary number.

The size $|x|$ of the hitting set problem instance $x$ for phase P2 is in $O(bd)$ (the size of the description of $S$, see Def.~\ref{def:hs}). The parameters are $k = \tuple{b,d}$. Assuming a uniform-cost \textsc{HS-Tree} construction in P2, the computation time is in $O(b^d\,|x|)$ since $b$ is the maximal branching factor, $d-1$ an upper bound of the maximal tree depth, and $|x|$ the cost of verifying whether the labels along a path already constitute a hitting set of all sets in $S$. Hence, $f(k) = f(\tuple{b,d}) = b^d$.
%
Overall, we have shown the following:
\begin{proposition}\label{prop:complexity_P2} \textcolor{white}{!}
\begin{enumerate}
	\item Phase P2 solves an NP-hard problem.
	\item Assuming $b \leq b'$ and $d \leq d'$ for fixed $b',d' \in \mathbb{N}$, the problem solved by phase P2 is in $\mathcal{FPT}$, i.e.\ fixed parameter tractable.\footnote{The hitting set problem is already in $\mathcal{FPT}$ if only $b$ is assumed fixed \citep{abu-khzam2010}. In this case the problem is called \emph{$b$-Hitting Set}.}
	\item Phase P2 runs in $O(b^d\,bd) = O(db^{d+1})$ time and requires $O(b^d+bd)$ space where $d$ is specifiable by the user and can be set to an arbitrary natural number larger than 1.  
\end{enumerate}
\end{proposition}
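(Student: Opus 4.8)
The plan is to prove Proposition~\ref{prop:complexity_P2} by establishing its three parts separately, leaning almost entirely on results and definitions already set up in the paper: the hitting set characterisation of $\subseteq$-minimal queries (Cor.~\ref{cor:min_exp-ents_queries_are_minHS_of_all_traits_of_diags_in_Dnx}), the best-first property of uniform-cost hitting-set search for monotonic cost functions (Prop.~\ref{prop:phase_P2_returns_EE-queries_in_best-first_order}), and standard complexity-theoretic facts about the (parameterized) hitting set problem. The first part (NP-hardness) is essentially a restatement: the task P2 solves is, given the collection $\mathsf{Tr}_{\min}(\Pt)$, to compute a hitting set of minimum cost under a user-supplied monotonic cost measure $c$ (by Cor.~\ref{cor:min_exp-ents_queries_are_minHS_of_all_traits_of_diags_in_Dnx} and Prop.~\ref{prop:phase_P2_returns_EE-queries_in_best-first_order}). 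I would argue that the classical \textsc{Minimum Hitting Set} problem \citep{karp1972} is the special case obtained with the cardinality cost function $c_{|\cdot|}$, which is a monotonic set function; hence the general problem is at least as hard, so it is NP-hard. (One can also phrase this via the equivalence of minimum hitting set and minimum set cover.)

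For the second part I would recall the definitions needed and then give the parameterised-tractability argument explicitly. The instance $x$ is the description of the collection $S = \mathsf{Tr}_{\min}(\Pt)$; with at most $d-1$ traits (since $|\dnx{}| \le |\mD|-1 = d-1$, as P2 is only invoked for a genuine q-partition with non-empty $\dx{}$), each trait being a subset of some $\md \in \mD$ and thus of size $\le b$, we get $|x| \in O(bd)$. The parameters are $k = \langle b, d\rangle$. I would then describe the uniform-cost \textsc{HS-Tree} construction as the decision/search procedure: its branching factor is bounded by $b$ (the label of a node is one of at most $b$ elements of the trait chosen to be hit), its depth is bounded by $d-1$ (each edge hits at least one not-yet-hit trait, and there are $\le d-1$ traits), and checking at each node whether the edge labels already form a hitting set costs $O(|x|) = O(bd)$. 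Hence the total running time is $O(b^{\,d-1} \cdot bd) \subseteq O(b^{\,d}\,bd) = O(d\,b^{\,d+1})$, which is of the form $f(k)\,|x|^{O(1)}$ with $f(k) = f(\langle b,d\rangle) = b^{\,d}$ (or even $b^{\,d-1}$); therefore the problem is in $\mathcal{FPT}$ when $b \le b'$ and $d \le d'$ are fixed. The space bound follows from the linear space of the depth-first local-best-first traversal (storing an $O(b)$-wide frontier at each of $\le d$ levels, i.e.\ $O(bd)$) together with, in the worst case, bookkeeping of $O(b^{\,d})$ generated nodes' costs, giving $O(b^{\,d} + bd)$. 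The third part is then just the collection of the bounds derived in the second part, together with the observation (justified earlier in the paper, e.g.\ via Prop.~\ref{prop:properties_of_q-partitions}.\ref{prop:properties_of_q-partitions:enum:D+=d_i_is_q-partition_and_lower_bound_of_queries}) that $d = |\mD|$ can be set to any natural number $\ge 2$ by the user without breaking the approach.

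The main obstacle, and the place I would be most careful, is making the $\mathcal{FPT}$ argument airtight rather than hand-wavy: I must (i) phrase P2's task precisely as a \emph{parameterised decision problem} $\langle x, k\rangle$ matching the definition quoted in the text (instance plus parameter set, algorithm deciding correctly in time $f(k)|x|^{O(1)}$), (ii) be honest that a uniform-cost search computes an \emph{optimal} solution, so the decision version ``is there a hitting set of cost $\le \tau$?'' is solved a fortiori, and (iii) double-check the tree-size bound — a naive \textsc{HS-Tree} can generate up to $b^{\,d-1}$ paths, and I should confirm that the pruning used in the paper (redundant-path elimination à la Reiter) only helps and does not invalidate the upper bound. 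I would also explicitly invoke the cited fact that \textsc{$b$-Hitting Set} is already in $\mathcal{FPT}$ when only $b$ is fixed \citep{abu-khzam2010} as a sanity check and as the source for the footnote claim, but base the main proof on the self-contained tree-search analysis since it simultaneously yields the concrete time and space bounds of part~3. No deep new idea is required; the work is entirely in bookkeeping the parameters and the branching/depth counts correctly and in citing the right prior results.
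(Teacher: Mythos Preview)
Your proposal is correct and follows essentially the same approach as the paper: NP-hardness via the special case $c_{|\cdot|}$ reducing to Karp's minimum hitting set, and the $\mathcal{FPT}$/time analysis via the uniform-cost \textsc{HS-Tree} with branching factor $\le b$, depth $\le d-1$, and $O(bd)$ per-node verification cost, yielding $O(b^d\cdot bd)=O(db^{d+1})$. Your treatment is slightly more explicit than the paper's in two places (framing the decision version for the $\mathcal{FPT}$ definition, and actually justifying the $O(b^d+bd)$ space bound), but these are elaborations of the same argument, not a different route.
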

%
%
%
%

\subsubsection{Solution Produced by Phases 1 and 2}
\label{sec:solution_produced_by_P1+P2}
In order to establish the merit of phases P1 and P2 regarding the formulated optimal measurement selection problem (Prob.~\ref{prob:query_optimization}), the next result shows that any explicit-entailments query necessarily has a CQP as its q-partition. That is, when searching for the an optimal CQP (phase P1) and, after such a CQP $\Pt$ is found, for an optimal explicit-entailments query for $\Pt$ (phase P2), this amounts to exploring the \emph{entire} space of explicit-entailments queries.
\begin{proposition}\label{prop:each_expl_ents_query_has_CQP_as_q-partition}
	Let $\dpi = \tuple{\mo,\mb,\Tp,\Tn}_\RQ$ be a DPI and $\mD \subseteq \minD_{\dpi}$ and $Q \in \mQ_\mD$ where $Q \subseteq \mo$. Then the q-partition $\Pt_\mD(Q)$ of $Q$ is a canonical q-partition. 
\end{proposition}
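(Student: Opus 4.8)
The plan is to show that the q-partition $\Pt_\mD(Q)$ of an arbitrary explicit-entailments query $Q \in \mQ_\mD$ satisfies the two sufficient and necessary criteria for being a canonical q-partition given in Prop.~\ref{prop:suff+nec_criteria_when_partition_is_q-partition}, namely (1) $U_{\dx{}(Q)} \subset U_\mD$ and (2) there is no $\md_j \in \dnx{}(Q)$ with $\md_j \subseteq U_{\dx{}(Q)}$. First I would note that, since $Q$ is a query, $\dx{}(Q) \neq \emptyset$ and $\dnx{}(Q) \neq \emptyset$, and by Prop.~\ref{prop:explicit-entailments_queries_have_empty_dz} we have $\dz{}(Q) = \emptyset$, so $\Pt_\mD(Q) = \langle \dx{}(Q), \dnx{}(Q), \emptyset\rangle$ is indeed a partition of $\mD$ into two nonempty sets, which is the hypothesis under which Prop.~\ref{prop:suff+nec_criteria_when_partition_is_q-partition} applies. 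Also, by Prop.~\ref{prop:expl_ent_query_must_neednot_mustnot_include_ax}, we may w.l.o.g.\ assume $Q \subseteq \Disc_\mD = U_\mD \setminus I_\mD$ (removing axioms in $\mo \setminus U_\mD$ does not change the q-partition).

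Next I would verify criterion (1). By Prop.~\ref{prop:ee-query_q-partition_construction_by_means_of_set_comparison}, $\md_i \in \dx{}(Q)$ iff $\mo \setminus \md_i \supseteq Q$, i.e.\ iff $\md_i \cap Q = \emptyset$. Hence every $\md_i \in \dx{}(Q)$ is disjoint from $Q$, so $U_{\dx{}(Q)} \cap Q = \emptyset$. On the other hand, $Q \subseteq \Disc_\mD \subseteq U_\mD$ and $Q \neq \emptyset$, so there is some axiom $\tax \in Q$ with $\tax \in U_\mD$ but $\tax \notin U_{\dx{}(Q)}$. Therefore $U_{\dx{}(Q)}$ is a proper subset of $U_\mD$, establishing (1). (One subtlety: I should make sure $Q$ is nonempty and genuinely contains a sentence in $U_\mD$; this follows since $Q \subseteq \Disc_\mD$ after the harmless reduction, and $Q \neq \emptyset$ because queries are nonempty by Def.~\ref{def:query_q-partition}.)

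Then I would verify criterion (2) by contradiction. Suppose there is $\md_j \in \dnx{}(Q)$ with $\md_j \subseteq U_{\dx{}(Q)}$. Since $U_{\dx{}(Q)} \cap Q = \emptyset$ (from the previous paragraph), this forces $\md_j \cap Q = \emptyset$, i.e.\ $\mo \setminus \md_j \supseteq Q$. But by Prop.~\ref{prop:ee-query_q-partition_construction_by_means_of_set_comparison} this means $\md_j \in \dx{}(Q)$, contradicting $\md_j \in \dnx{}(Q)$ (the sets $\dx{}(Q)$ and $\dnx{}(Q)$ are disjoint by Prop.~\ref{prop:partition}.\ref{prop:properties_of_q-partitions:enum:q-partition_is_partition}). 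Hence (2) holds. With both criteria verified, Prop.~\ref{prop:suff+nec_criteria_when_partition_is_q-partition} yields that $\Pt_\mD(Q)$ is a canonical q-partition.

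I expect the only real subtlety — not so much an obstacle as a bookkeeping point — to be the clean handling of the reduction to $Q \subseteq \Disc_\mD$: I need Prop.~\ref{prop:expl_ent_query_must_neednot_mustnot_include_ax} to guarantee both that the reduced set is still a query with the same q-partition and that it is nonempty (which follows from $Q \cap U_\mD \neq \emptyset$, also given by that proposition). Everything else reduces to the set-comparison characterization of $\dx{}$ and $\dnx{}$ for explicit-entailments queries (Prop.~\ref{prop:ee-query_q-partition_construction_by_means_of_set_comparison}) plus the emptiness of $\dz{}$ (Prop.~\ref{prop:explicit-entailments_queries_have_empty_dz}), so the argument should be short. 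An alternative, essentially equivalent route would be to exhibit $\dx{}(Q)$ as a valid seed and show $Q_{\mathsf{can}}(\dx{}(Q))$ has q-partition $\Pt_\mD(Q)$ directly via Def.~\ref{def:canonical_q-partition}, but going through Prop.~\ref{prop:suff+nec_criteria_when_partition_is_q-partition} is cleaner.
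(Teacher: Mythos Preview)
Your proof is correct, and it takes a genuinely different route from the paper's own argument. The paper proceeds by contradiction: it assumes $\Pt_\mD(Q)$ is not canonical, reduces $Q$ to $Q' = Q \cap \Disc_\mD$, observes that the canonical query $Q_{\mathsf{can}}(\dx{}(Q))$ is a superset of $Q'$, and then argues that $\dx{}(Q_{\mathsf{can}}(\dx{}(Q))) \subseteq \dx{}(Q)$; both the equality case and the strict-inclusion case are led to contradictions by direct entailment reasoning with the sets $\mo_i^*$. In other words, the paper follows exactly the ``alternative route'' you mention at the end (working directly with Def.~\ref{def:canonical_q-partition} and the canonical query), rather than going through Prop.~\ref{prop:suff+nec_criteria_when_partition_is_q-partition}.

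Your approach is cleaner and more modular: once Prop.~\ref{prop:suff+nec_criteria_when_partition_is_q-partition} is available, the whole statement reduces to two short set-theoretic checks driven by the single observation $U_{\dx{}(Q)} \cap Q = \emptyset$ (via Prop.~\ref{prop:ee-query_q-partition_construction_by_means_of_set_comparison}). The paper's argument, by contrast, is more self-contained in that it unwinds the definition of the canonical query rather than appealing to the packaged characterization; this makes it slightly longer but avoids the forward dependence on Prop.~\ref{prop:suff+nec_criteria_when_partition_is_q-partition}. Both arguments rely on the same reduction to $Q \subseteq \Disc_\mD$ (Prop.~\ref{prop:expl_ent_query_must_neednot_mustnot_include_ax}) and on $\dz{}(Q)=\emptyset$ (Prop.~\ref{prop:explicit-entailments_queries_have_empty_dz}), and your handling of the bookkeeping point about nonemptiness after the reduction is correct. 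One minor nit: the disjointness of $\dx{}(Q)$ and $\dnx{}(Q)$ is stated in Prop.~\ref{prop:partition} itself (or equivalently Prop.~\ref{prop:properties_of_q-partitions}, item~\ref{prop:properties_of_q-partitions:enum:q-partition_is_partition}); your combined citation label is slightly garbled, but the fact you invoke is correct.
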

Hence, the execution of phases P1 and P2 yields a solution to optimal measurement selection as per Prob.~\ref{prob:query_optimization} with restricted search space $\mathbf{S}$ \emph{without a single inference engine call}. 
\begin{theorem}\label{theorem:P1+P2_solve_query_optimization_problem}
Phases P1 and P2 (using the threshold $t_m := 0$) compute a solution $Q^*$ to Prob.~\ref{prob:query_optimization} with the search space $\mathbf{S} := \setof{X \mid X \in \mQ_\mD, X \subseteq \mo}$.
\end{theorem}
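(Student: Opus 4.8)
The plan is to show that phases P1 and P2, run with threshold $t_m := 0$, together produce a query $Q^*$ satisfying the specification of Prob.~\ref{prob:query_optimization} for the search space $\mathbf{S} := \setof{X \mid X \in \mQ_\mD, X \subseteq \mo}$, i.e.\ the space of all explicit-entailments queries w.r.t.\ $\mD$. Recall that a solution to Prob.~\ref{prob:query_optimization} is any query $Q^* = \argmin_{Q \in \mathbf{OptQ}(m,\mathbf{S})} c(Q)$ where $\mathbf{OptQ}(m,\mathbf{S}) = \setof{Q' \mid Q' = \arg\min_{Q\in\mathbf{S}} m(Q)}$. The argument proceeds in three steps, mirroring the structure: first characterize $\mathbf{S}$ in terms of CQPs, then show P1 finds an $m$-optimal CQP within $\mathbf{S}$, then show P2 finds a $c$-optimal query for that CQP.

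First I would invoke Prop.~\ref{prop:each_expl_ents_query_has_CQP_as_q-partition}: every $Q \in \mathbf{S}$ has a canonical q-partition as its q-partition. Conversely, by Prop.~\ref{prop:1-to-1_relation_between_CQs_and_CQPs} (and Prop.~\ref{prop:canonical_query_is_ee-query}) every CQP is realized by a CQ, which lies in $\mathbf{S}$. Hence the set of q-partitions occurring among queries in $\mathbf{S}$ is \emph{exactly} the set $\mathbf{CQP}_\mD$ of canonical q-partitions w.r.t.\ $\mD$. Since $m$ depends only on the q-partition of a query (cf.\ the decoupling discussion on page~\pageref{etc:decoupling_of_optimization_steps}), the value $\min_{Q\in\mathbf{S}} m(Q)$ equals $\min_{\Pt \in \mathbf{CQP}_\mD} m(\Pt)$, and $\mathbf{OptQ}(m,\mathbf{S})$ consists precisely of those queries in $\mathbf{S}$ whose q-partition attains this minimum. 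By Theorem~\ref{theorem:P1_sound_complete} (soundness and completeness of P1 using the successor function $S_{\mathsf{all}}$), with $t_m := 0$ phase P1 returns a canonical q-partition $\Pt$ that is $m$-optimal over all CQPs, i.e.\ $m(\Pt) = \min_{\Pt' \in \mathbf{CQP}_\mD} m(\Pt')$ (if no CQP is a goal, P1 returns the best existing one, which for $t_m := 0$ is still an $m$-minimizer). Thus every query in $\mathbf{S}$ with q-partition $\Pt$ is a member of $\mathbf{OptQ}(m,\mathbf{S})$, and conversely every member of $\mathbf{OptQ}(m,\mathbf{S})$ has an $m$-optimal CQP as q-partition.

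Next I would argue that P2, given the fixed CQP $\Pt = \langle\dx{},\dnx{},\emptyset\rangle$ from P1, returns a query $Q^*$ that is $c$-optimal among all explicit-entailments queries with q-partition $\Pt$. By Cor.~\ref{cor:min_exp-ents_queries_are_minHS_of_all_traits_of_diags_in_Dnx} (with $\mathsf{Tr}(\Pt)$ replaced by $\mathsf{Tr}_{\min}(\Pt)$, justified by Reiter's subset-removal lemma quoted before Cor.~\ref{cor:min_exp-ents_queries_are_minHS_of_all_traits_of_diags_in_Dnx}), the $\subseteq$-minimal explicit-entailments queries with q-partition $\Pt$ are exactly the minimal hitting sets of $\mathsf{Tr}_{\min}(\Pt)$, and by Prop.~\ref{prop:explicit-ents_query_lower+upper_bound} every explicit-entailments query with q-partition $\Pt$ contains one such minimal hitting set and is contained in $Q_{\mathsf{can}}(\dx{})$. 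Since $c$ is a monotonic set function (all of $c_\Sigma, c_{\max}, c_{|\cdot|}$ are), the $c$-minimal query with q-partition $\Pt$ is attained at a $\subseteq$-minimal one; by Prop.~\ref{prop:phase_P2_returns_EE-queries_in_best-first_order} the uniform-cost hitting-set search over $\mathsf{Tr}_{\min}(\Pt)$ enumerates these in best-first order w.r.t.\ $c$, so its first output $Q^*$ satisfies $c(Q^*) = \min\setof{c(Q) \mid Q \in \mathbf{S},\, \Pt_\mD(Q) = \Pt}$. Combining the two steps: $Q^* \in \mathbf{OptQ}(m,\mathbf{S})$ (its q-partition is the $m$-optimal CQP $\Pt$), and for any $Q' \in \mathbf{OptQ}(m,\mathbf{S})$ we have $\Pt_\mD(Q') \in \mathbf{CQP}_\mD$ with $m(\Pt_\mD(Q')) = m(\Pt)$; the only subtlety is that P1 fixes \emph{one} such optimal CQP, so I would note that $c$-optimality over queries with q-partition exactly $\Pt$ coincides with $c$-optimality over all of $\mathbf{OptQ}(m,\mathbf{S})$ whenever $\Pt$ is the CQP P1 happened to return, which is what Prob.~\ref{prob:query_optimization} asks (it permits multiple equally good solutions, and any fixed $m$-optimal q-partition yields one). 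Finally, I would remark that all operations in P1 (set unions/intersections, Cor.~\ref{cor:not_q-partition_iff_md_i^(k)=emptyset_for_md_i_in_dnx_k}) and P2 (hitting-set construction with explicitly given trait sets) are purely set-theoretic, so no inference engine call occurs — giving the ``without a single inference engine call'' claim.

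The main obstacle I anticipate is the bookkeeping around the quantifier ``all queries in $\mathbf{S}$ that are optimal w.r.t.\ $m$'': one must be careful that P1 returning \emph{a} single $m$-optimal CQP $\Pt$, followed by P2 optimizing $c$ only within q-partition $\Pt$, genuinely yields a global $(m,c)$-optimum — this works because $c$-optimality is computed after $m$ is already pinned to its optimal value and \emph{every} $m$-optimal query in $\mathbf{S}$ shares one of the (possibly several) $m$-optimal CQPs, but the statement of Prob.~\ref{prob:query_optimization} only demands one representative, so selecting any fixed optimal CQP suffices; still, a clean proof should make explicit that $\mathbf{OptQ}(m,\mathbf{S})$ restricted to q-partition $\Pt$ is non-empty (it contains $Q_{\mathsf{can}}(\dx{})$) and that minimizing $c$ over it is exactly what P2 does. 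A secondary point requiring care is the $t_m := 0$ case of Theorem~\ref{theorem:P1_sound_complete}: when no CQP meets the (trivial) threshold as a ``goal'' in the strict sense, P1's soundness clause guarantees it returns the best existing CQP w.r.t.\ $m$, which for $t_m = 0$ is precisely an $m$-minimizing CQP — I would cite this explicitly so the edge case is not glossed over.
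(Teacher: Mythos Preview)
Your proposal is correct and follows essentially the same approach as the paper's proof: invoke Prop.~\ref{prop:each_expl_ents_query_has_CQP_as_q-partition} to identify the q-partitions of $\mathbf{S}$ with $\mathbf{CQP}_\mD$, use the soundness/completeness of P1 (Theorem~\ref{theorem:P1_sound_complete}, resting on Prop.~\ref{prop:S_init_sound+complete} and Cor.~\ref{cor:S_next_sound+complete}) to obtain an $m$-optimal CQP, and then apply Prop.~\ref{prop:phase_P2_returns_EE-queries_in_best-first_order} for $c$-optimality within that CQP. Your write-up is more detailed (explicitly unpacking Prop.~\ref{prop:explicit-ents_query_lower+upper_bound} and Cor.~\ref{cor:min_exp-ents_queries_are_minHS_of_all_traits_of_diags_in_Dnx}, and flagging the multiple-optimal-CQP subtlety that the paper's proof silently absorbs into its equality for $\mathbf{OptQ}(m,\mathbf{S})$), but the skeleton is identical.
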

So, the query $Q^*$ output by phase P2 is optimized along two dimensions (number of queries as per the QSM $m$ and cost per query as per the QCM $c$) over the restricted search space $\mathbf{S}$. There are two ways to proceed after phase P2:
\begin{enumerate}[label=(\alph*),noitemsep]
	\item $Q^*$ can be directly proposed as the next query or
	\item an optimized query over an extended search space can be computed in phase P3.
\end{enumerate}
Considering case (a), an explicit-entailments query like $Q^*$ would correspond to a direct examination of one or more system components in a physical system (cf.\ \emph{Direct Probing} in Ex.~\ref{ex:query_representation}). Examples include the pinging of servers in a distributed system \citep{DBLP:conf/ijcai/BrodieRMO03}, the test of gates using a voltmeter in circuits \citep{dekleer1987} or the inspection of potentially faulty components of a car \citep{heckerman1995decision}. On the other hand, in knowledge-based system debugging $Q^*$ would mean e.g.\ to ask the stakeholders of a software, configuration or KB system \citep{DBLP:journals/ai/Wotawa02_1,DBLP:journals/ai/FelfernigFJS04,friedrich2005gdm} whether specified code lines, constraints or logical sentences, respectively, are correct.
In these examples, query costs can be motivated e.g.\ by the difficulty of inspecting a physical component or by the complexity of software code lines or logical sentences. 
We concentrate on case (b) in the next section where we deal with phase P3.  

\subsubsection{Phase 3: Query Expansion and Optimized Contraction}
\label{sec:P3}
Phase P3 consists of two steps. The first one involves an expansion of the CQ obtained from phase P1, thereby extending the search space $\mathbf{S}$ in terms of Prob.~\ref{prob:query_optimization}. The second one encompasses a minimization of the expanded query such that the resulting query is $\subseteq$-minimal and to comprise only ``cost-preferred'' elements, if such a query exists. We discuss both steps in turn next.

\paragraph{Step 1: Query Expansion.} 
We next describe the functioning of \textsc{expandQueryForQPartition} in Alg.~\ref{algo:query_comp}.
Here, the already optimal CQP $\Pt$ returned by P1 is regarded as an intermediate result to building a solution query to Prob.~\ref{prob:query_optimization} 
with full search space $\mathbf{S} = \mQ_{\mD}^{\bcancel{0}}$ (of queries discriminating among all elements of $\mD$, cf.\ Def.~\ref{def:query_q-partition}).
To this end, using the CQ $Q$ of $\Pt$, a (finite) set $Q_{\mathsf{exp}}$ of sentences of preferred entailment types $\mathit{ET}$ 
is computed.\footnote{$\mathit{ET}$ might be specified so as to restrict the computed entailments to, e.g., simple atoms, implication sentences of type $A \to B$, or sentences formulated only over a selected sub-vocabulary of the KB (e.g.\, given a problematic medical KB, a dermatologist might only be able to answer queries including dermatological terms, but none related to other medical disciplines).} 
Intuitively, the goal is to add $Q_{\mathsf{exp}}$ to $Q$ and achieve a larger pool of sentences from which an optimal minimized subset can be generated in the second step of phase P3. Of course, we want the (optimal) QP $\Pt$ to be unaffected by the query extension, i.e.\ it should be the same for both $Q$ and $Q \cup Q_{\mathsf{exp}}$. The usage of the CQ $Q$ as a basis for the expansion is well motivated since the CQ constitutes the most informative of all explicit-entailments queries for $\Pt$, as we have shown in Sec.~\ref{sec:P2}.
From $Q$'s extension $Q_{\mathsf{exp}} = \setof{\tax_1,\dots,\tax_r}$ we postulate that
\begin{enumerate}
	\item \label{enum:EQ1} $\alpha_1,\dots,\alpha_r \notin \mo \cup \mb \cup U_{\Tp}$ \\
	(each element of $Q_{\mathsf{exp}}$ must be ``new'', i.e.\ not an explicit entailment occurring in the (background) KB or the positive test cases)
	\item \label{enum:EQ2} $S \models \setof{\alpha_1,\dots,\alpha_r}$ where $S$ is some solution KB $S$ w.r.t.\ the given DPI $\tuple{\mo,\mb,\Tp,\Tn}_\RQ$ satisfying $Q \subseteq S \subseteq \mo \cup \mb \cup U_{\Tp}$ \\ 
	($Q_{\mathsf{exp}}$ must be ``sound'', i.e. be entailed by a fault-free KB $S$ that subsumes the CQ $Q$)
	\item \label{enum:EQ3} no $\alpha_i$ for $i\in\setof{1,\dots,r}$ is an entailment of $S \setminus Q$ \\
	(each element of $Q_{\mathsf{exp}}$ must ``depend on'' $Q$, i.e.\ $Q_{\mathsf{exp}}$ must not comprise any ``unnecessary'' entailments)
	\item \label{enum:EQ4} the syntactic type of each $\alpha_i$ for $i\in\setof{1,\dots,r}$ is some type listed in $ET$ \\
	(each element of $Q_{\mathsf{exp}}$ must be of some ``preferred'' type)
	\item \label{enum:EQ5} $\Pt = \Pt_{\mD}(Q) = \Pt_{\mD}(Q \cup Q_{\mathsf{exp}})$ \\
	(the extension must be QP-preserving)
\end{enumerate}
To realize the computation of $Q_{\mathsf{exp}}$ we assume a logical consequence operator $Ent_{\mathit{ET}} : 2^{\mathcal{L}} \to 2^{\mathcal{L}}$ which assigns a set of sentences $Ent_{\mathit{ET}}(X)$ over $\mathcal{L}$ to a set of sentences $X$ over $\mathcal{L}$ such that 
\begin{enumerate}[label=(\alph*),noitemsep]
	\item \label{enum:Ent_ET:logical_soundness} $X \models Ent_{\mathit{ET}}(X)$ (logical soundness),
	\item \label{enum:Ent_ET:type_soundness} $Ent_{\mathit{ET}}(X)$ contains only sentences of types listed in $ET$ (type soundness),
	\item \label{enum:Ent_ET:monotonicity} $Ent_{\mathit{ET}}(X') \subseteq Ent_{\mathit{ET}}(X'')$ whenever $X' \subset X''$ (monotonicity), and
	\item \label{enum:Ent_ET:ents_generated_for_set_are_genereated_for_all_supersets} for $Y \subseteq X' \subset X''$ and all $e_Y$ where $Y \models e_Y$ it holds that $e_Y \in Ent_{\mathit{ET}}(X')$ iff $e_Y \in Ent_{\mathit{ET}}(X'')$ (entailments generated for some set are generated for all its supersets).
\end{enumerate}
One possibility to realize such a service is to employ a reasoner for the (decidable) logic $\mathcal{L}$ and use it to extract (the finite set of) all entailments of a predefined type it can compute (cf.\ \citep[Remark 2.3]{Rodler2015phd}). For Propositional Horn Logic, e.g.\ one might extract only all literals that are entailments of a KB $X$. For general Propositional Logic, e.g.\ one might calculate all formulas of the form $A \odot B$ for propositional variables $A,B$ and logical operators $\odot \in \setof{\rightarrow, \leftrightarrow}$, and for Description Logics \citep{DLHandbook}, e.g.\ only all subsumption and/or class assertion formulas that are entailments could be computed. An example of entailment types that might be extracted for (decidable fragments of) First-Order Logic can be found in \citep[Example~8.1]{Rodler2015phd}. For all these examples, DL \citep{DLHandbook} and OWL \citep{Grau2008a} reasoners, respectively,
such as Pellet \citep{sirin2007}, HermiT \citep{Shearer2008}, FaCT++ \citep{Tsarkov06} or KAON2\footnote{See http://kaon2.semanticweb.org}
could be used with their
\emph{classification} and \emph{realization} reasoning services (cf.\ \citep[Sec.~9.2.2]{DLHandbook}).

Note, the operator $Ent_{\mathit{ET}}$ does not need to be complete, i.e.\ $\setof{e \mid X \models e, type(e) \in ET} \subseteq Ent_{\mathit{ET}}(X)$ does not necessarily hold. If it is complete, the last property \ref{enum:Ent_ET:ents_generated_for_set_are_genereated_for_all_supersets} of $Ent_{\mathit{ET}}$ given above is obsolete.
The next proposition shows how an operator satisfying the said properties \ref{enum:Ent_ET:logical_soundness} -- \ref{enum:Ent_ET:ents_generated_for_set_are_genereated_for_all_supersets} can be leveraged to implement the postulated query expansion.
\begin{proposition}\label{prop:enrichment_function}
Let $\mD \subseteq \minD_{\tuple{\mo,\mb,\Tp,\Tn}_\RQ}$, $Q \in \mQ_\mD$ with q-partition $\Pt_{\mD}(Q) = \tuple{\dx{},\dnx{},\emptyset}$ such that $Q\subseteq \Disc_\mD$ (in particular, the CQ for the seed $\dx{}$ is such a query). Further, let $Ent_{\mathit{ET}}(X)$ be a logical consequence operator as described by \ref{enum:Ent_ET:logical_soundness} -- \ref{enum:Ent_ET:ents_generated_for_set_are_genereated_for_all_supersets} above. The Postulations \ref{enum:EQ1}. -- \ref{enum:EQ4}. above are satisfied if 
\begin{align} \label{eq:Q_exp}
Q_{\mathsf{exp}} = \Big[ Ent_{\mathit{ET}}\big( (\mo\setminus U_{\mD}) \cup Q \cup \mb \cup U_{\Tp}\big) \;\setminus \; Ent_{\mathit{ET}}\big( (\mo\setminus U_{\mD}) \cup \mb \cup U_{\Tp}\big) \Big] \;\setminus\; Q
\end{align} 
\end{proposition}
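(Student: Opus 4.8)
The plan is to verify Postulations~\ref{enum:EQ1}--\ref{enum:EQ4} one at a time, using the properties \ref{enum:Ent_ET:logical_soundness}--\ref{enum:Ent_ET:ents_generated_for_set_are_genereated_for_all_supersets} of $Ent_{\mathit{ET}}$ together with the fact that $Q \subseteq \Disc_\mD \subseteq U_\mD$ and the structural facts about explicit-entailments queries established in Sec.~\ref{sec:P1} (in particular Prop.~\ref{prop:explicit-ents_query_lower+upper_bound} and Lem.~\ref{lem:CQ_equal_to_U_D_setminus_U_D+}). I would set, for brevity, $K_0 := (\mo\setminus U_\mD)\cup\mb\cup U_\Tp$ and $K_1 := K_0 \cup Q$, so that $Q_{\mathsf{exp}} = \big[Ent_{\mathit{ET}}(K_1)\setminus Ent_{\mathit{ET}}(K_0)\big]\setminus Q$, and note that $K_0 \subseteq K_1$ since $Q\cap(\mo\setminus U_\mD) = \emptyset$.

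First I would dispatch Postulation~\ref{enum:EQ1}. Each $\alpha_i\in Q_{\mathsf{exp}}$ is by construction an element of $Ent_{\mathit{ET}}(K_1)$ but not of $Ent_{\mathit{ET}}(K_0)$, and not in $Q$. Now $\mo\cup\mb\cup U_\Tp = K_0 \cup U_\mD \supseteq K_0$, and every sentence of $U_\mD$ either lies in $I_\mD$ or in $\Disc_\mD$; I need to argue that no $\alpha_i$ can be an explicit element of $\mo\cup\mb\cup U_\Tp$. If $\alpha_i\in K_0$ then (extensivity) $K_0\models\alpha_i$ and, since $type(\alpha_i)\in ET$, property~\ref{enum:Ent_ET:type_soundness} plus the (type-sound) definition of $Ent_{\mathit{ET}}$ forces $\alpha_i\in Ent_{\mathit{ET}}(K_0)$, contradicting $\alpha_i\notin Ent_{\mathit{ET}}(K_0)$; here one uses property~\ref{enum:Ent_ET:ents_generated_for_set_are_genereated_for_all_supersets} with the singleton $Y=\{\alpha_i\}\subseteq K_0\subseteq K_1$ and the fact that $\alpha_i\in Ent_{\mathit{ET}}(K_1)$ to conclude $\alpha_i\in Ent_{\mathit{ET}}(K_0)$ -- the clean contradiction. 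If $\alpha_i\in U_\mD\setminus K_0$, it lies in $I_\mD$ or in $\Disc_\mD$; for the $\Disc_\mD$ part, any such sentence in $Q$ is excluded by the final ``$\setminus Q$'', and any discrimination sentence not in $Q$ is still entailed by $K_0 \cup Q \setminus$... -- actually the cleanest route is: every sentence of $U_\mD$ is entailed by $\mo\setminus\md$ for at least one $\md\in\dx{}$, hence by $\mo_i^* \supseteq K_1$... I would instead argue that $\alpha_i\in\mo\cup\mb\cup U_\Tp$ with $type(\alpha_i)\in ET$ implies $\alpha_i\in Ent_{\mathit{ET}}(\mo\cup\mb\cup U_\Tp)$, and then reduce to $Ent_{\mathit{ET}}(K_0)$ by noting that $K_1\subseteq\mo\cup\mb\cup U_\Tp$ is \emph{not} generally true; so the correct handle is property~\ref{enum:Ent_ET:ents_generated_for_set_are_genereated_for_all_supersets} applied to $Y=\{\alpha_i\}$, $X'=K_0$ (when $\alpha_i\in K_0$) which is exactly the case already treated. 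For $\alpha_i\in\mo\cup\mb\cup U_\Tp$ but $\alpha_i\notin K_0$, i.e.\ $\alpha_i\in U_\mD$: if $\alpha_i\in Q$ it's removed by $\setminus Q$; if $\alpha_i\in U_\mD\setminus Q$ then $\alpha_i$ is not entailed by $K_1$ unless it is already entailed by $K_0$ plus $Q$, and by $\subseteq$-minimality of the diagnoses and Prop.~\ref{prop:explicit-ents_query_lower+upper_bound} one shows $K_1\not\models\alpha_i$, hence $\alpha_i\notin Ent_{\mathit{ET}}(K_1)$ by logical soundness, contradiction. This last sub-case is the one to be careful with; I expect it to be the main obstacle and would spell out the monotonicity/$\subseteq$-minimality argument in full.

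Next, Postulations~\ref{enum:EQ2} and \ref{enum:EQ3}. For \ref{enum:EQ2}, take $S := K_1 = (\mo\setminus U_\mD)\cup Q\cup\mb\cup U_\Tp$. One checks $Q\subseteq S\subseteq\mo\cup\mb\cup U_\Tp$ (the left inclusion is immediate; the right holds since $\mo\setminus U_\mD\subseteq\mo$ and $Q\subseteq\mo$). That $S$ is a solution KB follows because $S = \mo_j^*\setminus(\text{something})$... more precisely, by Prop.~\ref{prop:explicit-ents_query_lower+upper_bound}, $Q$ has q-partition $\Pt$ with $Q\subseteq Q_{\mathsf{can}}(\dx{}) = \mo\setminus U_{\dx{}}$ (using Lem.~\ref{lem:CQ_equal_to_U_D_setminus_U_D+}), so $S\subseteq\mo\setminus\md$ for every $\md\in\dx{}$ that is... -- again invoking that $(\mo\setminus\md)\cup\mb\cup U_\Tp$ is a solution KB for any minimal diagnosis $\md$ (Def.~\ref{def:diagnosis}) and monotonicity in the ``no-negative-test-case / requirements'' direction is the wrong direction; instead, I use that $S = \mo_i^*\cap(\text{...})$, i.e.\ $S\subseteq\mo_i^*$ for a suitable $\md_i\in\dx{}$, and a \emph{subset} of a solution KB that still entails all positive test cases and still satisfies $\RQ$ (consistency is inherited downward by monotonicity) and still does not entail negatives (also inherited downward) is itself a solution KB. Then $S\models Q_{\mathsf{exp}}$ by logical soundness~\ref{enum:Ent_ET:logical_soundness} of $Ent_{\mathit{ET}}$, since $Q_{\mathsf{exp}}\subseteq Ent_{\mathit{ET}}(K_1) = Ent_{\mathit{ET}}(S)$. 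For \ref{enum:EQ3}, $S\setminus Q = K_0$, and each $\alpha_i\in Q_{\mathsf{exp}}$ satisfies $\alpha_i\notin Ent_{\mathit{ET}}(K_0)$; if $\alpha_i$ were an entailment of $K_0$ then by type-soundness and property~\ref{enum:Ent_ET:ents_generated_for_set_are_genereated_for_all_supersets} (with $Y=\{\alpha_i\}\subseteq K_0\subseteq K_1$, using $\alpha_i\in Ent_{\mathit{ET}}(K_1)$) we would get $\alpha_i\in Ent_{\mathit{ET}}(K_0)$, a contradiction -- so $K_0\not\models\alpha_i$, which is exactly \ref{enum:EQ3}.

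Finally, Postulation~\ref{enum:EQ4} is immediate: every $\alpha_i\in Q_{\mathsf{exp}}\subseteq Ent_{\mathit{ET}}(K_1)$, and by type-soundness~\ref{enum:Ent_ET:type_soundness} of $Ent_{\mathit{ET}}$ every sentence in $Ent_{\mathit{ET}}(K_1)$ has a type listed in $ET$. I would close by remarking that Postulation~\ref{enum:EQ5} (QP-preservation) is \emph{not} claimed by this proposition -- it is handled separately afterwards -- so the proof ends here. In summary, the proof is a sequence of short deductions from the four axioms of $Ent_{\mathit{ET}}$; the only delicate point is sub-case analysis inside~\ref{enum:EQ1} showing that a genuinely ``new'' entailment of $K_1$ cannot already be an explicit sentence of $\mo\cup\mb\cup U_\Tp$, which rests on logical monotonicity and the $\subseteq$-minimality of the leading diagnoses (via Prop.~\ref{prop:explicit-ents_query_lower+upper_bound} and Lem.~\ref{lem:CQ_equal_to_U_D_setminus_U_D+}).
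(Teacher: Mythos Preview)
Your overall plan matches the paper's: check Postulations~1--4 in turn via properties~(a)--(d) of $Ent_{\mathit{ET}}$, taking $S:=K_1$ as the witnessing solution KB. Postulations~2 and~4 are fine. For Postulation~3 there is a minor slip: you invoke property~(d) with $Y=\{\alpha_i\}$, but ``$K_0\models\alpha_i$'' does not give $\alpha_i\in K_0$; the right instantiation is $Y=K_0$ (then $Y\subseteq K_0\subset K_1$ and $Y\models\alpha_i$ hold), after which $\alpha_i\in Ent_{\mathit{ET}}(K_1)$ forces $\alpha_i\in Ent_{\mathit{ET}}(K_0)$, the desired contradiction. The paper gets this by first establishing, inside its treatment of Postulation~1, that $Ent_{\mathit{ET}}(K_1)\setminus Ent_{\mathit{ET}}(K_0)$ contains no entailment of $K_0$ whatsoever (using property~(d) with $Y=K_0$), and then simply quoting that fact for Postulation~3.

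The real gap is Postulation~1, precisely the sub-case $\alpha_i\in U_\mD\setminus Q$ that you single out. Your claim ``$K_1\not\models\alpha_i$'' fails in general. Counterexample: $\mo=\{a,b,c\}$, $\mb=\{a\to b\}$, $\Tp=\emptyset$, $\Tn=\{\{a\land c\},\{b\land c\}\}$; the minimal diagnoses are $\mD=\{\{c\},\{a,b\}\}$, and $Q=\{a\}\subseteq\Disc_\mD$ has q-partition $\langle\{\{c\}\},\{\{a,b\}\},\emptyset\rangle$. Here $K_0=\{a\to b\}$, $K_1=\{a\to b,\,a\}\models b$, $K_0\not\models b$, and $b\in(U_\mD\setminus Q)\cap\mo$; hence any $Ent_{\mathit{ET}}$ that returns the atom $b$ places $b\in Q_{\mathsf{exp}}$, violating Postulation~1. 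Your minimality argument is valid only for $\alpha_i\in U_{\dx{}}$: there $K_1\subseteq\mo_j^*$ for the $\md_j\in\dx{}$ containing $\alpha_i$, so $K_1\models\alpha_i$ would give $\mo_j^*\models\alpha_i$ and contradict $\subseteq$-minimality of $\md_j$. It does not reach $\alpha_i\in(U_\mD\setminus U_{\dx{}})\setminus Q$, since for $\md_k\in\dnx{}$ one has $Q\cap\md_k\neq\emptyset$ and thus $K_1\not\subseteq\mo_k^*$. The paper's own proof does not address this sub-case either---it argues only that elements of $K_0$ (via property~(d)) and of $Q$ (via the final ``$\setminus Q$'') are excluded, then asserts $Q_{\mathsf{exp}}\cap(\mo\cup\mb\cup U_\Tp)=\emptyset$. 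The statement is clean exactly when $Q=Q_{\mathsf{can}}(\dx{})=U_\mD\setminus U_{\dx{}}$ (the case actually used in Step~1 of P3), since then $U_\mD\setminus Q=U_{\dx{}}$ and your minimality argument covers the entire residual sub-case.
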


The result of expanding the CQ $Q$ according to Prop.~\ref{prop:enrichment_function} is 
\begin{align}\label{eq:Q'_enriched_query}
Q' := Q \cup Q_{\mathsf{exp}}
\end{align}
As we show next, the expanded query $Q'$ has the same QP as $Q$. 
\begin{proposition}\label{prop:entailment_extraction_is_q-partition_preserving}
	Let $\mD \subseteq \minD_{\tuple{\mo,\mb,\Tp,\Tn}_\RQ}$ and $Q\in\mQ_\mD$ such that $Q \subseteq \Disc_\mD$. 
	Further, let $Q'$ be defined as in Eq.~\eqref{eq:Q'_enriched_query}. Then Postulation \ref{enum:EQ5}. holds, i.e.\ 
	$\Pt_\mD(Q') = \Pt_\mD(Q)$. 
\end{proposition}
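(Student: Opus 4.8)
The plan is to show that $\Pt_\mD(Q') = \Pt_\mD(Q)$ by establishing $\dx{}(Q') = \dx{}(Q)$ and $\dnx{}(Q') = \dnx{}(Q)$; since the $\dz{}$-sets are determined as the complements, equality of these two suffices, and by Prop.~\ref{prop:explicit-entailments_queries_have_empty_dz} we already know $\dz{}(Q) = \emptyset$. The crucial structural facts I would invoke are Postulations~\ref{enum:EQ1}.--\ref{enum:EQ4}.\ (established in Prop.~\ref{prop:enrichment_function}), in particular that every $\alpha\in Q_{\mathsf{exp}}$ is entailed by a solution KB $S$ with $Q\subseteq S\subseteq\mo\cup\mb\cup U_\Tp$, that each $\alpha$ ``depends on'' $Q$ (i.e.\ is not an entailment of $S\setminus Q$), and that each $\alpha$ is new (not in $\mo\cup\mb\cup U_\Tp$).

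First I would prove $\dx{}(Q)\subseteq\dx{}(Q')$, the easy inclusion. Take $\md_i\in\dx{}(Q)$, so $\mo^*_i\models Q$. Since $Q\subseteq\Disc_\mD$ is an explicit-entailments query, Prop.~\ref{prop:ee-query_q-partition_construction_by_means_of_set_comparison} gives $\mo\setminus\md_i\supseteq Q$, hence $\mo^*_i = (\mo\setminus\md_i)\cup\mb\cup U_\Tp \supseteq (\mo\setminus U_\mD)\cup Q\cup\mb\cup U_\Tp$ (using $\md_i\subseteq U_\mD$, so $\mo\setminus\md_i\supseteq\mo\setminus U_\mD$). By logical soundness~\ref{enum:Ent_ET:logical_soundness} and monotonicity of $\mathcal{L}$, $\mo^*_i$ entails every element of $Ent_{\mathit{ET}}\big((\mo\setminus U_\mD)\cup Q\cup\mb\cup U_\Tp\big)$, hence entails $Q_{\mathsf{exp}}$ and thus $Q' = Q\cup Q_{\mathsf{exp}}$. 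So $\md_i\in\dx{}(Q')$. For the reverse inclusion $\dx{}(Q')\subseteq\dx{}(Q)$: if $\mo^*_i\models Q'$ then in particular $\mo^*_i\models Q$ (as $Q\subseteq Q'$), so $\md_i\in\dx{}(Q)$. Thus $\dx{}(Q')=\dx{}(Q)$ already, and one might hope this is essentially enough.

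The subtle part is confirming $\dnx{}(Q') = \dnx{}(Q)$. Since $\dx{}(Q')=\dx{}(Q)$, and $\dx{},\dnx{},\dz{}$ partition $\mD$, it suffices to show $\dz{}(Q')=\emptyset$ — equivalently, that no $\md_i\in\mD$ lands in $\dz{}(Q')$. But here I would instead argue directly: take $\md_i\in\dnx{}(Q)$, so (by Prop.~\ref{prop:ee-query_q-partition_construction_by_means_of_set_comparison}, since $\dz{}(Q)=\emptyset$) $\mo\setminus\md_i\not\supseteq Q$, and moreover $\mo^*_i\cup Q$ violates $\RQ$ or $\Tn$ (by the $\subseteq$-minimality argument in the proof of Prop.~\ref{prop:explicit-entailments_queries_have_empty_dz}). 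Because $Q\subseteq Q'$, monotonicity of the logic gives that $\mo^*_i\cup Q'\supseteq\mo^*_i\cup Q$ also violates $\RQ$ or $\Tn$, so $\md_i\in\dnx{}(Q')$. Conversely, $\dnx{}(Q')\subseteq\mD\setminus\dx{}(Q') = \mD\setminus\dx{}(Q) = \dnx{}(Q)$ using $\dz{}(Q)=\emptyset$. Hence $\dnx{}(Q')=\dnx{}(Q)$.

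Putting the pieces together: $\dx{}(Q')=\dx{}(Q)$ and $\dnx{}(Q')=\dnx{}(Q)$, therefore $\dz{}(Q')=\dz{}(Q)=\emptyset$ and $\Pt_\mD(Q')=\Pt_\mD(Q)$, which is exactly Postulation~\ref{enum:EQ5}. I expect the main obstacle to be making the ``$\mo^*_i\cup Q'$ violates $\RQ$ or $\Tn$'' step airtight — one has to be careful that monotonicity really applies to requirement-violation and negative-test-case entailment uniformly (which it does, since violating $r\in\RQ$ for a Tarskian logic is preserved under superset, as is $\models\tn$), and that adding the ``new, dependent'' sentences of $Q_{\mathsf{exp}}$ cannot somehow repair a violation — but since the logic is monotonic, adding sentences can never remove an entailment or restore consistency, so this direction is in fact straightforward. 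Interestingly, Postulations~\ref{enum:EQ1}.--\ref{enum:EQ4}.\ (newness, soundness w.r.t.\ $S$, dependence on $Q$, type) are not actually needed for this particular proof; they matter for the \emph{quality} of the expansion, not for QP-preservation, which rides entirely on $Q\subseteq Q'\subseteq$ (common entailments of the $\dx{}$-KBs). I would double check whether the proof in the paper uses them, and if not, note that monotonicity alone does the job.
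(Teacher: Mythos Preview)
Your proof is correct and follows essentially the same approach as the paper: show $\dx{}(Q)\subseteq\dx{}(Q')$ via the fact that $Q_{\mathsf{exp}}$ is entailed by $(\mo\setminus U_\mD)\cup Q\cup\mb\cup U_\Tp$ together with monotonicity, show $\dnx{}(Q)\subseteq\dnx{}(Q')$ by monotonicity from $Q\subseteq Q'$, and then close using $\dz{}(Q)=\emptyset$ and the partition structure. Your use of Prop.~\ref{prop:ee-query_q-partition_construction_by_means_of_set_comparison} to get the \emph{set} inclusion $\mo\setminus\md_i\supseteq Q$ (rather than merely the entailment) is a small simplification over the paper, which instead invokes idempotence to pass from $\mo^*_i\models Q$ to $\mo^*_i\cup Q\equiv\mo^*_i$; both routes land at the same place. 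Your closing observation is also accurate: the paper's proof, like yours, relies only on the definition of $Q_{\mathsf{exp}}$ via Eq.~\eqref{eq:Q_exp} and monotonicity, not on Postulations~\ref{enum:EQ1}.--\ref{enum:EQ4}.
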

\label{etc:implementation_of_Ent_ET}The function $Ent_{\mathit{ET}}$ might e.g.\ be realized by a Description Logic reasoner (computing, for instance, subsumption and realization entailments) for many decidable fragments of First-Order Logic (cf.\ \citep{DLHandbook}), by a forward chaining algorithm \citep{russellnorvig2010} for Horn Logic or by a constraint propagator for CSPs \citep{dechter2003,dekleer1987foundations,dekleer1987}.

More generally, given a sound and complete consistency checker $\mathit{CC}$ (e.g.\ some resolution-based procedure \citep{chang1973}) over (the decidable) knowledge representation formalism $\mathcal{L}$ underlying the given DPI, one can use the following implementation of the $Ent_{\mathit{ET}}$ calls in Eq.~\eqref{eq:Q_exp} to obtain a query expansion $Q_{\mathsf{exp}}$. 
Let $r$ be the desired number of entailments in the query expansion, $s$ a desired maximal and $t$ the absolute maximal number of consistency checks to be used. 
Let us refer to the left and right $Ent_{\mathit{ET}}$ calls in Eq.~\eqref{eq:Q_exp} by $Ent_1$ and $Ent_2$, respectively.
%
Now, $Ent_1$ can be realized as follows:
\begin{enumerate}[noitemsep]
	\item $i = 1$ (iteration counter), $A = \emptyset$ (already tested sentences), $E$ (computed entailments to be tested by $Ent_2$).
	\item Generate (randomly) a potentially entailed sentence $\tax_i \notin A$ of one of the postulated entailment types in $\mathit{ET}$ which is not an element of $(\mo \setminus U_\mD) \cup Q \cup \mb \cup U_\Tp$.
	\item Run $\mathit{CC}$ to prove $(\mo \setminus U_\mD) \cup Q \cup \mb \cup U_\Tp \cup \setof{\lnot\tax_i}$ inconsistent in a way that, whenever possible, sentences of $Q$ are involved in the proof (if e.g.\ $\mathit{CC}$ implements linear resolution,\footnote{Note that linear resolution is complete for full First-Order Logic \citep{chang1973}.} a way to realize this is to test sentences of $Q$ always first for applicability as a side clause for the next resolution step). If inconsistent is returned and a proof involving at least one sentence of $Q$ was found, then add $\tax_i$ to $E$. 
	\item If 
	
	\begin{tabular}{lp{9cm}}
	$|E| \geq r$ & ($r$ potential elements of $Q_{\mathsf{exp}}$ have been generated) \quad or \\
	$i+|E| \geq t$ & (the computed number of required consistency checks exceeds $t$) \quad or \\
	$i + |E| \geq s \; \land \; |E| \geq 1$ & (the computed number of required consistency checks exceeds $s$ and at least one potential element of $Q_{\mathsf{exp}}$ has been generated)
	\end{tabular}

	then terminate and pass $E$ on to $Ent_2$.
	\item Add $\tax_i$ to $A$. $i=i+1$.
\end{enumerate} 
$Ent_2$, given the output $E$ of $Ent_1$, can be realized as follows:
Run $\mathit{CC}$ to test the consistency of $(\mo \setminus U_\mD) \cup \mb \cup U_\Tp \cup \setof{\lnot\tax_i}$ for each $\tax_i \in E$. Add all $\tax_i \in E$ for which consistent is returned to $Q_{\mathsf{exp}}$ and discard all others. Return $Q_{\mathsf{exp}}$.

We note that the implementation of the $Ent_{\mathit{ET}}$ calls in Eq.~\eqref{eq:Q_exp} as per and $Ent_1$ and $Ent_2$ is compliant with all the Postulations 1. -- 5. given above. Moreover, with any implementation of $Ent_{\mathit{ET}}$, it might be the case that $Q_{\mathsf{exp}} = \emptyset$ (e.g.\ because there are in fact no implicit entailments of $(\mo \setminus U_\mD) \cup Q \cup \mb \cup U_\Tp$ that are not entailed by $(\mo \setminus U_\mD) \cup \mb \cup U_\Tp$). In this case, as the expanded query $Q'$ is just equal to the CQ $Q$ (an explicit-entailments query), phase P2 is run instead of Step 2 of phase P3 (for simplicity, this fact is not shown in Alg.~\ref{algo:query_comp}). All the theoretical results remain valid in case of empty $Q_{\mathsf{exp}}$. 
\begin{example}\label{ex:query_expansion}
As we have seen in Ex.~\ref{ex:phase_2}, the optimized query $Q^*$ returned by phase P2 is given by $\{3,5,6\} = \{ E \to \lnot M \land X, K\to E, C \to B \}$ (cf.\ Tab.~\ref{tab:example_dpi_0}). However, suppose the user is a domain expert with only rudimentary logical skills and wants to get presented a query containing only literals or simple implication sentences of the form $X \to Y$ for literals $X,Y$.
In this case, they would set $\mathsf{enhance} := \true$ (see line~\ref{algoline:query_comp:if_enhance=true} of Alg.~\ref{algo:query_comp}),
causing the algorithm to perform (the optional) phase P3. In Step 1 of this phase, 
applying an, e.g., Description Logic reasoner such as Pellet or HermiT \citep{sirin2007,Shearer2008} capable of handling the logic $\mathcal{L}$ (in this case Propositional Logic) used in $\exdpi$, the result $Q'$ of the query expansion is $\{E \to \lnot M \land X, K\to E, C \to B, C \to \lnot M, E \to X, K \to \lnot M, E \to \lnot M, B \to \lnot M\}$. 
Note, $Q'$ has the same QP as $Q^*$ by Prop.~\ref{prop:entailment_extraction_is_q-partition_preserving}.
Given $Q'$,
the idea is now to find an irreducible subset of it which has the same QP, namely $\Pt_{21}$, as $Q'$ 
(and $Q^*$) and includes only sentences (of the types) preferred by the user. This is accomplished by Step 2, which we describe next.\qed
\end{example}

\paragraph{Step 2: Optimized Contraction of the Expanded Query.}
We next describe the functioning of \textsc{optiMinimizeQueryForQPartition} in Alg.~\ref{algo:query_comp}. The objective of this function is 
the QP-preserving minimization of the expanded query $Q'$ returned by Step 1 to obtain a $\subseteq$-minimal subset $Q^*$ of $Q'$. 
In general, there are multiple possible minimizations of $Q'$. Given some information about (a user's) preferences $\mathit{pref}$ regarding elements in $Q'$, the aim is to find some preferred query among all minimal ones. 
For instance, there might be a scenario where some elements of $Q'$, the subset $Q'_{+}$, are favorable, and all other elements, i.e.\ $Q'_{-} := Q'\setminus Q'_{+}$, are unfavorable. In other words, the desideratum is that $Q^* \subseteq Q'_{+}$. For example, when diagnosing a physical system, those measurements executable automatically by available built-in sensors could be assigned to $Q'_{+}$ and the more costly manual measurements to $Q'_{-}$. Another strategy is to use the preferred entailments $Q_{\mathsf{exp}}$ computed in Step 1 as $Q'_{+}$. 
 
More generally, a user might be able to specify a strict (partial) preference order $\prec$ over $Q'$, e.g.\ by exploiting the (evaluation) costs $c_i$ of sentences $q_i \in Q'$ (cf.\ Sec.~\ref{sec:measurement_selection}) in a way that $q_i \prec q_j$ iff $c_i < c_j$.\footnote{We denote by $x \prec y$ that $x$ is preferred to $y$.} Note, the partitioning of $Q'$ into favorable ($Q'_{+}$) and non-favorable elements ($Q'_{-}$) stated before corresponds to the special case where $x \prec y$ iff $x \in Q'_{+}$ and $y \in Q'_{-}$ (i.e.\ the DAG corresponding to this order is bipartite).

Given a strict partial order $\prec$ over $Q'$, one might want to obtain the best query according to this order $\prec$. The authors of \citep{junker04} (and originally \citep{brewka1989}) define what the term ``best'' in such a context might refer to. They suggest to use an \text{(anti-)}lexicographic preference order over sets of interest (in our case: minimal subsets of $Q'$) based on some linearization\footnote{A partial order $\prec'$ over a set $X$ is a \emph{linear extension} (or: \emph{linearization}) of a partial order $\prec$ over $X$ iff $\prec'$ is a total order and $x \prec' y$ whenever $x \prec y$.  A linearization for a partial order $\prec$ over $X$ can be found in linear time in $O(|X|+n_\prec)$ where $n_\prec$ denotes the number of tuples $x \prec y$ in the partial order $\prec$ \citep[Sec.~2.2.3.]{knuth1997v1}.} of $\prec$. Adhering to this suggested notion of ``best'' we
define according to Def.~6 and 7 in \citep{junker04}:
\begin{definition}\label{def:antilex_extension}
Given a strict total order $<$ on $Q'$ (e.g.\ a linearization of $\prec$), let the elements of $Q'$ be enumerated in increasing order $q_1,\dots,q_k$ (i.e.\ $q_i < q_j$ implies $i < j$). Further, let $X, Y \subseteq Q'$. Then $X <_{\mathsf{antilex}} Y$ (in words: $X$ is antilexicographically preferred to $Y$) iff
there is some $r$ such that $q_r \in Y\setminus X$ and $X \cap \setof{q_{r+1},\dots,q_k} = Y \cap \setof{q_{r+1},\dots,q_k}$.
\end{definition}
Intuitively, $X <_{\mathsf{antilex}} Y$ iff, when visiting the elements of $Q'$ starting from the most dispreferred ones $q_k, q_{k-1}, \dots$, one first encounters only elements that are in both or none of $X,Y$, but the first element ($q_r$) that is in exactly one of the sets $X,Y$ is in $Y$. Hence, excluding those most dispreferred elements on which $X,Y$ are equal, $Y$ contains the most dispreferred element.
\begin{example}\label{ex:antilex}
Suppose $Q' = \{a,b,\dots,z\}$ and $<$ to be the standard lexicographic order on $Q'$, i.e.\ $a<b,b<c,\dots$. Then, e.g., $X := \setof{c,g,h,m,r,u,w} <_{\mathsf{antilex}} \setof{c,g,h,n,r,u,w} =: Y$ because, deleting all most dispreferred elements $\setof{r,u,w} \in X \cap Y$,
$Y$ comprises the most dispreferred element $n$.\qed 
\end{example}
\begin{definition}\label{def:preferred_min_query}
Let $Q' \in \mQ_\mD$ be a query with q-partition $\Pt$ and $\prec$ be a strict partial (preference) order over $Q'$ and let $<$ be a linearization of $\prec$. A subset $Q \subseteq Q'$ is a \emph{preferred query} iff $\Pt = \Pt_\mD(Q)$ and there is no $\bar{Q} \subseteq Q'$ such that $\Pt = \Pt_\mD(\bar{Q})$ and $\bar{Q} <_{\mathsf{antilex}} Q$.
%
%
%
\end{definition}
For the purpose of finding a preferred query given $Q'$ with QP $\Pt$ and $\prec$, one can use a variant of the divide-and-conquer method \textsc{QuickXplain} proposed in \citep{junker04}. One appropriate such variant is the \textsc{minQ} procedure given in \citep[p.~111 ff.]{Rodler2015phd}. Roughly, it works as \textsc{QuickXplain}, but calls a function \textsc{isQPartConst} (see \citep[Alg.~4]{Rodler2015phd}), which returns $\true$ iff the QP of its input is equal to $\Pt$, instead of the $\lnot \textsc{isConsistent}$ test in line 4 of \citep[Alg. $\textsc{QuickXplain}$]{junker04}. That is, the verification whether a KB is still inconsistent in \textsc{QuickXplain} is traded for a test whether the QP is still the same in \textsc{minQ}. 
%
The following proposition about \textsc{minQ} was proven in \citep[Prop.~8.7]{Rodler2015phd}:
\begin{proposition}\label{prop:minQ_returns_subseteq-minimal_query}
Let $\mD \subseteq \minD_{\dpi}$ and $Q' \in \mQ_\mD$ a query with q-partition $\Pt$. Then, \textsc{minQ}, given $Q'$, $\Pt$ and $\dpi$ as inputs, returns a $\subseteq$-minimal query $Q^* \subseteq Q'$ such that $\Pt_\mD(Q^*) = \Pt$.	
\end{proposition}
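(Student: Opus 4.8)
The statement to prove is Proposition~\ref{prop:minQ_returns_subseteq-minimal_query}: given leading diagnoses $\mD$, a query $Q' \in \mQ_\mD$ with q-partition $\Pt$, and the DPI $\dpi$, the \textsc{minQ} procedure returns a $\subseteq$-minimal subset $Q^* \subseteq Q'$ with $\Pt_\mD(Q^*) = \Pt$.

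\textbf{Overall approach.} The plan is to reuse the known correctness argument for \textsc{QuickXplain} \citep{junker04} more or less verbatim, transferring it to \textsc{minQ} by replacing the predicate ``is inconsistent'' with the predicate ``has q-partition $\Pt$'', and checking that this new predicate has exactly the two structural properties that \textsc{QuickXplain}'s correctness proof depends on. \textsc{QuickXplain} computes a $\subseteq$-minimal subset of a ``conflicting'' set relative to a monotone predicate; the only facts it uses about that predicate are (i)~\emph{monotonicity} (if a set satisfies the predicate, so does every superset -- within $Q'$) and (ii)~that the whole input set $Q'$ satisfies the predicate, together with the splitting invariant it maintains. So the first thing I would do is isolate the relevant predicate: for $X \subseteq Q'$, say $X$ is \emph{$\Pt$-preserving} iff $\Pt_\mD(X) = \Pt$. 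I would then prove the two lemmas needed.

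\textbf{Key steps.} First I would establish \emph{monotonicity of $\Pt$-preservation on subsets of $Q'$}: if $X \subseteq Y \subseteq Q'$ and $\Pt_\mD(X) = \Pt$, then $\Pt_\mD(Y) = \Pt$. This is where Prop.~\ref{prop:explicit-ents_query_lower+upper_bound} (and/or Prop.~\ref{prop:partition}, Prop.~\ref{prop:properties_of_q-partitions}.\ref{prop:properties_of_q-partitions:enum:query_is_set_of_common_ent}) does the work: since $\mD \subseteq \minD_\dpi$ and by monotonicity of the underlying logic $\mathcal{L}$, enlarging the queried set can only move diagnoses from $\dz{}$ into $\dx{}$ or $\dnx{}$ and from $\dx{}$ into $\dnx{}$ (never the reverse), hence once $\dz{}(X)=\emptyset$ and the split $\langle\dx{}(X),\dnx{}(X)\rangle$ equals the split of $\Pt$, it is ``pinned'' and cannot change as long as we stay below the full query $Q'$, whose q-partition is also $\Pt$; concretely $\dx{}(Y)\subseteq \dx{}(X)$ (larger set, fewer diagnoses entail it) while $\dnx{}(Y)\supseteq\dnx{}(X)$, and $\dx{}(Q')=\dx{}(X)$ forces equality. (I would spell out the two inclusions $\dx{}(Q')\subseteq\dx{}(Y)\subseteq\dx{}(X)=\dx{}(Q')$.) Second, I would observe that \textsc{minQ}'s internal test \textsc{isQPartConst}, by \citep[Alg.~4]{Rodler2015phd}, is a sound and complete decision for ``$\Pt_\mD(\cdot) = \Pt$''. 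Third, with these two facts in hand, the correctness of \textsc{minQ} reduces to the correctness of \textsc{QuickXplain} with the predicate ``$\neg$ consistent'' replaced by ``$\Pt$-preserving'': I would invoke the proof of \citep[Sec.~4.4.1 / Prop.~8.7]{Rodler2015phd} and \citep{junker04}, noting that termination holds because $Q'$ is finite, that the returned set is $\Pt$-preserving because the algorithm maintains the invariant that the union of the ``background'' part and the current candidate is always $\Pt$-preserving (starting from $Q'$ itself, which is $\Pt$-preserving by hypothesis), and that the returned set is $\subseteq$-minimal because for each element kept, \textsc{QuickXplain}'s split logic has certified that removing it destroys the predicate. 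Finally $Q^* \subseteq Q'$ is immediate since \textsc{minQ} only ever discards elements of $Q'$.

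\textbf{Main obstacle.} The only non-routine part is the monotonicity lemma: establishing cleanly that $\Pt$-preservation is preserved under taking supersets \emph{within} $Q'$. The subtlety is that q-partition membership is not monotone in general (adding sentences can push a diagnosis out of $\dx{}$), so the argument genuinely needs the ``ceiling'' $Q'$ with $\Pt_\mD(Q')=\Pt$ to sandwich things, and it needs the $\subseteq$-minimality of the diagnoses in $\mD$ (exactly as used in the proof of Prop.~\ref{prop:explicit-entailments_queries_have_empty_dz}) to rule out any diagnosis landing in $\dz{}$. Once that lemma is in place, everything else is a direct citation of \textsc{QuickXplain}'s established correctness. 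I would therefore keep the write-up short: state and prove the monotonicity lemma carefully, cite \textsc{isQPartConst}'s soundness/completeness, and then conclude by appeal to \citep[Prop.~8.7]{Rodler2015phd} and \citep{junker04}.
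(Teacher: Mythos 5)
Your proposal is correct and coincides with the paper's treatment: the paper does not prove this proposition itself but imports it from \citep[Prop.~8.7]{Rodler2015phd}, and the argument behind that citation is exactly your reduction -- isolate the predicate ``$\Pt_\mD(X)=\Pt$'' for $X\subseteq Q'$, establish its monotonicity within $2^{Q'}$ via the sandwich $\dx{}(Q')\subseteq\dx{}(Y)\subseteq\dx{}(X)$ and $\dnx{}(X)\subseteq\dnx{}(Y)\subseteq\dnx{}(Q')$ (using logical monotonicity and the $\subseteq$-minimality of the diagnoses in $\mD$), and then invoke \textsc{QuickXplain}'s correctness with \textsc{isQPartConst} as the monotone decision test. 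One cosmetic slip only: enlarging a query can move diagnoses from $\dx{}$ into $\dz{}$ or $\dnx{}$ and from $\dz{}$ into $\dnx{}$, not ``from $\dz{}$ into $\dx{}$''; the formal inclusions you actually rely on are stated correctly, so nothing in the argument breaks.
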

Given a sorted input, the output of \textsc{minQ} is characterized as the next proposition states. It is a direct consequence of \citep[Theorem 1]{junker04} and Prop.~\ref{prop:minQ_returns_subseteq-minimal_query}.
\begin{proposition}\label{prop:minQ_returns_preferred_query}
Let $\mD \subseteq \minD_{\dpi}$, $Q' \in \mQ_\mD$ a query with q-partition $\Pt$, $\prec$ a strict (partial) order over $Q'$ and $Q'_{\mathsf{sort}}:=[q_1,\dots,q_k]$ an ascending sorting of $Q'$ based on any linearization $<$ of $\prec$. Then, \textsc{minQ}, given $Q'_{\mathsf{sort}}$, $\Pt$ and $\dpi$ as inputs, returns a preferred query $Q^*$.
\end{proposition}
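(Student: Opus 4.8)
The plan is to obtain Prop.~\ref{prop:minQ_returns_preferred_query} as a direct instance of Junker's correctness theorem for \textsc{QuickXplain} (\citep[Theorem~1]{junker04}), once the two settings have been aligned. Recall that \textsc{minQ} is \textsc{QuickXplain} with the consistency test on the current candidate set replaced by a call to \textsc{isQPartConst}, which returns $\true$ exactly when the q-partition of its argument (w.r.t.\ the fixed DPI $\dpi$) equals the target $\Pt$. Hence \textsc{minQ} invoked on $(Q'_{\mathsf{sort}},\Pt,\dpi)$ \emph{is} \textsc{QuickXplain} with empty background, input list the $<$-sorted $Q'_{\mathsf{sort}}=[q_1,\dots,q_k]$, and the abstract ``is a conflict'' predicate instantiated as ``has q-partition $\Pt$''. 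Junker's Theorem~1 then guarantees, provided (i) this predicate is \emph{monotone}, i.e.\ supersets of conflicts are conflicts on the relevant domain, and (ii) the input set itself is a conflict, that the returned set is the $<_{\mathsf{antilex}}$-minimal conflict contained in the input. Condition (ii) is immediate since $Q'$ is a query with q-partition $\Pt$ by hypothesis, so the work reduces to establishing (i) on the domain $2^{Q'}$ over which \textsc{minQ}'s recursion ranges.

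For (i), I would show: if $Q_1\subseteq Q_2\subseteq Q'$ and $\Pt_\mD(Q_1)=\Pt$, where $\Pt=\langle\dx{},\dnx{},\emptyset\rangle$, then $\Pt_\mD(Q_2)=\Pt$. From the definitions in Prop.~\ref{prop:partition} together with the monotonicity of $\models_{\mathcal{L}}$ and of the requirements $\RQ$ (Sec.~\ref{sec:assumptions}), $X\mapsto\dx{}(X)$ is antitone and $X\mapsto\dnx{}(X)$ is monotone w.r.t.\ $\subseteq$: $\mo^{*}_i\models Y$ and $X\subseteq Y$ give $\mo^{*}_i\models X$, and a violation of $\RQ$ or $\Tn$ by $\mo^{*}_i\cup X$ is inherited by the superset $\mo^{*}_i\cup Y$. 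Applying this along $Q_1\subseteq Q_2\subseteq Q'$ and using $\Pt_\mD(Q_1)=\Pt_\mD(Q')=\Pt$ squeezes $\dx{}=\dx{}(Q')\subseteq\dx{}(Q_2)\subseteq\dx{}(Q_1)=\dx{}$ and likewise $\dnx{}(Q_2)=\dnx{}$; since $\Pt$ is a q-partition, $\dx{}\cup\dnx{}=\mD$, hence $\dz{}(Q_2)=\emptyset$ and $\Pt_\mD(Q_2)=\Pt$. I expect this squeeze --- which crucially uses that \emph{every} set examined during the run sits below the fixed upper bound $Q'$, already known to realize $\Pt$ --- to be the only point requiring care; the rest is bookkeeping against Junker's formalism.

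With (i) and (ii) established, \citep[Theorem~1]{junker04} yields that \textsc{minQ}$(Q'_{\mathsf{sort}},\Pt,\dpi)$ outputs some $Q^*\subseteq Q'$ with $\Pt_\mD(Q^*)=\Pt$ that is $<_{\mathsf{antilex}}$-minimal among all subsets of $Q'$ having q-partition $\Pt$, for the linearization $<$ of $\prec$ used to build $Q'_{\mathsf{sort}}$. By Def.~\ref{def:antilex_extension} and Def.~\ref{def:preferred_min_query} this is exactly the assertion that $Q^*$ is a \emph{preferred query}; that $Q^*$ is in addition a $\subseteq$-minimal query with q-partition $\Pt$ is already ensured by Prop.~\ref{prop:minQ_returns_subseteq-minimal_query}, so no further verification is needed. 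I would close with the remark that, since any non-$\subseteq$-minimal conflict is $<_{\mathsf{antilex}}$-dominated by each of its conflict subsets, the optimum over \emph{all} $\bar{Q}\subseteq Q'$ with q-partition $\Pt$ (as quantified in Def.~\ref{def:preferred_min_query}) coincides with the optimum over $\subseteq$-minimal ones returned by Junker's procedure, which completes the proof.
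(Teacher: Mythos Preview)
Your proposal is correct and follows exactly the paper's approach, which simply states that the result is a direct consequence of \citep[Theorem~1]{junker04} and Prop.~\ref{prop:minQ_returns_subseteq-minimal_query}; you have merely spelled out the monotonicity verification and the reduction that the paper leaves implicit. One minor point: you assume $\dz{}=\emptyset$ in the squeeze, but the proposition does not require this---however, your argument still goes through verbatim for general $\Pt$, since once $\dx{}(Q_2)=\dx{}$ and $\dnx{}(Q_2)=\dnx{}$ are established, $\dz{}(Q_2)=\mD\setminus(\dx{}\cup\dnx{})=\dz{}$ follows from the partition property regardless.
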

Let us explicate the implications of Prop.~\ref{prop:minQ_returns_preferred_query}. Let $|Q^*| = s$. Studying Def.~\ref{def:antilex_extension} carefully, we see that $Q^*$, 
among all $\subseteq$-minimal subsets of $Q'$ with the same QP $\Pt$ as $Q'$, is the one with the leftmost rightmost element w.r.t.\ the sorting $Q'_{\mathsf{sort}}$. Moreover, among the (possibly multiple) subsets of $Q'$ with this property, $Q^*$ is the one with the leftmost $2$nd-rightmost element w.r.t.\ the sorting $Q'_{\mathsf{sort}}$, and so on for all $l$th-rightmost elements for $l \in \setof{1,\dots,s}$. This insight leads to some interesting corollaries, which we state after a small illustrating example:
\begin{example}
Let $Q'_{\mathsf{sort}} = [x_1,q_b,q_{d1},x_2,q_a,x_3,q_{a,b},q_c,x_4,q_{a,b,c},q_{d2},x_5]$ where the $\subseteq$-minimal QP-preserving subsets of $Q'$ are $Q_a := \{q_a,q_{a,b},q_{a,b,c}\}$, $Q_b :=\{q_b,q_{a,b},q_{a,b,c}\}$, $Q_c :=\{q_c,q_{a,b,c}\}$ and $Q_d :=\{q_{d1},q_{d2}\}$ and the elements $x_i$ are irrelevant in that they do not occur in any of the subsets $Q_i$ ($i \in \setof{a,\dots,d}$) of interest. Then \textsc{minQ}, given $Q'_{\mathsf{sort}}$, returns $Q_b$. 
The explanation is as follows:
The leftmost rightmost element (regarding the used sorting of $Q'$) of any of the sets of interest is $q_{a,b,c}$ (note, $q_{d2}$ is another rightmost element, namely of $Q_d$, but lies more to the right), i.e.\ $Q_d$ is definitely not the returned set. Fixing $q_{a,b,c}$ (rightmost element of the returned set), the leftmost $2$nd-rightmost element contained in any of the remaining possible sets ($Q_a,Q_b,Q_c$) is $q_{a,b}$, i.e.\ $Q_c$ is definitely not the result. Finally, the leftmost $3$rd-rightmost 
element comprised by any of the remaining sets ($Q_a,Q_b$) with common intersection $\{q_{a,b},q_{a,b,c}\}$ of largest elements (indices $7$ to $12$) in $Q'_{\mathsf{sort}}$ is $q_b$, which is why $Q_b <_{\mathsf{antilex}} Q_a$.\qed   
\end{example}
The first corollary testifies that the QCM $c_{\max}$ (see page~\pageref{etc:QCM})
is optimized by \textsc{minQ} if costs of sentences in the query $Q'$ are available (cf.\ Sec.~\ref{sec:measurement_selection}).
\begin{corollary}\label{cor:P3_optimizes_c_max}
Let $c_i$ be the cost of $q_i \in Q'$ and $Q'_{\mathsf{sort}} = [q_1,\dots,q_k]$ be sorted in ascending order by cost, i.e.\ $c_i < c_j$ implies that $q_i$ occurs before $q_j$ in $Q'_{\mathsf{sort}}$. 
Then, \textsc{minQ}, given the other inputs as stated in Prop.~\ref{prop:minQ_returns_preferred_query}, returns a query that is optimal regarding the QCM $c_{\max}$. 
%
%
\end{corollary}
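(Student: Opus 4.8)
The plan is to show that the antilexicographically preferred query $Q^*$ returned by \textsc{minQ} on the cost-sorted input $Q'_{\mathsf{sort}}$ coincides with a query minimizing $c_{\max}(Q) = \max_{q_i \in Q} c_i$ among all $\subseteq$-minimal QP-preserving subsets of $Q'$. By Prop.~\ref{prop:minQ_returns_preferred_query}, $Q^*$ is a preferred query, i.e.\ a $\subseteq$-minimal subset $Q\subseteq Q'$ with $\Pt_\mD(Q) = \Pt_\mD(Q')$ such that no other such subset $\bar Q$ satisfies $\bar Q <_{\mathsf{antilex}} Q$. The key observation (already spelled out in the discussion following Prop.~\ref{prop:minQ_returns_preferred_query}) is that $Q^*$ has, among all $\subseteq$-minimal QP-preserving subsets, the \emph{leftmost rightmost element} with respect to the sorting $Q'_{\mathsf{sort}}$.

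First I would translate this ``leftmost rightmost element'' property into cost language. Since $Q'_{\mathsf{sort}} = [q_1,\dots,q_k]$ is sorted in ascending order of cost ($c_i < c_j \Rightarrow q_i$ precedes $q_j$), the rightmost element of any subset $Q$ w.r.t.\ the sorting is precisely (a sentence attaining) its maximal cost, i.e.\ $c_{\max}(Q) = c_{\mathrm{idx}}$ where $q_{\mathrm{idx}}$ is the rightmost element of $Q$ in $Q'_{\mathsf{sort}}$. (If several sentences share the maximal cost, the sorting is not unique, but any admissible linearization places one of them last, and they all carry the same cost, so $c_{\max}$ is well-defined regardless.) Hence ``$Q^*$ has the leftmost rightmost element'' means exactly ``$Q^*$ attains the minimal possible value of $c_{\max}$'' over all $\subseteq$-minimal QP-preserving subsets of $Q'$.

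Next I would assemble the argument formally. Let $\mathcal{M}$ denote the family of all $\subseteq$-minimal subsets $Q\subseteq Q'$ with $\Pt_\mD(Q)=\Pt_\mD(Q')$; by Prop.~\ref{prop:minQ_returns_subseteq-minimal_query} we have $Q^*\in\mathcal M$ and by Prop.~\ref{prop:minQ_returns_preferred_query} $Q^*$ is $\leq_{\mathsf{antilex}}$-minimal in $\mathcal M$. Suppose for contradiction some $\bar Q\in\mathcal M$ had $c_{\max}(\bar Q) < c_{\max}(Q^*)$. Let $r$ be the index (in $Q'_{\mathsf{sort}}$) of the rightmost element of $Q^*$; then $c_r = c_{\max}(Q^*)$. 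Every element of $\bar Q$ has cost $< c_r$, hence index $< r$ in $Q'_{\mathsf{sort}}$, so $\bar Q \cap \{q_r, q_{r+1}, \dots, q_k\} = \emptyset$. Since $q_r\in Q^*$, we get $q_r\in Q^*\setminus\bar Q$ while $\bar Q$ and $Q^*$ agree (both empty) on $\{q_{r+1},\dots,q_k\}$; by Def.~\ref{def:antilex_extension} this yields $\bar Q <_{\mathsf{antilex}} Q^*$, contradicting the minimality of $Q^*$. Therefore $c_{\max}(Q^*) \leq c_{\max}(\bar Q)$ for all $\bar Q\in\mathcal M$, i.e.\ $Q^*$ is optimal w.r.t.\ $c_{\max}$.

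The main obstacle is a bookkeeping subtlety rather than a deep step: handling ties in cost so that the identification of ``rightmost element w.r.t.\ the sorting'' with ``maximal cost'' is airtight, and making sure that whatever linearization $<$ of $\prec$ is used still induces an admissible cost-ascending order (this holds because $\prec$ was defined so that $q_i\prec q_j$ iff $c_i<c_j$, so any linearization is a refinement of the cost order and breaks ties arbitrarily, leaving $c_{\max}$ unaffected). Once that is pinned down, the contradiction argument above is immediate, and the corollary follows.
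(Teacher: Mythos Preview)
Your proposal is correct and follows essentially the same approach as the paper: the paper does not give a separate formal proof of this corollary but instead presents it as an immediate consequence of the ``leftmost rightmost element'' characterization of the antilexicographically preferred query discussed right after Prop.~\ref{prop:minQ_returns_preferred_query}. You have simply made that informal reasoning precise via the contradiction argument, which is exactly what the paper intends the reader to extract from the preceding discussion.
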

Given a partitioning of $Q'$ into favored and unfavored elements, \textsc{minQ} computes a query consisting of only favored elements, if such a query exists.
\begin{corollary}\label{cor:P3_computes_query_containing_only_preferred_elements_if_existent}
Let $Q'_{+} \subseteq Q'$ be the preferred and $Q'_{-} = Q' \setminus Q'_{+}$ be the dispreferred elements of $Q'$. Further, define $Q'_{\mathsf{sort}}$ as the list resulting from the concatenation $Q'_{+} \| Q'_{-}$. Then, if such a query $Q^* \subseteq Q'$ exists, \textsc{minQ}, given the other inputs as stated in Prop.~\ref{prop:minQ_returns_preferred_query}, returns a query $Q^* \subseteq Q'_{+}$.
\end{corollary}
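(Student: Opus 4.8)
The goal is Corollary~\ref{cor:P3_computes_query_containing_only_preferred_elements_if_existent}, which claims that if we feed \textsc{minQ} the list $Q'_{\mathsf{sort}} := Q'_{+} \| Q'_{-}$ (favored elements first, then dispreferred elements), then, \emph{provided} a $\subseteq$-minimal QP-preserving query $Q^* \subseteq Q'_{+}$ exists at all, \textsc{minQ} will return one such query $Q^* \subseteq Q'_{+}$.

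The plan is to derive this as a direct consequence of Prop.~\ref{prop:minQ_returns_preferred_query} together with the antilexicographic characterization of \textsc{minQ}'s output, exactly as was done for Cor.~\ref{cor:P3_optimizes_c_max}. First I would observe that the concatenation $Q'_{+} \| Q'_{-}$ is indeed the ascending sorting of $Q'$ under (a linearization of) the strict partial order $\prec$ defined by $x \prec y$ iff $x \in Q'_{+}$ and $y \in Q'_{-}$ — i.e.\ the bipartite order mentioned on page~\pageref{def:preferred_min_query}. Hence Prop.~\ref{prop:minQ_returns_preferred_query} applies verbatim and tells us that the returned set $Q_r := \textsc{minQ}(Q'_{\mathsf{sort}},\Pt,\dpi)$ is a \emph{preferred query}: it is $\subseteq$-minimal, QP-preserving, and antilex-minimal among all such subsets of $Q'$ w.r.t.\ the sorting $Q'_{\mathsf{sort}}$.

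Next I would argue by contradiction. Suppose $Q_r \not\subseteq Q'_{+}$, so $Q_r$ contains at least one element of $Q'_{-}$; let $q_r$ be the rightmost element of $Q_r$ in the ordering $Q'_{\mathsf{sort}}$, so $q_r \in Q'_{-}$ (since all of $Q'_{-}$ sits to the right of all of $Q'_{+}$ in the concatenation). By hypothesis there is some $\subseteq$-minimal QP-preserving query $Q^* \subseteq Q'_{+}$. Every element of $Q^*$ lies strictly to the left of $q_r$ in $Q'_{\mathsf{sort}}$, because $Q^* \subseteq Q'_{+}$ and $q_r \in Q'_{-}$. I then want to conclude $Q^* <_{\mathsf{antilex}} Q_r$, contradicting the antilex-minimality of $Q_r$: take the threshold index to be the position of $q_r$; then $q_r \in Q_r \setminus Q^*$ and, since $Q^*$ has \emph{no} elements at positions $\geq$ that of $q_r$ whereas $Q_r$'s elements strictly right of $q_r$ are likewise none (as $q_r$ is $Q_r$'s rightmost), the tails $Q^* \cap \{q_{\text{(pos of }q_r)+1},\dots\}$ and $Q_r \cap \{\dots\}$ agree (both empty). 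This matches Def.~\ref{def:antilex_extension}, so $Q^* <_{\mathsf{antilex}} Q_r$ — contradiction. Therefore $Q_r \subseteq Q'_{+}$, and $Q^* := Q_r$ is the desired query (it is QP-preserving and $\subseteq$-minimal by Prop.~\ref{prop:minQ_returns_subseteq-minimal_query}).

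The one point that needs a little care — and the only place the argument could slip — is the handling of the antilex comparison when $Q^*$ and $Q_r$ might share elements to the right of some of $Q^*$'s elements but left of $q_r$; the clean way around this is to pick $q_r$ as $Q_r$'s \emph{rightmost} element (forcing it into $Q'_{-}$ by the concatenation structure) and to use precisely that position as the witness index $r$ in Def.~\ref{def:antilex_extension}, so that both tails are empty and the definition is satisfied immediately. I would also explicitly flag that the corollary is vacuous (and hence trivially true) when no QP-preserving $Q^* \subseteq Q'_{+}$ exists, matching the ``if such a query exists'' proviso in the statement. No reasoner calls are hidden here beyond those inside \textsc{minQ} itself, and the whole argument is purely order-theoretic given Prop.~\ref{prop:minQ_returns_preferred_query}.
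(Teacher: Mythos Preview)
Your proposal is correct and follows essentially the same approach the paper takes: the paper states this corollary without an explicit proof, treating it as a direct consequence of Prop.~\ref{prop:minQ_returns_preferred_query} together with the observation (spelled out just before Cor.~\ref{cor:P3_optimizes_c_max}) that the returned query has the \emph{leftmost rightmost element} among all $\subseteq$-minimal QP-preserving subsets. Your contradiction argument via the rightmost element $q_r$ of $Q_r$ is exactly the formalization of that observation, and your handling of the antilex witness index is clean and correct.
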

If, for example, preferences are only given over elements of $Q'_{-}$ (e.g.\ because one puts all preferred entailments $Q_{\mathsf{exp}}$ (see Eq.~\eqref{eq:Q_exp}) computed in Step 1 into $Q'_{+}$ and is indifferent between them), then we can combine Cor.~\ref{cor:P3_optimizes_c_max} and \ref{cor:P3_computes_query_containing_only_preferred_elements_if_existent} to:
\begin{corollary}\label{}
Let $Q'_{+},Q'_{-}$ be as in Cor.~\ref{cor:P3_computes_query_containing_only_preferred_elements_if_existent}, $\prec$ be a (partial) strict order over $Q'_{-}$ and $Q'_{-,\mathsf{sort}}$ be a sorting of $Q'_{-}$ based on some linearization of $\prec$. Further, define $Q'_{\mathsf{sort}}$ as the list resulting from the concatenation $Q'_{+} \| Q'_{-,\mathsf{sort}}$. Then, if such a query $Q^* \subseteq Q'$ exists, \textsc{minQ}, given the other inputs as stated in Prop.~\ref{prop:minQ_returns_preferred_query}, returns a query $Q^* \subseteq Q'_{+}$. Otherwise, it returns a query that is optimal regarding the QCM $c_{\max}$.
\end{corollary}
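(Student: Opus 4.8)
The plan is to derive this corollary by directly combining Corollary~\ref{cor:P3_optimizes_c_max} and Corollary~\ref{cor:P3_computes_query_containing_only_preferred_elements_if_existent}, both of which are instances of Prop.~\ref{prop:minQ_returns_preferred_query}, so that essentially no new work is required beyond matching the two hypotheses against the single input list $Q'_{\mathsf{sort}} := Q'_{+}\,\|\,Q'_{-,\mathsf{sort}}$. First I would observe that $Q'_{\mathsf{sort}}$ already has exactly the shape demanded by Corollary~\ref{cor:P3_computes_query_containing_only_preferred_elements_if_existent}: it is the concatenation of a listing of $Q'_{+}$ with a listing of $Q'_{-} = Q'\setminus Q'_{+}$, namely $Q'_{-,\mathsf{sort}}$. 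Hence, if a query $Q^*\subseteq Q'_{+}$ with $\Pt_\mD(Q^*) = \Pt$ exists, Corollary~\ref{cor:P3_computes_query_containing_only_preferred_elements_if_existent} immediately yields that \textsc{minQ}, run on $Q'_{\mathsf{sort}}$ together with $\Pt$ and $\dpi$, returns a query $Q^*\subseteq Q'_{+}$, which is the first assertion.

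For the second assertion, I would make explicit the (natural) cost structure underlying the scenario: since the elements of $Q'_{+}$ are the ``cost-preferred'' ones and $\prec$ encodes a cost order on $Q'_{-}$ (cf.\ Sec.~\ref{sec:measurement_selection}), fix any cost function $c\colon Q'\to\mathbb{R}^+$ such that $c(q) < c(q')$ for every $q\in Q'_{+}$ and $q'\in Q'_{-}$, and such that $c$ restricted to $Q'_{-}$ is strictly increasing along $Q'_{-,\mathsf{sort}}$. Such a $c$ exists because $\prec$ is a strict partial order on the finite set $Q'_{-}$ whose linearization $Q'_{-,\mathsf{sort}}$ we may realize by assigning increasing reals, all strictly larger than the constant cost shared by the elements of $Q'_{+}$. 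With this $c$, the list $Q'_{\mathsf{sort}} = Q'_{+}\,\|\,Q'_{-,\mathsf{sort}}$ is an ascending sorting by cost, so Corollary~\ref{cor:P3_optimizes_c_max} applies and tells us that \textsc{minQ}, on input $Q'_{\mathsf{sort}}$, returns a query optimal w.r.t.\ $c_{\max}$; in particular this holds in the ``otherwise'' case (no QP-preserving subset of $Q'$ is contained in $Q'_{+}$), giving the second assertion. I would also note that the two conclusions are mutually consistent: whenever a QP-preserving query $\subseteq Q'_{+}$ exists its $c_{\max}$-value equals that common $Q'_{+}$-cost, which is strictly below the $c_{\max}$-value of any subset meeting $Q'_{-}$, so $c_{\max}$-optimality by itself already forces the output into $Q'_{+}$.

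Since \textsc{minQ} is deterministic and its output is a function of its ordered input, ``the query returned by \textsc{minQ} on $Q'_{\mathsf{sort}}$'' denotes a single well-defined set, and both cited corollaries speak about this same set, so no tension arises from invoking them together. The only point that needs care — and the main, rather minor, obstacle — is precisely this reconciliation of hypotheses: Corollary~\ref{cor:P3_computes_query_containing_only_preferred_elements_if_existent} requires only the block structure $Q'_{+}\,\|\,Q'_{-}$, whereas Corollary~\ref{cor:P3_optimizes_c_max} requires a strictly cost-ascending list, so one must exhibit a cost assignment making the given concatenation $Q'_{+}\,\|\,Q'_{-,\mathsf{sort}}$ satisfy both simultaneously, and confirm that $c_{\max}$ in the statement refers to exactly this $c$. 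Everything else is immediate from the two cited corollaries. Alternatively, one could bypass them and argue directly from Prop.~\ref{prop:minQ_returns_preferred_query} and Prop.~\ref{prop:minQ_returns_subseteq-minimal_query}: the returned set is a $\subseteq$-minimal, $<_{\mathsf{antilex}}$-minimal QP-preserving subset of $Q'$ w.r.t.\ the linearization $Q'_{\mathsf{sort}}$, and a short case analysis on its rightmost element — which must lie in the $Q'_{-,\mathsf{sort}}$ block whenever the set meets $Q'_{-}$, so that any QP-preserving subset of $Q'_{+}$ would be $<_{\mathsf{antilex}}$-smaller — reproduces both assertions directly from Def.~\ref{def:antilex_extension} and Def.~\ref{def:preferred_min_query}.
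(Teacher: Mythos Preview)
Your proposal is correct and follows the same approach the paper intends: the paper does not give an explicit proof of this corollary at all, merely introducing it with ``we can combine Cor.~\ref{cor:P3_optimizes_c_max} and \ref{cor:P3_computes_query_containing_only_preferred_elements_if_existent} to'' and then moving on to an example. Your write-up supplies exactly the missing details of that combination, including the one nontrivial point the paper leaves implicit---that a single cost assignment $c$ can be chosen so that $Q'_{+}\,\|\,Q'_{-,\mathsf{sort}}$ simultaneously satisfies the block-structure hypothesis of Cor.~\ref{cor:P3_computes_query_containing_only_preferred_elements_if_existent} and the ascending-cost hypothesis of Cor.~\ref{cor:P3_optimizes_c_max}---and your observation that the two conclusions are mutually consistent (since any QP-preserving subset of $Q'_{+}$ already minimizes $c_{\max}$) is a nice sanity check the paper omits.
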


Note that (under the assumption of P $\neq$ NP) it is not achievable by a single \textsc{minQ}-call to find a minimum-cardinality query with the same QP as $Q'$. One way to see this is by the fact that \textsc{minQ} runs in polynomial time (modulo the time required for calls to \textsc{isQPartConst}), as observed in \citep[Prop.~8.8]{Rodler2015phd}, and by Prop.~\ref{prop:complexity_P2} which states that finding cardinality-minimal QP-preserving queries 
is already an NP-hard problem for explicit-entailments queries $Q'$, 
a subclass of all queries in $\mQ_\mD$.
Hence, assuming \textsc{minQ} to be a polynomial procedure for finding QP-preserving subqueries of minimal size for arbitrary queries $Q' \in \mQ_\mD$ would imply in particular the existence of a polynomial procedure for the problem over the said subclass of queries because \textsc{isQPartConst} can be reduced to set comparisons and therefore runs in polynomial time for explicit-entailments queries (see Prop.~\ref{prop:ee-query_q-partition_construction_by_means_of_set_comparison}). Thus, we could in this case derive the equality P $=$ NP.

Furthermore, we point out that there might be multiple preferred queries since Def.~\ref{def:preferred_min_query} assumes an arbitrary linearization of the partial order $\prec$. However, if there are multiple preferred queries, then all of them are incomparable w.r.t.\ $\prec$. That is, they stand in no \emph{necessary} preference relationship with each other in the sense of Def.\ref{def:antilex_extension}. 
\begin{example}\label{ex:multiple_preferred_queries}
Let $Q'_{\mathsf{sort}} = [a_1,b_1,b_2,a_2,c_1,c_2]$ where $\setof{x_1,x_2}$ constitute all possible preferred queries and $x_1 \prec x_2$ are the only preferences given, for $x \in \setof{a,b,c}$. In this case, \textsc{minQ} will return $\setof{b_1,b_2}$ since its rightmost element ($b_2$, index 3 in $Q'_{\mathsf{sort}}$) is the leftmost of all rightmost elements $x_2$ for $x \in \setof{a,b,c}$ (indices of other rightmost elements are 4 for $a_2$ and 6 for $c_2$). But, $\prec$ admits e.g.\ also the sorting $Q'_{\mathsf{sort}} = [c_1,c_2,a_1,b_1,b_2,a_2]$ which would involve the output of $\setof{c_1,c_2}$. Since no $c_i$ is comparable with any $x_i$ for $x \in \setof{a,b}$ and $i \in \setof{1,2}$, $\prec$ is indifferent between the sets $\setof{b_1,b_2}$ (result in the first case) and $\setof{c_1,c_2}$ (result in the second case). Obviously, there is also a sorting which favors $\setof{a_1,a_2}$. If however $\prec$ includes additionally $a_2 \prec c_1$, then $\prec$ \emph{always} prefers $\setof{a_1,a_2}$ to $\setof{c_1,c_2}$ and the only possible outcomes, depending on the used linearization of $\prec$, are $\setof{a_1,a_2}$ or $\setof{b_1,b_2}$.\qed
\end{example}

Let us now reconsider our running example to see the results of applying Step 2 to the expanded query computed in Ex.~\ref{ex:query_expansion}. 
\begin{example}\label{ex:query_contraction}
Recall from Ex.~\ref{ex:phase_2} that the final computed query, if possible, should contain only literals or simple implication sentences of the form $X \to Y$ for literals $X,Y$. Therefore, the expanded query $Q'$ (see Ex.~\ref{ex:phase_2}) is to be partitioned into the preferred sentences $Q'_{+} = \{ K\to E, C \to B, C \to \lnot M, E \to X, K \to \lnot M, E \to \lnot M, B \to \lnot M \}$ and dispreferred ones $Q'_{-} = \{ E \to \lnot M \land X \}$. Given the (ordered) list $Q'_{\mathsf{sort}} := Q'_{+} \| Q'_{-}$ (see equations below, cf.\ Cor.~\ref{cor:P3_computes_query_containing_only_preferred_elements_if_existent}) 
as well as $\Pt := \Pt_{21}$ (QP of $Q'$, see Ex.~\ref{ex:canonical_q-partition_search_ENT}) and the DPI $\dpi := \exdpi$ (see Tab.~\ref{tab:example_dpi_0}) as inputs, we roughly illustrate the functioning of \textsc{minQ}. At this, we assume that \textsc{minQ} splits the still relevant sublist of $Q'_{\mathsf{sort}}$ in half in each iteration (this yields the lowest worst case complexity, cf.\ \citep{junker04}). 
Further, the single underlined sublist denotes the current input to the function \textsc{isQPartConst}, the double underlined elements are those that are already fixed elements of the returned solution $Q^*$, and the grayed out elements those that are definitely not in the returned solution $Q^*$. Finally, $\times$ and $\checkmark$ signify that the QP of the underlined subquery is different from or equal to $\Pt$, respectively. Next, we show the algorithm's actions on $Q'_{\mathsf{sort}}$:
\begin{align*}
[\underline{K\to E, C \to B, C \to \lnot M, E \to X}, K \to \lnot M, E \to \lnot M, B \to \lnot M,  E \to \lnot M \land X] & \;\;\;\checkmark \\
[\underline{K\to E, C \to B}, C \to \lnot M, E \to X, {\color{gray}K \to \lnot M, E \to \lnot M, B \to \lnot M,  E \to \lnot M \land X}] & \;\;\;\times \\
[\underline{K\to E, C \to B, C \to \lnot M}, E \to X, {\color{gray}K \to \lnot M, E \to \lnot M, B \to \lnot M,  E \to \lnot M \land X}] & \;\;\;\times \\
[\underline{K\to E, C \to B}, C \to \lnot M, \underline{\underline{E \to X}}, {\color{gray}K \to \lnot M, E \to \lnot M, B \to \lnot M,  E \to \lnot M \land X}] & \;\;\;\times \\
[K\to E, C \to B, \underline{\underline{C \to \lnot M, E \to X}}, {\color{gray}K \to \lnot M, E \to \lnot M, B \to \lnot M,  E \to \lnot M \land X}] & \;\;\;\checkmark
\end{align*}
For instance, in the first line, \textsc{isQPartConst} returns $\true$ (see $\checkmark$) since the (underlined) left half $Q'_{\mathsf{sort}}[1..4]$ is still a query with the same QP (i.e.\ $\Pt_{21}$) as $Q'$. Thence, the right half of elements can be dismissed (one solution is guaranteed to be in the left half). The latter is again split in half and the left part $Q'_{\mathsf{sort}}[1..2]:=[K\to E, C \to B]$ is tested by \textsc{isQPartConst}, which returns negatively (line 2). The reason is that 
$\Pt_{\mD}(Q'_{\mathsf{sort}}[1..2]) = \tuple{\setof{\md_1,\md_4,\md_5,\md_6},\setof{\md_2,\md_3}, \emptyset} \neq \tuple{\setof{\md_4,\md_5},\setof{\md_1,\md_2,\md_3,\md_6}, \emptyset} = \Pt_{21}$. Thus, a half of the right part is added to the left part, yielding $Q'_{\mathsf{sort}}[1..3]$ (see underlined elements in line 3), and again tested. Once more it is found that $\Pt_{\mD}(Q'_{\mathsf{sort}}[1..3]) = \tuple{\setof{\md_4,\md_5},\setof{\md_1,\md_2,\md_3}, \setof{\md_6}} \neq \Pt_{21}$. Note, due to the positive \textsc{isQPartConst} check in line 1, it is now clear that $E \to X$ (see double underline in line 4) must be in the solution $Q^*$. From now on, $E \to X$ is part of any input to \textsc{isQPartConst} argument. Since $Q'_{\mathsf{sort}}[1..2]$ (along with $E \to X$) has the QP $\tuple{\setof{\md_4,\md_5},\setof{\md_2,\md_3,\md_6}, \setof{\md_1}}$ (see line 4), it is now a fact that $C \to \lnot M$ must as well be an element of $Q^*$. Eventually, the (positive) \textsc{isQPartConst} test for $Q'_{\mathsf{sort}}[3..4]$ (see line 5) proves that the latter must be a $\subseteq$-minimal subquery $Q^*$ of $Q'$ with QP $\Pt_{21}$. Lastly, $Q^*$ is returned.

We observe that, $Q^* \subseteq Q'_{+}$ holds, as required. That is, the returned query contains only preferred sentences.
\qed
\end{example}

\paragraph{Complexity of P3.} 
Step 1 requires exactly $2$ calls of the function $Ent_{\mathit{ET}}$, if there is a reasoner implementing such a function directly (see page~\pageref{etc:implementation_of_Ent_ET}). Alternatively, using a consistency checker $\mathit{CC}$ to realize $Ent_{\mathit{ET}}$ (see page~\pageref{etc:implementation_of_Ent_ET}),
the latter requires maximally a constant number $t$ of consistency checks. 
The complexity of $Ent_{\mathit{ET}}$ (and of consistency checks) depends on the expressivity of the underlying knowledge representation formalism. 
Hence:
\begin{proposition}\label{prop:P3_step1_complexity}
Step 1 of P3 runs in $O(1)$ time (modulo reasoning time).
\end{proposition}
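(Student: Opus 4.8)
The plan is to argue that, once the (unboundedly expensive) internal work of the inference engine is factored out, everything Step~1 does is a fixed, problem-size-independent amount of bookkeeping. Recall that Step~1 (function \textsc{expandQueryForQPartition} in Alg.~\ref{algo:query_comp}) computes the set $Q_{\mathsf{exp}}$ according to Eq.~\eqref{eq:Q_exp} and then returns the expanded query $Q' := Q \cup Q_{\mathsf{exp}}$ as in Eq.~\eqref{eq:Q'_enriched_query} (or, in the degenerate case $Q_{\mathsf{exp}} = \emptyset$, hands control to P2). So I would simply count the operations involved in evaluating Eq.~\eqref{eq:Q_exp}.

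First, consider the case where a reasoner directly implements the operator $Ent_{\mathit{ET}}$. Then evaluating the right-hand side of Eq.~\eqref{eq:Q_exp} requires: constructing the two argument KBs $(\mo\setminus U_{\mD}) \cup Q \cup \mb \cup U_{\Tp}$ and $(\mo\setminus U_{\mD}) \cup \mb \cup U_{\Tp}$; invoking $Ent_{\mathit{ET}}$ on each of them (exactly two reasoner calls); taking one set difference of the two returned entailment sets; and subtracting $Q$. That is a constant number of set-union / set-difference operations plus exactly two calls of $Ent_{\mathit{ET}}$. Treating each such set operation as a unit operation --- consistent with the convention used throughout the paper's complexity analyses --- and excluding the runtime spent inside the two $Ent_{\mathit{ET}}$ calls (this is the ``reasoning time'' the statement sets aside), the work performed by Step~1 is $O(1)$.

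Second, consider the case where only a sound and complete consistency checker $\mathit{CC}$ is available and $Ent_{\mathit{ET}}$ is realized via the procedures $Ent_1$ and $Ent_2$ described on page~\pageref{etc:implementation_of_Ent_ET}. Here I would invoke the explicit termination criteria built into $Ent_1$: its loop halts as soon as the number of executed consistency checks reaches the absolute bound $t$ (or the earlier bounds $r$, $s$), so $Ent_1$ performs at most $t$ calls of $\mathit{CC}$, and $Ent_2$ performs one call of $\mathit{CC}$ per candidate in the set $E$ it receives, with $|E| \le r \le t$; hence the combined realization of the two $Ent_{\mathit{ET}}$ invocations in Eq.~\eqref{eq:Q_exp} uses at most $O(t) = O(1)$ consistency checks. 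Besides these checks, each iteration of $Ent_1$ generates one candidate sentence and performs a constant number of set-membership tests, and there are at most $t$ iterations; together with the same constantly many set operations as in the first case, this again leaves $O(1)$ work once the time inside $\mathit{CC}$ (i.e., reasoning time) is set aside.

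I expect the only delicate point to be making precise what ``$O(1)$ time (modulo reasoning time)'' is intended to mean --- in particular, whether operations such as $\mo \setminus U_\mD$ or the set differences of the computed entailment sets are to be charged. The cleanest resolution, and the one I would adopt, is to note that the genuinely expensive component --- the number of invocations of the inference engine (resp.\ of $\mathit{CC}$) --- is a constant independent of the size of the DPI, of $\mD$, and of $Q$, and that the remaining manipulations amount to a fixed number of elementary set operations, which the paper's cost model treats as unit steps; combining these two observations yields the claimed $O(1)$ bound modulo reasoning time.
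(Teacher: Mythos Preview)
Your proposal is correct and follows essentially the same approach as the paper: the paper's justification (given in the text immediately preceding the proposition rather than as a separate proof) simply observes that Step~1 uses exactly two $Ent_{\mathit{ET}}$ calls when a reasoner implements this operator directly, or at most the constant $t$ consistency checks when the $\mathit{CC}$-based realization is used. You elaborate the same counting argument with additional detail about the surrounding set operations, which is fine but not strictly required.
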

Step 2, as was shown by \citep{Rodler2015phd} (for \textsc{minQ}) and originally by \citep{junker04} (for \textsc{QuickXplain}), requires a polynomial number of calls to \textsc{isQPartConst}:
\begin{proposition}\label{prop:minQ_complexity}
\textsc{minQ} runs in $O(|Q^*|\log_2 \frac{|Q'|}{|Q^*|})$ time (modulo the time required for \textsc{isQPartConst}).
\end{proposition}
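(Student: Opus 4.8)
The plan is to analyze \textsc{minQ} as a divide-and-conquer procedure in the spirit of \textsc{QuickXplain} \citep{junker04} and count the number of calls to the subroutine \textsc{isQPartConst}, which is the only non-trivial operation. First I would recall the recursive structure: \textsc{minQ} receives the (sorted) list $Q'$ of length $n := |Q'|$ together with the target q-partition $\Pt$ and the DPI, splits the currently relevant sublist roughly in half, and uses an \textsc{isQPartConst} test to decide which half must retain a $\subseteq$-minimal QP-preserving subquery. By Prop.~\ref{prop:minQ_returns_subseteq-minimal_query} the returned set $Q^*$ is $\subseteq$-minimal with $\Pt_\mD(Q^*) = \Pt$; let $s := |Q^*|$. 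The key structural fact, inherited from \textsc{QuickXplain}, is that every element ultimately placed in $Q^*$ is ``isolated'' by a sequence of halvings, so the recursion tree has $s$ leaves corresponding to the $s$ solution elements, and its depth is $O(\log_2 \tfrac{n}{s})$ because each halving that does not hit a solution element reduces the candidate sublist length by a factor of two, and there can be at most $O(\log_2 \tfrac{n}{s})$ such halvings along any root-to-leaf path before the sublist shrinks to size comparable to $s$.

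The central step is therefore a counting argument on this tree: the number of internal nodes (each contributing a bounded number of \textsc{isQPartConst} calls) is bounded by the number of leaves times the tree depth, i.e.\ $O(s \log_2 \tfrac{n}{s})$. I would make this precise by setting up the standard recurrence for the number of consistency-style checks of \textsc{QuickXplain}, $T(n,s) \le c + T(\lceil n/2\rceil, s_1) + T(\lfloor n/2\rfloor, s_2)$ with $s_1 + s_2 = s$, together with the base cases $T(n,0) = O(1)$ and $T(n,1) = O(\log_2 n)$ (the latter because isolating a single element of a length-$n$ list by binary splitting costs logarithmically many tests). One then shows by induction, exactly as in \citep[Theorem~1]{junker04} or \citep[Prop.~8.8]{Rodler2015phd}, that this recurrence solves to $O(s \log_2 \tfrac{n}{s}) = O(|Q^*| \log_2 \tfrac{|Q'|}{|Q^*|})$. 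Since each individual call is counted here modulo the time actually spent inside \textsc{isQPartConst}, this yields precisely the claimed bound.

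Finally I would note that nothing in this argument depends on how \textsc{isQPartConst} is implemented — it is simply treated as a unit-cost oracle here — so the statement ``modulo the time required for \textsc{isQPartConst}'' is exactly what the analysis gives; the separate discussion of the cost of \textsc{isQPartConst} for explicit-entailments versus general queries (cf.\ Prop.~\ref{prop:ee-query_q-partition_construction_by_means_of_set_comparison} and the remarks following Prop.~\ref{prop:minQ_returns_preferred_query}) is orthogonal. The main obstacle, and the only part requiring care rather than bookkeeping, is the inductive solution of the two-parameter recurrence $T(n,s)$: one must handle the uneven split of the ``solution budget'' $s$ between the two recursive calls and verify that the worst case of the sum $T(\lceil n/2\rceil, s_1) + T(\lfloor n/2\rfloor, s_2)$ over all $s_1+s_2=s$ still obeys the $O(s\log_2\tfrac{n}{s})$ bound — this is a convexity/concavity argument about $x \mapsto x \log_2(n/x)$ that is routine but is where a sloppy proof would go wrong. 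Since the recurrence and its solution are already established for \textsc{QuickXplain} in \citep{junker04} and carried over verbatim to \textsc{minQ} in \citep[Prop.~8.8]{Rodler2015phd}, the cleanest route is to invoke those results directly after observing that \textsc{minQ} differs from \textsc{QuickXplain} only in replacing the $\lnot\textsc{isConsistent}$ test by the \textsc{isQPartConst} test, which does not change the control flow or the split strategy and hence not the call-count recurrence.
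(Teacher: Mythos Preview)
Your proposal is correct and matches the paper's approach: the paper does not give an independent proof of this proposition but simply attributes the bound to \citep{Rodler2015phd} (for \textsc{minQ}) and \citep{junker04} (for \textsc{QuickXplain}), which is exactly the route you take after observing that replacing $\lnot\textsc{isConsistent}$ by \textsc{isQPartConst} leaves the control flow and hence the call-count recurrence unchanged. Your sketch of the recursion-tree argument and the two-parameter recurrence is more detailed than what the paper provides, but the underlying justification is identical.
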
 
In order to further refine this result, let $Q'_{\mathsf{sub}} \subseteq Q'$ be any subquery of $Q'$. Then \textsc{isQPartConst}, given the argument $Q'_{\mathsf{sub}}$, verifies the preserved membership of each $\md \in \mD$ in its respective part of the QP, i.e.\ whether $\md \in \mD^Z(Q') \implies \md \in \mD^Z(Q'_{\mathsf{sub}})$ for $Z \in \setof{+,-,0}$.
We note that each $\md \in \dx{}(Q')$ is also an element of $\dx{}(Q'_{\mathsf{sub}})$ by Prop.~\ref{prop:partition}. 
Hence, the membership verification is only necessary for the diagnoses in $\dnx{}(Q') \cup \dz{}(Q')$.
Moreover, by logical monotonicity and Prop.~\ref{prop:partition}, each diagnosis in $\dnx{}(Q')$, if not in $\dnx{}(Q'_{\mathsf{sub}})$, can be in 
$\dz{}(Q'_{\mathsf{sub}})$ or $\dx{}(Q'_{\mathsf{sub}})$, but each diagnosis in $\dz{}(Q')$, if not in $\dz{}(Q'_{\mathsf{sub}})$, can only 
be in
$\dx{}(Q'_{\mathsf{sub}})$. Therefore, as witnessed by \citep[Lem.~8.1]{Rodler2015phd}, for each $\md_r \in \dnx{}(Q')$, one needs to verify that $\md_r \in \dnx{}(Q'_{\mathsf{sub}})$, i.e.\ that some $x \in \RQ\cup \Tn$ is violated by $\mo^*_r \cup Q'_{\mathsf{sub}}$. The latter operation requires a maximum of $|\RQ|+|\Tn|$ logical consistency checks (where $|\RQ|$ is predefined and constant, i.e.\ in $O(1)$, whereas $|\Tn|$ might grow during a diagnostic session, cf.\ Def.~\ref{def:dpi}). Also due to \citep[Lem.~8.1]{Rodler2015phd}, for each $\md_r \in \dz{}(Q')$, one needs to verify that $\md_r \notin \dx{}(Q'_{\mathsf{sub}})$. This operation can involve at most $|Q'_{\mathsf{sub}}| \leq |Q'|$ logical consistency checks (to verify whether each sentence in $Q'_{\mathsf{sub}}$ is entailed by $\mo^*_r$). Importantly, if one of all these verification steps fails, i.e.\ if any diagnosis $\md_r \in \dnx{}(Q') \cup \dz{}(Q')$ has a different position in the QP of $Q'_{\mathsf{sub}}$ than in the QP of $Q'$, then \textsc{isQPartConst} immediately terminates (negatively), cf.\ \citep[Alg.~4]{Rodler2015phd}. 
Overall, we found that \textsc{isQPartConst} runs in $O(m\,|\mD|)$ time (modulo consistency checking) where $m := \max\{|\Tn|,|Q'|\}$.
The complexity of a single consistency check depends on the expressivity of the underlying knowledge representation formalism $\mathcal{L}$.
%
%

Hence, we recognize that Step 2 of P3 runs in polynomial time (disregarding the complexity of consistency checking):
\begin{proposition}\label{prop:P3_step2_complexity}
	Let $q:= \max\{m,|\mD|,|Q'|\}$. Then Step 2 of P3 runs in $O(q^4)$ time (modulo the time required for consistency checking).
\end{proposition}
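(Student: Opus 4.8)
The plan is to simply chain together the two complexity results already established in the surrounding text and perform a routine worst-case accounting. First I would recall Proposition~\ref{prop:minQ_complexity}, which states that \textsc{minQ} performs $O(|Q^*|\log_2\frac{|Q'|}{|Q^*|})$ calls to \textsc{isQPartConst}, and observe that this number of calls is bounded above by $O(|Q'|\log_2|Q'|) \subseteq O(|Q'|^2) \subseteq O(q^2)$, using $|Q^*| \leq |Q'| \leq q$ and the trivial bound $\log_2|Q'| \leq |Q'|$. Then I would invoke the cost of a single \textsc{isQPartConst} call, which the paragraph preceding the proposition establishes to be $O(m\,|\mD|)$ consistency checks (modulo the cost of one consistency check itself), where $m := \max\{|\Tn|,|Q'|\}$. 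Since $m \leq q$ and $|\mD| \leq q$, a single \textsc{isQPartConst} call costs $O(q^2)$.

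Second, multiplying the bound on the number of \textsc{isQPartConst} calls by the per-call cost yields $O(q^2)\cdot O(q^2) = O(q^4)$, which is exactly the claimed time bound (modulo consistency checking). I would be careful to state explicitly, as the proposition does, that the cost of an individual consistency check is excluded from this count since it depends on the expressivity of the underlying formalism $\mathcal{L}$ and is therefore not a polynomial function of the input parameters $m$, $|\mD|$, $|Q'|$. I would also note that the divide-and-conquer splitting overhead inside \textsc{minQ} (list partitioning, bookkeeping of the fixed and discarded sublists) is subsumed by the dominant term, as each such operation is linear in $|Q'| \leq q$ and occurs only $O(q\log q)$ times.

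The only mild subtlety — and the single place where care is needed rather than the main obstacle — is justifying that the per-call cost of \textsc{isQPartConst} is genuinely $O(m\,|\mD|)$ \emph{consistency checks} and not something larger. This is exactly the content of the analysis immediately before the proposition: membership verification is required only for the diagnoses in $\dnx{}(Q')\cup\dz{}(Q')$ (those in $\dx{}(Q')$ remain in $\dx{}$ of any subquery by Prop.~\ref{prop:partition} and monotonicity), each diagnosis in $\dnx{}(Q')$ needs at most $|\RQ|+|\Tn| = O(m)$ checks, each diagnosis in $\dz{}(Q')$ needs at most $|Q'_{\mathsf{sub}}| \leq |Q'| = O(m)$ checks, there are at most $|\mD|$ such diagnoses, and \textsc{isQPartConst} aborts early on the first failure (cf.\ \citep[Alg.~4,Lem.~8.1]{Rodler2015phd}). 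So the per-call bound $O(m\,|\mD|) \subseteq O(q^2)$ holds, and the overall bound $O(q^4)$ follows immediately. I expect essentially no obstacle here: the proposition is a straightforward corollary of Propositions~\ref{prop:minQ_complexity} and the preceding \textsc{isQPartConst} analysis, and the proof is a two-line multiplication of the two bounds together with the substitutions $|Q^*|,|Q'|,m,|\mD| \leq q$.
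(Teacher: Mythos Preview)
Your proposal is correct and takes essentially the same approach as the paper: multiply the $O(|Q^*|\log_2\frac{|Q'|}{|Q^*|})$ bound on the number of \textsc{isQPartConst} calls from Proposition~\ref{prop:minQ_complexity} by the $O(m\,|\mD|)$ per-call cost established in the preceding paragraph, then bound each factor by $q$ to obtain $O(q^4)$. Your write-up is somewhat more detailed than the paper's three-line proof (you spell out the \textsc{isQPartConst} analysis and mention bookkeeping overhead), but the argument is the same.
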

\begin{proof}
By Prop.~\ref{prop:minQ_complexity} and the argumentation given, Step 2 of P3 runs in $O(m\,|\mD|\,|Q^*|\log_2 \frac{|Q'|}{|Q^*|})$ time (modulo the time required for consistency checking). Further, $|Q^*|\log_2 \frac{|Q'|}{|Q^*|} \in O(q^2)$ since $|Q^*| \leq |Q'| \leq q$ and $\log_2 \frac{|Q'|}{|Q^*|} \leq \log_2 |Q'| \leq |Q'| \leq q$. Finally, it follows from the definition of $q$ that $m\,|\mD| \in O(q^2)$.
\end{proof}

\subsubsection{Solution Produced by Phase 3}
\label{sec:solution_produced_by_P3}
Altogether, phase P3, i.e.\ query expansion (Step~1) along with optimized query contraction (Step~2), using the QP returned by phase P1, achieves the following:
\begin{theorem}\label{theorem:P3_solves_problem_1}
Let Conjecture~\ref{conj:CQPs=QPs} hold and the QCM be $c_{\max}$. Then P1 and P3 (using the threshold $t_m := 0$)
solve Prob.~\ref{prob:query_optimization} with full search space $\mathbf{S} = \mQ_{\mD}^{\bcancel{0}}$ (of queries discriminating among all elements of $\mD$, cf.\ Def.~\ref{def:query_q-partition}). 
Moreover, for any predefined set of preferred (query) sentences, the returned solution will contain only preferred elements, if such a solution exists.	
\end{theorem}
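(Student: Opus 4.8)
## Proof Proposal for Theorem~\ref{theorem:P3_solves_problem_1}

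The plan is to assemble the claim from the soundness/completeness results already proven for the individual pieces of Alg.~\ref{algo:query_comp}, treating the theorem essentially as a composition lemma. First I would recall that Prob.~\ref{prob:query_optimization} asks for $Q^* = \argmin_{Q \in \mathbf{OptQ}(m,\mathbf{S})} c(Q)$, i.e.\ a query of minimal $c$-cost among all queries in $\mathbf{S}$ that are $m$-optimal. So the argument naturally breaks into two tasks: (i) show that the q-partition $\Pt$ fixed by P1 is the q-partition of \emph{some} $m$-optimal query in the full space $\mathbf{S} = \mQ_{\mD}^{\bcancel{0}}$, and (ii) show that, holding $\Pt$ fixed, Step~2 of P3 returns a $\subseteq$-minimal query for $\Pt$ that minimizes $c_{\max}$ over \emph{all} queries with q-partition $\Pt$ (not just explicit-entailments ones), and moreover contains only preferred sentences whenever such a query exists.

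For task (i): by Theorem~\ref{theorem:P1_sound_complete}, P1 (run with $t_m := 0$) returns the best existing canonical q-partition w.r.t.\ $m$. Under Conjecture~\ref{conj:CQPs=QPs} we have $\mathbf{CQP}_\mD = \mathbf{QP}_{\mD}^{\bcancel{0}}$, so ``best canonical q-partition'' coincides with ``best q-partition with empty $\dz{}$'', which — since $m$ depends only on the q-partition (the decoupling argument on page~\pageref{etc:decoupling_of_optimization_steps}) — means $\Pt = \Pt_\mD(Q)$ for every $m$-optimal $Q$ in $\mathbf{S} = \mQ_{\mD}^{\bcancel{0}}$. Hence $\mathbf{OptQ}(m,\mathbf{S})$ is exactly the set of all queries in $\mQ_{\mD}^{\bcancel{0}}$ whose q-partition equals $\Pt$, and task (ii) is precisely the problem of finding a $c_{\max}$-minimal element of that set (with the preference side-condition). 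I would also note here the edge case $Q_{\mathsf{exp}} = \emptyset$: then $Q' = Q$ (the CQ), Step~2 of P3 degenerates to P2, and by Theorem~\ref{theorem:P1+P2_solve_query_optimization_problem} the conclusion still holds; so WLOG assume $Q_{\mathsf{exp}} \neq \emptyset$ below.

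For task (ii): by Prop.~\ref{prop:entailment_extraction_is_q-partition_preserving} the expanded query $Q' = Q \cup Q_{\mathsf{exp}}$ still has q-partition $\Pt$, so $Q' \in \mQ_{\mD}^{\bcancel{0}}$ with $\Pt_\mD(Q') = \Pt$; then Prop.~\ref{prop:minQ_returns_subseteq-minimal_query} gives that \textsc{minQ}$(Q',\Pt,\dpi)$ returns a $\subseteq$-minimal $Q^* \subseteq Q'$ with $\Pt_\mD(Q^*) = \Pt$, so $Q^* \in \mathbf{OptQ}(m,\mathbf{S})$. For cost-optimality: Cor.~\ref{cor:P3_optimizes_c_max} states that when $Q'_{\mathsf{sort}}$ is sorted ascending by the sentence costs $c_i$, \textsc{minQ} returns a query optimal w.r.t.\ $c_{\max}$ — optimal, that is, among the $\subseteq$-minimal QP-preserving subqueries of $Q'$. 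The remaining subtlety, and what I expect to be the main obstacle, is bridging from ``optimal among subqueries of $Q'$'' to ``optimal among \emph{all} queries in $\mathbf{S}$ with q-partition $\Pt$'': an arbitrary such query may use implicit entailments not generated into $Q_{\mathsf{exp}}$, and I need to argue this cannot yield a lower $c_{\max}$. The key fact to lean on is that P3 is \emph{configured} so that $Q_{\mathsf{exp}}$ exhausts exactly the sentences of the preferred types $ET$ (the ones carrying finite cost and admissible as query elements in this setting), via Postulations~\ref{enum:EQ1}--\ref{enum:EQ5} and the soundness/type-soundness/monotonicity properties \ref{enum:Ent_ET:logical_soundness}--\ref{enum:Ent_ET:ents_generated_for_set_are_genereated_for_all_supersets} of $Ent_{\mathit{ET}}$ — so any admissible query sentence for $\Pt$ is (up to the QP-irrelevant elements of $\mo \setminus \Disc_\mD$, cf.\ Prop.~\ref{prop:expl_ent_query_must_neednot_mustnot_include_ax}) already present in $Q'$, making ``subquery of $Q'$'' the full relevant search space after all; I would flag that this step silently uses completeness of $Ent_{\mathit{ET}}$ over $ET$ (or the standing assumption that queries are drawn from the $ET$-sentences with the given costs), and spell out that dependency. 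Finally, the preferred-elements clause follows directly from Cor.~\ref{cor:P3_computes_query_containing_only_preferred_elements_if_existent}: feeding \textsc{minQ} the concatenation $Q'_{+} \| Q'_{-}$ returns a $Q^* \subseteq Q'_{+}$ whenever some QP-preserving subquery of $Q'$ lies entirely in $Q'_{+}$, which — by the exhaustiveness argument just given — is equivalent to the existence of \emph{any} query in $\mathbf{S}$ with q-partition $\Pt$ consisting only of preferred sentences. Combining (i) and (ii) yields that the $Q^*$ returned by P1+P3 solves Prob.~\ref{prob:query_optimization} for $\mathbf{S} = \mQ_{\mD}^{\bcancel{0}}$ and $c = c_{\max}$, with the stated guarantee on preferred elements, completing the proof.
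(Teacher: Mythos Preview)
Your decomposition into tasks (i) and (ii) and the lemmas you invoke (Theorem~\ref{theorem:P1_sound_complete} plus Conjecture~\ref{conj:CQPs=QPs} for the q-partition side, Cor.~\ref{cor:P3_optimizes_c_max} and Cor.~\ref{cor:P3_computes_query_containing_only_preferred_elements_if_existent} for the cost and preference side) are exactly what the paper's proof does. The paper's argument is in fact shorter than yours: after establishing (i), it simply cites Cor.~\ref{cor:P3_optimizes_c_max} for the $c_{\max}$-optimality claim and Cor.~\ref{cor:P3_computes_query_containing_only_preferred_elements_if_existent} for the preferred-elements claim, without further discussion.

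The interesting point is that you go \emph{beyond} the paper in flagging the gap between ``$c_{\max}$-optimal among $\subseteq$-minimal QP-preserving subqueries of $Q'$'' (which is literally what Cor.~\ref{cor:P3_optimizes_c_max} gives) and ``$c_{\max}$-optimal among \emph{all} queries in $\mQ_{\mD}^{\bcancel{0}}$ with q-partition $\Pt$'' (which is what Prob.~\ref{prob:query_optimization} over the full $\mathbf{S}$ demands). The paper's proof does not address this gap at all; it treats the corollary as if it already delivered the stronger statement. Your instinct that bridging this requires something like completeness of $Ent_{\mathit{ET}}$ over the admissible sentence types is correct, and the paper explicitly allows $Ent_{\mathit{ET}}$ to be incomplete---so the gap is real at the level of the formal statements, and the paper simply elides it. For the purpose of matching the paper, you can drop that discussion; for the purpose of a fully rigorous proof, your flagged caveat is the honest thing to record.
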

\begin{remark}
We again stress that Conjecture~\ref{conj:CQPs=QPs} is by no means necessary for the proper functioning of our presented algorithms. Please see our more detailed discussion on this on page~\pageref{etc:conjecture_discussion}, after we stated Conjecture~\ref{conj:CQPs=QPs}.\qed
\end{remark} 

\subsubsection{Recapitulation of the Presented Query Selection Algorithm}
\label{sec:recap_of_presented_algo}
To wrap up Sec.~\ref{sec:contribution}, let us exemplify the entire query selection process executed by Alg.~\ref{algo:query_comp} using the MBD example stated by Fig.~\ref{fig:circuitreiter87} and Tab.~\ref{tab:circuitreiter87_KBD-DPI}:
\begin{example}\label{ex:MBD_overall}
Suppose we got the information from the manufacturer of the gates that and-, or- and xor-gates fail with a probability of $0.05$, $0.02$ and $0.01$, respectively. As we have already discussed (cf.\ Ex.~\ref{ex:reduction_MBD_to_KBD}), the set of minimal diagnoses for $\exdpiMK$ (see Tab.~\ref{tab:circuitreiter87_KBD-DPI}) is $\minD_{\exdpiMK} = \setof{\md_1,\md_2,\md_3} = \setof{\{\tax_1\},\{\tax_2,\tax_4\},\{\tax_2,\tax_5\}}$ corresponding to the (abnormality assumptions of the) sets of components $\{\{X_1\},$  $\{X_2,A_2\}, \{X_2,O_1\}\}$. Let the leading diagnoses be $\mD := \minD_{\exdpiMK}$. Exploiting the formula given in \citep[Sec.~4.4]{dekleer1987}, the diagnoses probabilities (normalized over $\mD$ and rounded) amount to $\tuple{p(\md_1),p(\md_2),p(\md_3)} = \tuple{0.93,0.05,0.02}$.
	
\emph{(Phase P1:)} Starting from the initial partition $\tuple{\emptyset,\mD,\emptyset}$, the generated successors are $\Pt_1 := \tuple{\setof{\md_1},\setof{\md_2,\md_3},\emptyset}$, $\Pt_2 :=\tuple{\setof{\md_2},\setof{\md_1,\md_3},\emptyset}$ and $\Pt_3 :=\tuple{\setof{\md_3},\setof{\md_1,\md_2},\emptyset}$. Note that all these successors are QPs (proven by Prop.~\ref{prop:--di,MD-di,0--_is_canonical_q-partition_for_all_di_in_mD}). Assuming the same QSM $m$, threshold $t_m$, heuristic $h$ and pruning function as used in Ex.~\ref{ex:canonical_q-partition_search_ENT}, the heuristic values $\tuple{h(\Pt_1),h(\Pt_2),h(\Pt_3)}$ of these QPs are $\tuple{0.43, 0.45, 0.48}$. 
Since $\Pt_1$ has the best (i.e.\ least) $h$-value, but is not a goal, P1 continues with the expansion of $\Pt_1$ after storing $\Pt_1$ as the currently best visited QP so far.
However, since $p(\dx{}(\Pt_1)) = 0.93 > 0.5$, the pruning criterion is met and no successors are generated. Instead, the next best sibling of $\Pt_1$, namely $\Pt_2$ is considered. Here, no pruning takes place and the successors generated based on the $\subseteq$-minimal traits (cf.\ Def.~\ref{def:trait}) $\mathsf{Tr}_{\min}(\Pt_2) = \setof{\md_1^{(2)},\md_3^{(2)}} = \setof{\setof{\tax_1},\setof{\tax_5}}$ are $\Pt_{21}:=\tuple{\setof{\md_2,\md_1},\setof{\md_3},\emptyset}$ and $\Pt_{22}:=\tuple{\setof{\md_2,\md_3},\setof{\md_1},\emptyset}$ with $h(\Pt_{21}) = 0.48$ and $h(\Pt_{22}) = 0.43$. Due to the facts that for $\Pt_{21}$ the pruning condition is satisfied, $\Pt_{22}$ has no successor QPs (cf.\ Cor.~\ref{cor:S_next_sound+complete}), and none of $\Pt_{21}$, $\Pt_{22}$ is a goal, P1 backtracks and proceeds with the QP $\Pt_3$. In an analogue way as shown for $\Pt_2$, the successor QP $\Pt_{31} = \tuple{\setof{\md_3,\md_1},\setof{\md_2},\emptyset}$ is generated. Note that P1, in that it stores diagnoses that must not be moved from $\dnx{}$ to $\dx{}$ to avoid duplicates (for details see the extended version of the paper \citep[p.~92 ~ff.]{DBLP:journals/corr/Rodler16a}), does not generate $\Pt_{32} = \tuple{\setof{\md_3,\md_2},\setof{\md_1},\emptyset}$ because it is equal to $\Pt_{22}$ which has already been explored.
	%
Again, no successors are generated for $\Pt_{31}$ (pruning). 
Hence, the complete (pruned) backtracking search tree has been constructed and the stored best (of all) CQP(s) for $\mD$, $\Pt_{1}$, is returned.
	
\emph{(Phase P2:)} Let us suppose that a globally optimal query w.r.t.\ the QCM $c_{\Sigma}$ (see page~\pageref{etc:QCM}) over the restricted search space considered by P2 (see Theorem~\ref{theorem:P1_sound_complete}) is desired by the user (accounted for by setting $\mathsf{enhance} := \false$, see Alg.~\ref{algo:query_comp}). Moreover, let the expected cost of testing an and-, or- and xor-gate, respectively, be $1$, $3$ and $2$.
Then $\mathsf{Tr}_{\min}(\Pt_1) = \setof{\md_2^{(1)},\md_3^{(1)}} = \setof{\setof{\tax_2,\tax_4},\setof{\tax_2,\tax_5}}$is used to extract the \text{$c_{\Sigma}$-optimal} query $Q^* = \setof{\tax_2} = \setof{beh(X_2)} = \setof{out(X_2) = xor(in1(X_2),in2(X_2))}$ as the minimal hitting set with least cost ($c_{\Sigma}(Q^*) = 2$) of all elements of $\mathsf{Tr}_{\min}(\Pt_1)$  (cf.~Prop.~\ref{prop:phase_P2_returns_EE-queries_in_best-first_order}). Note, the (only) other possible $\subseteq$-minimal explicit-entailments query for $\Pt_1$ is $Q := \setof{\tax_4,\tax_5}$ with a cost of $c_{\Sigma}(Q) = 1+3=4$. $Q^*$ is a direct component probe (cf.\ Ex.~\ref{ex:query_representation} and \citep{DBLP:conf/ijcai/BrodieRMO03}) and can be understood as the question ``Does gate $X_2$ work properly?''. 
	
\emph{(Phase P3:)} Given that a query optimized over the full search space is wanted, $\mathsf{enhance}$ must be set to $\true$ in Alg.~\ref{algo:query_comp}. This causes the execution of phase P3 (instead of phase P2). As an input $\mathit{Inf}$ to Alg.~\ref{algo:query_comp} we assume e.g.\ some constraint propagator, similar to the one described in \citep{dekleer1987}, which computes predictions of the values at the circuit's wires (cf.\ Fig.~\ref{fig:circuitreiter87}). Moreover, we suppose that the preferred entailment types $ET$ are exactly those stating values of wires, e.g.\ $out(A_1) = 1$.  
	
In P3, the CQ of $\Pt_1$, given by $Q := U_{\mD} \setminus U_{\dx{}(\Pt_1)} = \setof{\tax_1,\tax_2,\tax_4,\tax_5} \setminus \setof{\tax_1} = \setof{\tax_2,\tax_4,\tax_5}$ (cf.\ Lem.~\ref{lem:CQ_equal_to_U_D_setminus_U_D+}), is first needed for the query enhancement (Step~1). To this end, the query expansion, $Q_{\mathsf{exp}}$, is computed as per Eq.~\eqref{eq:Q_exp} as 
$[Ent_{\mathit{ET}}(\setof{\tax_3} \cup \setof{\tax_2,\tax_4,\tax_5} \cup \setof{\tax_6,\dots,\tax_{17}} \cup \emptyset) \setminus Ent_{\mathit{ET}}(\setof{\tax_3} \cup \setof{\tax_6,\dots,\tax_{17}} \cup \emptyset)] \setminus \setof{\tax_2,\tax_4,\tax_5} 
= 
[Ent_{\mathit{ET}}(\setof{beh(A_1)} \cup \{beh(X_2),beh(A_2)$, 
$beh(O_1)\} \cup \sd_{gen} \cup \obs) \setminus Ent_{\mathit{ET}}(\setof{beh(A_1)} \cup \sd_{gen} \cup \obs)] \setminus \{beh(X_2),beh(A_2)$, $beh(O_1)\} = [ \setof{out(X_1)=0,out(A_2)=0,out(A_1)=0} \setminus \setof{out(A_1)=0}] \setminus \{beh(X_2), beh(A_2)$, $beh(O_1)\} = \{out(X_1)=0,out(A_2)=0\}$. Next, the contraction of the expanded query $Q' = Q \cup Q_{\mathsf{exp}} = \{beh(X_2), beh(A_2), beh(O_1), out(X_1)=0,out(A_2)=0\}$ (see Eq.~\eqref{eq:Q'_enriched_query}) takes place (Step~2). Let us assume that no preference order over query sentences is given, except that a user wants to avoid direct component tests (input argument $\mathit{pref}$, see Alg.~\ref{algo:query_comp}). In other words, the query should not include any $beh(.)$ sentences. This is reflected by setting $Q'_{+} := Q_{\mathsf{exp}}$ and by specifying the input to \textsc{minQ} as the (ordered) list $Q'_{\mathsf{sort}} = Q'_{+} \| Q'_{-} = [out(X_1)=0,out(A_2)=0,beh(X_2),beh(A_2), beh(O_1)]$ (cf.\ Cor.~\ref{cor:P3_computes_query_containing_only_preferred_elements_if_existent}). In an analogous manner as illustrated in Ex.~\ref{ex:query_contraction}, \textsc{minQ} determines an optimized contracted query $Q^*$ as $\setof{out(X_1)=0}$. We note that this is the only $\subseteq$-minimal query satisfying Cor.~\ref{cor:P3_computes_query_containing_only_preferred_elements_if_existent} 
because the only other $\subseteq$-minimal query comprising only elements from $Q'_{+}$ is $Q_{alt}:=\setof{out(A_2)=0}$ which has not the QP $\Pt_1$, i.e.\ is not QP-preserving. The actual QP $\Pt_\mD(Q_{alt})$ of $Q_{alt}$ is $\tuple{\setof{\md_1,\md_2},\setof{\md_3},\emptyset}$.
Hence, Alg.~\ref{algo:query_comp} suggests to probe at the wire connecting gate $X_1$ with gates $X_2$ and $A_2$. Taking into account the query outcome probabilities estimated from the given component fault probabilities, we see that there is a strong bias (probability $0.93$, cf.\ Eq.~\eqref{eq:prob_of_pos_query_answer}) towards a measurement outcome of $out(X_1)=0$. In this case, as we have shown in Ex.~\ref{ex:circuit_MBD-DPI_diags_conflicts_measurements}, only a single measurement is needed to single out $\md_1$ as the actual diagnosis, i.e.\ to come to the conclusion that $X_1$ must be faulty.
\qed
	%
	%
\end{example}

\section{Evaluation}
\label{sec:eval}
\subsection{The Experiments}
\label{sec:experiments}
\paragraph{The Used Dataset.} 
To evaluate the presented algorithm, we used real-world inconsistent know-ledge-based (KB) systems, i.e.\ KBD-DPIs. The reasons for this are as follows: 
\begin{enumerate}[noitemsep]
	\item As shown in Sec.~\ref{sec:reduction_of_mbd_to_kbd}, any MBD problem (i.e.\ an MBD-DPI as per Def.~\ref{def:MBD-DPI}) can be reduced to and hence viewed as a KBD problem (i.e.\ a KBD-DPI).
	\item \label{enum:experiment_dataset:system_type_irrelevant} The \emph{type} of the system underlying a DPI is irrelevant to our methods, only 
	the \emph{DPI size} (number of logical sentences), 
	the \emph{DPI structure} (size, \# or probability of diagnoses), and -- for the optional phase P3 -- 
	the \emph{DPI (reasoning) complexity} (expressivity of the underlying logic $\mathcal{L}$) are critical.
	\item KB systems pose a hard challenge for query selection methods due to the implicit nature and the generally infinite number of the possible queries.\footnote{Note that (in most logics) each sentence, i.e.\ element of a query, can be rewritten in infinitely many ways, each time resulting in a semantically equivalent, but syntactically different sentence  (cf.\ the argumentation in Rem.~\ref{rem:infinitely_many_solutionKBs}).} That is, the possible queries are not explicitly given, but must be derived by inference.
	For instance, in a digital circuit, all probing locations are given by all of the circuit's wires, which are known from the beginning. In the case of, e.g., a medical KB, however, 
	the set of all possible sentences (common entailments of sub-KBs, cf.\ Prop.~\ref{prop:properties_of_q-partitions}.\ref{prop:properties_of_q-partitions:enum:query_is_set_of_common_ent}) that might occur in questions to a medical expert are not known in advance.
	As a consequence, in KBD-DPIs the considered query search space is not explicit and infinite.
\end{enumerate}

\begin{table}[t]
	\setlength{\tabcolsep}{7mm}
	\begin{threeparttable}[t]
		\renewcommand\arraystretch{1.2}
		\footnotesize
		\centering
		\begin{tabular}{llll} 
			\hline
			KB $\mo$ 					& $|\mo|$& Expressivity \tnote{a} 		& \textbf{\#D}/\textbf{min}/\textbf{max} \tnote{b} \\ \hline
			University (U) \tnote{c}   			& 49 		& $\mathcal{SOIN}^{(D)}$& 90/3/4      \\		
			MiniTambis (M) \tnote{c}			& 173 		& $\mathcal{ALCN}$ 		& 48/3/3	  \\
			CMT-Conftool (CC) \tnote{d}		& 458  		& $\mathcal{SIN}^{(D)}$ & 934/2/16		  \\
			Conftool-EKAW (CE) \tnote{d}		& 491 		& $\mathcal{SHIN}^{(D)}$& 953/3/10		  \\
			Transportation (T) \tnote{c}		& 1300 		& $\mathcal{ALCH}^{(D)}$& 1782/6/9	  \\
			Economy (E) \tnote{c}			& 1781 		& $\mathcal{ALCH}^{(D)}$& 864/4/8     \\
			Opengalen-no-propchains (O) \tnote{e} & 9664 	& $\mathcal{ALEHIF}^{(D)}$ & 110/2/6  \\
			Cton (C) \tnote{e}				& 33203 	& $\mathcal{SHF}$ 		& 15/1/5     \\
			\hline
		\end{tabular}
		\begin{tablenotes}
			\scriptsize
			\item[a] Description Logic expressivity, cf.\ \citep[p.~525 ff.]{DLHandbook}.
			\item[b] \textbf{\#D}, \textbf{min}, \textbf{max} denote the number, the minimal and the maximal size of minimal diagnoses (computable in $\leq 8$ h).
			\item[c] Sufficiently complex systems (\textbf{\#D} $\geq 40$) used in \citep{Shchekotykhin2012}.
			\item[d] Hardest diagnosis problems mentioned in \citep{Stuckenschmidt2008}.
			\item[e] Hardest diagnosis problems tested in \citep{Shchekotykhin2012}.
		\end{tablenotes}
		\vspace{-7pt}
		\caption{\small KBs used in the experiments. 
		}
		\label{tab:experiment_data_set}
	\end{threeparttable}
\end{table}

Tab.~\ref{tab:experiment_data_set} (column 1) shows the dataset of KBs $\mo$ used in our tests. Each $\mo$ constitutes an inconsistent and/or incoherent OWL ontology (i.e.\ a KB formulated over some Description Logic $\mathcal{L}$). From each KB $\mo$ we constructed a DPI as $\dpi:=\tuple{\mo,\emptyset,\emptyset,\emptyset}_{\RQ}$ where $\RQ := \{\text{consistency},\text{coherency}\}$, i.e.\ the entries corresponding to the sets $\mb$, $\Tp$ and $\Tn$ (cf.\ Def.~\ref{def:dpi}) were defined as empty sets. 
Accounting for the aspects given under bullet \ref{enum:experiment_dataset:system_type_irrelevant}.\ that (potentially) affect the performance of our method, the table also shows for each constructed $\dpi$ its size in terms of $|\mo|$ (column~2), 
its structure in terms of the (within 8 hours computable) number of minimal diagnoses $\textbf{\#D}$ (which is less or equal to $|\minD_{\dpi}|$) as well as the minimal (\textbf{min}) and maximal (\textbf{max}) size of a minimal diagnosis (computable within 8 hours) w.r.t.\ $\dpi$ (column~4), and the reasoning complexity in terms of the expressivity of the Description Logic underlying $\dpi$ (column~3).   
\paragraph{Experimental Settings (EXP1 -- Comprehensive Evaluation of the New Method).}
In our experiments, for each $\dpi$ and each $n \in \setof{10,20,\dots,80}$, we randomly generated 5 different $\mD\in\minD_{\dpi}$ with $|\mD| = n$ by using \textsc{Inv-HS-Tree} \citep{Shchekotykhin2014} with randomly shuffled input each of the $5$ times. Each $\md \in \mD$ was assigned a uniformly random probability and probabilities were normalized over $\mD$. 
%
For each of these 5 $\mD$-sets, we used (a)~entropy (ENT) \citep{dekleer1987} and (b)~split-in-half (SPL) \citep{Shchekotykhin2012} 
as QSM $m$ and $c_{|\cdot|}$ (cf.\ page~\pageref{etc:QCM}) as QCM $c$, and then ran phases P1, P2 and P3 to compute a query as per Theorem~\ref{theorem:P1+P2_solve_query_optimization_problem} (obtained from the execution of phases P1 and P2) and Theorem~\ref{theorem:P3_solves_problem_1} (obtained from the execution of phases P1 and P3), respectively. We specified the optimality threshold $t_m$ as $0.01$ (cf.\ Ex.~\ref{ex:canonical_q-partition_search_ENT}) for $m = \text{ENT}$ and as $0$ for $m = \text{SPL}$. The setting for ENT to a value higher than zero arises from the observation that there is practically never a QP for which $p(\dx{})$ and $p(\dnx{})$ both have a probability of exactly $0.5$. Note that the value of $0.01$ we used is one order of magnitude smaller than the one used in other experiments, e.g.\ \citep{Shchekotykhin2012}, and thence the QSM properties postulated for an optimal query are stricter and the search problem is harder. For SPL, on the other hand, it is reasonable to require the returned query to exhibit an optimal split, i.e.\ half of the leading diagnoses in $\dx{}$ and the other half in $\dnx{}$, because such QPs are usually frequent (in case $|\mD|$ is an even number).  

For the search in P1 we employed the simple heuristic $h$ which assigns $h(\Pt) = |p(\dx{})-0.5|$ to a QP $\Pt:=\tuple{\dx{}, \dnx{},\dz{}}$ (cf.\ Ex.~\ref{ex:canonical_q-partition_search_ENT}). Similarly, we used a function $h$ where $h(\Pt) = \left| |\dx{}|-\frac{1}{2}|\mD| \right|$ as a heuristic for SPL. As regards pruning in P1, we stopped the generation of successors at $\Pt$ if $p(\dx{}) \geq 0.5$ for ENT, and if $|\dx{}| \geq \frac{1}{2}|\mD|$ for SPL.

In P3 we defined the preferred entailment types $ET$ to be the results of running classification and realization reasoning services 
\citep{DLHandbook}. In First-Order Logic terms, this means that $ET$ restricted the computed entailments to simple definite clauses of the form $\forall X (a(X) \to b(X))$ and facts of the form $a(c)$ where $a,b$ are unary predicates and $c$ is a constant. As a Description Logic reasoner (input $\mathit{Inf}$ to Alg.~\ref{algo:query_comp}) we employed HermiT \citep{Shearer2008}. Finally, the preferences $\mathit{pref}$ exploited during the second step of phase P3 were set in a way that $Q'_{+} := Q_{\mathsf{exp}}$, i.e.\ all the (simple) sentences $Q_{\mathsf{exp}}$ output by the reasoner were considered cost-preferred (cf.\ Ex.~\ref{ex:MBD_overall}).  

\paragraph{Experimental Settings (EXP2 -- Scalability Tests).} In these experiments we used $n = 500$ as a very test of the new approaches' scalability. Since there are fewer than 500 minimal diagnoses for the DPIs U, M, O and C (see last column of Tab.~\ref{tab:experiment_data_set}), the dataset for these experiments consisted of the DPIs CC, CE, T and E. For each of these DPIs, we performed one run with randomly generated leading diagnoses $\mD$, as described above. All other settings were equal to those in EXP1 explained above.

\paragraph{Experimental Settings (EXP3 -- Comparison with a Method not Using the Proposed Theory).}
To quantify the impact of the new theoretical notions (CQs, CQPs, traits) exploited by Alg.~\ref{algo:query_comp} 
in order to 
drastically reduce reasoning activity
during query computation, we compared Alg.~\ref{algo:query_comp} with a method that is as generally applicable (in terms of \emph{logics and reasoner independence}, handling of \emph{implicit} query spaces, cf.\ Sec.~\ref{sec:intro}), but does not use these notions. A generic such algorithm is described, e.g., in \citep[Alg.~2]{Shchekotykhin2012}. To give this algorithm a name and to clearly distinguish it from the newly proposed method (NEW), we call it OLD in Sec.~\ref{sec:experimental_results}. 

In a nutshell, this algorithm, starting from the set of leading diagnoses $\mD$, enumerates all 
subsets $\dx{}$ of $\mD$ in the form of a recursive binary tree. At each leaf node, corresponding to one $\dx{} \subset \mD$, a query is created. The latter is accomplished by calling a reasoner to compute a set of common entailments $X$ of all diagnoses in $\dx{}$ (cf.\ Prop.~\ref{prop:properties_of_q-partitions}.\ref{prop:properties_of_q-partitions:enum:query_is_set_of_common_ent}). In case this set $X$ is non-empty, the diagnoses in $\mD\setminus \dx{}$ are assigned to their respective set (i.e.\ $\dx{}(X)$, $\dnx{}(X)$ or $\dz{}(X)$) according to Prop.~\ref{prop:partition} by means of a reasoner. We call the tests whether $X \neq \emptyset$ (query is non-empty) and whether $\dx{}(X) \neq \emptyset,\dnx{}(X) \neq \emptyset$ ($X$'s partition is a QP) \emph{query verification} (cf.\ Def.~\ref{def:query_q-partition}).
As the recursion unwinds, at each inner node of the tree, 
the better one among the best query found in the left and the best query found in the right subtree is returned, where query goodness is measured as per some QSM. The final query returned at the root node is minimized using \textsc{QuickXplain} (cf.\ Sec.~\ref{sec:P3}). 

As suggested by preliminary tests using OLD, which could not handle any more than 20 leading diagnoses for any of the DPIs in Tab.~\ref{tab:experiment_data_set}, we used $n \in \setof{5,10,15,20}$ in the comparison experiments between NEW and OLD. We ran 5 query computation iterations for both NEW and OLD per (DPI,$|\mD|$) combination (cf.\ EXP1). In each iteration, both NEW and OLD were applied to exactly the same leading diagnoses sets $\mD$ and diagnoses probabilities. Further, the reasoner (HermiT) and the computed entailment types $ET$ used by both methods were the same. For NEW, all other settings remained the same as in EXP1 (see above). Only phase P1 was modified in a way a brute force search over all QPs was performed (no early termination by means of any threshold, no heuristics, no pruning) to achieve best comparability with OLD as regards performance and search completeness. 

\paragraph{Experiment Conditions.} All tests were run on a Core i7 with 3.4 GHz, 16 GB RAM
and Windows 7 64-bit OS.

\subsection{Experimental Results}
\label{sec:experimental_results} 
We subdivide the presentation of the experimental results\footnote{The source code implementing the experiments as well as the obtained results can be accessed on http://isbi.aau.at/ontodebug/evaluation.} into discussions of various observed aspects of the algorithms, e.g.\ times, reasoner calls or search space sizes. Each such set of related aspects is illustrated by a distinct figure (named in the heading of the respective paragraph). Whenever we will refer to a figure, we will mean exactly the figure mentioned in the heading. If some figure includes a secondary y-axis, which means a y-axis on the right side, then all aspects plotted with respect to the secondary axis are given in \emph{italic font} (whereas those plotted based on the primary, i.e.\ left, y-axis are written in normal font) in the figure's key. On the x-axis, all the plots show the 8 (or fewer) different categories (M, U, T, E, C, O, CE, CC), one for each $\mo$ in Tab.~\ref{tab:experiment_data_set}. Every plotted point shows the respective aspect, as indicated by the figure's key, in terms of a 5-iteration average value. That is, for each plotted point, $\dpi$ (i.e.\ the DPI for $\mo$) and $n$ is fixed, whereas $\mD$ varies over the 5 iterations (see the description of EXP1 in Sec.~\ref{sec:experiments}). Note that the range of $n$ is smaller for the categories M and C because there are no 80 minimal diagnoses for these two $\mo$'s (48 for M, 15 for C, cf.\ Tab.~\ref{tab:experiment_data_set}). Further, for clarity and better visibility the plots only show the values for the (more costly to compute) QSM ENT. The values observed for the QSM SPL were always comparable or better than for ENT. The unit of times is seconds in all figures. Figures \ref{fig:diag_vs_query_computation} -- \ref{fig:new_algo_vs_brute_force} address EXP1, Fig.~\ref{fig:scalability} EXP2 and Fig.~\ref{fig:naive_vs_new} EXP3.

\begin{figure}[t]
	\centering
	\includegraphics[width=\textwidth]{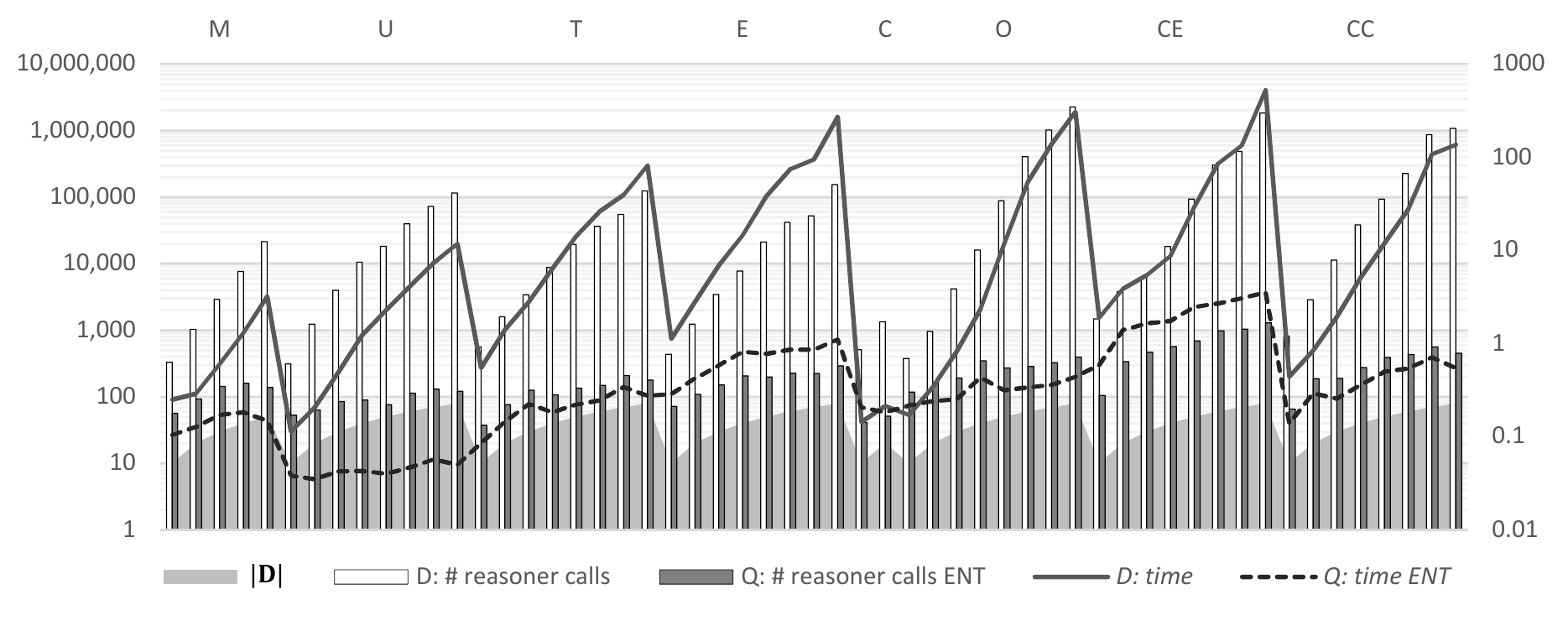}
	\caption{\small Diagnoses computation vs.\ query computation. D means diagnosis computation and Q means query computation. The QSM $m = \text{ENT}$ was used with the threshold $t_m = 0.01$.}
	\label{fig:diag_vs_query_computation}
\end{figure}

\paragraph{Diagnoses vs. Query Computation. (Fig.~\ref{fig:diag_vs_query_computation})}
For both diagnoses and queries, the decisive factor influencing the computation time is the number of required inference engine calls. 
This connection can be clearly observed in the figure where the light and dark bars show the number of the reasoner calls for diagnoses and query computation, respectively, and the continuous and dashed lines display the respective computation times. 
Note that the shown query computation time (dashed line) is the sum of the times for \emph{all} phases P1, P2 and P3, i.e.\ constitutes an upper bound of both default (i.e.\ phases P1+P2) and optional (i.e.\ phases P1+P3) mode of Alg.~\ref{algo:query_comp}. 

Algorithms 
that incorporate reasoners more strongly into query computation 
often have to limit the number of leading diagnoses to rather small numbers, e.g.\ $9$ \citep{Shchekotykhin2012}. This is necessary to keep query calculation practical because the worst case size of the QP search space is $2^{|\mD|}$, not to mention the size of the query search space which is generally again a multiple of it. For the new method, as the figure reveals, the growth of query computation time is very moderate
for increasing numbers of leading diagnoses. In fact, we can state it is \emph{at most linear}, as the growth of the dashed line is at most parallel to the growth of the shaded area (note the logarithmic y-axes).
Sometimes the time even sinks after raising $|\mD|$, e.g.\ for the cases $|\mD| \in \setof{70,80}$ for U, T and CC. In spite of its slight tendency to increase, the query computation time, by absolute numbers, is always below 3.6 sec and, except for the cases involving CE with $|\mD|\geq 30$, always lower than 1 sec. That is: 
	Even for high numbers of up to 80 leading diagnoses (QP search space size in $O(2^{80})$), optimized queries (as per Theorems~\ref{theorem:P1+P2_solve_query_optimization_problem} and \ref{theorem:P3_solves_problem_1}) are computed within almost negligible time.

This is due to the main merit of the new algorithm, which is the avoidance (in default mode)
or minimization (in optional mode)
of reasoner calls. 
Essentially, the slight tendency of the new algorithm's computation time to increase for larger diagnoses sets can be primarily attributed to increased costs of step 2 in phase P3, and secondarily to the substantially larger QP search space explored by phase P1 (see also Figures~\ref{fig:time_comparison_P1_P2_P3}, \ref{fig:summary_P3} and \ref{fig:new_algo_vs_brute_force} for an illustration of this fact). On the other hand, the reasoning costs of step 1 in phase P3 tend to fall as a response to increasing $|\mD|$ since the sizes of the arguments to the two $Ent_{\mathit{ET}}$ calls in Eq.~\eqref{eq:Q_exp} tend to decrease given a higher $|\mD|$. Second, despite an increased effort faced by phase P2 for growing $|\mD|$, the absolute times required by P2 are so small (between about $10^{-5}$ and $10^{-3}$ sec) that they hardly carry weight (see also Fig.~\ref{fig:new_algo_vs_brute_force}). 

The extension of the QP search space has not such a high impact due to the used heuristic functions that proved to guide the algorithm rather quickly towards a goal QP and due to the used tree pruning which avoided the exploration of hopeless subtrees. The higher costs of step 2 in phase P3 can be explained as follows: Whereas the number of \textsc{isQPartConst} calls is dictated by $|Q'|$ which does not (directly) depend on $|\mD|$, the number of reasoner calls within each call of \textsc{isQPartConst} does depend (linearly) on $|\mD|$ (see Prop.~\ref{prop:P3_step2_complexity}).

We further point out that the time axis (right y-axis) is logarithmic, i.e.\ an increase of 1 on the axis means an actual increase of one order of magnitude. One conclusion we can draw from this is that whenever diagnoses computation requires non-negligible time, let us say more than 10 sec, then query computation is always at least one order of magnitude and up to more than two orders of magnitude, i.e.\ a factor of 100 (case O, 80 diagnoses), faster than diagnoses computation. 
Note that the diagnoses computation time grows exponentially with $|\mD|$, i.e.\ our data shows quite constant time growing factors averaging to approximately 2 (visible by the mostly constant slope of the gray line in the figure) for all eight DPIs in Tab.~\ref{tab:experiment_data_set}. Hence, computing $10$ diagnoses more implies about the double computation time. 
Therefore, with the new method: 
	Whenever the computation of a set of diagnoses is feasible, the generation of an optimized query regarding the computed diagnoses is feasible and often significantly more efficient than the computation of diagnoses. Optimized query computation is thence a minor problem as compared to diagnoses computation.

\begin{figure}[t]
	\centering
	\includegraphics[width=\textwidth]{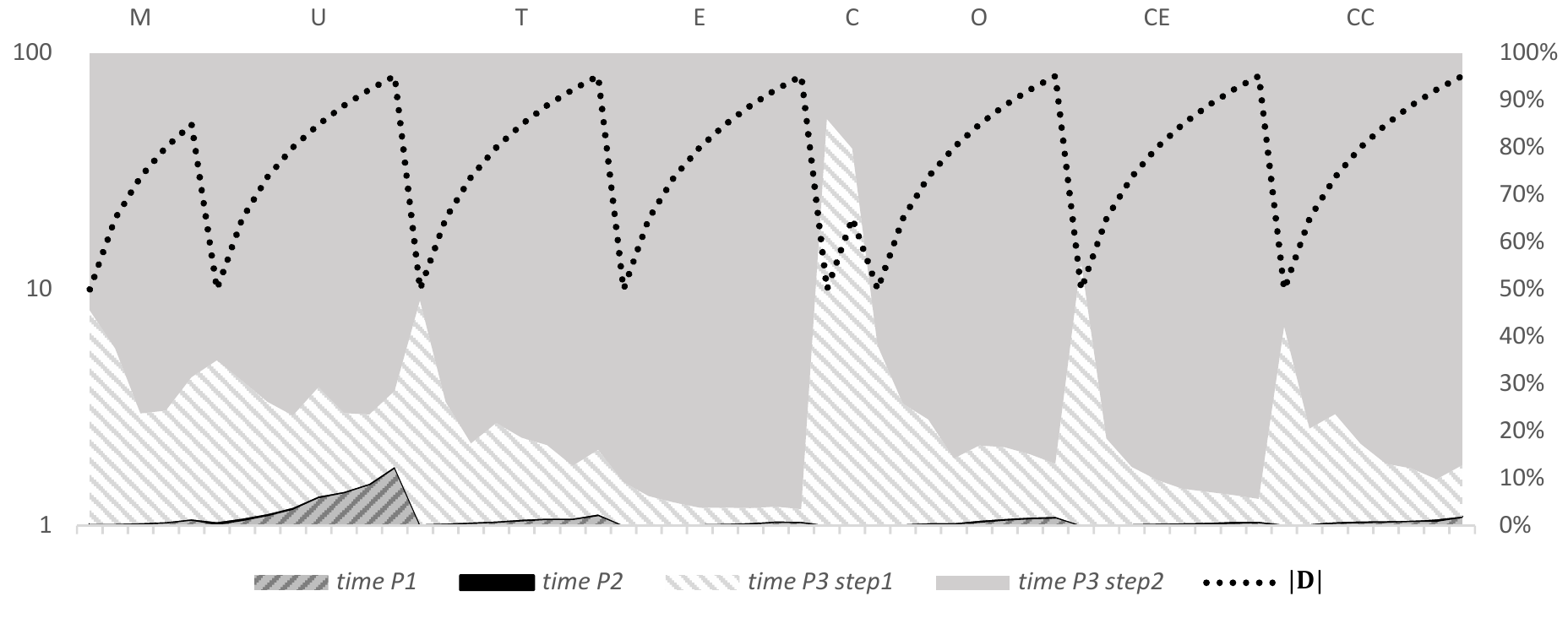}
	\caption{\small Comparison of times for P1, P2 and P3. All times were measured for the QSM $m = \text{ENT}$ with threshold $t_m = 0.01$.}
	\label{fig:time_comparison_P1_P2_P3}
\end{figure}

\paragraph{Comparison of Times for Phases P1, P2 and P3. (Fig.~\ref{fig:time_comparison_P1_P2_P3})}
The figure depicts the relative proportion of the overall query computation time consumed by the different phases of Alg.~\ref{algo:query_comp}. It is evident that phase P3 accounts for more than $\frac{7}{8}$th of the computation time in all test runs. If we exclude the case U -- for which the algorithm's computation time was the lowest amongst all DPIs in Tab.~\ref{tab:experiment_data_set}, i.e.\ below 0.1 sec for all runs, cf.\ Fig.~\ref{fig:diag_vs_query_computation} -- then P3 is even responsible for more than $97\%$ of the computation time in all runs. This reminds us again of the fact that reasoning (which is only performed in P3) has a substantially higher impact on the efficiency of query computation than the combinatorial problems solved in P1 and P2.
 
This suggests a variant of Alg.~\ref{algo:query_comp} which always runs the very fast P1+P2 first and shows the result to the user. Meanwhile in the background, or alternatively on demand, the algorithm executes P3 to further optimize the already computed query. In this manner the user can always get a first query suggestion \emph{instantaneously}.   

Moreover, we recognize that P2 (see the thin black area between the darker and lighter shaded areas in the figure), although it solves an NP-hard problem in general, makes up a negligible fraction of the method's computational load due to its fixed parameter tractability (cf.\ Prop.~\ref{prop:complexity_P2}). It is by far the fastest phase of the algorithm. Thus, even for large numbers of leading diagnoses, the solved hitting set problem remains easy. 

What we also point out is that the query expansion (P3, step 1) is sometimes (for C and CE) the most influencing factor regarding the computation time for small $|\mD|$ and successively loses importance against the query contraction (P3, step 2) as $|\mD|$ is increased. Reasons for this were discussed above.

\begin{figure}[t]
	\centering
	\includegraphics[width=\textwidth]{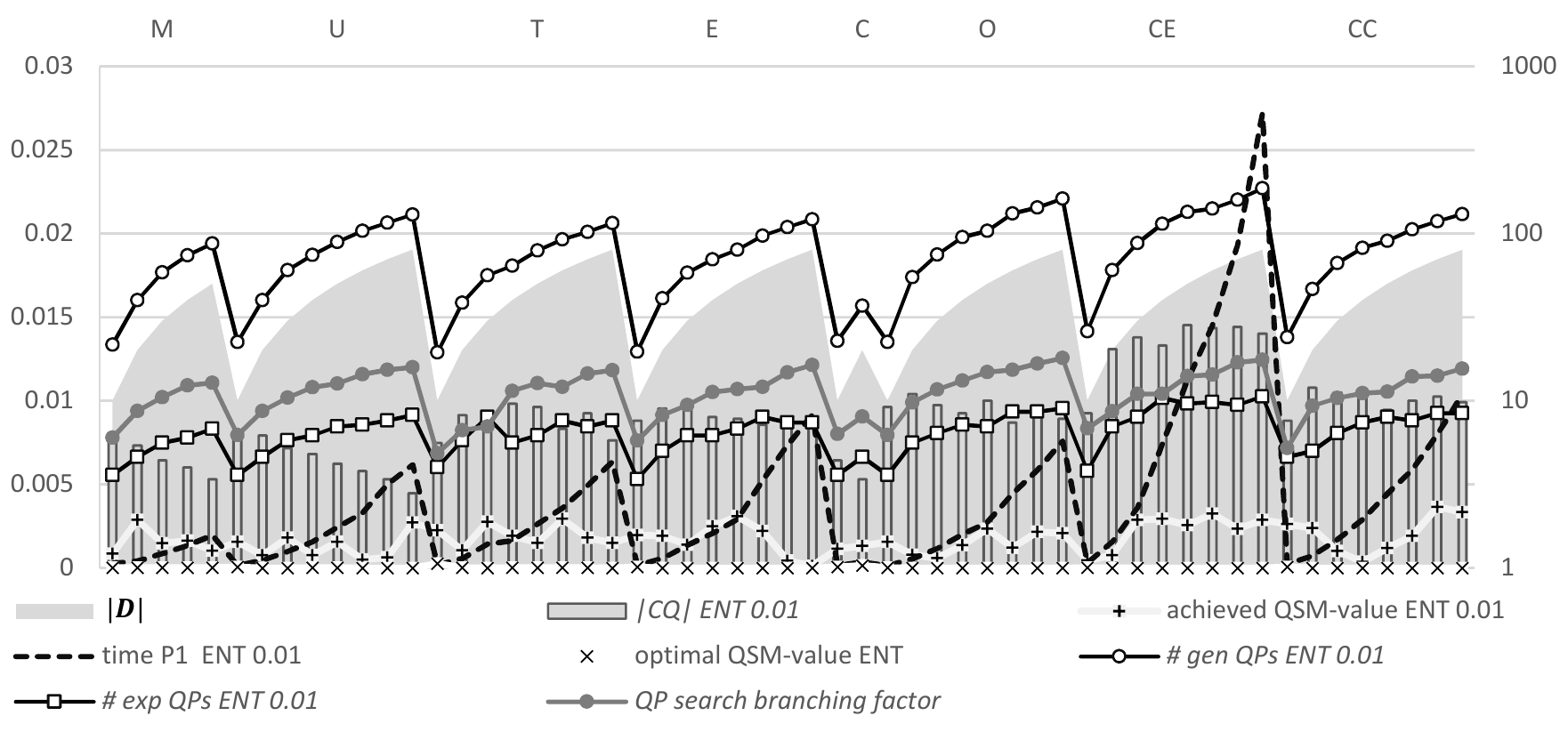}
	\caption{\small Summary of P1. ENT 0.01 means that the QSM $m = \text{ENT}$ was used with the threshold $t_m = 0.01$. $|\text{CQ}|$ denotes the size of the canonical query, \# gen QPs and \# exp QPs means the number of generated and expanded q-partitions, respectively. The branching factor is the average number of successors of nodes in the search tree.}
	\label{fig:summary_P1}
\end{figure}

\paragraph{Summary of Phase P1. (Fig.~\ref{fig:summary_P1})} 
By considering the generated and expanded QPs and the branching factor we get an impression of how the search tree looks like in P1. First, it is apparent that the number $g$ of generated QPs is approximately proportional to the number of leading diagnoses $|\mD|$, i.e.\ $g \approx c |\mD|$, where the factor $c$ averages to $\langle 1.94,1.86,1.67,1.75,2.07,2.27,2.63,\\ 1.99 \rangle$ for $\tuple{\text{M},\text{U},\text{T},\text{E},\text{C},\text{O},\text{CE},\text{CC}}$. Hence, we can state that, on average, for a very small threshold $t_m$ of $0.01$ (and 0), an optimal QP w.r.t.\ ENT (and SPL) can be found by generating no more than $3 |\mD|$ QPs. As a consequence, the effort arising in P1 -- notabene with heuristic and pruning -- grows \emph{linearly} with the number of leading diagnoses. By absolute numbers, $g$ was always below 200 (with a maximum of 187 for the case CE with $|\mD|=80$). 

Second, we notice that the branching factor as well as the number of expanded QPs are approximately proportional, but grow sublinearly with regard to $|\mD|$. 
For instance, for 10, 40 and 80 leading diagnoses, the branching factor and number of expanded QPs amounted on average (over all eight DPIs) to 6, 12 and 16 as well as 3.8, 7.0 and 8.5, respectively.   
That is, somewhat surprisingly, the branching factor is a rough (upper bound) estimate for the number of explored QPs until a goal is found. Moreover, continuously increasing the number of diagnoses, always by the same constant, leads to increases in the number of expanded QPs and in branching factor by continuously smaller factors. One reason for this is the tendency of diagnoses (and thus of their subset-minimal traits) to overlap more frequently if more diagnoses are computed. This overlap means that there are fewer equivalence classes as per Cor.~\ref{cor:S_next_sound+complete}, and thence affects the branching factor negatively.

Concerning the time required for P1 (black dashed line), we see that the maximum time over all cases was below $0.03$ sec and, excluding the DPI CE, below 0.01 sec. Therefore, an optimal QP (w.r.t.\ the threshold 0.01) can always be computed in less than $\frac{1}{20}$th of a second. 

Let us now draw our attention to the quality of the computed QP and imagine a thought horizontal line at 0.01 (left y-axis) denoting the specified threshold. It is easy to verify that the QSM-value of the computed QP (line labeled with the + signs) is always below this line, i.e.\ a QP with at least the required quality was determined in all cases. This analysis additionally shows that, although the threshold is at 0.01, the actually achieved QSM-value is quite close to the optimal QSM-value w.r.t.\ ENT, which is very close to zero (line labeled with the x signs). Note, w.r.t.\ SPL the optimal QSM-value of 0 was always hit. 
The optimal QSM-value was ascertained by performing a brute-force search over all QPs (cf.\ Fig.~\ref{fig:new_algo_vs_brute_force}) and storing the best found QSM-value.

Finally, the size of the canonical query, which constitutes an upper bound of the size of a query constructible in phase P2, attains values between $2.8$ (U, 80) and $28.4$ (CE, 50). The size of the CQ depends on the overlapping of the diagnoses in the $\dx{}$ set with those in the $\dnx{}$ set of the respective (C)QP. The higher it is, the lower the cardinality of the CQ (cf.\ Lem.~\ref{lem:CQ_equal_to_U_D_setminus_U_D+}).


\begin{figure}[t]
	\centering
	\includegraphics[width=\textwidth]{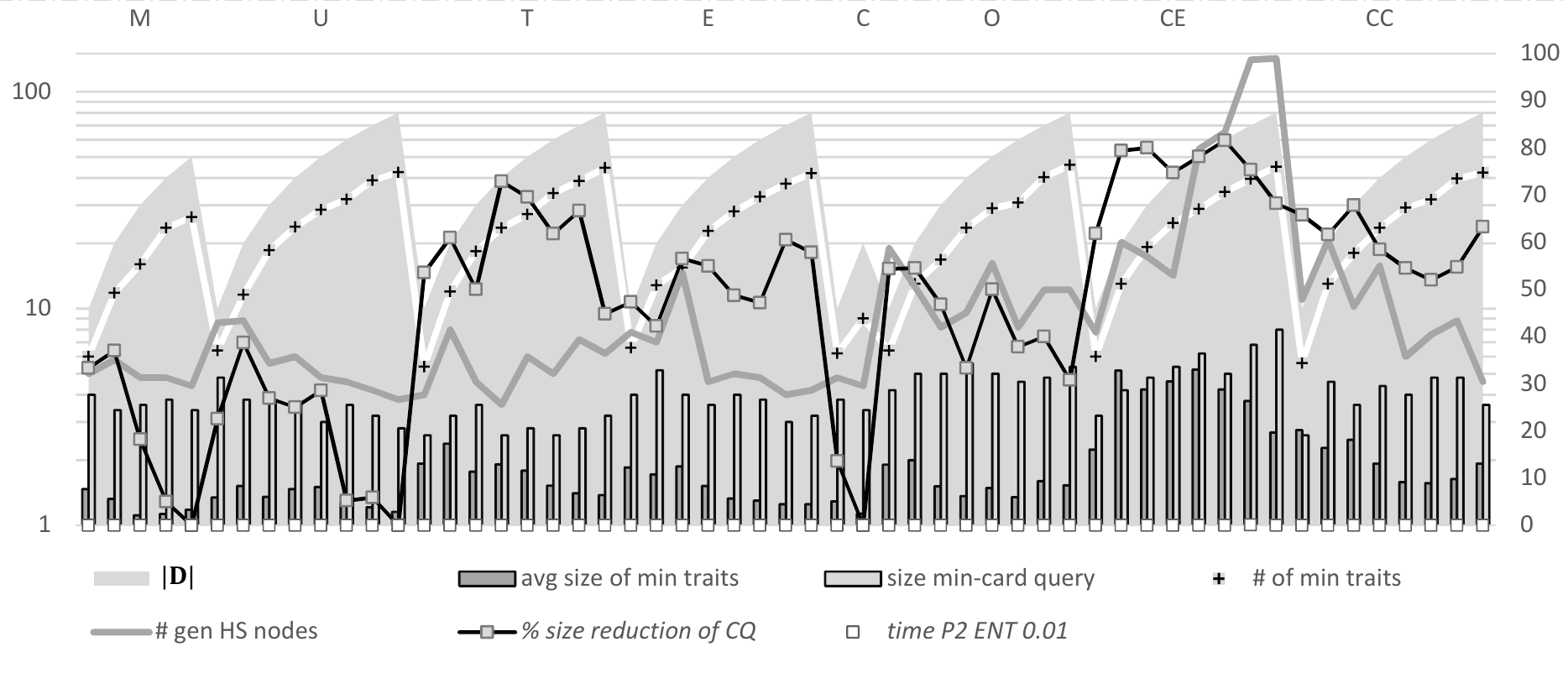}
	\caption{\small Summary of P2. Min traits means $\subseteq$-minimal traits w.r.t.\ the (fixed) QP returned by phase P1. Min-card means minimum-cardinality. Gen HS nodes refers to the generated nodes in the constructed hitting set tree. The size reduction of the CQ is computed as $(1-\frac{|Q^*|}{|Q|}) * 100 \%$ where $Q$ is the CQ and $Q^*$ the query output by P2.}
	\label{fig:summary_P2}
\end{figure}

\paragraph{Summary of Phase P2. (Fig.~\ref{fig:summary_P2})} 
In P2, the query with optimal QCM $c_{|.|}$ (see page~\pageref{etc:QCM}) is computed by performing a uniform cost hitting set search over the collection of all $\subseteq$-minimal traits of the optimal QP found in P1. The number of generated nodes measures the necessary effort for the hitting set tree construction and depends on the number of $\subseteq$-minimal traits (number of sets to be hit), their cardinality (branching factor of the hitting set tree) and their overlapping (the higher it is, the lower the depth of the tree and the minimum cardinality query tend to be). 
For $|\mD| \in \tuple{10, 40, 80}$, the average and maximal numbers of generated nodes are $\tuple{8.5, 8.5,28.9}$ and $\tuple{19, 16, 143}$, respectively. That is, the size of the generated tree is easily manageable, even for large sets of leading diagnoses. This fact is confirmed by the negligible time (in all runs between $\frac{1}{100\,000}$ and $\frac{6}{1\,000}$ sec) consumed by P2 (see the white squares in the figure).

The average size (where the average is taken over the traits of the optimal QP returned by phase P1) of the $\subseteq$-minimal traits is very small with an average / maximum of $1.59$ / $2.75$ over all cases, except for CE. For CE, we measure an average / maximum of $3.88$ / $5.24$. Hence, the branching factor of the hitting set tree is very low and the number of generated nodes is significantly higher for CE than for the other tested DPIs.
 
An explanation for the tendency of $\subseteq$-minimal traits to shrink for higher $|\mD|$ (which can be best observed for the cases T, E and CE, see the figure) is the tendency of diagnoses to more frequently overlap, if more diagnoses are computed (cf.\ Def.~\ref{def:trait}). The number of $\subseteq$-minimal traits, on the other hand, is proportional to $|\mD|$, which is quite intuitive as the number of diagnoses in $\dnx{}$ (i.e.\ the maximal possible number of $\subseteq$-minimal traits) tends to grow with increasing $|\mD|$, of course depending on (the QP properties favored by) the used QSM.

The median of the size of the query with optimal QCM computed by P2 is $3.8$ sentences (see the light gray bars). The achieved size reduction, starting from the CQ of the optimal QP returned by P1 and given as input to P2, ranges from zero percent (cases M, 5 and U, 8 and C, 15), where the CQ coincides with the QCM-optimal query, to more than $80\%$ (case CE, 60). In the latter case, CQs of average sizes of $27$ are reduced to an average size of $5$.

\begin{figure}[t]
	\centering
	\includegraphics[width=\textwidth]{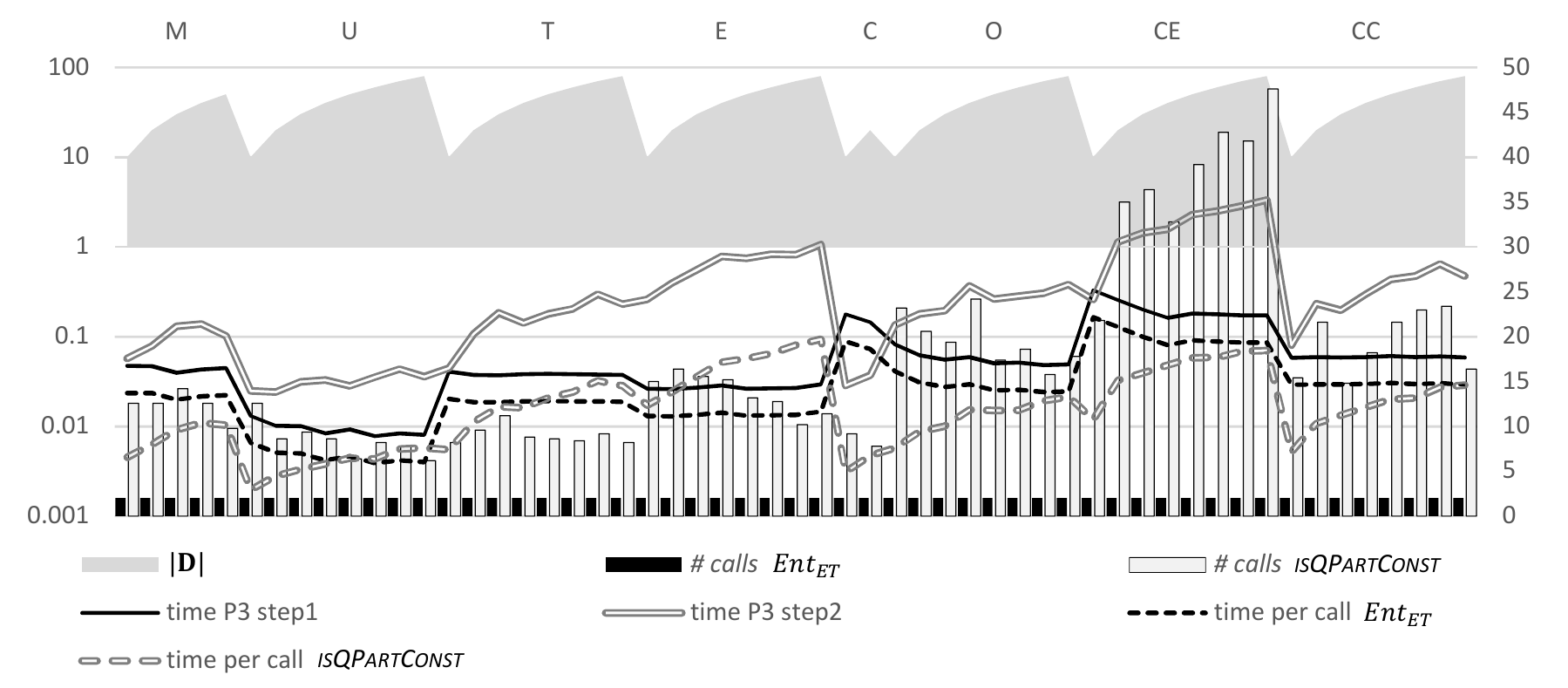}
	\caption{\small Summary of P3.}
	\label{fig:summary_P3}
\end{figure}

\paragraph{Summary of Phase P3. (Fig.~\ref{fig:summary_P3})} 
As the complexity analysis in Sec.~\ref{sec:P3} suggests, the crucial factors determining the efficiency of P3 are the number and the complexity of the required reasoner calls. For the first step of P3, these are the calls to $Ent_{ET}$ (cf.\ Eq.~\eqref{eq:Q_exp}). The figure (black bars) reminds us of the fact that their number is constant, i.e.\ $2$, independent from other parameters. Consequently, only the complexity of the $Ent_{ET}$ calls has an effect on the hardness of P3, step 1. As becomes clearly evident in the figure, this complexity is ruled by (the complexity, expressivity and number of implicit entailments of) the KB $\mo$ of the respective DPI, i.e.\ the black dashed line is more or less constant for each DPI. However, it tends to slightly decrease upon increasing $|\mD|$. This is exactly what one would expect (cf.\ the discussion of Fig.~\ref{fig:diag_vs_query_computation} above). Note, the time consumed by P3, step 1 (continuous black line) is exactly proportional to the time needed for an $Ent_{ET}$ call, which confirms that there are no other significant factors influencing the complexity of this computation step.
By absolute numbers, the time per $Ent_{ET}$ call never exceeded $0.2$ sec.

As regards the second step of P3, the number and complexity of the \textsc{isQPartConst} calls is decisive. The former is again influenced by the KB $\mo$ because its complexity and expressivity affects the number of computed entailments in P3, step 1. These in turn have an impact on the size of the expanded query, $|Q'|$, which rules the number of \textsc{isQPartConst} calls (cf.\ Prop.~\ref{prop:P3_step2_complexity}). 
In comparison to other DPIs, CE requires a relatively high number of \textsc{isQPartConst} calls (up to roundly $50$) on account of the large size of the computed query expansion in P3, step 1 (see Fig.~\ref{fig:query_evolution}). That is, the reasoner $\mathit{Inf}$ returned substantially more implicit entailments for CE than for other DPIs.  
%
The complexity of an average \textsc{isQPartConst} call is on the one hand determined by $|\mD|$ (as discussed above), thus slightly increasing for each DPI (see the figure), and on the other hand by the reasoning complexity of the respective KB $\mo$. 
For example, in case of O, although the average number of \textsc{isQPartConst} calls is clearly larger than for E, the latter requires more time one average for P3, step 2 due to the higher complexity per call (dashed transparent line).
Over all runs, no call of \textsc{isQPartConst} took longer than $0.01$ sec and the time for P3, step 2 was always below $3.5$ sec.

\begin{figure}[t]
	\centering
	\includegraphics[width=\textwidth]{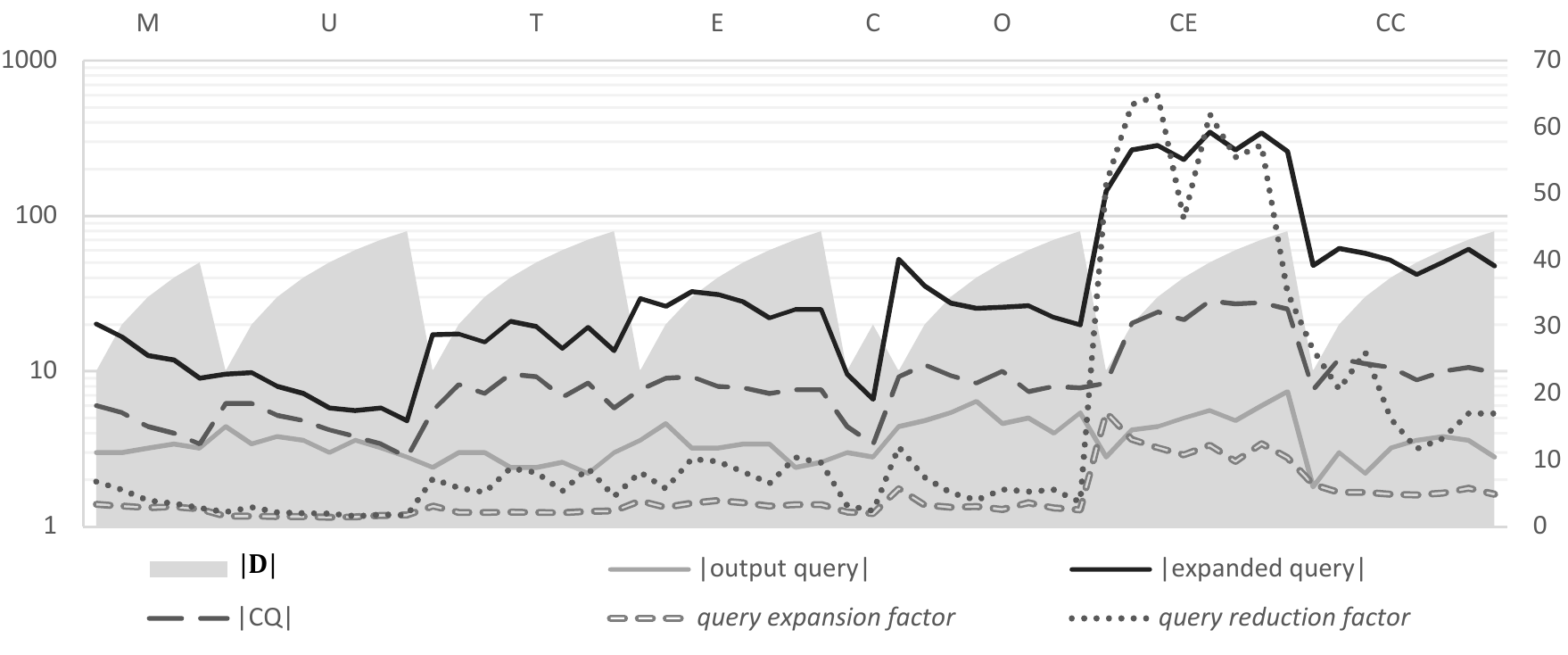}
	\caption{\small Query evolution over phases P1 and P3. Expanded query / output query refers to the query returned by P3 step 1 / P3 step 2. A query expansion factor of $k$ means that the expanded query is $k$ times as large in size as the CQ. A query reduction factor of $k$ means that the expanded query is $k$ times as large in size as the output query.
}
	\label{fig:query_evolution}
\end{figure}

\paragraph{Query Evolution. (Fig.~\ref{fig:query_evolution})} 
In this figure we see the comparison of the intermediate results in terms of the query size throughout phases P1 and P3 (optional mode of Alg.~\ref{algo:query_comp}). First, phase P1 returns a QP (from which the CQ $Q$ can be immediately computed, see Lem.~\ref{lem:CQ_equal_to_U_D_setminus_U_D+}). Then the CQ is enriched in phase P3, step 1 resulting in the expanded query $Q'$. This query is finally contracted again yielding the output query $Q^*$. 

We see that $Q$ is always larger than $Q^*$, i.e.\ altogether the enlargement and later reduction of the CQ $Q$   
produces a query smaller than $Q$. Note, $|Q|$ is a theoretical lower bound of $|Q'|$ (cf.\ Eq.~\ref{eq:Q'_enriched_query}) and hence always lower than $|Q'|$. As we already discussed above, the size of $Q'$ in relation to the size of $Q$ depends very much on the expressivity and (logical) complexity of the KB. Therefore, $|Q'|$ is larger for, e.g., CC than for, e.g., O, even though the size of $Q$ is approximately equal in both cases. In figures, $|Q|$ for O and CC averages to $8.9$ and $10.1$, whereas $|Q'|$ for O and CC amounts to $29.4$ and $52.5$. The most implicit entailments could be computed in case of CE, with average sizes of the expanded query $Q'$ of $268$. These differences in the number of entailments can be best seen by considering the query expansion factor (dashed transparent line) which ranges from $9.8$ to $17.2$ for, e.g., CE and from only $1.4$ to $1.7$ for, e.g., U.

The query reduction factor (dotted line), on the other hand, measures the degree of contraction effectuated by P3, step 2. A reduction factor of $k$ means that $|Q'| = k |Q^*|$, i.e.\ the size of the contracted and optimized query $Q^*$ is $\frac{1}{k}$th of the expanded one, $Q'$. The maximal values of $k$ are around $65$ for, e.g., CE and around $3$ for, e.g., U. That is, for CE CQs of average size $268$ are reduced to optimized queries of average size $5$ while for U CQs averaging to $7.0$ are minimized to queries averaging to $3.5$. 
Nevertheless, the size of the finally output query $Q^*$ does not fluctuate very strongly (gray continuous line) and has a median of $3.4$.

\begin{figure}[t]
	\centering
	\includegraphics[width=\textwidth]{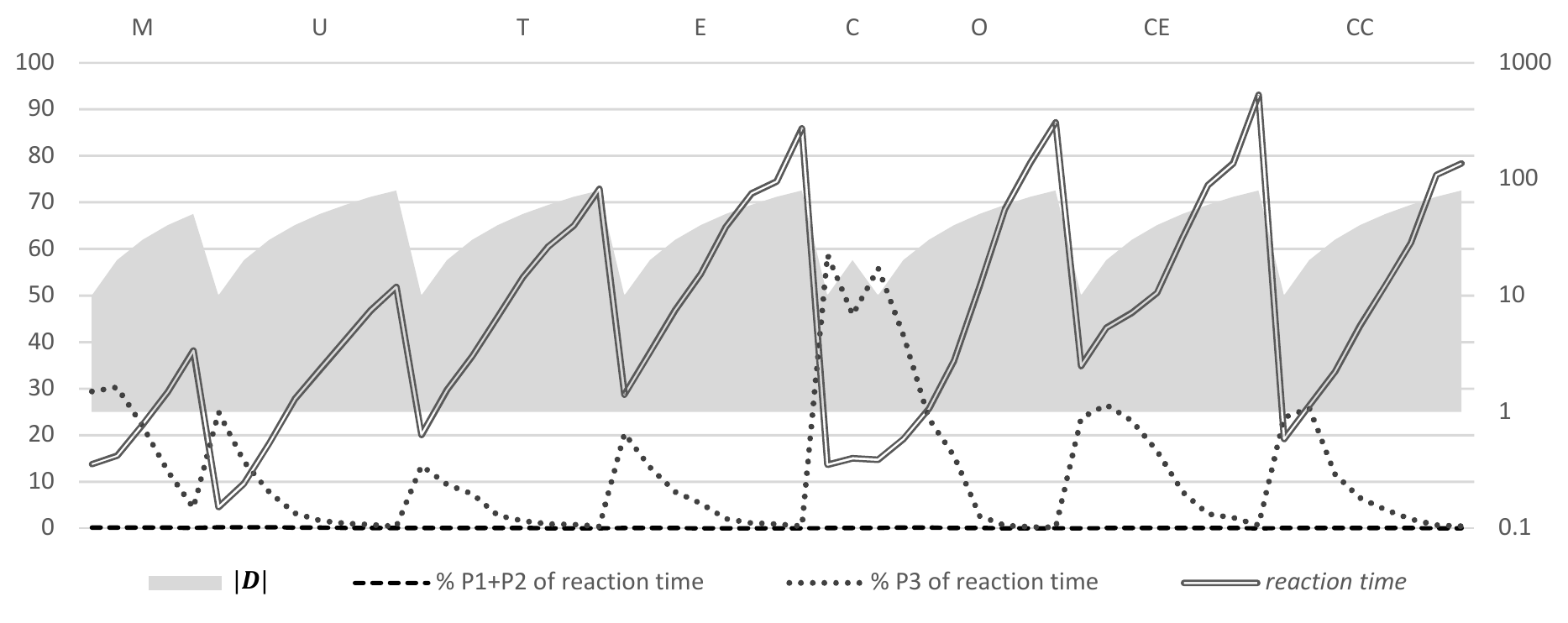}
	\caption{\small Query computation vs.\ debugger reaction time. Reaction time refers to the time passing between the submission of a query answer and the finalization of the next query's computation.}
	\label{fig:query_compuation_vs_debugger_reaction_time}
\end{figure}

\paragraph{Query Computation vs.\ Debugger Reaction Time. (Fig.~\ref{fig:query_compuation_vs_debugger_reaction_time})} 
On the one hand the figure shows the absolute reaction time (transparent line) of the debugger, i.e.\ the time passing between the submission of a query answer and the provision of the next query. In other words, the reaction time is the time required for leading diagnoses computation plus the time for query generation. On the other hand the figure gives insight into which proportion of the reaction time is due to query computation, where phases P1+P2 (dashed line) and P3 (dotted line) of the query computation are shown separately, and which proportion is due to diagnoses computation (difference between 100 on the left y-axis and the dotted line). The debugger's reaction time ranges from $0.15$ sec (U, 10) to $8$ min $50$ sec (CE, 80). Over all eight DPIs, the average reaction times for $|\mD| \in \tuple{10,20,30,40,50,60,70,80}$ are $\tuple{0.8,2.0,3.2,6.2,16.5,46.3,87.9,223.6}$. It is apparent from the figure that the reaction time grows superlinearly with increasing $|\mD|$. For all DPIs separately, the average factor by which the reaction time grows upon adding ten leading diagnoses is between $1.65$ and $2.56$. The average growing factor over the entire data is roundly $2$. That is, the reaction time is about doubly as high, if the number of leading diagnoses is raised by ten.

However, using the presented algorithm, the time spent for query computation accounts for only a minor fraction of the reaction time. In particular, whenever the reaction time is not very quick, i.e.\ it is, say, beyond 10 sec, the query computation is always responsible for less than $10$ percent of the reaction time when Alg.~\ref{algo:query_comp} is used in optional mode with query expansion and optimization, and for less than $3$ per mill of the reaction time when it is used in default mode. 
Hence,
with the new method, whenever the debugger fails to react within short time, this is due to diagnoses computation and not due to query computation. Moreover, the fraction of the reaction time needed for computing an explicit-entailments query optimizing both the QSM and the QCM 
is always negligible, independent of other parameters.

\begin{figure}[t]
	\centering
	\includegraphics[width=0.95\textwidth]{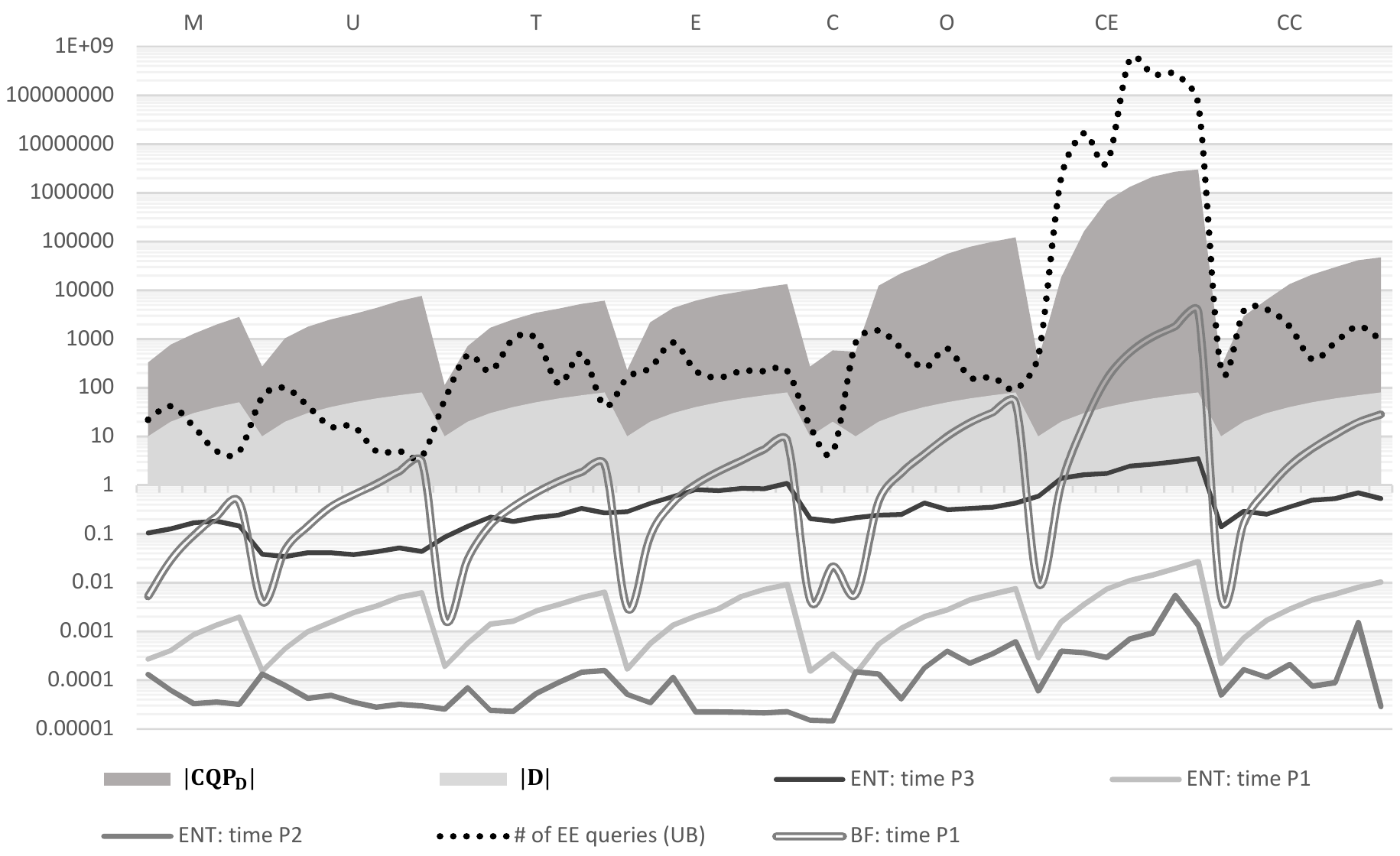}
	\caption{\small Search space sizes versus query computation times. $\mathbf{CQP}_\mD$ denotes the set of CQPs w.r.t.\ the leading diagnoses $\mD$. EE queries refers to explicit-entailments queries (cf.\ page~\pageref{etc:EE_query_def}), UB signalizes an upper bound. BF terms a brute force search over $\mathbf{CQP}_\mD$.}
	\label{fig:new_algo_vs_brute_force}
\end{figure}

\paragraph{Search Space Size vs.\ Computation Times. (Fig.~\ref{fig:new_algo_vs_brute_force})} 
%
Here, we see a comparison of the absolute computation times of the three phases of the new method (solid lines). Furthermore, we performed a brute force search over all CQPs, on the one hand to determine the size of the search space explored in P1, i.e.\ the (exact) number $|\mathbf{CQP}_\mD|$ of all existing CQPs for $|\mD|$ (see the dark shaded area in the figure, cf.\ Conjecture~\ref{conj:CQPs=QPs}), and on the other hand to get an idea of the efficiency of (C)QP generation with the new algorithm. The time required for the exploration of all CQPs is shown by the framed transparent line in the figure. Additionally, the figure displays an upper bound of the size of the search space tackled by P2 (dotted line), i.e.\ of the number of all explicit-entailments (EE) queries for the QP $\Pt = \tuple{\dx{},\dnx{},\emptyset}$ where $\Pt$ is the result returned by phase P1. 
We calculated this upper bound as $u := 2^{n}-\sum_{k=1}^{m} \binom{n}{k}$ where $n := |Q_{\mathsf{can}}(\dx{})|$ and $m := |Q^*|-1$, i.e.\ $2^n$ is the number of all subsets of the CQ $Q_{\mathsf{can}}(\dx{})$ of $\Pt$ and the subtracted sum counts all subsets of $Q_{\mathsf{can}}(\dx{})$ of size smaller than the minimum-cardinality query $Q^*$ computed by phase P2. Recall, each explicit-entailments query is a subset of $Q_{\mathsf{can}}(\dx{})$ and a superset of some minimal hitting set of all ($\subseteq$-minimal) traits in $\dnx{}$ (by Prop.~\ref{prop:explicit-ents_query_lower+upper_bound} and Cor.~\ref{cor:min_exp-ents_queries_are_minHS_of_all_traits_of_diags_in_Dnx}), and $Q^*$ is a hitting set of of all ($\subseteq$-minimal) traits in $\dnx{}$ of minimum-cardinality. Hence, $u$ is indeed an upper bound of the number of all explicit-entailments queries for $\Pt$.\footnote{Unfortunately, we cannot make any statement about the strictness of this bound, nor can we give a non-trivial general lower bound. We nevertheless included this upper bound in the figure with the intention to give an impression of the worst-case complexity (domain over which the QCM is optimized) of P2.}

It is evident from the figure that P1+P2 (default mode of Alg.~\ref{algo:query_comp}) always finish in less than $0.03$ sec outputting an optimized query w.r.t.\ the QSM $m$ and the QCM $c$. Importantly, this \emph{efficiency is independent of the type (e.g., knowledge base, physical system) of the diagnosis problem at hand}. Because P1+P2 only do combinatorial computations that depend solely on the diagnostic structure of the problem, i.e.\ the size, number, probabilities, overlapping, etc.\ of diagnoses.
Moreover, it takes P1 longer than P2 in all cases, and P1's execution time increases monotonically with $|\mD|$ whereas P2's does not.
Note that albeit P1+P2 solve Prob.~\ref{prob:query_optimization} for a restricted search space $\mathbf{S}$ (cf.\ Theorem~\ref{theorem:P1+P2_solve_query_optimization_problem}), the number $|\mathbf{CQP}_\mD|$ of CQPs w.r.t.\ $\mD$, which is just a fraction of $|\mathbf{S}|$, already averages to roundly $\langle 300, 5\,500, 28\,500, 105\,000, 200\,000, 370\,000, 475\,000, 530\,000 \rangle$ for $|\mD| \in \tuple{10,20,30,40,50,60,70,80}$.
That this restricted search space $\mathbf{S}$ is sufficiently large for all numbers of leading diagnoses $|\mD|$ is also substantiated by the fact that \emph{in each single test run} an optimal query w.r.t.\ the very small threshold $t_m = 0.01$ ($90\%$ smaller than the threshold used in \citep{Shchekotykhin2012}) was found in $\mathbf{S}$. 
The number of explicit-entailments queries per QP, 
i.e.\ the factor $c$ such that $|\mathbf{S}| = c |\mathbf{CQP}_\mD|$ might also be substantial, as hinted by the dotted line. Although this line describes only an upper bound, it gives at least a tendency of the size of the domain over which P2 seeks to optimize the given QCM. The meaningfulness of this trend line is corroborated by the fact that the time required for P2 (bottommost line in the plot) obviously correlates quite well with it.
%
The optional further query enhancement in P3 (omission of the search space restriction and switch to the full search space) terminates in all tests within less than 4 sec and returns the globally optimal query w.r.t.\ the QCM $c_{\max}$ (see Theorem~\ref{theorem:P3_solves_problem_1}). These practical times result from the moderate use of a reasoner in P3 (cf.\ Fig.~\ref{fig:summary_P3}).

In P1, also a brute force search computing all possible CQPs is feasible in most cases -- finishing within $50$ sec in all runs (up to search space sizes of more than $120\,000$) except for the $|\mD|\geq 30$ cases for the DPI CE 
(where up to 3 million CQPs were computed). 
This high computational speed is possible due to the \emph{complete avoidance of costly reasoner calls} by relying on the canonical notions, CQs and CQPs.

\begin{figure}[t]
	\centering
	\includegraphics[width=0.95\textwidth]{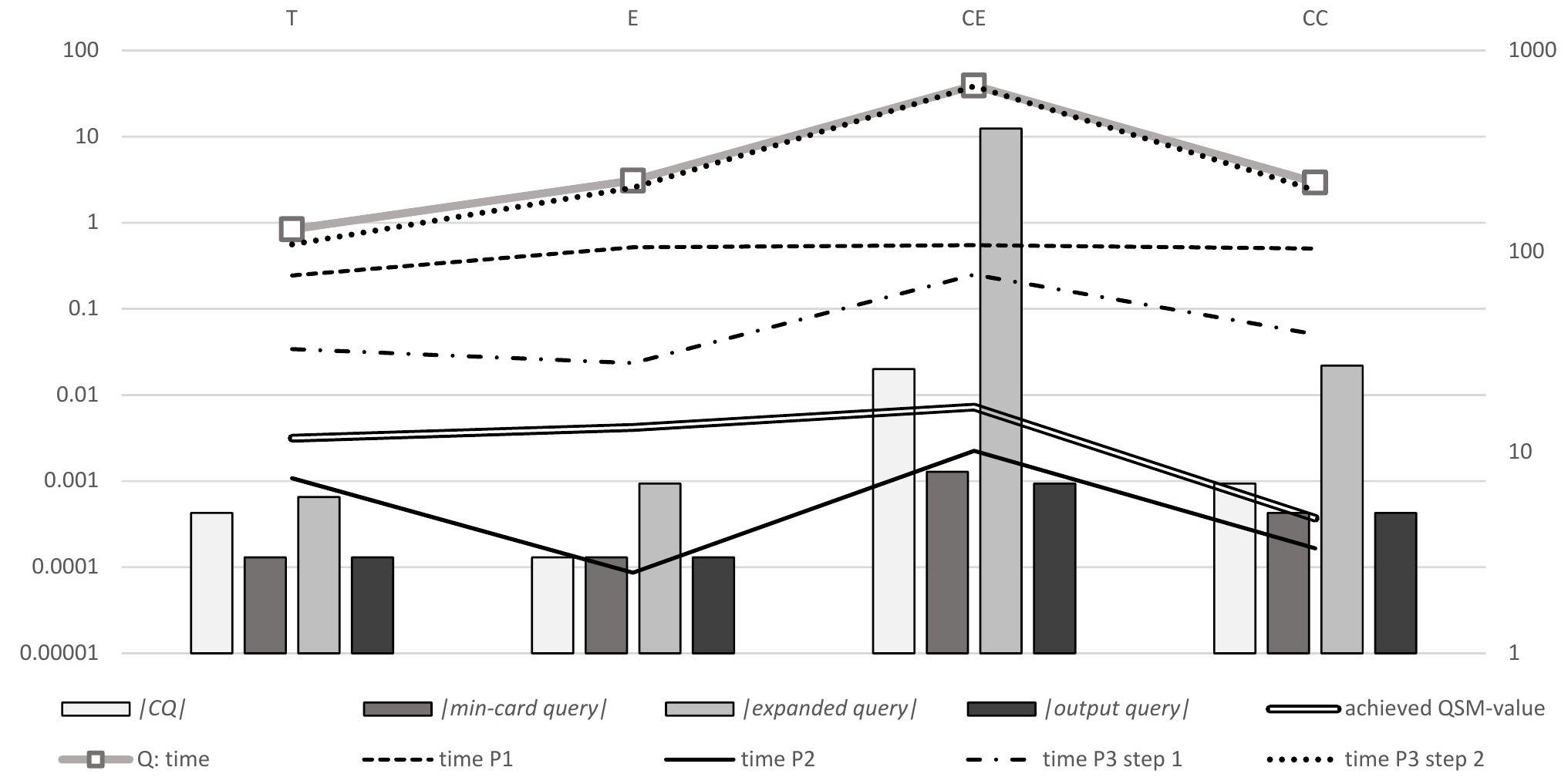}
	\caption{\small Scalability Tests. Min-card query / expanded query / output query denote the query returned by P2 / P3 step 1 / P3 step 2. The achieved QSM value refers to the QSM ENT. Q:\ time means overall query computation time (ignoring leading diagnoses computation time).}
	\label{fig:scalability}
\end{figure}

\paragraph{Scalability Tests. (Fig.~\ref{fig:scalability})}
The figure shows that even for an immense number of 500 leading diagnoses, Alg.~\ref{algo:query_comp} works in absolutely reasonable time, still producing a query optimized along the QSM and QCM dimension. The diagnoses computation times for $\tuple{\text{T},\text{E},\text{CE},\text{CC}}$ (not depicted in the figure) are $\tuple{8,11,1031,1405}$ sec. Concretely, we observe that the required overall query computation time amounts to roundly $\tuple{0.8, 3.1, 39.2, 3.0}$ sec for $\tuple{\text{T},\text{E},\text{CE},\text{CC}}$ (gray line with white squares) and is for the most part consumed by the reasoning activity in P2 (dotted line) which depends on $|\mD|$ (cf.\ Prop.~\ref{prop:P3_step2_complexity}). 

By way of comparison, the times achieved for $|\mD| = 80$ (cf.\ Fig.~\ref{fig:diag_vs_query_computation}) for $\tuple{\text{T},\text{E},\text{CE},\text{CC}}$ were $\tuple{0.28,1.11,3.55,0.54}$ sec, i.e.\ the time increase factors are roundly $\tuple{2.9,2.8,11.0,5.6}$ whereas $|\mD|$ was increased by the factor $\frac{500}{80}=6.25$. The most significant relative increase is due to P1 with factors around $\tuple{38,57,20,48}$ (i.e.\ each factor is computed as time for P1 with 500 diagnoses divided by time for P1 with 80 diagnoses). However, this substantial growth does not carry weight due to the very small absolute times required by P1. Similarly, for P2, absolute times were tiny such that the calculated factors of $\tuple{6.9,3.8,1.7,5.7}$ are of minor importance -- contrary to the growth factors $\tuple{2.4,2.4,11.5,5.0}$ measured for P3 step 2 where the highest absolute times are manifested. Note the strong correlation between the overall increase factors and the ones concerning step 2 of P3. Only for P3 step 1, times were basically decreasing, i.e.\ the factors in this case are $\tuple{0.9,0.8,1.4,0.9}$ (which is well explainable, see above). 
 
Whereas the computation times of phases P2 and P3 depend on the underlying DPI (see the parallel fluctuation of the respective lines in the figure, cf.\ discussions above), the time of P1 is mostly affected by $|\mD|$ (constancy of the dashed line). In fact, P1 finished in all cases within 0.6 sec, P2 within 0.003 sec and P3 step 1 within 0.25 sec. 

The achieved QSM-value for ENT was always below the postulated 0.01 (transparent framed line), confirming that a sufficiently good QP could be found in all cases. The number of QPs that had to be generated and expanded until a goal QP was identified is $\tuple{570,622,718,629}$ and $\tuple{10,12,18,15}$, respectively, for $\tuple{\text{T},\text{E},\text{CE},\text{CC}}$. Note the much higher search tree branching factors here, $\tuple{57,52,40,42}$, than observed in EXP1,  $\tuple{15,16,18,16}$ (cf.\ Fig.~\ref{fig:summary_P1}). 

Concerning the query evolution and size, we realize that the obtained values are reasonable and very similar to the ones seen in EXP1 (see, e.g., Fig.~\ref{fig:query_evolution}).

\begin{figure}[t]
	\centering
	\includegraphics[width=\textwidth]{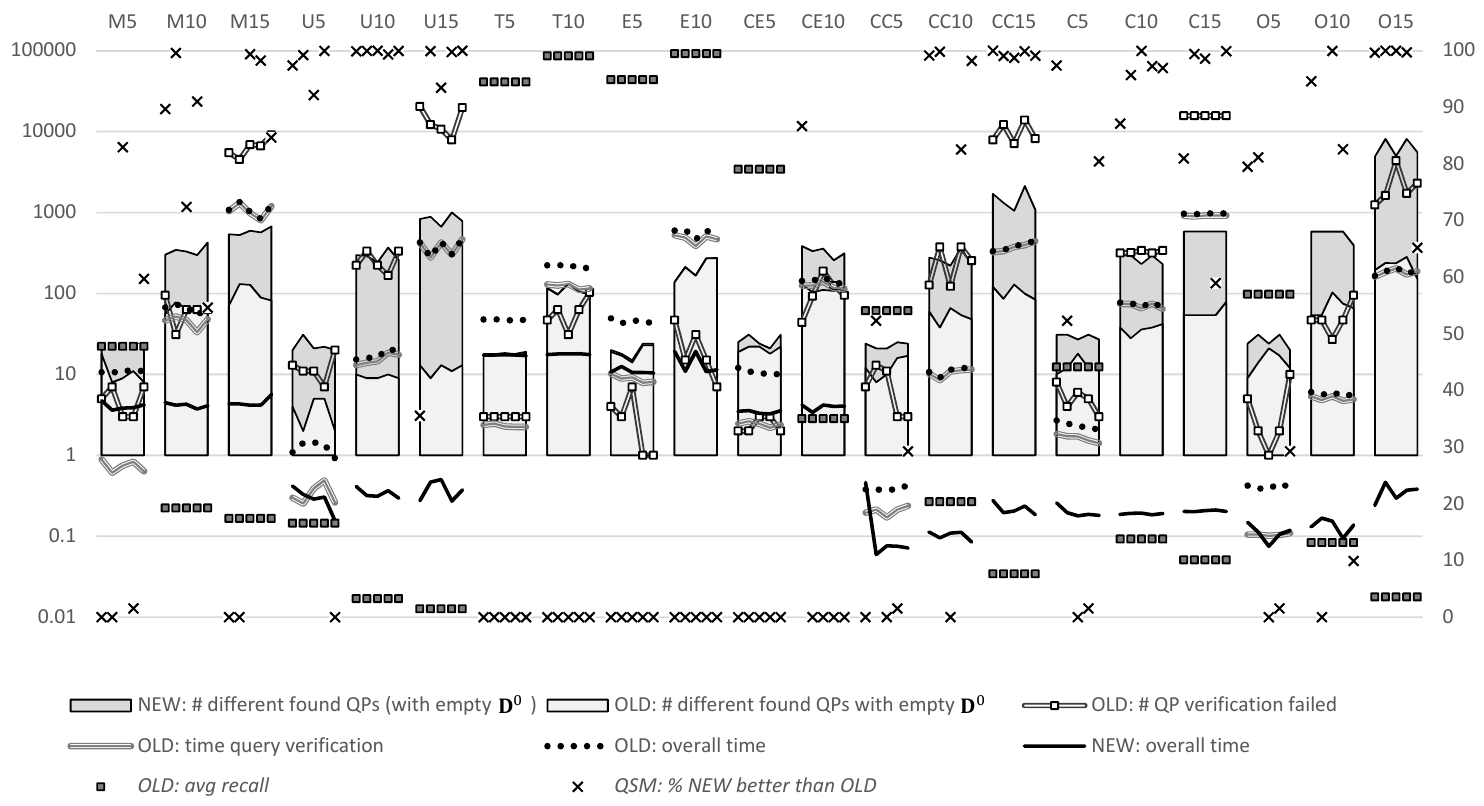}
	\caption{\small Comparison of Alg.~\ref{algo:query_comp} (NEW) with a method (OLD) not using the proposed theory. The \text{x-axis} (on top) shows the (DPI,$|\mD|$) combinations, e.g.\ U15 presents 5-iteration average values for the DPI U with $|\mD|=15$. Overall time refers to the time for the entire query computation (where diagnoses computation time is neglected). OLD: avg recall refers to the average (over 5 iterations) percentage of QPs (with empty $\dz{}$) generated by OLD among those generated by NEW.}
	\label{fig:naive_vs_new}
\end{figure}

\paragraph{Comparison of Alg.~\ref{algo:query_comp} (NEW) with a Method (OLD) not Using the Proposed Theory. (Fig.~\ref{fig:naive_vs_new})}
The figure shows the results for all (DPI,$|\mD|$) combinations where OLD (see the description of EXP3 in Sec.~\ref{sec:experiments}) terminated in all 5 iterations of query computation before a predefined timeout of 1 hour elapsed. That is, for all DPIs in Tab.~\ref{tab:experiment_data_set} it took OLD longer than 1 hour for $|\mD|=20$. Moreover, for T, E and CE, OLD 
exceeded the timeout even for $|\mD|=15$.
Diagnoses computation worked reasonably in all cases, requiring always less than 2 min.  

What might seem surprising at first sight is the fact that, in all cases, NEW generates at least as many (different) QPs with empty $\dz{}$ as OLD does, although both run a brute force search. Beyond that, in most of the cases the fraction $f$ of QPs OLD is able to determine among those generated by NEW is rather small (see the size relation between light and dark gray areas in the figure). In numbers, $f$ ranges from only $1\%$ (U15) to $99\%$ (T10). Note that the y-axis is logarithmic, i.e.\ albeit the light gray area for U15 seems about a third of the dark gray one, it amounts to only about one percent (two orders of magnitude difference). That is, by absolute numbers, OLD generates on average for U15 only $12$ QPs from $835$ QPs generated by NEW. To make this aspect better visible, the figure shows the average fraction $f$ (recall of OLD w.r.t.\ the QPs determined by NEW) over each 5 iterations in terms of the gray squares. 

The reason for this drastic degree of incompleteness of OLD regarding QPs is the strategy of a direct generation of a (potential) query before its (q-)partition is constructed. This is inverse to how NEW behaves. Since this query candidate is generated by a reasoner, it depends strongly on the reasoner's output, i.e.\ the computed entailments. Without proper care, 
this generally leads to a neglect of the discrimination sentences (see Def.~\ref{def:discax}), which are those responsible for the consideration of all and only (canonical) QPs, i.e.\ for soundness and completeness of (C)QP generation (cf.\ Sec.~\ref{sec:P1}). As a result (cf.\ the discussion on page~\pageref{etc:advantages_of_CQs+CQPs}), this can on the one hand effectuate unsoundness, i.e.\ the construction of unnecessary duplicate QPs and unnecessary partitions that are non-QPs (thus query candidates that turn out to be no queries). On the other hand, it can cause incompleteness, i.e.\ the disregard of QPs. 
Apart from that, the search space of queries for one (fixed) QP $\Pt$ explored by OLD is generally smaller than the one considered by NEW because NEW computes a query expansion based on the most informative seed for $\Pt$ (i.e.\ $\Pt$'s CQ, see Sec.~\ref{sec:P3}). As opposed to this, OLD computes a minimization of the (strongly reasoner-dependent) possibly much less informative initial query candidate computed for $\Pt$. 
Please note that these problems occur although both methods, NEW and OLD, use exactly the same reasoner and compute exactly the same (types of) entailments for one and the same KB. The crux is rather \emph{if}, \emph{when} and \emph{for what} the reasoner is used.\footnote{One way to overcome the incompleteness issue of OLD is the manual addition of sufficiently many discrimination sentences to the entailments returned by the reasoner. Nevertheless, the problems concerning duplicates and computation time persist.} 

The number of unnecessarily generated query candidates $X$ (those for which the query verification failed, cf.\ EXP3 in Sec.~\ref{sec:experiments}) by OLD is substantial in several cases (see light line with squares). For instance, for U5 / U 10 / U15 this number averaged to over $12$ / $250$ / $14\,000$ and for O5 / O10 / O15 to $4$ / $50$ / $2250$. Also, failed verifications outnumber successful ones (light shaded area) often by several orders of magnitude, e.g.\ for U15 or CC15. Consequently, OLD might spend most of the time unnecessarily.  
The reason for the failed query verification in all these cases is the emptiness of the $\dnx{}(X)$ set of $X$'s partition. This stems from a too small number of or too weak logical sentences in $X$ (cf.\ Prop.~\ref{prop:partition}). NEW avoids this by exploiting the canonical notions introduced in Sec.~\ref{sec:P1}. Thus, whereas OLD's query generation can be seen as a trial and error approach, NEW pursues a sound and complete systematic approach. 

The massive advantage brought about by this is proven by the query computation times measured. In fact, for all cases where $|\mD| \geq 10$, NEW required at least one order of magnitude less time than OLD (see black solid versus black dotted line). For example, the average execution times of NEW and OLD on $\tuple{\text{O5},\text{O10},\text{O15}}$ are $\tuple{0.11,0.14,0.35}$ sec and $\tuple{0.41,5.8,186}$ sec, respectively. 
Additionally, as OLD's query computation time increases exponentially (see the jumps of the dotted line by about an order of magnitude for one and the same DPI and different diagnoses numbers, e.g., CC5 vs.\ CC10 vs.\ CC15) and NEW's at most linearly (see Fig.~\ref{fig:diag_vs_query_computation}) with $|\mD|$, the time savings of NEW versus OLD grow (significantly) upon extending $|\mD|$. The gray line in the figure suggests that the performance of OLD suffers strongly from the enormous time amount spent for query verification. Note, the latter requires extensive usage of a reasoner. 
For instance, for U15 almost the entire computation time was dedicated to query verification. Another interesting fact is that, over all DPIs and $|\mD|=10$ (because for this configuration OLD always finished before the timeout), OLD could only explore a median of $1.8$ QPs within the \emph{overall} query computation execution time (P1+P2+P3) of NEW.

Finally, the quality (QSM-value) of NEW's returned query proves to be \emph{always} as good or better than for OLD. The percent improvement $[(m_{\mathit{old}} - m_{\mathit{new}}) / m_{\mathit{old}}] * 100$ regarding the QSM-value $m_{\mathit{new}}$ achieved by NEW as compared with the QSM-value $m_{\mathit{old}}$ achieved by OLD (crosses in the figure) was sometimes more and sometimes less substantial (note, smaller QSM-values are better). As optimality is directly related to search completeness, smaller or no improvements are given for cases where OLD evinces high or full completeness, and vice versa. Stated in terms of the notation used in the figure, this means that the gray squares and the crosses are negatively correlated. 

\section{Related Work}
\label{sec:related}

The thorough analyses of pool-based query generation and selection and the revealed shortcomings and improvement suggestions given in \citep{Rodler2015phd} provide a foundation of and a motivation for this work. Beyond that, we oriented ourselves by the structure and notation used in \citep{Rodler2015phd} 
in our preliminary sections. A more in-depth discussion of the theory and the algorithms described here and detailed, ready-to-be-implemented pseudocode plus additional remarks and considerations on search heuristics, search tree pruning and early termination rules for the discussed QSMs and further ones can be found in \citep{DBLP:journals/corr/Rodler16a}. 
The generic sequential diagnosis algorithm we describe in Sec.~\ref{sec:sequential_diagnosis_algo} is very similar to the ones given in \citep{Shchekotykhin2012,Rodler2015phd} and follows the principled approach outlined in \citep[Sec.~2]{dekleer1993}. The QSMs ENT and SPL we use in our evaluations have originally been used for decision tree learning \citep{quinlan1986induction} and optimization \citep{moret1982decision} (here, SPL is called ``separation heuristic''), but have later been adopted for diagnoses discrimination as well \citep{dekleer1987,Shchekotykhin2012,Rodler2013,rodler17dx_activelearning}. 

Whereas the works \citep{pattipati1990,DBLP:journals/tsmc/ShakeriRPP00,DBLP:journals/mam/ZuzekBN00,DBLP:conf/ijcai/BrodieRMO03,gonzalez2011spectrum} also focus on the query (probe/test) sequencing problem, they are not model-based, but rely on a test (coverage) matrix or test dictionary, respectively. Such approaches are often called spectrum-based. Pros (and cons) of model-based approaches against such methods are discussed in \citep{pietersma2005model} and \citep[Sec.~3]{davis1988}. 
%
Besides the frequent reliance upon a predefined (and possibly incomplete) set of possible faults (e.g.\ single faults\footnote{\citep{abreu2011simultaneous} extends spectrum-based approaches by considering multiple faults, but does not address the selection of optimal tests.}) in such systems, one important difference to our approach is the fixed set of tests from which the system might choose in each iteration.
Hence, the query \emph{computation} (as in our case) reduces to a query \emph{selection} and the space of queries is initially known and restricted (in a way the generation and size of the test matrix 
remain feasible) whereas it is unknown and, in principle, not restricted in the case of our work (queries need not be chosen or computed in advance). 
Another crucial aspect is the information about the influence of test outcomes on the diagnoses. 
While such information in terms of prespecified fault models (relating faults to test outcomes\footnote{Note that this information (e.g., given by the tests' \emph{traces} telling which components are involved in the each test's execution) is required \emph{in advance} of running the tests when dealing with the problem of optimal test selection.}) is required to be known (or at least reliably and efficiently estimable) in the mentioned approaches, this information needs to be logically inferred in model-based approaches like the one described here. As discussed, this logical reasoning is the major factor affecting the computational efficiency in the model-based case. 
Consequently, e.g., simulation-based non-myopic test selection strategies \citep{zamir2014} are not efficiently applicable to the problem tackled in this work. The issue is the very high reasoning cost required to update the diagnoses sets during these simulations. 
	
Model-based diagnosis methods for generating optimal \emph{tests}, i.e.\ new sets of system inputs facilitating new gainful observations of the system behavior, are presented by the works of \citep{pietersma2005model,DBLP:journals/jair/FeldmanPG10a}. Although test generation may be basically enforced with our approach by selecting appropriate entailment types $\mathit{ET}$ in phase P3 (cf.\ also Ex.~\ref{ex:query_representation}), the main intention of our work, similar to, e.g., \citep{Felfernig2004213,Shchekotykhin2012,Rodler2013,Rodler2015phd},
is the specification of an optimal next query (test case), defined generally as a (set of) sentence(s) formulated in some knowledge representation language. 
Translated to hardware or physical devices \citep{dekleer1987,dekleer1993,Siddiqi2011} a query usually corresponds 
to a \emph{probe}, i.e.\ the measurement of (a) system variable(s). 
Due to the generality of the query notion our approach addresses a more complex query (probe) search space than the methods of \citep{dekleer1987,dekleer1993}, thereby guaranteeing perfect diagnoses discriminability, i.e.\ the unambiguous localization of the actual fault (cf.\ Sec.~\ref{sec:applicability_diagnostic_accuracy}).

A general aspect that distinguishes our approach from probe selection algorithms is the not necessarily explicit availability of the possible probes. For instance, in a digital or combinatorial circuit \citep{Reiter87,dekleer1987,Siddiqi2011} all probe locations are given by the circuit's wires. In general, this is not necessarily the case, e.g.\ when considering any kind of knowledge-based system \citep{Felfernig2004213,Kalyanpur2006,Shchekotykhin2012,Rodler2013,Rodler2015phd}. In the latter scenario, (most of the) queries (e.g., possible questions to a domain expert) are normally implicit and must be (expensively) inferred.

\citep{DBLP:journals/jair/FeldmanPG10a} also describes a probing algorithm which, as opposed to ours, makes extensive use of an inference engine to compute the best query (probe). Moreover, it might consider non-discriminating probes as well as generate duplicates (in terms of the diagnoses eliminated for each outcome) in the search for an optimal probe. Both is impossible with our approach (cf.\ page~\pageref{etc:advantages_of_CQs+CQPs}).

The works of \citep{Shchekotykhin2012,Rodler2013} are similar to ours in that they deal with query selection for knowledge-based systems and also suggest a generic procedure for query generation and selection. This procedure can be seen to ``implement'' the definition of a query (cf.\ Def.~\ref{def:query_q-partition}) quite directly, i.e.\ computing query candidates $X$ for various seeds $\dx{} \subset \mD$ and verifying whether $X$ is indeed a query by checking $\dx{}(X) \neq \emptyset$ and $\dnx{}(X) \neq \emptyset$. However, the main focus of these works is on \emph{query selection strategies} (which we call QSMs, cf.\ Sec.~\ref{sec:measurement_selection}) and their pros and cons in various scenarios, especially with regard to the length of the query sequence until (high) diagnostic certainty is achieved. Our work, on the contrary, can be seen to complement these works
by concentrating and improving on the algorithms for \emph{query computation and optimization}. For this reason we have also used the QSMs discussed in \citep{Shchekotykhin2012} 
in our evaluations. The latter (cf.\ Sec.~\ref{sec:experimental_results},  Fig.~\ref{fig:naive_vs_new}) show that our novel theory enables significant enhancements of the above-noted generic procedure by a well-conceived (non-)employment of expensive reasoning. 

A difference to the works of \citep{dekleer1987,Shchekotykhin2012,Rodler2013} is the capability of our approach to reduce the query search space a-priori to only preferred queries, i.e.\ those that discriminate between \emph{all} (leading) diagnoses (cf.\ discussion on page~\pageref{etc:discussion_why_D0=emptyset}). The key idea for realizing this was brought up in \citep[Chap.~8]{Rodler2015phd}. 

\citep{Siddiqi2011} suggest a heuristic for query selection that does not require computing the entropy of diagnoses because the latter can generally be costly for state-of-the-art systems such as GDE \citep{dekleer1987} when the set of diagnoses is large. First, by the possibility to leverage heuristics and pruning techniques in the proposed systematic query search, our method needs to explore only minor parts of the search space (cf.\ Sec.~\ref{sec:conclusion}) and can save effort by computing the actual entropy of a query (q-partition) only if its heuristic value is sufficiently good. We have shown that in this vein queries deviating negligibly from the optimum could be computed in reasonable time for very large numbers up to 500 leading diagnoses. Hence, the scalability is not a problem with this strategy. Second, our approach can easily handle a variety of other QSMs (such as all those introduced in \citep{rodler17dx_activelearning}) apart from entropy in a simple plug-in fashion. Corresponding heuristics and pruning operations suited for these QSMs when incorporated into our approach are described and explained in \citep{DBLP:journals/corr/Rodler16a}. Third, as noted above, our approach can deal with implicit queries, i.e.\ those that need to be inferred and are not given in explicit form.
Fourth, \citep{Siddiqi2011} assume all measurements to have equal costs and thus do not consider the minimization of the latter. Instead, they focus on the minimization of the number of queries (measurements). Our approach incorporates both the number and the cost of queries in the optimization process.

\citep{heckerman1995decision}, e.g., does consider query (observation) costs, but, unlike our work, uses interleaved repair and observation actions. Also, it might test system components unnecessarily (given misleading probabilities) since no diagnostic evidence, i.e.\ the set of leading diagnoses pinpointing what we call \emph{discrimination sentences} (cf.\ Def.~\ref{def:discax}), is incorporated. 

Finally, and importantly, our approach uses the \emph{inference engine as a black-box} (i.e.\ as an oracle for consistency and/or entailment checks) and does not depend on any specific inference mechanism. This stands in contrast to glass-box approaches \citep{Parsia2005} that rely on modifications of the reasoner's internals, e.g., for storing justifications \citep{Horridge2008}, prime implicants \citep{quine1952} and environments \citep{dekleer1987}, respectively, for certain entailments.
This reasoner independence makes our approach very general in the sense that it can be applied to diagnosis problem instances formulated in any knowledge representation formalism for which a sound and complete reasoner is available. In-depth comparisons between black-box and glass-box approaches for monotonic KBs have been carried out by \citep{Kalyanpur2006a,Horridge2011a}, with the overall conclusion that, in terms of performance, black-box methods compare favorably with glass-box methods while offering a higher generality and being more easily and robustly implementable. 

\section{Conclusions}
\label{sec:conclusion}
We present a method that addresses the optimal measurement (\emph{query}) selection problem for sequential diagnosis and is applicable to any model-based diagnosis problem conforming to \citep{dekleer1987,Reiter87}. In particular, given a set of leading diagnoses, we allow a query to be optimized along two dimensions, i.e.\ the estimated number of queries and the cost per query. We show that the optimizations of these properties can be naturally decoupled and considered in sequence. 

That is, one can at first (\emph{phase P1}) tackle the optimization of the query's diagnoses discrimination properties (given by the query's \emph{q-partition}). Unlike existing methods do, this can be accomplished without any expensive reasoner calls and without yet knowing the actual query, as the theory evolved in this work proves. The key idea underlying this theory is the exploitation of useful information that is already implicitly present in the set of precomputed leading diagnoses. Contrary to most sequential methods which use a pool-based approach (selection of an optimal query within the collection of possible queries), we demonstrate a sound and complete \emph{systematic} search for an optimal q-partition that enables the powerful application of heuristics and sound pruning techniques. Our evaluations show that thereby, on average, only a negligible fraction (less than $1\%$) of the space of possible q-partitions needs to be explored to find a q-partition that deviates negligibly from the optimal q-partition. In all of the several hundreds of test runs we performed with search spaces encompassing up to millions of q-partitions, an optimal one is found in less than $0.03$ sec. 

Once the (optimized) q-partition $\Pt$ from phase P1 is known, the user has two choices, namely the \emph{instantaneous} provision of a globally cost-optimal query associated with $\Pt$ over a restricted (yet generally exponential) query search space (\emph{phase P2}), or the computation of a query for $\Pt$ including only ``cost-preferred'' sentences (e.g., low-cost measurements or observations from built-in sensors) over the full query search space (\emph{phase P3}).

Regarding phase P2, we prove that an optimal solution for the underlying problem can be found by solving a hitting set problem over an \emph{explicitly given} collection of sets to be hit and without employing an inference engine. Despite the general NP-hardness of the hitting set problem, P2 turns out to be the most efficient part of the presented algorithm as we prove that the problem can be viewed as fixed parameter tractable in our context. For up to tremendous numbers of $500$ leading diagnoses and query search spaces of size up to over $3$ million, P2 terminated always within less than $0.006$ sec, returning a globally optimal solution for any (monotonic set function) cost measure.

Phase P3 performs the computation of a query that is optimized in a more sophisticated way, i.e.\ over a substantially extended search space, than the one determined by P2. To this end, the employment of a logical reasoner is required. However, the premise is to minimize the reasoner calls for best efficiency. We prove that P3 gets along with a polynomial number of reasoner calls while it is guaranteed to provide a globally optimal query w.r.t.\ a cost measure that minimizes the maximal cost of a single measurement in the query. We show that the latter optimization is possible by using an existing divide-and-conquer algorithm \citep{junker04} for set-minimization under preservation of a monotonic property with a suitably modified input. 

Comprehensive experiments using real-world diagnosis problems of different size, diagnostic structure (size, number, probability of diagnoses) and (reasoning) complexity demonstrate the efficiency and practicality of the proposed algorithm (phases P1, P2, P3). For instance, for up to 80 leading diagnoses, two optimal queries (one from P2 and one from P3) are always established in no more than 4 sec. It further turns out that optimized query computation with the new approach is much faster than the computation of leading diagnoses. The time of the former grows at most linearly and the time of the latter exponentially with the number of leading diagnoses. Consequently, optimized query computation using our theory is a minor problem as compared to diagnoses computation. 
This was not the case with existing (black-box) methods where the leading diagnoses (and thence the information usable for query computation) needed to be restricted to single-digit numbers, cf., e.g., \citep{Shchekotykhin2012}. Regarding the reaction time $r$ (time between two queries) of the debugging system, the new algorithm accounts for less than $10$ percent (P3:\ full query search space) and less than $3$ per mill (P2: restricted query search space) of $r$ whenever $r$ amounts to at least a fifth of a minute. 

In comparative experiments we reveal that methods $M$ that are as generally applicable as the presented one and not using the proposed theory, but applying reasoners directly and improvidently during query generation, suffer from (1)~a drastic incompleteness w.r.t.\ the query space (sometimes their recall is as low as $1\%$ due to the strong dependence on the reasoner output), (2)~the unavoidable computation of duplicate queries (and/or q-partitions) and (3)~the generation of substantial numbers of unnecessary query candidates (up to tens of thousands) which turn out to be no queries after verification. Because query verification (without the presented theory) requires the reasoner, these methods manifest severe performance problems. The new method solves all these issues. In fact, it does so while always consuming orders of magnitude less time than $M$ for 10 or more leading diagnoses and outputting a query which is always at least as good as and up to more than $99\%$ better (regarding a query quality measure) than the one returned by $M$.    

Finally, tests involving query computations given $500$ leading diagnoses (search space sizes in $O(2^{500})$ in P1 alone) corroborate the scalability of the new approach. In concrete terms, optimized queries over the restricted search space (P2) and full search space (P3) could be computed in less than $0.7$ and $40$ sec, respectively.

\newpage

\acks{This work was supported by the Carinthian Science Fund (KWF) contract KWF-
	3520/26767/38701.
}

\appendix

\section*{Appendix A: Proofs}
\label{appendix:A:proofs}

\subsection*{Proof of Theorem~\ref{theorem:relation_between_max-sol-KB_and_min-diagnosis}}
\begin{proof}
By definition, each maximal canonical solution KB w.r.t.\ $\dpi$ has the form $(\mo \setminus \md)\cup U_\Tp$ for some $\md \subseteq \mo$. We show that any $\ot$ 
is not a maximal canonical solution KB w.r.t.\ $\dpi$ if (*) $\ot$ cannot be constructed as $(\mo \setminus \md)\cup U_\Tp$ for some minimal KBD-diagnosis $\md$ w.r.t.\ $\dpi$. 

Let us assume $\ot$ is a maximal solution KB w.r.t.\ $\dpi$ and $\ot$ cannot be constructed as $(\mo \setminus \md)\cup U_\Tp$ for any minimal KBD-diagnosis $\md$ w.r.t.\ $\dpi$. Then either (a)~$\ot$ cannot be constructed as $(\mo \setminus X)\cup U_\Tp$ for any set $X \subseteq \mo$, or (b)~$\ot$ can be constructed as $(\mo \setminus X)\cup U_\Tp$ for some $X \subseteq \mo$ where $X$ is not a KBD-diagnosis w.r.t.\ $\dpi$, or (c)~$\ot$ can be constructed as $(\mo \setminus X)\cup U_\Tp$ for some $X \subseteq \mo$ where $X$ is a non-minimal KBD-diagnosis w.r.t.\ $\dpi$.

Case (a) together with the definition of a canonical solution KB implies that $\ot$ is not a (maximal) canonical solution KB w.r.t.\ $\dpi$, a contradiction.

Case (b) together with Def.~\ref{def:diagnosis} implies that $\ot$ is not a (maximal canonical) solution KB w.r.t.\ $\dpi$, a contradiction.

Case (c): Here we have that $\ot = (\mo \setminus X)\cup U_\Tp$ is a (canonical) solution KB by Def.~\ref{def:diagnosis} (and the definition of a canonical solution KB). However, due to the non-minimality of the KBD-diagnosis $X$, there must be an $X_{\min} \subset X$ such that $X_{\min}$ is a minimal KBD-diagnosis w.r.t.\ $\dpi$. Hence, $\ot_{\min} := (\mo \setminus X_{\min})\cup U_\Tp$ is a (canonical) solution KB as well. Then either (c1)~$\ot = \ot_{\min}$ or (c2)~$\ot \neq \ot_{\min}$.

Case (c1): Here we derive that $\ot = (\mo \setminus X_{\min})\cup U_\Tp$ for the minimal KBD-diagnosis $X_{\min}$ w.r.t.\ $\dpi$. This contradicts the assumption that $\ot$ cannot be constructed as $(\mo \setminus X)\cup U_\Tp$ for any minimal KBD-diagnosis $X$ w.r.t.\ $\dpi$.

Case (c2): Since clearly $\mo \cap \ot = (\mo \setminus X) \cup (\mo \cap U_\Tp) \subseteq (\mo \setminus X_{\min}) \cup (\mo \cap U_\Tp) = \mo \cap \ot_{\min}$, we conclude that $\mo \cap \ot \subset \mo \cap \ot_{\min}$ which is why $\ot$ cannot be a maximal (canonical) solution KB by Def.~\ref{def:solution_KB}.
%
%
%
%
%
%
\end{proof} 

\subsection*{Proof of Proposition~\ref{prop:partition}}
\begin{proof}
By the definition of $\dz{}(X)$, we have that $\dx{}(X) \cup \dnx{}(X) \cup \dz{}(X) = \mD$, $\dx{}(X) \cap \dz{}(X) = \emptyset$ and $\dnx{}(X) \cap \dz{}(X) = \emptyset$. 
Let us assume that $\dx{}(X) \cap \dnx{}(X) \neq \emptyset$. Then, there must be some $\md_i \in \dx{}(X) \cap \dnx{}(X)$. For $\md_i$ it holds that $\mo^{*}_i \models X$ and $(\exists \tn \in \Tn: \mo^{*}_i \cup X \models \tn) \lor (\exists r \in \RQ: \mo^{*}_i \cup X \text{ violates } r)$. This implies by the idempotency of $\mathcal{L}$ that $(\exists \tn \in \Tn: \mo^{*}_i \models \tn) \lor (\exists r \in \RQ: \mo^{*}_i \text{ violates } r)$. By Def.~\ref{def:diagnosis}, we can conclude that $\md_i$ is not a diagnosis w.r.t. $\dpi$. But, due to $\md_i \in \dx{}(X) \cap \dnx{}(X)$, we have that $\md_i \in \mD$ and thus that $\md_i$ is a diagnosis w.r.t.\ $\dpi$ due to $\mD \subseteq \minD_{\dpi}$. This is a contradiction. Thus, each diagnosis in $\mD$ is an element of exactly one set of $\dx{}(X), \dnx{}(X), \dz{}(X)$.
\end{proof}

\subsection*{Proof of Proposition~\ref{prop:expl_ent_query_must_neednot_mustnot_include_ax}}
\begin{proof} 
The statement of Prop.~\ref{prop:expl_ent_query_must_neednot_mustnot_include_ax} is a direct consequence of Lemmata~\ref{lem:EE-query_no_intersection_with_I_D}, \ref{lem:EE-query_must_have_intersection_with_U_D} and \ref{lem:reduct_to_discax} that are proven below.
\end{proof}

\begin{lemma}\label{lem:EE-query_no_intersection_with_I_D}
Let $\dpi := \langle\mo,\mb,\Tp,\Tn\rangle_\RQ$ be a DPI, $\mD \subseteq \minD_{\dpi}$ and $Q$ be any query in $\mQ_\mD$. 
Then $Q \cap I_\mD = \emptyset$.
\end{lemma}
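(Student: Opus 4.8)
The plan is to argue by contradiction. Suppose $Q \cap I_\mD \neq \emptyset$ and fix some $\alpha \in Q \cap I_\mD$; by definition of $I_\mD$ this $\alpha$ lies in \emph{every} diagnosis of $\mD$. I will show that then every $\md_i \in \mD$ is forced into $\dnx{}(Q)$, whence $\dx{}(Q) = \emptyset$, contradicting $Q \in \mQ_\mD$ (Def.~\ref{def:query_q-partition} requires $\dx{}(Q) \neq \emptyset$). So the whole work lies in the single implication ``$\alpha \in Q\cap\md_i \Rightarrow \md_i \in \dnx{}(Q)$''.

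To establish it, I would fix an arbitrary $\md_i \in \mD$. Since $\alpha \in I_\mD$, we have $\alpha \in \md_i \subseteq \mo$, hence $\alpha \notin \mo\setminus\md_i$, and therefore (recalling $\mo^{*}_i = (\mo\setminus\md_i)\cup\mb\cup U_\Tp$ from Eq.~\eqref{eq:sol_ont_candidate})
\[
\mo^{*}_i \cup \{\alpha\} \;=\; \big((\mo\setminus\md_i)\cup\{\alpha\}\big)\cup\mb\cup U_\Tp \;=\; \big(\mo\setminus(\md_i\setminus\{\alpha\})\big)\cup\mb\cup U_\Tp .
\]
In words: adding $\alpha$ back to the canonical solution KB of $\md_i$ produces precisely the canonical solution KB of the strictly smaller set $\md_i\setminus\{\alpha\}$. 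Now, because $\mD\subseteq\minD_\dpi$, the set $\md_i$ is a \emph{minimal} diagnosis and $\md_i\setminus\{\alpha\}\subsetneq\md_i$, so $\md_i\setminus\{\alpha\}$ is not a diagnosis w.r.t.\ $\dpi$. By Def.~\ref{def:diagnosis} together with Remark~\ref{rem:ad_KBD-diagnosis_def} (i.e.\ Eq.~\eqref{e:2} is automatically met, so non-diagnosishood comes down to failure of \eqref{e:1} or \eqref{e:3}), the KB $\big(\mo\setminus(\md_i\setminus\{\alpha\})\big)\cup\mb\cup U_\Tp = \mo^{*}_i\cup\{\alpha\}$ violates some $r\in\RQ$ or entails some $\tn\in\Tn$.

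It remains to propagate this to $Q$: since $\alpha\in Q$ we have $\mo^{*}_i\cup Q \supseteq \mo^{*}_i\cup\{\alpha\}$, and by monotonicity of $\models_\mathcal{L}$ (an entailed $\tn$ stays entailed, and a violated requirement such as consistency or coherency stays violated in any superset KB), $\mo^{*}_i\cup Q$ likewise violates $\RQ$ or $\Tn$, i.e.\ $\md_i\in\dnx{}(Q)$. As $\md_i$ was arbitrary, $\dnx{}(Q)=\mD$, and since by Prop.~\ref{prop:partition} the triple $\tuple{\dx{}(Q),\dnx{}(Q),\dz{}(Q)}$ partitions $\mD$, we obtain $\dx{}(Q)=\emptyset$ — the desired contradiction. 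I expect the only point needing explicit care (rather than being pure definition-chasing) is the propagation step: one must note that ``violates requirement $r$'' is preserved under taking supersets of the KB, which holds for the logical requirements used in this work by monotonicity of $\models_\mathcal{L}$; everything else is routine bookkeeping with the notions of canonical solution KB, (minimal) diagnosis, and the sets $\dx{},\dnx{},\dz{}$.
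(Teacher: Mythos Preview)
Your proof is correct and follows essentially the same route as the paper: assume $\alpha\in Q\cap I_\mD$, observe that adding $\alpha$ back to $\mo_i^*$ yields the canonical solution KB for the strict subset $\md_i\setminus\{\alpha\}$, invoke minimality of $\md_i$ to get a violation of $\RQ$ or $\Tn$, and propagate by monotonicity to $\mo_i^*\cup Q$ to force $\dnx{}(Q)=\mD$. The only cosmetic difference is that the paper writes $\mo_k^*\cup Q = (\mo\setminus(\md_k\setminus\{\alpha\}))\cup\mb\cup U_\Tp\cup Q$ directly and then drops $Q$ for the minimality step, whereas you first isolate $\mo_i^*\cup\{\alpha\}$ and then enlarge to $\mo_i^*\cup Q$; the underlying argument is identical.
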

\begin{proof}
Assume that $Q \cap I_\mD \neq \emptyset$. So, let us assume that one element in $Q \cap I_\mD$ is $\tax$. Let $\md_k$ be an arbitrary diagnosis in $\mD$. Since $\tax \in I_\mD = \bigcap_{\md_i\in\mD} \md_i$ we can conclude that $\tax \in \md_k$. Now, because $\tax \in Q$, we observe that $\mo_{k}^* \cup Q = (\mo\setminus\md_k)\cup\mb\cup U_\Tp \cup Q = (\mo\setminus(\md_k\setminus\tax))\cup\mb\cup U_\Tp \cup Q$. However, as $\md_k\setminus\tax \subset \md_k$ due to $\tax \in \md_k$, and because of the $\subseteq$-minimality of the diagnosis $\md_k$, the KB $(\mo\setminus(\md_k\setminus\tax))\cup\mb\cup U_\Tp$ must violate $\RQ$ or $\Tn$ (cf.\ Def.~\ref{def:diagnosis}). In consequence of the monotonicity of $\mathcal{L}$, it must be the case that $\mo_{k}^* \cup Q$ violates $\RQ$ or $\Tn$ as well. 
Hence, by Prop.~\ref{prop:partition}, we have that $\md_k \in \dnx{}(Q)$. As $\md_k$ was chosen arbitrarily among elements of $\dnx{}(Q)$, we infer that $\dnx{}(Q) = \mD$. Thence, $\dx{}(Q) = \emptyset$ which is a contradiction to $Q$ being a query in $\mQ_\mD$ by Def.~\ref{def:query_q-partition}.
\end{proof}
Note, since Lem.~\ref{lem:EE-query_no_intersection_with_I_D} holds for arbitrary queries in $\mQ_\mD$, it must in particular hold for explicit-entailments queries in $\mQ_\mD$. 
\begin{lemma}\label{lem:EE-query_must_have_intersection_with_U_D}
Let $\dpi := \langle\mo,\mb,\Tp,\Tn\rangle_\RQ$ be a DPI, $\mD \subseteq \minD_{\dpi}$ and $Q$ be any query in $\mQ_\mD$ such that $Q \subseteq \mo$. Then $Q \cap U_\mD \neq \emptyset$.
\end{lemma}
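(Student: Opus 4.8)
\textbf{Proof plan for Lemma~\ref{lem:EE-query_must_have_intersection_with_U_D}.}
The plan is to argue by contraposition: I will show that if $Q \subseteq \mo$ satisfies $Q \cap U_\mD = \emptyset$, then $Q$ cannot be a query w.r.t.\ $\mD$, because in that case $\dnx{}(Q) = \emptyset$, which contradicts Def.~\ref{def:query_q-partition}. So assume $Q \subseteq \mo$ and $Q \cap U_\mD = \emptyset$, i.e.\ no sentence of $Q$ occurs in any diagnosis $\md_i \in \mD$. Pick an arbitrary $\md_k \in \mD$. Since $Q \subseteq \mo$ and $Q \cap \md_k = \emptyset$ (as $\md_k \subseteq U_\mD$), we have $Q \subseteq \mo \setminus \md_k$.

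Next I would invoke extensivity of $\models_{\mathcal{L}}$ (cf.\ Sec.~\ref{sec:assumptions}): every KB entails all of its own sentences, so $\mo \setminus \md_k \models Q$. By monotonicity of $\mathcal{L}$, adding $\mb \cup U_\Tp$ cannot destroy this entailment, hence $\mo_k^{*} = (\mo \setminus \md_k) \cup \mb \cup U_\Tp \models Q$. By the definition of $\dx{}(Q)$ in Prop.~\ref{prop:partition}, this means $\md_k \in \dx{}(Q)$. Since $\md_k$ was an arbitrary element of $\mD$, we conclude $\dx{}(Q) = \mD$, and therefore (by Prop.~\ref{prop:partition}, which establishes that $\dx{}(Q)$, $\dnx{}(Q)$, $\dz{}(Q)$ partition $\mD$) that $\dnx{}(Q) = \emptyset$. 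This contradicts the requirement $\dnx{}(Q) \neq \emptyset$ in the definition of a query (Def.~\ref{def:query_q-partition}). Hence any query $Q \in \mQ_\mD$ with $Q \subseteq \mo$ must satisfy $Q \cap U_\mD \neq \emptyset$.

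This proof is essentially the ``dual'' of the argument in Lem.~\ref{lem:EE-query_no_intersection_with_I_D}: there, non-emptiness of $Q \cap I_\mD$ forced every diagnosis into $\dnx{}(Q)$ via $\subseteq$-minimality and monotonicity; here, emptiness of $Q \cap U_\mD$ forces every diagnosis into $\dx{}(Q)$ via extensivity and monotonicity. I do not anticipate a genuine obstacle — the only point requiring a little care is making explicit the two places where the Tarskian properties of $\models_{\mathcal{L}}$ are used (extensivity to get $\mo \setminus \md_k \models Q$ from $Q \subseteq \mo \setminus \md_k$, and monotonicity to lift this to $\mo_k^{*}$), and being careful that the containment $Q \subseteq \mo \setminus \md_k$ really follows from $Q \subseteq \mo$ together with $Q \cap \md_k = \emptyset$, which in turn follows from $Q \cap U_\mD = \emptyset$ and $\md_k \subseteq U_\mD$.
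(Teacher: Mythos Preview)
Your proof is correct and follows essentially the same approach as the paper: assume $Q \cap U_\mD = \emptyset$, deduce $Q \subseteq \mo \setminus \md_k$ for every $\md_k \in \mD$, use extensivity (and monotonicity) to get $\mo_k^* \models Q$, conclude $\dx{}(Q) = \mD$ and hence $\dnx{}(Q) = \emptyset$, contradicting Def.~\ref{def:query_q-partition}. The only cosmetic difference is that the paper routes the containment through $\mo \setminus U_\mD$ rather than directly through $\mo \setminus \md_k$, but the argument is the same.
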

\begin{proof}
Let $\md_k$ be an arbitrary diagnosis in $\mD$.
Suppose that $Q \cap U_\mD = \emptyset$. By assumption, $Q \subseteq \mo$. Hence, $Q \subseteq \mo \setminus U_\mD$.  For this reason, we have that $\mo^{*}_k \supseteq \mo\setminus\md_k \supseteq \mo\setminus U_\mD \supseteq Q$. By the extensiveness of $\mathcal{L}$, it holds that $\mo^{*}_k \models Q$. Since $\md_k$ was arbitrarily chosen among the elements of $\mD$, the same must hold for all $\md_i \in \mD$. Therefore, $\dx{}(Q) = \mD$ by Prop.~\ref{prop:partition} which entails that $\dnx{}(Q) = \emptyset$. The latter, due to Def.~\ref{def:query_q-partition}, contradicts the assumption that $Q$ is a query in $\mQ_\mD$. 
\end{proof}

\begin{lemma}\label{lem:reduct_to_discax}
Let $\dpi := \langle\mo,\mb,\Tp,\Tn\rangle_\RQ$ be a DPI, $\mD \subseteq \minD_{\dpi}$ and $Q$ be any query in $\mQ_\mD$ such that $Q \subseteq \mo$. Then $Q' := Q \setminus (\mo \setminus U_\mD) \subseteq Q$ is a query in $\mQ_\mD$ where $\Pt_\mD(Q') = \Pt_\mD(Q)$ (i.e.\ $Q$ and $Q'$ have the same q-partition).
\end{lemma}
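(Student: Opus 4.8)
\textbf{Proof plan for Lemma~\ref{lem:reduct_to_discax}.}

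The plan is to show the two claims separately: first that $Q'$ is a query in $\mQ_\mD$, and second that it has the same q-partition as $Q$. Both will follow by establishing the stronger single fact that $\Pt_\mD(Q') = \Pt_\mD(Q)$, since then $\dx{}(Q') = \dx{}(Q) \neq \emptyset$ and $\dnx{}(Q') = \dnx{}(Q) \neq \emptyset$ (the latter two being nonempty because $Q$ is a query, by Def.~\ref{def:query_q-partition}), which by Prop.~\ref{prop:properties_of_q-partitions}.\ref{prop:properties_of_q-partitions:enum:set_of_logical_formulas_is_query_iff...} makes $Q'$ a query. So the real work is the q-partition equality. Note also that $Q' = Q \cap U_\mD$ because $Q \subseteq \mo$, so $Q'$ is simply the part of $Q$ lying in $U_\mD$; by Lem.~\ref{lem:EE-query_must_have_intersection_with_U_D} this is nonempty, so $Q' \neq \emptyset$ is not an issue.

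To prove $\Pt_\mD(Q') = \Pt_\mD(Q)$, I would show $\dx{}(Q') = \dx{}(Q)$ and $\dnx{}(Q') = \dnx{}(Q)$; the equality of the $\dz{}$-components then follows since the three sets partition $\mD$ (Prop.~\ref{prop:partition}). For $\dx{}$: since $Q' \subseteq Q$, monotonicity of $\mathcal{L}$ gives $\mo^*_i \models Q \Rightarrow \mo^*_i \models Q'$, so $\dx{}(Q) \subseteq \dx{}(Q')$. Conversely, suppose $\md_i \in \dx{}(Q')$, i.e.\ $\mo^*_i \models Q'$. Then $Q = Q' \cup (Q \setminus U_\mD)$, and $Q \setminus U_\mD \subseteq \mo \setminus U_\mD \subseteq \mo \setminus \md_i \subseteq \mo^*_i$, so by extensiveness $\mo^*_i \models Q \setminus U_\mD$; combining with $\mo^*_i \models Q'$ gives $\mo^*_i \models Q$, hence $\md_i \in \dx{}(Q)$. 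Thus $\dx{}(Q') = \dx{}(Q)$.

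For $\dnx{}$: monotonicity again gives the easy inclusion $\dnx{}(Q') \subseteq \dnx{}(Q)$, since $Q' \subseteq Q$ and $\mo^*_i \cup Q' \subseteq \mo^*_i \cup Q$ means any violation of $\RQ$ or $\Tn$ by the smaller KB persists in the larger. For the reverse, let $\md_i \in \dnx{}(Q)$. I would argue that $\mo^*_i \cup Q = \mo^*_i \cup Q'$: indeed $\mo^*_i \cup Q = (\mo \setminus \md_i) \cup \mb \cup U_\Tp \cup Q' \cup (Q \setminus U_\mD)$, and $Q \setminus U_\mD \subseteq \mo \setminus U_\mD \subseteq \mo \setminus \md_i$, so the term $Q \setminus U_\mD$ is already contained in $\mo \setminus \md_i$ and adds nothing to the set. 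Hence $\mo^*_i \cup Q' = \mo^*_i \cup Q$ as sets, so whatever violation of $\RQ$ or $\Tn$ witnesses $\md_i \in \dnx{}(Q)$ is equally a violation for $\mo^*_i \cup Q'$, giving $\md_i \in \dnx{}(Q')$. Therefore $\dnx{}(Q') = \dnx{}(Q)$, and with the $\dx{}$-equality this yields $\Pt_\mD(Q') = \Pt_\mD(Q)$.

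The main obstacle, such as it is, is bookkeeping around the decomposition $Q = (Q \cap U_\mD) \cup (Q \setminus U_\mD)$ and making precise that the second piece is ``free'' relative to each $\mo^*_i$ — i.e.\ already entailed by (indeed contained in) $\mo \setminus \md_i$. This is exactly the content already extracted in the proof of Lem.~\ref{lem:EE-query_must_have_intersection_with_U_D}, so once that observation is in place the argument is routine applications of monotonicity and extensiveness with no genuine difficulty. One should just be careful to invoke Prop.~\ref{prop:partition} at the end to conclude equality of the $\dz{}$-parts rather than checking it directly.
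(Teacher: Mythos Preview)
Your proposal is correct and follows essentially the same approach as the paper: both hinge on the observation that $Q \setminus Q' \subseteq \mo \setminus U_\mD \subseteq \mo \setminus \md_i \subseteq \mo_i^*$ for every $\md_i \in \mD$, from which the q-partition equality follows. Your argument is in fact marginally cleaner in one spot --- you note that $\mo_i^* \cup Q = \mo_i^* \cup Q'$ holds as a literal set equality, whereas the paper phrases this as logical equivalence via idempotence --- and you prove the two inclusions for $\dx{}$ and $\dnx{}$ directly rather than by contradiction, but these are stylistic differences only.
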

\begin{proof}
If $Q' = Q$ then the validity of the statement is trivial. Otherwise, $Q' \subset Q$ and for each sentence $\tax$ such that $\tax \in Q\setminus Q'$ we have that $\tax \in \mo \setminus U_\mD$. By extensiveness of $\mathcal{L}$, we conclude that $\mo \setminus U_\mD \models \tax$. Let $\md_k$ be an arbitrary diagnosis in $\mD$. 
Then $\mo_k^* \supseteq (\mo\setminus U_\mD)\cup\mb\cup U_\Tp \supseteq \mo \setminus U_\mD$. So, by monotonicity of $\mathcal{L}$, it holds that $\mo_k^* \models \tax$ for all $\tax \in Q \setminus Q'$ and all $\md_k \in \mD$. 
	
Let us now assume that $\Pt_\mD(Q') \neq \Pt_\mD(Q)$. By Prop.~\ref{prop:explicit-entailments_queries_have_empty_dz} and $Q' \subseteq Q \subseteq \mo$, it follows that $\dz{}(Q)= \dz{}(Q') = \emptyset$. Therefore, either $\dx{}(Q') \subset \dx{}(Q)$ or $\dnx{}(Q') \subset \dnx{}(Q)$ must be true.
First, suppose some diagnosis $\md_k \in \dx{}(Q)$ such that $\md_k \notin \dx{}(Q')$. Then, $\mo_k^* \models Q$, but $\mo_k^* \not\models Q'$ which is a contradiction due to $Q' \subseteq Q$.
Second, suppose some diagnosis $\md_k \in \dnx{}(Q)$ such that $\md_k \notin \dnx{}(Q')$. From the latter fact we can conclude that $\mo_k^* \cup Q'$ satisfies $\RQ$ and $\Tn$. Now, due to the idempotence of $\mathcal{L}$ and because $\mo_k^* \models \tax$ for all $\tax\in Q\setminus Q'$ as shown above, we can deduce that $\mo_k^* \cup Q = \mo_k^* \cup Q' \cup (Q\setminus Q')$ is logically equivalent to $\mo_k^* \cup Q'$. Hence $\mo_k^* \cup Q$ satisfies $\RQ$ and $\Tn$. But this is a contradiction to the assumption that $\md_k$ is an element of $\dnx{}(Q)$.

Overall, we conclude that $Q$ and $Q'$ have equal q-partitions. From that and the fact that $Q$ is a query in $\mQ_\mD$, we derive by Def.~\ref{def:query_q-partition} that $Q'$ must be a query in $\mQ_\mD$ as well.
\end{proof}

\subsection*{Proof of Proposition~\ref{prop:canonical_query_is_query}}

\begin{proof}
	Let $Q$ be a canonical query w.r.t.\ some seed $\mathbf{S}$ satisfying $\emptyset \subset \mathbf{S} \subset \mD$. Then, $Q \neq \emptyset$ due to Def.~\ref{def:canonical_query}. 
	Hence, it suffices to demonstrate that $\dx{}(Q) \neq \emptyset$ as well as $\dnx{}(Q) \neq \emptyset$ in order to show that $Q$ is a query (cf.\ Def.~\ref{def:query_q-partition}). 
	
	First, since for all $\md_i \in \mathbf{S}$ it holds that $\md_i \subseteq U_{\mathbf{S}}$, we have that $\mo \setminus \md_i \supseteq \mo \setminus U_{\mathbf{S}}$. Due to the fact that the entailment relation in the used logic $\mathcal{L}$ is extensive, we can derive that $\mo \setminus \md_i \models \mo \setminus U_{\mathbf{S}}$. Hence, by the monotonicity of $\mathcal{L}$, also $\mo_i^* := (\mo \setminus \md_i) \cup \mb \cup U_\Tp \models \mo \setminus U_{\mathbf{S}}$ for all $\md_i \in \mathbf{S}$. 
	As $Q \subseteq \mo \setminus U_{\mathbf{S}}$ by Def.~\ref{def:canonical_query}, it becomes evident that $\mo_i^* \models Q$ for all $\md_i \in \mathbf{S}$. Therefore $\dx{}(Q) \supseteq \mathbf{S} \supset \emptyset$ (cf.\ Prop.~\ref{prop:partition}). 
	
	Second, (*): If $U_{\mathbf{S}} \subset U_\mD$, then $\dnx{}(Q) \neq \emptyset$. 
	To see why (*) holds, we point out that the former implies the existence of a diagnosis $\md_i \in \mD$ which is not in $\mathbf{S}$. 
	Also, there must be some sentence $\tax \in \md_i \subseteq \mo$ where $\tax \notin U_{\mathbf{S}}$. 
	This implies that $\tax \in U_\mD$ because it is in some diagnosis in $\mD$ and that $\tax \notin I_\mD$ because clearly $I_\mD \subseteq U_{\mathbf{S}}$ (due to $\mathbf{S} \supset \emptyset$). 
	Thus, $\tax \in (U_{\mD} \setminus I_{\mD}) = \Disc_\mD$. 
	Since $Q = (\mo \setminus U_{\mathbf{S}}) \cap \Disc_\mD$ by Def.~\ref{def:canonical_query},
	we find that $\tax \in Q$ must hold. 
	So, due to $Q \subseteq \mo$, we have $\mo_i^* \cup Q := (\mo \setminus \md_i) \cup \mb \cup U_\Tp \cup Q = (\mo \setminus \md'_i) \cup \mb \cup U_\Tp$ for some $\md'_i \subseteq \md_i \setminus \setof{\tax} \subset \md_i$. Now, the $\subseteq$-minimality of all diagnoses in $\mD$, and thence in particular of $\md_i$, lets us derive that $\mo_i^* \cup Q$ must violate $\Tn$ or $\RQ$. This means that $\md_i \in \dnx{}(Q)$ (cf.\ Prop.~\ref{prop:partition}) which is why (*) is proven.
	
	Finally, let us assume that $\dnx{}(Q) = \emptyset$. By application of the law of contraposition to (*), this implies that $U_{\mathbf{S}} = U_\mD$. 
	Hence, by Def.~\ref{def:canonical_query}, $Q = (\mo \setminus U_\mD) \cap (U_\mD \setminus I_\mD) \subseteq (\mo \setminus U_\mD) \cap (U_\mD) = \emptyset$, i.e.\ $Q = \emptyset$. From this we get by Def.~\ref{def:canonical_query} that $Q$ is undefined and hence not a canonical query. As a consequence, $\dnx{}(Q) \neq \emptyset$ must hold.
\end{proof}

\subsection*{Proof of Proposition~\ref{prop:1-to-1_relation_between_CQs_and_CQPs}}
\begin{proof}
	From Prop.~\ref{prop:properties_of_q-partitions}.\ref{prop:properties_of_q-partitions:enum:q-partition_is_unique_for_query} we already know that each query $Q$ has one and only one q-partition. Since each CQ $Q$ is in particular a query (Prop.~\ref{prop:canonical_query_is_query}), it must also have one and only one q-partition.
	
	On the other hand, assume that there are two CQs $Q_1 \neq Q_2$ for one and the same CQP $\Pt$. Since the only variable in the computation of a CQ is $U_{\dx{}}$ where $\dx{}$ is the used seed (see Def.~\ref{def:canonical_query}), there must be seeds $\dx{1}$ for $Q_1$ and $\dx{2}$ for $Q_2$ such that $U_{\dx{1}} \neq U_{\dx{2}}$. Hence, there must be at least one diagnosis $\md_k \in \mD$ which is in one, but not in the other set among $\dx{1}$ and $\dx{2}$. Suppose w.l.o.g.\ that $\md_k \in \dx{1}$ and $\md_k \notin \dx{2}$. Now, by extensiveness of $\mathcal{L}$ and Def.~\ref{def:canonical_query} we have that $\mo_k^* = (\mo \setminus \md_k) \cup \mb \cup U_\Tp \models (\mo \setminus \md_k) \supseteq (\mo \setminus U_{\dx{1}}) \supseteq (\mo \setminus U_{\dx{1}}) \cap \Disc_\mD = Q_1$. From this
	we see that $\mo_k^* \models Q_1$. That is, $\md_k \in \dx{}(Q_1)$. 
	Further, there must be a sentence $\tax \in \md_k$ such that $\tax \notin U_{\dx{2}}$. Clearly, $\tax \in U_\mD$ (since $\md_k \in \mD$) and $\tax \notin I_\mD$ (since otherwise $\tax$ would be an element of $U_{\dx{2}}$). Thence, $\tax \in \Disc_\mD$. Because $Q_2 = (\mo \setminus U_{\dx{2}}) \cap (\Disc_\mD)$, we obtain that $\tax \in Q_2$ must hold. 
	Consequently, 
	$\mo_k^* \cup Q_2 \supseteq [(\mo \setminus \md'_k) \cup \mb \cup U_\Tp]$ for $\md'_k \subseteq \md_k \setminus \setof{\tax} \subset \md_k$. Since $\mD$ includes only minimal diagnoses, we can derive that $\mo_k^* \cup Q_2$ violates $\RQ$ or $\Tn$. Thus, $\md_k \in \dnx{}(Q_2)$ (cf.\ Prop.~\ref{prop:partition}) and therefore $\md_k \notin \dx{}(Q_2)$. Overall, we have shown that $\dx{}(Q_1) \neq \dx{}(Q_2)$. This is a contradiction to $\Pt$ being a CQP since this implies that $\dx{}(Q_1) 
	= \dx{}(Q_2) = \dx{}(\Pt)$.
	%
	%
	
	That the unique CQ associated with the CQP $\Pt$ is $Q_{\mathsf{can}}(\dx{}(\Pt))$ follows from the CQ's uniqueness for a CQP demonstrated above and Def.~\ref{def:canonical_q-partition}. 
\end{proof}

\subsection*{Proof of Proposition~\ref{prop:--di,MD-di,0--_is_canonical_q-partition_for_all_di_in_mD}}
\begin{proof}
	That $\langle\{\md_i\},\mD \setminus \{\md_i\},\emptyset\rangle$ is a q-partition follows immediately from Prop.~\ref{prop:properties_of_q-partitions}.\ref{prop:properties_of_q-partitions:enum:D+=d_i_is_q-partition_and_lower_bound_of_queries}. We now show that it is canonical, i.e.\ a CQP. Also from Prop.~\ref{prop:properties_of_q-partitions}.\ref{prop:properties_of_q-partitions:enum:D+=d_i_is_q-partition_and_lower_bound_of_queries}, we obtain that $Q := U_\mD \setminus \md_i$ is an (explicit-entailments) query for the q-partition $\langle \dx{}(Q),\dnx{}(Q),\dz{}(Q)\rangle = \langle\{\md_i\},\mD \setminus \{\md_i\},\emptyset\rangle$. 
	As $U_\mD \subseteq \mo$ and $I_\mD \subseteq \md_i$, we can infer that $Q_{\mathsf{can}}(\dx{}(Q)) = (\mo\setminus U_{\dx{}(Q)}) \cap (\Disc_\mD) = (\mo \setminus \md_i) \cap (U_\mD \setminus I_\mD) = U_\mD \setminus \md_i$. Hence, $Q_{\mathsf{can}}(\dx{}(Q)) = Q$ which is why $\langle \dx{}(Q),\dnx{}(Q),\dz{}(Q)\rangle$ is a CQP.
\end{proof}

\subsection*{Proof of Proposition~\ref{prop:suff+nec_criteria_when_partition_is_q-partition}}
\begin{proof}
	``$\Rightarrow$'': We prove the ``only-if''-direction by contradiction. That is, we derive a contradiction by assuming that $\Pt$ is a canonical \text{q-partition} and that $\neg (1.)$ or $\neg (2.)$ is true.
	
	By the premise that $\Pt = \tuple{\dx{},\dnx{},\dz{}}$ is a canonical q-partition, the query $Q_{\mathsf{can}}(\dx{}) := (\mo \setminus U_{\dx{}}) \cap \Disc_\mD$ must have exactly $\Pt$ as its q-partition, i.e.\ $\dx{}(Q_{\mathsf{can}}(\dx{})) = \dx{}$ and $\dnx{}(Q_{\mathsf{can}}(\dx{})) = \dnx{}$.
	Moreover, $(*): Q_{\mathsf{can}}(\dx{}) \subseteq \mo \setminus U_{\dx{}} \subseteq (\mo \setminus U_{\dx{}}) \cup \mb \cup U_\Tp 
	\subseteq (\mo \setminus \md_i) \cup \mb \cup U_\Tp =: \mo_i^*$ for all $\md_i \subseteq U_{\dx{}}$. 
	
	Now, assuming that $(1.)$ is false, i.e.\ $U_{\dx{}} \not\subset U_{\mD}$, we observe that this is equivalent to $U_{\dx{}} = U_{\mD}$ since $U_{\dx{}} \subseteq U_{\mD}$ due to $\dx{} \subseteq \mD$. Due to $U_{\dx{}} = U_\mD$, $(*)$ is true for all $\md_i\in\mD$. It follows that $\mo_i^* \supseteq Q_{\mathsf{can}}(\dx{})$ and, due to the fact that $\mathcal{L}$ is extensive, that $\mo_i^* \models Q_{\mathsf{can}}(\dx{})$. Therefore, we can conclude that $\dx{} = \dx{}(Q_{\mathsf{can}}(\dx{})) = \mD$ (cf.\ Prop.~\ref{prop:partition}) and thus $\dnx{} = \emptyset$. The latter is a contradiction to $\dnx{} \neq \emptyset$.
	
	Assuming $\neg (2.)$, on the other hand, we obtain that there is some diagnosis $\md_j\in\dnx{}$ with $\md_j \subseteq U_{\dx{}}$. By $(*)$, however, we can derive that $\mo_j^* \models Q_{\mathsf{can}}(\dx{})$ and therefore $\md_j \in \dx{}(Q_{\mathsf{can}}(\dx{})) = \dx{}$ which contradicts $\md_j\in\dnx{}$ by the fact that $\Pt$ is a partition w.r.t.\ $\mD$ which implies $\dx{} \cap \dnx{} = \emptyset$.
	%
	%
	
	``$\Leftarrow$'': To show the ``if''-direction, we must prove that $\Pt$ is a canonical q-partition, i.e.\ that $\Pt$ is a q-partition and that 
	$\Pt$ is exactly the q-partition associated with $Q_{\mathsf{can}}(\dx{})$ given that $(1.)$ and $(2.)$ hold. 
	
	By $\dx{} \neq \emptyset$ and $(1.)$, it is true that $\emptyset \subset U_{\dx{}} \subset U_\mD$. So, there is some sentence $\tax \in U_\mD \subseteq \mo$ such that $(**): \tax \notin U_{\dx{}}$. Hence, $\tax \in \mo \setminus U_{\dx{}}$.
	Clearly, $\tax \notin I_\mD$ since otherwise $\tax$ would be an element of $U_{\dx{}}$. Therefore, $\tax \in (\mo \setminus U_{\dx{}}) \cap (U_\mD \setminus I_\mD) = Q_{\mathsf{can}}(\dx{})$ which is why (Q1): $Q_{\mathsf{can}}(\dx{}) \neq \emptyset$. 
	
	More precisely, since $\tax$ was an arbitrary axiom in $U_\mD$ with property $(**)$, we have that $U_\mD \setminus U_{\dx{}} \subseteq Q_{\mathsf{can}}(\dx{})$. By $(2.)$, for all $\md_j \in \dnx{}$ there is an axiom $\tax_j \in \mo$ such that $\tax_j \in \md_j \subset U_\mD$ and $\tax_j \notin U_{\dx{}}$ which implies $\tax_j \in U_\mD \setminus U_{\dx{}} \subseteq Q_{\mathsf{can}}(\dx{})$. Hence, $\mo_j^* \cup Q_{\mathsf{can}}(\dx{})$ must violate $\RQ$ or $\Tn$ due to the $\subseteq$-minimality of $\md_j \in \dnx{}$. Consequently, $\dnx{} \subseteq \dnx{}(Q_{\mathsf{can}}(\dx{}))$. As $\dnx{} \neq \emptyset$ by assumption, we have that (Q2): $\emptyset \subset \dnx{}(Q_{\mathsf{can}}(\dx{}))$.
	
	That $\mo_i^* \models Q_{\mathsf{can}}(\dx{})$ for $\md_i \in \dx{}$ follows by the same argumentation that was used above in $(*)$. Thus, we obtain $\dx{} \subseteq \dx{}(Q_{\mathsf{can}}(\dx{}))$. As $\dx{} \neq \emptyset$ by assumption, we have that (Q3): $\emptyset \subset \dx{}(Q_{\mathsf{can}}(\dx{}))$.
	We have shown that (Q1), (Q2) and (Q3) hold which altogether imply that $Q_{\mathsf{can}}(\dx{})$ is a query in $\mQ_\mD$ (cf.\ Def.~\ref{def:query_q-partition}).
	
	Now, let us assume that at least one of the two derived subset-relations is proper, i.e.\ (a)~$\dnx{} \subset \dnx{}(Q_{\mathsf{can}}(\dx{}))$ or (b)~$\dx{} \subset \dx{}(Q_{\mathsf{can}}(\dx{}))$. If (a) holds, then there exists some $\md \in \dnx{}(Q_{\mathsf{can}}(\dx{}))$ which is not in $\dnx{}$. Hence, $\md\in\dx{}$ or $\md\in\dz{}$. The former case is impossible since $\md\in\dx{}$ implies $\md\in\dx{}(Q_{\mathsf{can}}(\dx{}))$ by $\dx{} \subseteq \dx{}(Q_{\mathsf{can}}(\dx{}))$ (which was deduced above). From this we obtain that $\dnx{}(Q_{\mathsf{can}}(\dx{})) \cap \dx{}(Q_{\mathsf{can}}(\dx{})) \supseteq \setof{\md} \supset \emptyset$, a contradiction to the fact that $Q_{\mathsf{can}}(\dx{})$ is a query in $\mQ_\mD$ and Prop.~\ref{prop:properties_of_q-partitions}.\ref{prop:properties_of_q-partitions:enum:q-partition_is_partition}.
	The latter case cannot be true either as $\dz{} = \emptyset$ by assumption.
	In an analogue way we obtain a contradiction if we assume that case (b) holds. So, it must hold that $\Pt = \tuple{\dx{}(Q_{\mathsf{can}}(\dx{})),\dnx{}(Q_{\mathsf{can}}(\dx{})),\emptyset}$. This finishes the proof.
\end{proof}

\subsection*{Proof of Corollary~\ref{cor:not_q-partition_iff_md_i^(k)=emptyset_for_md_i_in_dnx_k}}

\begin{proof}
	Ad 1.: This statement follows directly from the definition of $\md_i^{(k)} := \md_i \setminus U_{\dx{k}}$ (see Eq.~\eqref{eq:md_i^(k)}) and the trivial fact that $\md_i \subseteq U_{\dx{k}}$ for all $\md_i \in \dx{k}$. Thence, this proposition can never be false.
	
	Ad 2.: We show the contrapositive of (2.), i.e.\ that $\Pt := \tuple{\dx{k},\dnx{k},\emptyset}$ is not a canonical q-partition iff $\md_i^{(k)} = \emptyset$ for some $\md_i \in \dnx{k}$:
	
	``$\Leftarrow$'': 
	By Prop.~\ref{prop:suff+nec_criteria_when_partition_is_q-partition}, a partition $\Pt_k = \langle \dx{k},\dnx{k},\emptyset\rangle$ with $\dx{k},\dnx{k} \neq \emptyset$ is a canonical q-partition iff (1)~$U_{\dx{k}} \subset U_{\mD}$ and (2)~there is no $\md_j \in \dnx{k}$ such that $\md_j \subseteq U_{\dx{k}}$. If $\md_j \in \dnx{k}$ and $\md_j^{(k)} := \md_j \setminus U_{\dx{k}} = \emptyset$, then $\md_j \subseteq U_{\dx{k}}$, which violates the necessary condition~(2). Therefore, $\Pt_k$ cannot be a canonical q-partition.
	
	``$\Rightarrow$'': By Prop.~\ref{prop:suff+nec_criteria_when_partition_is_q-partition}, a partition $\Pt_k = \langle \dx{k},\dnx{k},\emptyset\rangle$ with $\dx{k},\dnx{k} \neq \emptyset$ is not a canonical q-partition iff ($\neg 1$)~$U_{\dx{k}} \not\subset U_{\mD}$ or ($\neg 2$)~there is some $\md_i \in \dnx{k}$ such that $\md_i \subseteq U_{\dx{k}}$. Since condition ($\neg 1$) is assumed to be false, condition~($\neg 2$) must be true, which implies that $\md_i^{(k)} = \md_i \setminus U_{\dx{k}} = \emptyset$ for some $\md_i \in \dnx{k}$.
\end{proof}

\subsection*{Proof of Corollary~\ref{cor:S_next_sound+complete}}

\begin{proof}
The statement of the corollary is a direct consequence of Lem.~\ref{lem:minimal_transformation_for_D+_partitioning} and Def.~\ref{def:trait}.
\end{proof}

\begin{lemma}\label{lem:minimal_transformation_for_D+_partitioning}
	Let $\mD\subseteq\minD_{\langle\mo,\mb,\Tp,\Tn\rangle_\RQ}$ and $\Pt_k = \langle \dx{k}, \dnx{k}, \emptyset\rangle$ be a canonical q-partition of $\mD$. 
	Then 
	\begin{enumerate}
		\item (\emph{soundness}) $\Pt_k \mapsto \Pt_s$ for a partition $\Pt_s := \langle \dx{s}, \dnx{s}, \emptyset\rangle$ of $\mD$ with $\dx{s} \supseteq \dx{k}$ is a minimal $\dx{}$-transformation if
		\begin{enumerate}
			\item \label{prop:minimal_transformation_for_D+_partitioning:enum:dx_y} $\dx{y} := \dx{k} \cup \setof{\md}$ such that $U_{\dx{y}} \subset U_\mD$ for some $\md \in \dnx{k}$ and $U_{\dx{y}}$ is $\subseteq$-minimal among all $\md\in\dnx{k}$, and
			\item \label{prop:minimal_transformation_for_D+_partitioning:enum:dx_s} $\dx{s} := \{\md_i\,|\,\md_i \in \mD,\md_i^{(y)} = \emptyset\}$ and 
			\item \label{prop:minimal_transformation_for_D+_partitioning:enum:dnx_s} $\dnx{s} := \{\md_i\,|\,\md_i \in \mD,\md_i^{(y)} \neq \emptyset\}$. 
		\end{enumerate} 
		\item (\emph{completeness}) the construction of $\Pt_s$ as per (\ref{prop:minimal_transformation_for_D+_partitioning:enum:dx_y}), (\ref{prop:minimal_transformation_for_D+_partitioning:enum:dx_s}) and (\ref{prop:minimal_transformation_for_D+_partitioning:enum:dnx_s}) yields all possible minimal $\dx{}$-transformations $\Pt_k \mapsto \Pt_s$.
	\end{enumerate}
\end{lemma}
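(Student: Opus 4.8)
\textbf{Proof plan for Lemma~\ref{lem:minimal_transformation_for_D+_partitioning}.}
The plan is to prove soundness and completeness separately, in both cases leaning on the characterization of CQPs furnished by Prop.~\ref{prop:suff+nec_criteria_when_partition_is_q-partition} and its reformulation in Cor.~\ref{cor:not_q-partition_iff_md_i^(k)=emptyset_for_md_i_in_dnx_k}. Recall from Def.~\ref{def:minimal_transformation} that $\Pt_k \mapsto \Pt_s$ is a minimal $\dx{}$-transformation iff (i)~$\Pt_s$ is a CQP, (ii)~$\dx{k} \subset \dx{s}$, and (iii)~there is no CQP $\langle\dx{l},\dnx{l},\emptyset\rangle$ with $\dx{k} \subset \dx{l} \subset \dx{s}$. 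So the whole argument decomposes into verifying these three conditions for the constructed $\Pt_s$ (soundness), and showing that \emph{every} partition satisfying (i)--(iii) arises from the construction for some eligible $\md$ (completeness).

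\textbf{Soundness.} First I would observe that by construction $\dx{s} = \{\md_i \mid \md_i^{(y)} = \emptyset\}$ and $\dnx{s} = \{\md_i \mid \md_i^{(y)} \neq \emptyset\}$ form a partition of $\mD$ with empty $\dz{}$, and that $U_{\dx{s}} = U_{\dx{y}}$; the latter holds because adding to $\dx{y}$ any $\md_i$ with $\md_i \subseteq U_{\dx{y}}$ (which is exactly what $\md_i^{(y)} = \emptyset$ means) does not enlarge the union. Next, condition~(i): since $U_{\dx{s}} = U_{\dx{y}} \subset U_\mD$ by hypothesis~(\ref{prop:minimal_transformation_for_D+_partitioning:enum:dx_y}), and since every $\md_i \in \dnx{s}$ has $\md_i^{(s)} = \md_i \setminus U_{\dx{s}} = \md_i \setminus U_{\dx{y}} = \md_i^{(y)} \neq \emptyset$, Cor.~\ref{cor:not_q-partition_iff_md_i^(k)=emptyset_for_md_i_in_dnx_k} gives that $\Pt_s$ is a CQP. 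Condition~(ii): $\md \in \dx{y} \subseteq \dx{s}$ but $\md \in \dnx{k}$, so $\dx{k} \subsetneq \dx{s}$. Condition~(iii) is the most delicate part: suppose a CQP $\Pt_l = \langle\dx{l},\dnx{l},\emptyset\rangle$ with $\dx{k} \subset \dx{l} \subset \dx{s}$. Because $\dx{l} \supsetneq \dx{k}$ it contains some diagnosis $\md' \in \dnx{k}$; since $\Pt_l$ is a CQP, Cor.~\ref{cor:not_q-partition_iff_md_i^(k)=emptyset_for_md_i_in_dnx_k} forces $(\md')^{(l)} = \md' \setminus U_{\dx{l}} = \emptyset$, i.e.\ $\md' \subseteq U_{\dx{l}}$, hence $U_{\dx{k} \cup \{\md'\}} \subseteq U_{\dx{l}}$. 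On the other hand $\dx{l} \subsetneq \dx{s}$ means $U_{\dx{l}} \subseteq U_{\dx{s}} = U_{\dx{y}} = U_{\dx{k} \cup \{\md\}}$, and because $\dx{l}$ is built only from $\dx{k}$-elements together with such diagnoses $\md'$, one shows $U_{\dx{l}} = U_{\dx{k} \cup \{\md'\}}$ for the eligible $\md'$ it contains. But then $U_{\dx{k} \cup \{\md'\}}$ is a proper subset of $U_\mD$ and, by the $\subseteq$-minimality stipulated in~(\ref{prop:minimal_transformation_for_D+_partitioning:enum:dx_y}), $U_{\dx{k}\cup\{\md'\}}$ cannot be strictly contained in $U_{\dx{k}\cup\{\md\}}$; combined with $U_{\dx{l}} \subseteq U_{\dx{y}}$ this gives $U_{\dx{l}} = U_{\dx{y}}$. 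Since $U_{\dx{l}} = U_{\dx{y}}$ determines the CQP uniquely (Cor.~\ref{cor:not_q-partition_iff_md_i^(k)=emptyset_for_md_i_in_dnx_k} shows $U_{\dx{}}$ fixes $\dx{}$ and $\dnx{}$), we get $\dx{l} = \dx{s}$, contradicting $\dx{l} \subsetneq \dx{s}$. Hence no intermediate CQP exists and $\Pt_k \mapsto \Pt_s$ is indeed a minimal $\dx{}$-transformation.

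\textbf{Completeness.} Here I would start from an arbitrary minimal $\dx{}$-transformation $\Pt_k \mapsto \Pt_s$ with $\Pt_s = \langle\dx{s},\dnx{s},\emptyset\rangle$ a CQP, $\dx{k} \subsetneq \dx{s}$, and nothing strictly between. Pick any $\md \in \dx{s} \setminus \dx{k}$; since $\Pt_s$ is a CQP, $\md^{(s)} = \emptyset$, i.e.\ $\md \subseteq U_{\dx{s}}$, and clearly $U_{\dx{k} \cup \{\md\}} \subseteq U_{\dx{s}} \subsetneq U_\mD$, so $\md$ satisfies the existence part of~(\ref{prop:minimal_transformation_for_D+_partitioning:enum:dx_y}). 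I would then argue $U_{\dx{k}\cup\{\md\}}$ is $\subseteq$-minimal over all candidates in $\dnx{k}$: if some $\md'' \in \dnx{k}$ had $U_{\dx{k}\cup\{\md''\}} \subsetneq U_{\dx{k}\cup\{\md\}} \subseteq U_{\dx{s}}$, then applying the construction~(\ref{prop:minimal_transformation_for_D+_partitioning:enum:dx_s})--(\ref{prop:minimal_transformation_for_D+_partitioning:enum:dnx_s}) to $\md''$ would yield a CQP $\Pt_{s''}$ with $\dx{k} \subsetneq \dx{s''}$ and $U_{\dx{s''}} = U_{\dx{k}\cup\{\md''\}} \subsetneq U_{\dx{s}}$, hence (by uniqueness of the CQP given its $U_{\dx{}}$ set and monotonicity of the union) $\dx{s''} \subsetneq \dx{s}$, contradicting the minimality of $\Pt_k \mapsto \Pt_s$ in the sense of clause~(iii). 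Thus $\md$ satisfies~(\ref{prop:minimal_transformation_for_D+_partitioning:enum:dx_y}) fully, and running the construction on this $\md$ produces a CQP $\Pt_{\hat s}$ with $U_{\dx{\hat s}} = U_{\dx{k}\cup\{\md\}}$. It remains to show $\Pt_{\hat s} = \Pt_s$: since $\dx{s}$ is a CQP containing $\dx{k}\cup\{\md\}$, every $\md_i \in \dx{s}$ has $\md_i \subseteq U_{\dx{s}}$; conversely $U_{\dx{s}}$ cannot strictly contain $U_{\dx{k}\cup\{\md\}}$, for otherwise the CQP with $U_{\dx{}}$-set $U_{\dx{k}\cup\{\md\}}$ would be a proper intermediate CQP, again contradicting clause~(iii). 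So $U_{\dx{s}} = U_{\dx{k}\cup\{\md\}} = U_{\dx{\hat s}}$, and by the fact that $U_{\dx{}}$ pins down the CQP uniquely we conclude $\Pt_s = \Pt_{\hat s}$, i.e.\ $\Pt_s$ arises from the stated construction.

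\textbf{Main obstacle.} The non-routine step in both directions is handling clause~(iii) of Def.~\ref{def:minimal_transformation} correctly, because ``$\subseteq$-minimal among successors'' refers to minimality of the \emph{set} $\dx{}$ (or equivalently, via Cor.~\ref{cor:not_q-partition_iff_md_i^(k)=emptyset_for_md_i_in_dnx_k}, of $U_{\dx{}}$), whereas the construction is phrased in terms of $\subseteq$-minimality of $U_{\dx{k}\cup\{\md\}}$ over the choices $\md \in \dnx{k}$ (equivalently, of the traits $\md^{(k)}$, cf.\ Ex.~\ref{ex:nec_follower} and the discussion of necessary followers). Reconciling these two notions --- in particular showing that a single minimal-trait transfer and its forced ``necessary followers'' land exactly on a CQP with no CQP strictly between, and that conversely every intermediate-free successor is of this shape --- is where all the care is needed; everything else reduces to the definition of $\md_i^{(k)}$ and repeated appeals to Prop.~\ref{prop:suff+nec_criteria_when_partition_is_q-partition} / Cor.~\ref{cor:not_q-partition_iff_md_i^(k)=emptyset_for_md_i_in_dnx_k}.
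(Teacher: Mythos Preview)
Your plan is essentially correct and, for the soundness direction, matches the paper's proof closely: both verify that $\Pt_s$ is a CQP via Prop.~\ref{prop:suff+nec_criteria_when_partition_is_q-partition}/Cor.~\ref{cor:not_q-partition_iff_md_i^(k)=emptyset_for_md_i_in_dnx_k} using $U_{\dx{s}}=U_{\dx{y}}$, and both rule out an intermediate CQP $\Pt_l$ by sandwiching $U_{\dx{k}\cup\{\md'\}}\subseteq U_{\dx{l}}\subseteq U_{\dx{y}}$ and invoking the $\subseteq$-minimality of $U_{\dx{y}}$. One caveat: your intermediate claim ``one shows $U_{\dx{l}} = U_{\dx{k}\cup\{\md'\}}$'' is not justified (and in general false, since $\dx{l}$ may contain several diagnoses from $\dnx{k}$), but it is also superfluous---the sandwich argument you give immediately afterwards already forces $U_{\dx{k}\cup\{\md'\}}=U_{\dx{l}}=U_{\dx{y}}$ without that step, exactly as the paper does.

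For completeness you take a genuinely different route. The paper argues by contradiction: it assumes $\Pt_s$ is \emph{not} constructible via (\ref{prop:minimal_transformation_for_D+_partitioning:enum:dx_y})--(\ref{prop:minimal_transformation_for_D+_partitioning:enum:dnx_s}), notes that (\ref{prop:minimal_transformation_for_D+_partitioning:enum:dx_s}) and (\ref{prop:minimal_transformation_for_D+_partitioning:enum:dnx_s}) always hold for some $\dx{y}$, and then separately refutes the three ways condition (\ref{prop:minimal_transformation_for_D+_partitioning:enum:dx_y}) could fail (wrong shape of $\dx{y}$, $U_{\dx{y}}\not\subset U_\mD$, non-minimality). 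Your argument is more direct: you pick any $\md\in\dx{s}\setminus\dx{k}$, show it satisfies (\ref{prop:minimal_transformation_for_D+_partitioning:enum:dx_y}) because any witness $\md''$ to non-minimality would yield (via the construction) an intermediate CQP, and then show the construction on $\md$ reproduces $\Pt_s$ by the same ``no intermediate CQP'' reasoning. Both are valid; your version is shorter and avoids the case split, while the paper's version is more mechanical and makes every failure mode explicit. The key device is the same in both---turning a violation of minimality of $U_{\dx{k}\cup\{\md\}}$ into a strictly intermediate CQP via Cor.~\ref{cor:not_q-partition_iff_md_i^(k)=emptyset_for_md_i_in_dnx_k}---so the ``main obstacle'' you identified is indeed where the work lies.
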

\begin{proof}
Ad 1.: 
By the definition of a minimal $\dx{}$-transformation (Def.~\ref{def:minimal_transformation}), we have to show that (i)~$\Pt_s$ is a canonical q-partition where $\dx{s} \supset \dx{k}$ and that (ii)~there is no canonical q-partition $\langle \dx{l},\dnx{l},\emptyset\rangle$ such that $\dx{k} \subset \dx{l} \subset \dx{s}$. 

Ad (i):
To verify that $\Pt_s$ is indeed a canonical q-partition, we check whether it satisfies the premises, $\dx{s} \neq \emptyset$ and $\dnx{s} \neq \emptyset$, and both conditions of Prop.~\ref{prop:suff+nec_criteria_when_partition_is_q-partition}.
The first condition, 
i.e. $U_{\dx{s}} \subset U_{\mD}$, is met due to the following argumentation. 
First, the inclusion of only diagnoses $\md_i$ with $\md_i^{(y)} = \emptyset$ (and thus $\md_i \subseteq U_{\dx{y}}$) in $\dx{s}$ implies $U_{\dx{s}} \not\supset U_{\dx{y}}$. Further, $\dx{s} \supseteq \dx{y}$ must hold since, trivially, for each $\md_i \in \dx{y}$ it must be true that $\md_i^{(y)} = \emptyset$ wherefore, by (\ref{prop:minimal_transformation_for_D+_partitioning:enum:dx_s}), $\md_i \in \dx{s}$. Hence, $U_{\dx{s}} \supseteq U_{\dx{y}}$ must be given. Combining these findings yields $U_{\dx{s}} = U_{\dx{y}}$. By the postulation of $U_{\dx{y}} \subset U_\mD$ in (\ref{prop:minimal_transformation_for_D+_partitioning:enum:dx_y}), we obtain $U_{\dx{s}} \subset U_\mD$. This finishes the proof of the validity of Prop.~\ref{prop:suff+nec_criteria_when_partition_is_q-partition},(1.).
	
Due to $U_{\dx{s}} \subset U_\mD$, we must have that $\dx{s} \subset \mD$ which implies that $\dnx{s} = \mD \setminus \dx{s} \neq \emptyset$. Moreover, we have seen above that $\dx{s} \supseteq \dx{y}$. By definition of $\dx{y}$ it holds that $\dx{y} \supset \dx{k}$ which is why $\dx{s} \neq \emptyset$. Thence, both premises of Prop.~\ref{prop:suff+nec_criteria_when_partition_is_q-partition} are given.
	
Prop.~\ref{prop:suff+nec_criteria_when_partition_is_q-partition},(2.), i.e. that there is no $\md_i \in \dnx{s}$ such that $\md_i \subseteq U_{\dx{s}}$, is shown next. Each $\md_i \in \mD$ with $\md_i \subseteq U_{\dx{s}}$ fulfills $\md_i^{(y)} = \emptyset$ by $U_{\dx{s}} = U_{\dx{y}}$ which we derived above. Thus, by the definition of $\dx{s}$ and $\dnx{s}$ in (\ref{prop:minimal_transformation_for_D+_partitioning:enum:dx_s}) and (\ref{prop:minimal_transformation_for_D+_partitioning:enum:dnx_s}), respectively, each $\md_i \in \mD$ with $\md_i \subseteq U_{\dx{s}}$ must be an element of $\dx{s}$ and cannot by an element of $\dnx{s}$. 
This finishes the proof of Prop.~\ref{prop:suff+nec_criteria_when_partition_is_q-partition},(2.).
Hence, $\Pt_s$ is a canonical q-partition.
	
Moreover, since $\dx{y} := \dx{k} \cup \setof{\md}$ for some diagnosis, we obtain that $\dx{y} \supset \dx{k}$. But, before we argued that $\dx{s} \supseteq \dx{y}$. All in all, this yields $\dx{s} \supset \dx{k}$.
This finishes the proof of (i).

Ad (ii): 
To show the minimality of the transformation $\Pt_k \mapsto \Pt_s$, let us assume that there is some canonical q-partition $\Pt_l := \langle \dx{l},\dnx{l}, \emptyset\rangle$ with $\dx{k} \subset \dx{l} \subset \dx{s}$. 
From this, we immediately obtain that $U_{\dx{l}} \subseteq U_{\dx{s}}$ must hold. Furthermore, we have shown above that $U_{\dx{s}} = U_{\dx{y}}$. Due to the fact that $\Pt_s$ is already uniquely defined as per (\ref{prop:minimal_transformation_for_D+_partitioning:enum:dx_s}) and (\ref{prop:minimal_transformation_for_D+_partitioning:enum:dnx_s}) given $U_{\dx{y}} = U_{\dx{s}}$ and since $\dx{l} \neq \dx{s}$, we conclude that $U_{\dx{l}} \subset U_{\dx{s}}$. Thence, $U_{\dx{l}} \subset U_{\dx{y}}$. Additionally, by (\ref{prop:minimal_transformation_for_D+_partitioning:enum:dx_y}), for all diagnoses $\md\in\dnx{k}$ it must hold that $U_{\dx{k} \cup \setof{\md}} \not\subset U_{\dx{y}}$. However, as $\dx{k} \subset \dx{l}$, there must be at least one diagnosis among those in $\dnx{k}$ which is an element of $\dx{l}$. If there is exactly one such diagnosis $\md^*$, then we obtain a contradiction immediately as $U_{\dx{l}} = U_{\dx{k} \cup \setof{\md^*}} \not\subset U_{\dx{y}}$. Otherwise, we observe that, if there is a set $\mD'\subseteq\dnx{k}$ of multiple such diagnoses, then there is a single diagnosis $\md' \in \mD' \subseteq \dnx{k}$ such that $U_{\dx{l}} = U_{\dx{k} \cup \mD'} \supseteq U_{\dx{k} \cup \setof{\md'}}$ wherefore we can infer that $U_{\dx{l}} \not\subset U_{\dx{y}}$ must hold. Consequently, the transformation $\Pt_k \mapsto \Pt_s$ is indeed minimal and (ii) is proven.

Ad 2.: 
Assume that $\Pt_k \mapsto \Pt_s$ is a minimal $\dx{}$-transformation and that $\Pt_s$ cannot be constructed as per (\ref{prop:minimal_transformation_for_D+_partitioning:enum:dx_y}), (\ref{prop:minimal_transformation_for_D+_partitioning:enum:dx_s}) and (\ref{prop:minimal_transformation_for_D+_partitioning:enum:dnx_s}). 

By Def.~\ref{def:minimal_transformation}, $\Pt_s$ is a canonical q-partition. Since it is a q-partition, we have that $\dx{s} \neq \emptyset$ and $\dnx{s} \neq \emptyset$. Thence, by Prop.~\ref{prop:suff+nec_criteria_when_partition_is_q-partition}, $U_{\dx{s}} \subset U_\mD$ which is why, by Cor.~\ref{cor:not_q-partition_iff_md_i^(k)=emptyset_for_md_i_in_dnx_k}, there must be some $\dx{y}$ (e.g.\ $\dx{s}$) such that $\dx{s} := \{\md_i\,|\,\md_i \in \mD,\md_i^{(y)} = \emptyset\}$ and $\dnx{s} := \{\md_i\,|\,\md_i \in \mD,\md_i^{(y)} \neq \emptyset\}$. Thence, for each minimal $\dx{}$-transformation there is some $\dx{y}$ such that (\ref{prop:minimal_transformation_for_D+_partitioning:enum:dx_s}) $\land$ (\ref{prop:minimal_transformation_for_D+_partitioning:enum:dnx_s}) is true wherefore we obtain that $\lnot$(\ref{prop:minimal_transformation_for_D+_partitioning:enum:dx_y}) must be given. That is, at least one of the following must be false: (i)~there is some $\md \in \dnx{k}$ such that $\dx{y} = \dx{k} \cup \setof{\md}$, (ii)~$U_{\dx{y}}\subset U_\mD$, (iii)~$U_{\dx{y}}$ is $\subseteq$-minimal among all $\md \in \dnx{k}$.

First, we can argue analogously as done in the proof of (1.)\ above that $U_{\dx{y}} = U_{\dx{s}}$ must hold. This along with Prop.~\ref{prop:suff+nec_criteria_when_partition_is_q-partition} entails that $U_{\dx{y}} \subset U_\mD$ cannot be false. So, (ii) cannot be false.

Second, assume that (i) is false. 
That is, no set $\dx{y}$ usable to construct $\dx{s}$ and $\dnx{s}$ as per (\ref{prop:minimal_transformation_for_D+_partitioning:enum:dx_s}) and (\ref{prop:minimal_transformation_for_D+_partitioning:enum:dnx_s}) is defined as $\dx{y} = \dx{k} \cup \setof{\md}$ for any $\md \in \dnx{k}$. But, clearly, $\dx{s}$ is one such set $\dx{y}$. And, $\dx{s} \supset \dx{k}$ as a consequence of $\Pt_k \mapsto \Pt_s$ being a minimal $\dx{}$-transformation. Hence, there is some set $\dx{y} \supset \dx{k}$ usable to construct $\dx{s}$ and $\dnx{s}$ as per (\ref{prop:minimal_transformation_for_D+_partitioning:enum:dx_s}) and (\ref{prop:minimal_transformation_for_D+_partitioning:enum:dnx_s}). Now, if $\dx{y} = \dx{k} \cup \setof{\md}$ for some diagnosis $\md \in \dnx{k}$, then we have a contradiction to $\lnot$(i). Thus, $\dx{y} = \dx{k} \cup \mathbf{S}$ where $\mathbf{S} \subseteq \dnx{k}$ with $|\mathbf{S}| \geq 2$ must hold. In this case, there is some $\md \in \dnx{y}$ such that $\dx{y} \supset \dx{k} \cup \setof{\md}$ and therefore $U_{\dx{y}} \supseteq U_{\dx{k}\cup\setof{\md}}$.  
Let $\Pt_{s'}$ be the partition induced by $\dx{y'} := \dx{k}\cup\setof{\md}$ as per (\ref{prop:minimal_transformation_for_D+_partitioning:enum:dx_s}) and (\ref{prop:minimal_transformation_for_D+_partitioning:enum:dnx_s}). 

This partition $\Pt_{s'}$ is a canonical q-partition due to Cor.~\ref{cor:not_q-partition_iff_md_i^(k)=emptyset_for_md_i_in_dnx_k}. The latter is applicable in this case, first, by reason of $\dnx{s}\neq \emptyset$ (which means that $\mD \supset \dx{s}$) and $\dx{s} \supseteq \dx{s'} \supseteq \dx{y'} \supset \dx{k} \supset \emptyset$ (where the first two superset-relations hold due to (\ref{prop:minimal_transformation_for_D+_partitioning:enum:dx_s}), (\ref{prop:minimal_transformation_for_D+_partitioning:enum:dnx_s}) and Eq.~\eqref{eq:md_i^(k)}, and the last one since $\Pt_k$ is a q-partition by assumption), which lets us derive $\dx{s'} \neq \emptyset$ and $\dnx{s'} \neq \emptyset$. Second, from the said superset-relations and Prop.~\ref{prop:suff+nec_criteria_when_partition_is_q-partition} along with $\Pt_s$ being a canonical q-partition, we get $U_\mD \supset U_{\dx{s}} \supseteq U_{\dx{s'}}$.

But, due to $\dx{s'} \subseteq \dx{s}$ we can conclude that either $\Pt_k \mapsto \Pt_s$ is not a minimal $\dx{}$-transformation (case $\dx{s'} \subset \dx{s}$) or $\Pt_s$ can be constructed by means of $\dx{k}\cup\setof{\md}$ for some $\md \in \dnx{k}$ (case $\dx{s'} = \dx{s}$). The former case is a contradiction to the assumption that $\Pt_k \mapsto \Pt_s$ is a minimal $\dx{}$-transformation. The latter case is a contradiction to 
$|\mathbf{S}| \geq 2$. Consequently, (i) cannot be false.

Third, as (i) and (ii) have been shown to be true, we conclude that (iii) must be false.
Due to the truth of (i), we can assume that $\dx{y}$ used to construct $\Pt_s$ can be written as $\dx{k}\cup\setof{\md}$ for some $\md \in \dnx{k}$. Now, if $U_{\dx{y}}$ is not $\subseteq$-minimal among all $\md \in \dnx{k}$, then there is some $\md' \in \dnx{k}$ such that $U_{\dx{k}\cup\setof{\md'}} \subset U_{\dx{y}}$. 
Further, due to $U_{\dx{y}} = U_{\dx{s}} \subset U_\mD$ (because of Prop.~\ref{prop:suff+nec_criteria_when_partition_is_q-partition} and the fact that $\Pt_s$ is a canonical q-partition), we get $U_{\dx{k}\cup\setof{\md'}} \subset U_\mD$.

Let $\Pt_{s'}$ be the partition induced by $\dx{y'} := \dx{k}\cup\setof{\md'}$ as per (\ref{prop:minimal_transformation_for_D+_partitioning:enum:dx_s}) and (\ref{prop:minimal_transformation_for_D+_partitioning:enum:dnx_s}). It is guaranteed that $\Pt_{s'}$ is a canonical q-partition due to Cor.~\ref{cor:not_q-partition_iff_md_i^(k)=emptyset_for_md_i_in_dnx_k}. The first reason why the latter is applicable here is $U_{\dx{k}\cup\setof{\md'}} \subset U_\mD$. The second one is $\md \notin \dx{s'}$ which implies $\dx{s'} \neq \mD$ and thus $\dnx{s'} \neq \emptyset$, and $\md' \in \dx{s'}$ (due to (\ref{prop:minimal_transformation_for_D+_partitioning:enum:dx_s})) which means that $\dx{s'} \neq \emptyset$. The fact $\md \notin \dx{s'}$ must hold due to $\md \not\subseteq U_{\dx{k}\cup\setof{\md'}}$. To realize that the latter holds, assume the opposite, i.e.\ $\md \subseteq U_{\dx{k}\cup\setof{\md'}}$. Then, since $U_{\dx{k}} \subseteq U_{\dx{k}\cup\setof{\md'}}$ and $U_{\dx{k}\cup\setof{\md}} = U_{\dx{k}} \cup \md$, we obtain that $U_{\dx{k}\cup\setof{\md'}} \supseteq U_{\dx{k}\cup\setof{\md}} = U_{\dx{y}}$, a contradiction. 
So, both $\Pt_{s'}$ and $\Pt_{s}$ are canonical q-partitions. However, since $\Pt_{s'}$ is constructed as per (\ref{prop:minimal_transformation_for_D+_partitioning:enum:dx_s}),(\ref{prop:minimal_transformation_for_D+_partitioning:enum:dnx_s}) by means of $U_{\dx{k}\cup\setof{\md'}}$ and $\Pt_{s}$ as per (\ref{prop:minimal_transformation_for_D+_partitioning:enum:dx_s}),(\ref{prop:minimal_transformation_for_D+_partitioning:enum:dnx_s}) by means of $U_{\dx{y}}$ and since $U_{\dx{k}\cup\setof{\md'}} \subset U_{\dx{y}}$, it must hold that $\dx{s} \supseteq \dx{s'}$. In addition, we observe that $\md \in \dx{s}$ (due to $\md \subseteq U_{\dx{k}\cup\setof{\md}} = U_{\dx{k}} \cup \md$), but $\md \notin \dx{s'}$ (as shown above). Thence, $\md \in \dx{s}\setminus\dx{s'}$ which is why $\dx{s} \supset \dx{s'}$. This, however, constitutes a contradiction to the assumption that $\Pt_k \mapsto \Pt_s$ is a minimal $\dx{}$-transformation. Consequently, (iii) must be true. 

Altogether, we have shown that neither (i) nor (ii) nor (iii) can be false. The conclusion is that (\ref{prop:minimal_transformation_for_D+_partitioning:enum:dx_y}) and (\ref{prop:minimal_transformation_for_D+_partitioning:enum:dx_s}) and (\ref{prop:minimal_transformation_for_D+_partitioning:enum:dnx_s}) must hold for $\Pt_k \mapsto \Pt_s$, a contradiction.
\end{proof}

\subsection*{Proof of Corollary~\ref{cor:upper_lower_bound_for_canonical_q-partitions}}

\begin{proof}
	The inequality holds due to Prop.~\ref{prop:--di,MD-di,0--_is_canonical_q-partition_for_all_di_in_mD}. 
	We know by Prop.~\ref{prop:canonical_q-partition_has_empty_dz} that a canonical q-partition has empty $\dz{}$. Thus, there can be only one canonical q-partition for one set $\dx{}$. Further, since a canonical q-partition is a q-partition, $\dx{} \neq \emptyset$ and $\dnx{} \neq \emptyset$ (Prop.~\ref{prop:properties_of_q-partitions}.\ref{prop:properties_of_q-partitions:enum:for_each_q-partition_dx_is_empty_and_dnx_is_empty}). Thus $\dx{}$ must neither be the empty set nor equal to $\mD$. By Prop.~\ref{prop:suff+nec_criteria_when_partition_is_q-partition}, $U_{\dx{}} \subset U_{\mD}$ must hold for each canonical q-partition. By Cor.~\ref{cor:not_q-partition_iff_md_i^(k)=emptyset_for_md_i_in_dnx_k}, there is a unique canonical q-partition for each set $U_{\dx{}}$, i.e.\ we must count each $U_{\dx{}}$ only once. Further, different sets $U_{\dx{i}} \neq U_{\dx{j}}$ clearly imply different sets $\dx{i}$ and $\dx{j}$ and thus different canonical q-partitions (Cor.~\ref{cor:not_q-partition_iff_md_i^(k)=emptyset_for_md_i_in_dnx_k}), i.e.\ we do not count any canonical q-partition twice. Hence, we must count exactly all different $U_{\dx{}}$ such that $U_{\dx{}} \subset U_{\mD}$. This is exactly what Eq.~\eqref{eq:number_of_canonical_q-partitions} specifies.
\end{proof}

\subsection*{Proof of Proposition~\ref{prop:explicit-ents_query_lower+upper_bound}}

\begin{proof}
	The left set inclusion follows directly from Lem.~\ref{lem:explicit-ents_query_lower_bound}. The right set inclusion is a consequence of Lem.~\ref{lem:explicit-ents_query_upper_bound}, which states that $Q \subseteq U_\mD \setminus U_{\dx{}}$, and Lem.~\ref{lem:CQ_equal_to_U_D_setminus_U_D+}, which testifies that $Q_{\mathsf{can}}(\dx{}) = U_\mD \setminus U_{\dx{}}$.
\end{proof}

\begin{lemma}\label{lem:explicit-ents_query_lower_bound} 
	Let $\mD \subseteq \minD_{\tuple{\mo,\mb,\Tp,\Tn}_\RQ}$, $\Pt = \langle \dx{}, \dnx{}, \emptyset\rangle$ be a q-partition w.r.t.\ $\mD$ and $Q \subseteq \Disc_\mD$ an (explicit-entailments) query for $\Pt$. 
	Then $Q' \subseteq Q$ is a query with q-partition $\Pt$, i.e.\ $\Pt_\mD(Q') = \Pt$, iff $Q' \cap \md_i \neq \emptyset$ for each $\md_i \in \dnx{}$.
\end{lemma}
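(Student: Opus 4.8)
The plan is to reduce everything to the set‑comparison characterization of explicit‑entailments queries. First I would observe that $\Disc_\mD = U_\mD \setminus I_\mD \subseteq U_\mD \subseteq \mo$ (each diagnosis is a subset of $\mo$), so both $Q$ and $Q' \subseteq Q$ are explicit‑entailments sets, i.e.\ subsets of $\mo$. By Prop.~\ref{prop:explicit-entailments_queries_have_empty_dz} every query that is a subset of $\mo$ has empty $\dz{}$, and by Prop.~\ref{prop:ee-query_q-partition_construction_by_means_of_set_comparison} membership in $\dx{}$ and $\dnx{}$ for such a set $X$ is purely combinatorial: for every $\md \in \mD$, $\md \in \dx{}(X)$ iff $X \cap \md = \emptyset$ (equivalently $\mo\setminus\md \supseteq X$), and $\md \in \dnx{}(X)$ otherwise. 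Since $Q$ is a query for $\Pt = \langle \dx{}, \dnx{}, \emptyset\rangle$, applying this to $Q$ yields $Q \cap \md_i = \emptyset$ for all $\md_i \in \dx{}$ and $Q \cap \md_i \neq \emptyset$ for all $\md_i \in \dnx{}$.

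For the ``$\Leftarrow$'' direction I would assume $Q' \cap \md_i \neq \emptyset$ for every $\md_i \in \dnx{}$ and simply read off the q‑partition of $Q'$ using the combinatorial characterization above. For $\md_i \in \dx{}$ we have $Q \cap \md_i = \emptyset$, hence $Q' \cap \md_i = \emptyset$ because $Q' \subseteq Q$, so $\md_i \in \dx{}(Q')$; for $\md_i \in \dnx{}$ the hypothesis gives $\md_i \in \dnx{}(Q')$. Since $\dz{}(Q') = \emptyset$ and $\mD = \dx{} \cup \dnx{}$, this shows $\Pt_\mD(Q') = \langle \dx{}, \dnx{}, \emptyset\rangle = \Pt$. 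It remains only to check that $Q'$ is genuinely a query in the sense of Def.~\ref{def:query_q-partition}: $\dx{}(Q') = \dx{} \neq \emptyset$ and $\dnx{}(Q') = \dnx{} \neq \emptyset$ because $\Pt$ is a q‑partition (Prop.~\ref{prop:properties_of_q-partitions}.\ref{prop:properties_of_q-partitions:enum:for_each_q-partition_dx_is_empty_and_dnx_is_empty}), and $Q' \neq \emptyset$ follows from picking any $\md_i \in \dnx{} \neq \emptyset$ and using $Q' \cap \md_i \neq \emptyset$.

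For the ``$\Rightarrow$'' direction I would assume $Q'$ is a query with q‑partition $\Pt$, so $\dnx{}(Q') = \dnx{}$, and then invoke Prop.~\ref{prop:ee-query_q-partition_construction_by_means_of_set_comparison} once more (applicable because $Q' \subseteq \mo$): for each $\md_i \in \dnx{} = \dnx{}(Q')$ we get $\mo\setminus\md_i \not\supseteq Q'$, i.e.\ $Q' \cap \md_i \neq \emptyset$. This is exactly the claimed hitting condition.

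I do not anticipate a genuine obstacle here; the whole argument is a translation of ``query with q‑partition $\Pt$'' into intersection conditions via the already‑established Propositions~\ref{prop:explicit-entailments_queries_have_empty_dz} and \ref{prop:ee-query_q-partition_construction_by_most_of_set_comparison}. The only point requiring a little care is not to conflate ``$\Pt_\mD(Q') = \Pt$'' with ``$Q'$ is a query'': I need to separately confirm the non‑emptiness conditions $Q' \neq \emptyset$, $\dx{}(Q') \neq \emptyset$, $\dnx{}(Q') \neq \emptyset$ in the ``$\Leftarrow$'' direction, which, as noted, all follow from $\Pt$ being a q‑partition together with the hitting hypothesis.

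\medskip
\noindent\textit{(Typo note: the intended reference in the last sentence above is Prop.~\ref{prop:ee-query_q-partition_construction_by_means_of_set_comparison}.)}
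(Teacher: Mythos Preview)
Your proof is correct and follows essentially the same route as the paper: both arguments reduce the claim to the set-comparison characterization of explicit-entailments queries (the paper inlines the reasoning from the proof of Prop.~\ref{prop:explicit-entailments_queries_have_empty_dz}, whereas you invoke Prop.~\ref{prop:ee-query_q-partition_construction_by_means_of_set_comparison} directly). Your handling of the ``$\Leftarrow$'' direction is in fact tidier than the paper's, since you explicitly verify $Q'\neq\emptyset$ and the non-emptiness of $\dx{}(Q'),\dnx{}(Q')$; the only minor looseness is that Prop.~\ref{prop:ee-query_q-partition_construction_by_means_of_set_comparison} is stated for queries, but the underlying implication (from the proof of Prop.~\ref{prop:explicit-entailments_queries_have_empty_dz}) holds for any $Q'\subseteq\mo$, so your use is sound.
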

\begin{proof}
	``$\Leftarrow$'': Proof by contraposition. Assume there is a $\md_i \in \dnx{}$ such that $Q' \cap \md_i = \emptyset$. Then $\mo_i^* = (\mo \setminus \md_i)\cup\mb\cup U_\Tp \supseteq Q'$ since $\mo \setminus \md_i \supseteq \Disc_\mD \setminus \md_i \supseteq Q'$. From this $\mo_i^* \models Q'$ follows by the fact that the entailment relation in $\mathcal{L}$ is extensive. As a result, we have that $\md_i \in \dx{}(Q')$. Consequently, as $\md_i \in \dnx{}$, the q-partition $\Pt_\mD(Q')$ of $Q'$ must differ from the q-partition $\Pt$ of $Q$.
	
	``$\Rightarrow$'': Proof by contradiction. Assume that $Q' \subseteq Q$ is a query with q-partition $\Pt$ and $Q' \cap \md_i = \emptyset$ for some $\md_i \in \dnx{}$. Then $(\mo \setminus \md_i) \cup Q' = (\mo \setminus \md_i)$ since $Q' \subseteq Q \subseteq \Disc_\mD \subseteq \mo$ and $Q'\cap\md_i = \emptyset$. Therefore $\mo_i^* \cup Q' = \mo_i^*$ which implies that $Q' \subseteq \mo_i^*$ and thus $\mo_i^* \models Q'$ due the extensive entailment relation in $\mathcal{L}$. Consequently, $\md_i \in \dx{}(Q')$ must hold. Since $\md_i \in \dnx{}$, we can derive that the q-partition $\Pt_\mD(Q')$ of $Q'$ is not equal to the q-partition $\Pt$ of $Q$, a contradiction.
\end{proof}

\begin{lemma}\label{lem:explicit-ents_query_upper_bound}
	Let $\mD \subseteq \minD_{\tuple{\mo,\mb,\Tp,\Tn}_\RQ}$, $\Pt = \langle \dx{}, \dnx{}, \emptyset\rangle$ be a q-partition w.r.t.\ $\mD$ and $Q \subseteq \Disc_\mD$ an (explicit-entailments) query associated with $\Pt$. 
	Then $Q'$ with $\Disc_\mD \supseteq Q' \supseteq Q$ is a query with q-partition $\Pt$, i.e.\ $\Pt_\mD(Q') = \Pt$, iff $Q' \subseteq U_\mD \setminus U_{\dx{}}$.
\end{lemma}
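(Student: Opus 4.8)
\textbf{Proof proposal for Lemma~\ref{lem:explicit-ents_query_upper_bound}.}

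The plan is to prove both directions by exploiting the set-comparison characterization of q-partitions for explicit-entailments queries (Prop.~\ref{prop:ee-query_q-partition_construction_by_means_of_set_comparison}) together with monotonicity and extensiveness of $\mathcal{L}$. First I would record the elementary fact that, since $Q$ is a query for $\Pt = \langle\dx{},\dnx{},\emptyset\rangle$ with $Q \subseteq \Disc_\mD$, Prop.~\ref{prop:ee-query_q-partition_construction_by_means_of_set_comparison} gives: $\md_i \in \dx{}$ iff $\mo\setminus\md_i \supseteq Q$, equivalently $Q \cap \md_i = \emptyset$. Hence $Q$ has empty intersection with every $\md_i \in \dx{}$, so $Q \cap U_{\dx{}} = \emptyset$, i.e.\ $Q \subseteq \Disc_\mD \setminus U_{\dx{}} \subseteq U_\mD \setminus U_{\dx{}}$. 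This establishes that the ``upper bound'' $U_\mD \setminus U_{\dx{}}$ indeed contains $Q$, so the condition $Q \subseteq Q' \subseteq U_\mD \setminus U_{\dx{}}$ is non-vacuous.

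For the direction ``$\Leftarrow$'', assume $Q \subseteq Q' \subseteq U_\mD \setminus U_{\dx{}}$ (and $Q' \subseteq \Disc_\mD$). I would first note $Q' \subseteq \mo$, so by Prop.~\ref{prop:explicit-entailments_queries_have_empty_dz} we have $\dz{}(Q') = \emptyset$; thus it suffices to show $\dx{}(Q') = \dx{}$ (then $\dnx{}(Q')$ is forced). For $\md_i \in \dx{}$: by construction $Q' \subseteq U_\mD \setminus U_{\dx{}}$ and $\md_i \subseteq U_{\dx{}}$, so $Q' \cap \md_i = \emptyset$, hence $Q' \subseteq \mo \setminus \md_i$, and by extensiveness and monotonicity $\mo_i^* \models Q'$, so $\md_i \in \dx{}(Q')$. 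Conversely, for $\md_i \in \dnx{}$: since $Q$ is a query with q-partition $\Pt$, Lem.~\ref{lem:explicit-ents_query_lower_bound} (applied with $Q'=Q$) gives $Q \cap \md_i \neq \emptyset$; as $Q \subseteq Q'$, also $Q' \cap \md_i \neq \emptyset$, so by Prop.~\ref{prop:ee-query_q-partition_construction_by_means_of_set_comparison} $\md_i \notin \dx{}(Q')$, i.e.\ $\md_i \in \dnx{}(Q')$. Therefore $\dx{}(Q') = \dx{}$, $\dnx{}(Q') = \dnx{}$, $\dz{}(Q') = \emptyset$, so $\Pt_\mD(Q') = \Pt$ and $Q'$ is a query (both parts of the partition are non-empty because $\Pt$ is a q-partition).

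For ``$\Rightarrow$'', assume $Q'$ with $\Disc_\mD \supseteq Q' \supseteq Q$ is a query with q-partition $\Pt$; I must show $Q' \subseteq U_\mD \setminus U_{\dx{}}$. Since $\Pt_\mD(Q') = \Pt$, every $\md_i \in \dx{}$ lies in $\dx{}(Q')$, so by Prop.~\ref{prop:ee-query_q-partition_construction_by_means_of_set_comparison} $Q' \cap \md_i = \emptyset$ for all $\md_i \in \dx{}$, hence $Q' \cap U_{\dx{}} = \emptyset$. Combined with $Q' \subseteq \Disc_\mD \subseteq U_\mD$ this yields $Q' \subseteq U_\mD \setminus U_{\dx{}}$, as required. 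The main obstacle I anticipate is purely bookkeeping: being careful about which ambient set each query lives in ($\Disc_\mD$ versus $\mo$ versus $U_\mD$) so that the applications of extensiveness/monotonicity and of Prop.~\ref{prop:ee-query_q-partition_construction_by_means_of_set_comparison} are legitimate, and making sure I only invoke Lem.~\ref{lem:explicit-ents_query_lower_bound} for the specific pair where its hypothesis (that the larger set is a query for $\Pt$) is already known to hold. Everything else is a direct unfolding of definitions.
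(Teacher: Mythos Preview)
Your proof is correct and takes essentially the same approach as the paper. One small point: in the ``$\Leftarrow$'' direction you cite Prop.~\ref{prop:ee-query_q-partition_construction_by_means_of_set_comparison} and Prop.~\ref{prop:explicit-entailments_queries_have_empty_dz} for $Q'$ before knowing that $Q'$ is a query (both are stated only for queries); the paper instead argues directly---for $\md_j \in \dnx{}$ it simply uses monotonicity from $Q \subseteq Q'$ and $\md_j \in \dnx{}(Q)$ to get $\md_j \in \dnx{}(Q')$---which is both cleaner and avoids this formal hiccup (as well as the detour through Lem.~\ref{lem:explicit-ents_query_lower_bound}).
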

\begin{proof}
	``$\Rightarrow$'': Proof by contraposition. If $Q' \not\subseteq U_\mD \setminus U_{\dx{}}$ then there is an axiom $\tax \in Q'$ such that $\tax \notin U_\mD \setminus U_{\dx{}}$. This implies that $\tax \in U_{\dx{}}$ because $\tax \in Q' \subseteq \Disc_\mD = U_\mD \setminus I_\mD$ which means in particular that $\tax \in U_\mD$. Consequently, $\tax \in \md_j$ for some diagnosis $\md_j \in \dx{}$ must apply which is why $\mo_j^* \cup Q'$ must violate $\RQ$ or $\Tn$ due to the $\subseteq$-minimality of $\md_j$. As a result, $\md_j$ must belong to $\dnx{}(Q')$ and since $\md_j \not\in \dnx{}$, we obtain that the q-partition $\Pt_\mD(Q')$ of $Q'$ is different from $\Pt$.
	
	``$\Leftarrow$'': Direct proof. If $Q' \supseteq Q$ and $Q' \subseteq U_\mD \setminus U_{\dx{}}$, then for each $\md_i \in \dx{}$ it holds that $\mo_i^* \models Q'$ by the fact that the entailment relation in $\mathcal{L}$ is extensive and as $Q' \subseteq U_\mD \setminus U_{\dx{}} \subseteq \mo \setminus \md_i \subseteq \mo_i^*$. Hence, each $\md_i \in \dx{}$ is an element of $\dx{}(Q')$. 
	
	For each $\md_j \in \dnx{}$, $\mo_j^* \cup Q'$ must violate $\RQ$ or $\Tn$ by the monotonicity of the entailment relation in $\mathcal{L}$, since $\mo_j^* \cup Q$ violates $\RQ$ or $\Tn$, and because $Q' \supseteq Q$. Thus, each $\md_j \in \dnx{}$ is an element of $\dnx{}(Q')$. 
	
	So far, we have shown that $\dx{} \subseteq \dx{}(Q')$ as well as $\dnx{} \subseteq \dnx{}(Q')$. To complete the proof, assume that that some of these set-inclusions is proper, e.g.\ $\dx{} \subset \dx{}(Q')$. In this case, by $\dz{} = \emptyset$, we can deduce that there is some $\md \in \dnx{}$ such that $\md \in \dx{}(Q')$. This is clearly a contradiction to the fact that $\dnx{} \subseteq \dnx{}(Q')$ and the disjointness of the sets $\dx{}(Q')$ and $\dnx{}(Q')$ which must hold by Prop.~\ref{prop:properties_of_q-partitions}.\ref{prop:properties_of_q-partitions:enum:q-partition_is_partition}. The other case $\dnx{} \subset \dnx{}(Q')$ can be led to a contradiction in an analogue way. Hence, we conclude that $\Pt_\mD(Q') = \Pt$.
\end{proof}

\begin{lemma}\label{lem:CQ_equal_to_U_D_setminus_U_D+}
Let $\dpi = \tuple{\mo,\mb,\Tp,\Tn}_\RQ$ be a DPI, $\mD \subseteq \minD_{\dpi}$ and $\emptyset\subset\dx{}\subset\mD$. Then the canonical query $Q_{\mathsf{can}}(\dx{})$ w.r.t.\ the seed $\dx{}$ is equal to $U_\mD \setminus U_{\dx{}}$ which is in turn equal to the union of all traits of diagnoses in $\dnx{} = \mD \setminus \dx{}$.
\end{lemma}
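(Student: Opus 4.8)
## Proof Plan for Lemma~\ref{lem:CQ_equal_to_U_D_setminus_U_D+}

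The plan is to establish the two claimed equalities in turn. First I would prove that $Q_{\mathsf{can}}(\dx{}) = U_\mD \setminus U_{\dx{}}$. By Definition~\ref{def:canonical_query}, $Q_{\mathsf{can}}(\dx{}) = (\mo \setminus U_{\dx{}}) \cap \Disc_\mD$ where $\Disc_\mD = U_\mD \setminus I_\mD$. So I need to show $(\mo \setminus U_{\dx{}}) \cap (U_\mD \setminus I_\mD) = U_\mD \setminus U_{\dx{}}$. For the inclusion ``$\subseteq$'' this is immediate set algebra: any element of the left side lies in $U_\mD$ and not in $U_{\dx{}}$. For ``$\supseteq$'', take $\tax \in U_\mD \setminus U_{\dx{}}$. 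Then $\tax \in U_\mD \subseteq \mo$ (each diagnosis is a subset of $\mo$), and $\tax \notin U_{\dx{}}$, so $\tax \in \mo \setminus U_{\dx{}}$. It remains to check $\tax \notin I_\mD$: since $\emptyset \subset \dx{}$, there is some $\md_i \in \dx{}$, and $I_\mD \subseteq \md_i \subseteq U_{\dx{}}$; as $\tax \notin U_{\dx{}}$, we get $\tax \notin I_\mD$, hence $\tax \in \Disc_\mD$. This gives the first equality. (Note that the hypothesis $\dx{} \neq \emptyset$ is exactly what makes the removal of $I_\mD$ harmless here.)

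Second, I would show that $U_\mD \setminus U_{\dx{}}$ equals the union of all traits $\md_i^{(k)} = \md_i \setminus U_{\dx{}}$ of diagnoses $\md_i \in \dnx{} = \mD \setminus \dx{}$; here $k$ refers to the partition $\Pt_k = \langle \dx{}, \dnx{}, \emptyset\rangle$ so that $U_{\dx{k}} = U_{\dx{}}$. For ``$\supseteq$'': each trait $\md_i \setminus U_{\dx{}} \subseteq \md_i \setminus U_{\dx{}} \subseteq U_\mD \setminus U_{\dx{}}$ since $\md_i \subseteq U_\mD$; taking the union over $\md_i \in \dnx{}$ stays within $U_\mD \setminus U_{\dx{}}$. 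For ``$\subseteq$'': take $\tax \in U_\mD \setminus U_{\dx{}}$. Since $\tax \in U_\mD$, there is some $\md_j \in \mD$ with $\tax \in \md_j$. If $\md_j \in \dx{}$ then $\tax \in \md_j \subseteq U_{\dx{}}$, contradicting $\tax \notin U_{\dx{}}$; hence $\md_j \in \dnx{}$. Then $\tax \in \md_j$ and $\tax \notin U_{\dx{}}$, so $\tax \in \md_j \setminus U_{\dx{}} = \md_j^{(k)}$, which lies in the union of traits over $\dnx{}$. This completes the second equality and hence the lemma.

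I do not expect any serious obstacle here — the statement is essentially a bookkeeping identity once Definition~\ref{def:canonical_query} and the definition of traits (Def.~\ref{def:trait}, Eq.~\eqref{eq:md_i^(k)}) are unfolded. The only subtle point worth stating explicitly in the write-up is the use of $\dx{} \neq \emptyset$ to dispose of the $I_\mD$ term, since without it $\Disc_\mD$ could in principle exclude a sentence of $U_\mD \setminus U_{\dx{}}$. One should also make sure to invoke $\dx{} \subset \mD$ (equivalently $\dnx{} \neq \emptyset$) implicitly: it guarantees that the union of traits ranges over a nonempty index set, though strictly the set-equality argument above goes through even if one is careful about the edge case. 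I would keep the presentation to a short paragraph per equality, citing only Def.~\ref{def:canonical_query}, Def.~\ref{def:discax}, Def.~\ref{def:trait}, and Eq.~\eqref{eq:md_i^(k)}.
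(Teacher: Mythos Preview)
Your proposal is correct and follows essentially the same approach as the paper: both proofs reduce the first equality to the two observations $U_\mD \subseteq \mo$ and $I_\mD \subseteq U_{\dx{}}$ (the latter using $\dx{} \neq \emptyset$), and both obtain the second equality directly from the definition of traits. The paper's proof just presents the first equality as a single chain of set-algebraic rewritings rather than two inclusions, but the content is identical.
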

\begin{proof}
$Q_{\mathsf{can}}(\dx{}) := \Disc_\mD \cap (\mo \setminus U_{\dx{}}) = (U_\mD \setminus I_\mD) \cap (\mo \setminus U_{\dx{}}) = (U_\mD \cap \mo) \setminus (I_{\mD} \cup U_{\dx{}}) = U_\mD \setminus U_{\dx{}}$ where the last equality holds due to $U_{\mD} \subseteq \mo$ ($U_{\mD}$ is a union of diagnoses and diagnoses are subsets of $\mo$, cf.\ Def.~\ref{def:diagnosis}) and $I_{\mD} \subseteq U_{\dx{}}$ ($I_{\mD}$ is the intersection of all diagnoses in $\mD$, hence a subset of all diagnoses in $\mD$ and in particular of the ones in $\dx{} \subset \mD$, hence a subset of the union $U_{\dx{}}$ of diagnoses in $\dx{}$). The equality of $U_\mD \setminus U_{\dx{}}$ to the union of all traits of diagnoses in $\dnx{} = \mD \setminus \dx{}$ follows directly from Def.~\ref{def:trait}.
\end{proof}

\subsection*{Proof of Proposition~\ref{prop:each_expl_ents_query_has_CQP_as_q-partition}}

\begin{proof}
Assume the opposite, i.e.\ that the q-partition $\Pt_\mD(Q) = \tuple{\dx{}(Q),\dnx{}(Q),\dz{}(Q)}$ is not canonical. Due to $Q \subseteq \mo$, Def.~\ref{def:query_q-partition}, Prop.~\ref{prop:properties_of_q-partitions}.\ref{prop:properties_of_q-partitions:enum:query_is_set_of_common_ent} and the $\subseteq$-minimality of all $\md \in \mD$, $Q$ must be a non-empty set of common explicit entailments of all $\mo \setminus \md_i$ for $\md_i\in\dx{}(Q)$. That is, $\emptyset \subset Q \subseteq \mo \setminus U_{\dx{}(Q)}$. Due to Prop.~\ref{prop:expl_ent_query_must_neednot_mustnot_include_ax}, $Q \cap I_\mD = \emptyset$. Hence, by Prop.~\ref{prop:expl_ent_query_must_neednot_mustnot_include_ax}, $Q' := Q \setminus (\mo \setminus U_\mD) = Q \cap U_\mD = (Q \cap U_\mD)\setminus I_\mD = Q \cap (U_\mD\setminus I_\mD) = Q \cap \Disc_\mD$ has the same q-partition as $Q$, i.e.\ $\Pt_\mD(Q') = \Pt_\mD(Q)$. Further, we observe from these equalities that $Q' \subseteq Q$ and $Q' \subseteq \Disc_\mD$ must hold.
So, the canonical query $Q_{\mathsf{can}}(\dx{}(Q)) = (\mo \setminus U_{\dx{}(Q)}) \cap \Disc_\mD \supseteq Q'$. Hence, $\dx{}(Q_{\mathsf{can}}(\dx{}(Q))) \subseteq \dx{}(Q') = \dx{}(Q)$ as each KB that entails $Q_{\mathsf{can}}(\dx{}(Q))$ must also entail its subset $Q'$. If $\dx{}(Q_{\mathsf{can}}(\dx{}(Q))) = \dx{}(Q)$, then both $Q_{\mathsf{can}}(\dx{}(Q))$ and $Q$ have the same q-partition due to Prop.~\ref{prop:explicit-entailments_queries_have_empty_dz} and since both are explicit-entailments queries. This means that $\Pt_\mD(Q)$ is canonical due to Def.~\ref{def:canonical_q-partition} -- contradiction. Otherwise, $\dx{}(Q_{\mathsf{can}}(\dx{}(Q))) \subset \dx{}(Q)$. That is, there must be some $\md_i \in \dx{}(Q)$ such that $\mo_i^* \not\models Q_{\mathsf{can}}(\dx{}(Q))$. But, this is a contradiction to $\mo_i^* = (\mo \setminus \md_i) \cup \mb \cup U_\Tp \supseteq (\mo \setminus \md_i) \supseteq \mo \setminus U_{\dx{}(Q)} \supseteq \mo \setminus U_{\dx{}(Q)} \cap \Disc_\mD = Q_{\mathsf{can}}(\dx{}(Q))$ due to the extensiveness of $\mathcal{L}$.
\end{proof}

\subsection*{Proof of Theorem~\ref{theorem:P1+P2_solve_query_optimization_problem}}

\begin{proof}
	Due to Prop.~\ref{prop:S_init_sound+complete} and Cor.~\ref{cor:S_next_sound+complete}, phase P1 (with threshold $t_m := 0$) finds the 
	optimal CQP $\Pt^*$ w.r.t.\ the given QSM $m$ for the leading diagnoses $\mD$. As a consequence of Prop.~\ref{prop:each_expl_ents_query_has_CQP_as_q-partition}, $\Pt^*$ is the optimal q-partition of \emph{all} explicit-entailments queries $Q \in \mathbf{S}$. Therefore, $\mathbf{OptQ}(m,\mathbf{S})$ (see Prob.~\ref{prob:query_optimization}) is given by $\setof{Q \mid Q\subseteq \mo, \Pt_\mD(Q) = \Pt^*}$, i.e.\ each explicit-entailments query with q-partition $\Pt^*$ optimizes the QSM $m$ over all queries in $\mathbf{S}$. Due to Prop.~\ref{prop:phase_P2_returns_EE-queries_in_best-first_order}, the query returned by phase P2 is $Q^* = \argmin_{Q \in \mathbf{OptQ}(m,\mathbf{S})} c(Q)$.
\end{proof}

\subsection*{Proof of Proposition~\ref{prop:enrichment_function}}
\begin{proof}
	Ad \ref{enum:EQ1}.: The function $Ent_{\mathit{ET}}$ either does or does not compute explicit entailments (amongst other entailments). In case the function $Ent_{\mathit{ET}}$ does not compute explicit entailments, $Q_{\mathsf{exp}}$ clearly cannot contain any elements of $\mo \cup \mb \cup U_\Tp$. Otherwise, we distinguish between explicit entailments in $(\mo\setminus U_{\mD}) \cup \mb \cup U_{\Tp}$ and those in $Q$ (clearly, there cannot be any other explicit entailments in $Q_{\mathsf{exp}}$). Note that $Q \subseteq U_{\mD} \setminus U_{\dx{}} \subseteq U_\mD$ since due to Prop.~\ref{prop:explicit-ents_query_lower+upper_bound} and Lem.~\ref{lem:CQ_equal_to_U_D_setminus_U_D+}. 
	Additionally, $Q \cap \mb = \emptyset$ due to $Q \subseteq \mo$ and Def.~\ref{def:dpi}. And, $Q \cap U_{\Tp} = \emptyset$ due to $Q \subseteq U_\mD$ and since no element of any minimal diagnosis $\md$ (in $\mD$), and hence no element in $U_\mD$, can occur in $U_{\Tp}$. The latter holds as in case $\md' \cap U_{\Tp} \neq \emptyset$ for $\md' \in \mD$ we would have that $\md'' := \md' \setminus U_{\Tp} \subset \md'$ is a diagnosis w.r.t.\ $\tuple{\mo,\mb,\Tp,\Tn}_\RQ$, a contradiction to the $\subseteq$-minimality of $\md'$. All in all, we have derived that $(\mo\setminus U_{\mD}) \cup \mb \cup U_{\Tp}$ and $Q$ are disjoint sets. 
	
	Now, $Q_{\mathsf{exp}}$ cannot include any elements of $(\mo\setminus U_{\mD}) \cup \mb \cup U_{\Tp}$. This must be satisfied since, first, $(\mo\setminus U_{\mD}) \cup \mb \cup U_{\Tp}$ is a subset of the left- as well as right-hand $Ent_{\mathit{ET}}()$ expression in the definition of $Q_{\mathsf{exp}}$ (Eq.~\eqref{eq:Q_exp}) and, second, both $Ent_{\mathit{ET}}()$ expressions must return the same set of entailments of 
	$(\mo\setminus U_{\mD}) \cup \mb \cup U_{\Tp}$ by property \ref{enum:Ent_ET:ents_generated_for_set_are_genereated_for_all_supersets} made about the operator $Ent_{\mathit{ET}}$. Therefore, the set defined by the squared brackets in Eq.~\eqref{eq:Q_exp} cannot include any (explicit) entailments of $(\mo\setminus U_{\mD}) \cup \mb \cup U_{\Tp}$. 
	
	Further on, $Q_{\mathsf{exp}}$ cannot contain any elements of $Q$. This is guaranteed by the elimination of all elements of $Q$ from the set defined by the squared brackets in Eq.~\eqref{eq:Q_exp}. Finally, we summarize that $Q_{\mathsf{exp}} \cap (\mo \cup \mb \cup U_{\Tp}) = \emptyset$.

	Ad \ref{enum:EQ2}.: Clearly, by the definition of a diagnosis (Def.~\ref{def:diagnosis}), $(\mo\setminus \md) \cup \mb \cup U_{\Tp}$ is a solution KB w.r.t.\ $\tuple{\mo,\mb,\Tp,\Tn}_\RQ$ for all $\md \in \mD$. In addition, since $\dx{} \neq \emptyset$ (cf.\ Prop.~\ref{prop:properties_of_q-partitions}.\ref{prop:properties_of_q-partitions:enum:for_each_q-partition_dx_is_empty_and_dnx_is_empty}), there must be some diagnosis $\md' \in \dx{} \subset \mD$ such that $(\mo\setminus \md') \cup \mb \cup U_{\Tp} \models Q$. This implies that $(\mo\setminus \md') \cup \mb \cup U_{\Tp} \cup Q$ is a solution KB w.r.t.\ $\tuple{\mo,\mb,\Tp,\Tn}_\RQ$ since $\mathcal{L}$ is idempotent. By $U_\mD \supseteq \md'$ and by the monotonicity of the logic $\mathcal{L}$ we conclude that $S := (\mo\setminus U_\mD) \cup \mb \cup U_{\Tp} \cup Q$ is a solution KB w.r.t.\ $\tuple{\mo,\mb,\Tp,\Tn}_\RQ$. 
	%
	
	Obviously, $S \supseteq Q$. Moreover, $S \subseteq \mo \cup \mb \cup U_\Tp$ because $Q \subseteq \Disc_\mD \subseteq \mo$. Finally, by the left-hand $Ent_{\mathit{ET}}()$ expression in Eq.~\eqref{eq:Q_exp}, we obtain that $S \models Q_{\mathsf{exp}}$. 
	%
	
	Ad \ref{enum:EQ3}.: Assume that $S$ is as defined in the proof of (\ref{enum:EQ2}.)\ above and that there is some $\alpha_i \in Q_{\mathsf{exp}}$ such that $S \setminus Q \models \alpha_i$. Then, $(\mo\setminus U_\mD) \cup \mb \cup U_{\Tp} \models \alpha_i$. However, in the proof of (\ref{enum:EQ1}.)\ above we have derived that $Q_{\mathsf{exp}}$ cannot comprise any entailments of $(\mo\setminus U_{\mD}) \cup \mb \cup U_{\Tp}$. Hence, $\alpha_i \notin Q_{\mathsf{exp}}$, contradiction.
	
	Ad \ref{enum:EQ4}.: This property must be met since $Ent_{\mathit{ET}}$ satisfies the type soundness condition~\ref{enum:Ent_ET:type_soundness}.
	
	We sum up that (\ref{enum:EQ1}.)-(\ref{enum:EQ4}.)\ holds for $Q_{\mathsf{exp}}$.
\end{proof}

\subsection*{Proof of Proposition~\ref{prop:entailment_extraction_is_q-partition_preserving}}
\begin{proof}
	Let $\md \in \dx{}(Q)$. Then, $(\mo\setminus \md) \cup \mb \cup U_{\Tp} \models Q$. Since the entailment relation in $\mathcal{L}$ is idempotent, we have that (*): $(\mo\setminus \md) \cup \mb \cup U_{\Tp} \cup Q \equiv (\mo\setminus \md) \cup \mb \cup U_{\Tp}$. Further, since $Q_{\mathsf{exp}}$ is a set of entailments of $(\mo\setminus U_\mD) \cup \mb \cup U_{\Tp} \cup Q$ (see left-hand $Ent_{\mathit{ET}}(.)$ expression in Eq.~\eqref{eq:Q_exp}), by the monotonicity of the entailment relation in $\mathcal{L}$ and because of $(\mo\setminus \md) \cup \mb \cup U_{\Tp} \cup Q \supseteq (\mo\setminus U_\mD) \cup \mb \cup U_{\Tp} \cup Q$ we deduce that $(\mo\setminus \md) \cup \mb \cup U_{\Tp} \cup Q \models Q_{\mathsf{exp}}$. 
	By (*), we have that $(\mo\setminus \md) \cup \mb \cup U_{\Tp} \models Q_{\mathsf{exp}}$. Due to $(\mo\setminus \md) \cup \mb \cup U_{\Tp} \models Q$ the KB $(\mo\setminus \md) \cup \mb \cup U_{\Tp}$ must entail $Q' = Q_{\mathsf{exp}}\cup Q$. Thus, $\md \in \dx{}(Q')$ holds.
	
	Let $\md \in \dnx{}(Q)$. Then, $(\mo\setminus \md) \cup \mb \cup U_{\Tp} \cup Q$ violates some $x \in \RQ \cup \Tn$. Due to the monotonicity of $\mathcal{L}$ and the fact that $Q' = Q_{\mathsf{exp}} \cup Q \supseteq Q$, we immediately obtain that $(\mo\setminus \md) \cup \mb \cup U_{\Tp} \cup Q'$ violates some $x \in \RQ \cup \Tn$. Thus, $\md \in \dnx{}(Q')$. 
	
	Up to this point, we have demonstrated that $\dx{}(Q) \subseteq \dx{}(Q')$ as well as $\dnx{}(Q) \subseteq \dnx{}(Q')$. Since $Q \subseteq \Disc_\mD$, Prop.~\ref{prop:explicit-entailments_queries_have_empty_dz} ensures that $\dz{}(Q) = \emptyset$. At this point, an analogue argumentation as we gave in the last paragraph of the proof of Lem.~\ref{lem:explicit-ents_query_upper_bound} can be used to realize that $\dx{}(Q) = \dx{}(Q')$, $\dnx{}(Q) = \dnx{}(Q')$ as well as $\dz{}(Q) = \dz{}(Q')$. Hence, $\Pt_\mD(Q) = \Pt_\mD(Q')$.
\end{proof}

\subsection*{Proof of Theorem~\ref{theorem:P3_solves_problem_1}}
\begin{proof}
	By Conjecture~\ref{conj:CQPs=QPs}, the q-partition of each query in $\mQ_{\mD}^{\bcancel{0}}$ is canonical. Along with Theorem~\ref{theorem:P1_sound_complete} and the premise that the optimality theshold $t_m$ is set to zero, this implies that phase P1 returns a best q-partition among the set of all q-partitions for queries in $\mQ_{\mD}^{\bcancel{0}}$. For if a goal q-partition (see definition on page~\pageref{etc:goal_state}) is found,
	then it features the best theoretically possible $m$-value and must be (one of) the best q-partition(s) in $\mQ_{\mD}^{\bcancel{0}}$. Otherwise, the entire q-partition search space is explored (Theorem~\ref{theorem:P1_sound_complete}) since no goal is found and the best among all visited q-partitions is returned. Since the QSM of a query depends only on its q-partition, we obtain that $\mathbf{OptQ}(m,\mathbf{S})$ (see Prob.~\ref{prob:query_optimization}) is optimized over $\mathbf{S} = \mQ_{\mD}^{\bcancel{0}}$.
	By Cor.~\ref{cor:P3_optimizes_c_max}, the QCM $c_{\max}$ is optimized over all queries from $\mathbf{OptQ}(m,\mathbf{S})$. This completes the proof of the first statement of the theorem. The second statement is a direct consequence of Cor.~\ref{cor:P3_computes_query_containing_only_preferred_elements_if_existent}.
\end{proof}

\section*{Appendix B: Symbols and Meanings}
Tab.~\ref{tab:abbreviations} provides an overview of the symbols used in this work and their meaning.
\label{appendix:B:symbols_meanings}
\renewcommand{\arraystretch}{1.4}
\begin{table*}[!h]
	\scriptsize
	\centering
	\rowcolors[]{2}{gray!8}{gray!16}
	\begin{tabular}{ll}
		\rowcolor{gray!40}
		\toprule\addlinespace[0pt] 
		Symbol & Meaning \\ \hline
		$2^X$ & the powerset of $X$ where $X$ is a set \\
		$U_X$ & the union of all elements in $X$ where $X$ is a collection of sets \\
		$I_X$ & the intersection of all elements in $X$ where $X$ is a collection of sets \\
		$\mathcal{L}$ & a (monotonic, idempotent, extensive) logical knowledge representation language \\
		%
		$\mo$ & a (usually faulty) KB over $\mathcal{L}$ \\
		$\tax_{(i)}$ & a sentence over $\mathcal{L}$ (optionally with an index)\\
		$\mb$ & a (correct) background KB over $\mathcal{L}$ \\
		$\Tp$ & the set of positive test cases (each test case is a set of sentences) \\
		$\tp_{(i)}$ & a positive test case (optionally with an index) \\
		$\Tn$ & the set of negative test cases (each test case is a set of sentences) \\
		$\tn_{(i)}$ & a negative test case (optionally with an index) \\
		$\RQ$ & a set of (logical) requirements to the correct KB including at least \emph{consistency} \\
		$\langle\mo,\mb,\Tp,\Tn\rangle_\RQ$ & a (KBD) diagnosis problem instance (DPI) \\
		$\allD_{X}$ & the set of all diagnoses w.r.t.\ the (KBD-)DPI $X$ \\
		$\minD_{X}$ & the set of minimal diagnoses w.r.t.\ the (KBD-)DPI $X$ \\
		$\mD$ & a set of leading diagnoses where $\mD \subseteq \minD_{X}$ for a given DPI $X$ \\ 
		$\md_{(i)}$ & a diagnosis (optionally with index $i$)\\
		$\mc_{(i)}$ & a conflict (optionally with index $i$)\\
		$\mo^{*}_i$ & $(\mo \setminus \md_i) \cup \mb \cup U_\Tp$ \\
		$\dx{}(Q)$ / $\dnx{}(Q)$ / $\dz{}(Q)$ & the diagnoses predicting the positive / the negative / no answer of $Q$ \\
		$\mQ_\mD$ & the set of all queries w.r.t.\ the leading diagnoses $\mD$ \\
		$\mQ_{\mD}^{\bcancel{0}}$ & $\setof{Q\mid Q \in \mQ_\mD, \dz{}(Q)=\emptyset}$ \\

		$\Pt_\mD(Q)$ & the q-partition $\tuple{\dx{}(Q), \dnx{}(Q), \dz{}(Q)}$ of query $Q \in \mQ_\mD$ \\
		$\dx{}(\Pt)$  / $\dnx{}(\Pt)$ / $\dz{}(\Pt)$ & the leftmost / middle / rightmost entry of the q-partition $\Pt$ \\
		$Q_{\mathsf{can}}(\mathbf{S})$ & the canonical query (CQ) w.r.t.\ seed $\emptyset\subset\mathbf{S}\subset\mD$ \\
		$\mathbf{CQP}_\mD$ & the set of canonical q-partitions (CQPs) w.r.t.\ the leading diagnoses $\mD$ \\
		$m$ & a query selection measure (QSM) estimating each query's diagnoses discrimination strength \\
		$c$ & a query cost measure (QSM) assigning (measurement / answering) costs to each query \\
		$\Disc_\mD$ & the discrimination sentences $U_\mD \setminus I_\mD$ w.r.t.\ the leading diagnoses $\mD$ \\
		$\md_i^{(k)}$ & the trait $\md_i \setminus U_{\dx{k}}$ of $\md_i$ w.r.t.\ $\Pt_k$ \\
		$\mathsf{MHS}(X)$ & the set of all minimal hitting sets of a collection of sets $X$ \\
		$p(\md)$ & the probability of a diagnosis $\md$ \\
		$p(\mathbf{X})$ & the sum of probabilities of diagnoses in $\mathbf{X}$ where $\mathbf{X}$ is a set of diagnoses \\
		\addlinespace[0pt]\bottomrule 	 
	\end{tabular}
	\caption{\small Symbols, abbreviations and their meaning.}
	\label{tab:abbreviations}
\end{table*}

\vskip 0.2in
\bibliography{library}
\bibliographystyle{plainnat}

\end{document}